\crefname{enumi}{}{}
\crefname{equation}{}{}
\crefname{subsection}{Subsection}{Subsections}
\theoremstyle{plain}
\newtheorem{theorem}{Theorem}[section]
\newtheorem{lemma}[theorem]{Lemma}
\newtheorem{prop}[theorem]{Proposition}
\newtheorem{cor}[theorem]{Corollary}
\theoremstyle{remark}
\theoremstyle{definition}
\newtheorem{definition}[theorem]{Definition}
\DeclareMathAlphabet{\mathpzc}{OT1}{pzc}{m}{it}
\DeclareMathAlphabet{\mathscr}{LS1}{stixscr}{m}{n}
\newcommand{\E}{\mathbb{E}}
\renewcommand{\P}{\mathbb{P}}
\newcommand{\R}{\mathbb{R}}
\newcommand{\N}{\mathbb{N}}
\newcommand{\Z}{\mathbb{Z}}
\newcommand{\bbL}{\mathbb{L}}
\newcommand{\bbM}{\mathbb{M}}
\newcommand{\bbA}{\mathbb{A}}
\newcommand{\smallsum}{\textstyle\sum}
\newcommand{\with}{\curvearrowleft}
\newcommand{\cA}{\mathcal{A}}
\newcommand{\cB}{\mathcal{B}}
\newcommand{\cC}{\mathcal{C}}
\newcommand{\cD}{\mathcal{D}}
\newcommand{\cE}{\mathcal{E}}
\newcommand{\cF}{\mathcal{F}}
\newcommand{\cG}{\mathcal{G}}
\newcommand{\cH}{\mathcal{H}}
\newcommand{\cI}{\mathcal{I}}
\newcommand{\cL}{\mathcal{L}}
\newcommand{\cM}{\mathcal{M}}
\newcommand{\cN}{\mathcal{N}}
\newcommand{\cO}{\mathcal{O}}
\newcommand{\cP}{\mathcal{P}}
\newcommand{\cR}{\mathcal{R}}
\newcommand{\cT}{\mathcal{T}}
\newcommand{\cW}{\mathcal{W}}
\newcommand{\cY}{\mathcal{Y}}
\newcommand{\bfa}{\mathbf{a}}
\newcommand{\bfd}{\mathbf{d}}
\newcommand{\bfk}{\mathbf{k}}
\newcommand{\bfl}{\mathbf{l}}
\newcommand{\bfL}{\mathbf{L}}
\newcommand{\bfN}{\mathbf{N}}
\newcommand{\bfP}{\mathbf{P}}
\newcommand{\bfR}{\mathbf{R}}
\newcommand{\scrA}{\mathscr{A}}
\newcommand{\scrD}{\mathscr{D}}
\newcommand{\scrN}{\mathscr{N} \cfadd{def:clippedDNN}}
\newcommand{\fB}{\mathfrak{B}}
\newcommand{\fC}{\mathfrak{C}}
\newcommand{\fL}{\mathfrak{L}}
\newcommand{\fM}{\mathfrak{M}}
\newcommand{\fN}{\mathfrak{N}}
\newcommand{\fR}{\mathfrak{R}}
\newcommand{\fS}{\mathfrak{S}}
\newcommand{\fT}{\mathfrak{T}}
\newcommand{\fW}{\mathfrak{W}}
\newcommand{\fX}{\mathfrak{X}}
\newcommand{\fZ}{\mathfrak{Z}}
\newcommand{\fc}{\mathfrak{c}}
\newcommand{\fd}{\mathfrak{d}}
\newcommand{\fr}{\mathfrak{r} \cfadd{def:rect}}
\newcommand{\fx}{\mathscr{x}}
\newcommand{\fy}{\mathscr{y}}
\renewcommand{\emptyset}{\varnothing}
\newcommand{\normmm}[1]{{\left\vert\kern-0.25ex\left\vert\kern-0.25ex\left\vert #1 
    \right\vert\kern-0.25ex\right\vert\kern-0.25ex\right\vert}} 
\newcommand{\norm}[1]{\left\lVert #1 \right\rVert}
\newcommand{\abs}[1]{\left\lvert #1 \right\rvert}
\newcommand{\dx}{\mathrm{d}}
\newcommand{\ceil}[1]{ \left\lceil #1 \right\rceil \cfadd{def:ceiling}}
\newcommand{\floor}[1]{ \left\lfloor #1 \right\rfloor \cfadd{def:ceiling}}
\newcommand{\qandq}{\qquad\text{and}\qquad}
\newcommand{\id}{\mathbf{I} \cfadd{def:idmatrix}}
\newcommand{\NN}{\mathbf{N} \cfadd{def:DNN}}
\newcommand{\indicator}[1]{\mathbbm{1}_{\smash{#1}}}
\newcommand{\ReLUidANN}[1]{\mathbb{I}_{#1}}
\NewDocumentCommand{\cfadd}{ m }
{
  \seq_if_in:NnF \g_cflist_loaded { #1 } {
    \seq_if_in:NnF \g_cflist_pending { #1 } {
      \seq_gput_right:Nn \g_cflist_pending { #1 }
    }
  }
}
\NewDocumentCommand{\cfconsiderloaded}{ m }{
  \seq_gput_right:Nn \g_cflist_loaded {#1}
}
\NewDocumentCommand{\cfremove}{ m }
{
  \seq_gremove_all:Nn \g_cflist_pending { #1 }
}
\NewDocumentCommand{\cfload}{ o }
{
  \seq_if_empty:NTF \g_cflist_pending {\unskip} {
    (cf.\ \cref{\seq_use:Nn \g_cflist_pending {,}})\IfValueTF{#1}{#1~}{\unskip}
    \seq_gconcat:NNN \g_cflist_loaded \g_cflist_loaded \g_cflist_pending
    \seq_gclear:N \g_cflist_pending
  }
}
\NewDocumentCommand{\cfclear} {} {
  \seq_gclear:N \g_cflist_loaded
  \seq_gclear:N \g_cflist_pending
}
\NewDocumentCommand{\cfout}{ o }
{
  \seq_if_empty:NTF \g_cflist_pending {\unskip} {
    (cf.\ \cref{\seq_use:Nn \g_cflist_pending {,}})\IfValueTF{#1}{#1~}{\unskip}
    \seq_gclear:N \g_cflist_pending
  }
}
\NewDocumentCommand{\ifnocf} { m } {
  \seq_if_empty:NT \g_cflist_pending { #1 }
}
\begin{document}

 \title{Strong overall error analysis for the training of \\artificial neural networks via random initializations}

\author{Arnulf Jentzen$^{1}$ and Adrian Riekert$^{2}$\bigskip\\
\small{$^1$ Faculty of Mathematics and Computer Science, University of M\"unster,}\\
\small{M\"unster, Germany; e-mail: \texttt{ajentzen}\textcircled{\texttt{a}}\texttt{uni-muenster.de}}\\
\small{$^1$ Faculty of Mathematics and Computer Science, University of M\"unster,}\\
\small{M\"unster, Germany; e-mail: \texttt{ariekert}\textcircled{\texttt{a}}\texttt{uni-muenster.de}}
}
\date{\today}
 \maketitle

\begin{abstract}
Although deep learning based approximation algorithms have been applied very successfully to numerous problems, at the moment the reasons for their performance are not entirely understood from a mathematical point of view. Recently, estimates for the convergence of the overall error have been obtained in the situation of deep supervised learning, but with an extremely slow rate of convergence. In this note we partially improve on these estimates. More specifically, we show that the depth of the neural network only needs to increase much slower in order to obtain the same rate of approximation. The results hold in the case of an arbitrary stochastic optimization algorithm with i.i.d.\ random initializations.
\end{abstract}

\tableofcontents

\section{Introduction}

Deep learning algorithms have been applied very successfully to various problems such as image recognition, language processing, mobile advertising, and autonomous driving. However, at the moment the reasons for their performance are not entirely understood. In particular, there is no full mathematical analysis for deep learning algorithms which explains their success.

Roughly speaking, the field of deep learning can be divided into three subfields, deep \emph{supervised learning}, deep \emph{unsupervised learning}, and deep \emph{reinforcement learning}. In the following we will focus on supervised learning, since algorithms in this subfield seem to be most accessible for a rigorous mathematical analysis. Loosely speaking, a typical situation that arises in deep supervised learning is the following (cf., e.g., \cite{CuckerSmale2002}). Let $d, \bfd \in \N$, let $(\Omega, \cF, \P)$ be a probability space, let $\fX \colon \Omega \to [0,1]^d$ be a random variable, and let $\cE \colon [0,1]^d \to [0,1]$ be a continuous function. The goal is then to find a deep neural network (DNN) $\scrN^\theta$ with parameter vector $\theta \in \R^\bfd$ which is a good approximation for $\cE$ in the sense that the expected $L^2$-error $\E [ |\scrN^\theta( \fX) - \cE( \fX)| ^2 ]$ is as small as possible. However, usually the function $\cE$ and the distribution of $\fX$ are unknown and, instead, one only has access to training samples $(X_j, \cE(X_j))$, where the $X_j \colon \Omega \to [0,1]^d$, $j \in \{1,2, \ldots, M\}$, are i.i.d.\ random variables which have the same distribution as $\fX$. Based on these training samples one can compute the empirical risk $\cR(\theta) = \frac{1}{M} \sum_{j=1}^M |\scrN^\theta(X_j) - \cE(X_j)|^2$. A typical approach in supervised learning is then to minimize the risk with respect to $\theta$ by using a stochastic optimization algorithm such as the stochastic gradient descent (SGD) optimization method. The overall error arising from this procedure can be decomposed into three parts
(cf., e.g., \cite{BernerGrohsJentzen2018arXiv} and \cite{CuckerSmale2002}):
approximating the target function $ \cE $
by the considered class of DNNs induces the \emph{approximation error}
(cf., e.g.,~\cite{Barron1993,
Cybenko1989,
Funahashi1989,
HartmanKeelerKowalski1990,
Hornik1991,
HornikStinchcombeWhite1989,
LuShenYangZhang2020,
ShenYangZhang2020}
 and the references mentioned in \cite{BeckJentzenKuckuck2019arXiv}),
replacing the true risk by the empirical risk based on the training samples leads to the \emph{generalization error}
(cf., e.g.,~\cite{BartlettBousquetMendelson2005,
BernerGrohsJentzen2018arXiv,
CuckerSmale2002,
EMaWu2019,
GyorfiKohlerKrzyzakWalk2002,
Massart2007,
VanDeGeer2000}),
and employing the selected optimization method to compute an approximate minimzer introduces the
\emph{optimization error}
(cf., e.g.,~\cite{BachMoulines2013,
BeckJentzenKuckuck2019arXiv,
BercuFort2013,
DuZhaiPoczosSingh2018arXiv,
EMaWu2020online,
ZouCaoZhouGu2019} and the references mentioned therein).
In \cite{JentzenWelti2020arxiv} convergence rates for all three error types have been established in order to obtain a strong overall error estimate. However, the speed of convergence is rather slow. The purpose of this article is to partially improve on the results of \cite{JentzenWelti2020arxiv}. One of the most challenging problems is to quantify the rate of convergence of the optimization error with respect to the number of gradient steps used in the SGD optimization method. While we do not consider this problem here, we derive partially improved upper estimates for the approximation error in comparison to \cite{JentzenWelti2020arxiv}. More specifically, we show that the depth of the neural network $\scrN^\theta$ only needs to increase much slower compared to \cite{JentzenWelti2020arxiv} in order to obtain the same rate of approximation. 
We now state \cref{theo:intro}, which illustrates the contributions of this article in a special case.

\cfclear
\begin{theorem} \label{theo:intro}
Let $d, \bfd, \bfL, M,K, N \in \N$, $c \in [2,\infty)$, $\bfl = (\bfl_0, \bfl_1, \ldots, \bfl_\bfL) \in \N^{\bfL+1}$, $\gamma \in \R$ satisfy 
$ \bfl_0 = d $, 
$ \bfl_\bfL = 1 $,
and
$ \bfd \geq \sum_{i=1}^{\bfL} \bfl_i( \bfl_{ i - 1 } + 1 ) $,
for every
$ m, n \in \N $,
$ s \in \N_0 $,
$ \theta = ( \theta_1, \theta_2, \ldots, \theta_\bfd ) \in \R^\bfd $
with
$ \bfd \geq s + m n + m $
let
$ \scrA_{ m, n }^{ \theta, s }
\colon \R^n \to \R^m $
satisfy for all
$ x = ( x_1, x_2, \ldots, x_n ) \in \R^n $
that
\begin{equation}
\label{eq:affine_linear}
\scrA_{ m, n }^{ \theta, s }( x )
= 
\begin{pmatrix}
\theta_{ s + 1 }
& \theta_{ s + 2 }
& \cdots
& \theta_{ s + n }
\\
\theta_{ s + n + 1 }
& \theta_{ s + n + 2 }
& \cdots
& \theta_{ s + 2 n }
\\
\vdots
& \vdots
& \ddots
& \vdots
\\
\theta_{ s + ( m - 1 ) n + 1 }
& \theta_{ s + ( m - 1 ) n + 2 }
& \cdots
& \theta_{ s + m n }
\end{pmatrix}
\begin{pmatrix}
x_1
\\
x_2
\\
\vdots
\\
x_n
\end{pmatrix}
+
\begin{pmatrix}
\theta_{ s + m n + 1 }
\\
\theta_{ s + m n + 2 }
\\
\vdots 
\\
\theta_{ s + m n + m }
\end{pmatrix},
\end{equation}
let
$ \bfa_i \colon \R^{\bfl _ i} \to \R^{\bfl _i}$, $i \in \{1,2, \ldots, \bfL \}$,
satisfy for all
$ i \in \{1,2, \ldots, \bfL-1 \}$, 
$ x = ( x_1, x_2, \ldots, x_{\bfl_i} ) \in \R^{ \bfl_i } $
that
$ \bfa _i( x ) =
( \max\{ x_1, 0 \}, \max \{x_2, 0\}, 
\ldots,
\max\{ x_{ \bfl_i }, 0 \} ) $, assume for all $x \in \R$ that
$ \bfa_\bfL ( x ) = \max\{ \min\{ x, 1 \}, 0 \} $,
for every $ \theta \in \R^\bfd $
let $ \scrN_\theta \colon \R^d \to \R $
satisfy
\begin{equation} \label{intro:defdnn}
\scrN_\theta =
\bfa_\bfL \circ \scrA_{ \bfl_\bfL, \bfl_{ \bfL - 1 } }^{ \theta, \smash{ \sum_{i = 1}^{\bfL-1} \bfl_i ( \bfl_{ i - 1 } + 1 ) } }
\circ \bfa_{\bfL-1} \circ \smash{ \scrA_{ \bfl_{ \bfL - 1 },  \bfl_{ \bfL - 2 } }^{ \theta, \smash{ \sum_{i = 1}^{\bfL-2} \bfl_i ( \bfl_{ i - 1 } + 1 ) } } }
\circ \cdots 
\circ \bfa_1 \circ \scrA_{  \bfl_1, \bfl_0 }^{ \theta, 0 } ,    
\end{equation}
 let $(\Omega, \cF, \P)$ be a probability space, let $X_j \colon \Omega \to [0,1]^d $, $j \in \{1, 2, \ldots, M\}$, be i.i.d.\ random variables, let $\cE \colon [0,1]^d \to [0,1]$ satisfy for all $x,y \in [0,1]^d$ that $|\cE (x)- \cE (y)| \leq c \norm{x-y}_1$, let $\Theta_{k, n} \colon \Omega \to \R^\bfd$, $k, n \in \N_0$, and $\bfk \colon \Omega \to (\N _0)^2$ be random variables, assume that $\Theta_{k, 0}$, $k \in \{1,2, \ldots, K\}$, are i.i.d., assume that $\Theta_{1, 0}$ is continuous uniformly distributed on $[-c,c]^\bfd$,  let $\cR \colon \R^\bfd \times \Omega \to [0, \infty)$ satisfy for all $\theta \in \R^\bfd$, $\omega \in \Omega$ that \cfadd{def:p-norm}
 \begin{equation} \label{eq:intro:emprisk}
    \cR(\theta, \omega) = \frac{1}{M} \left[ \sum_{j=1}^{M}  \bigl| \scrN_{\theta} (X_j (\omega))-\cE(X_j (\omega)) \bigr| ^2 \right],
\end{equation}
let $\cG  \colon \R^\bfd \times \Omega \to \R^\bfd$ satisfy for all $\omega \in \Omega$,
$\theta \in \{ \vartheta \in \R^\bfd \colon (\cR(\cdot, \omega) \colon \R^\bfd \to [0, \infty) \text{ is differentiable } \allowbreak\text{at } \vartheta )\}$
that $\cG(\theta, \omega) = (\nabla_\theta \cR)(\theta, \omega)$, assume for all $ k,n \in \N$ that $\Theta_{k,n} = \Theta_{k, n-1} - \gamma \cG(\Theta_{k, n-1})$, and assume for all $\omega \in \Omega$ that
\begin{equation} \label{eq:intro: komega}
    \bfk( \omega) \in \arg \min\nolimits_{(k,n) \in \{1,2, \ldots, K\} \times \{0, 1, \ldots, N\},\, \| \Theta_{k,n} (\omega) \|_\infty \leq c} \cR (\Theta_{k,n}(\omega), \omega).
\end{equation}
Then
\begin{equation} \label{eq:theo:intro}
\begin{split}
          &\E \! \left[    \int_{[0,1]^d} \bigl| \scrN_{\Theta_\bfk} (x)-\cE (x) \bigr| \, \P_{X_1}(\dx x) \right] \\
          &\leq \frac{6 d c}{\left( \min\{2^{{\bfL} }, \bfl_1, \ldots, \bfl_{\bfL-1} \} \right)^{\nicefrac{1}{d}}} 
        +  \frac{ \bfL ( \norm{\bfl}_\infty+1)^\bfL c^{\bfL+1} }
            {K^{[(2\bfL)^{-1}(\norm{\bfl}_\infty+1)^{-2}]}} 
         + \frac{4 c \bfL ( \norm{\bfl}_\infty +1) \ln (e M)} { M^{\nicefrac{1}{4}}}.
\end{split}
\end{equation}
\end{theorem}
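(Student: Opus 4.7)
The statement is an overall error bound that decomposes into approximation, optimization, and generalization pieces, so the natural plan is to bound the three summands on the right of \eqref{eq:theo:intro} independently after a standard reduction. My first step would be to pass from the $L^1$-error to the $L^2$-error via Cauchy--Schwarz / Jensen, i.e.\ to observe
\begin{equation*}
  \int_{[0,1]^d} \bigl|\scrN_{\Theta_\bfk}(x)-\cE(x)\bigr|\,\P_{X_1}(\dx x)
  \le \bigl[\cR_\infty(\Theta_\bfk)\bigr]^{1/2},
  \qquad \cR_\infty(\theta)=\E\!\left[\bigl|\scrN_\theta(X_1)-\cE(X_1)\bigr|^2\right],
\end{equation*}
and then to split
\begin{equation*}
  \cR_\infty(\Theta_\bfk) \le \cR(\Theta_\bfk) + \sup\nolimits_{\|\theta\|_\infty\le c}|\cR_\infty(\theta)-\cR(\theta)|.
\end{equation*}
Because $\|\scrN_\theta-\cE\|_\infty\le 1$ on $[-c,c]^\bfd$ (the output is clipped by $\bfa_\bfL$ and $\cE\in[0,1]$), the generalization gap admits a uniform covering-number bound of order $\bfL(\|\bfl\|_\infty{+}1)\ln(eM)/\sqrt{M}$ over $[-c,c]^\bfd$, and then one further $\sqrt{\,\cdot\,}$ due to the step above produces the $M^{-1/4}$ rate and yields the third summand in \eqref{eq:theo:intro}.

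For the remaining two summands I would bound $\cR(\Theta_\bfk)$. The key observation is that $\Theta_{k,0}\in[-c,c]^\bfd$ almost surely, so $\Theta_{k,0}$ is always admissible in the arg min in \eqref{eq:intro: komega}, and hence $\cR(\Theta_\bfk,\omega)\le \min_{k\in\{1,\dots,K\}}\cR(\Theta_{k,0},\omega)$. To leverage this, fix a reference parameter $\vartheta\in[-c,c]^\bfd$ which realizes a known DNN approximation result for Lipschitz functions on $[0,1]^d$ by ReLU networks of architecture $\bfl$ (this gives $\|\scrN_\vartheta-\cE\|_\infty\le 3dc/(\min\{2^\bfL,\bfl_1,\dots,\bfl_{\bfL-1}\})^{1/d}$, producing the first summand after accounting for constants and the squaring/$\sqrt{\,\cdot\,}$). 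Combined with the elementary inequality $\cR(\eta)\le 2\cR(\vartheta)+2L^2\|\eta-\vartheta\|_\infty^2$, where $L\lesssim (\|\bfl\|_\infty{+}1)^\bfL c^{\bfL-1}$ is the Lipschitz constant of $\theta\mapsto\scrN_\theta$ on $[-c,c]^\bfd$ uniformly in $x\in[0,1]^d$, this reduces everything to showing that $\min_k\|\Theta_{k,0}-\vartheta\|_\infty$ is small with high probability.

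Since the $\Theta_{k,0}$ are i.i.d.\ uniform on $[-c,c]^\bfd$, for any $r\in(0,c]$ we have $\P(\|\Theta_{1,0}-\vartheta\|_\infty\le r)\ge (r/(2c))^\bfd$, so
\begin{equation*}
  \P\bigl(\min\nolimits_{k\le K}\|\Theta_{k,0}-\vartheta\|_\infty > r\bigr)
  \le \bigl(1-(r/(2c))^\bfd\bigr)^K
  \le \exp\!\bigl(-K(r/(2c))^\bfd\bigr).
\end{equation*}
Since $\bfd\ge \sum_i \bfl_i(\bfl_{i-1}+1)\le \bfL(\|\bfl\|_\infty{+}1)^2$, taking expectations of the bounded quantity $[\cR_\infty(\Theta_\bfk)]^{1/2}\le 1$ and integrating this tail, with the optimal choice of $r$, produces an upper bound of the schematic form $L\cdot r + \exp(-K(r/(2c))^\bfd)$; optimizing $r$ gives a power $K^{-1/\bfd}$ times $L$, and using $\bfd\le \bfL(\|\bfl\|_\infty{+}1)^2$ converts the polynomial decay into the stated exponent $K^{-[(2\bfL)^{-1}(\|\bfl\|_\infty+1)^{-2}]}$, which after multiplication by $L\le(\|\bfl\|_\infty{+}1)^\bfL c^{\bfL-1}$ yields the middle summand of \eqref{eq:theo:intro}.

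\textbf{Main obstacle.} The delicate bookkeeping is the exponent in the optimization summand: it arises from balancing a Lipschitz constant that grows like $(\|\bfl\|_\infty{+}1)^\bfL$ against a covering probability in dimension $\bfd\lesssim \bfL(\|\bfl\|_\infty{+}1)^2$. Tracking all constants through the three reductions (the $\sqrt{\,\cdot\,}$ from Step 1, the $L\cdot r$ bound from local Lipschitzness, and the exponential tail from the $K$ initializations) so that the final expression matches \eqref{eq:theo:intro} — in particular producing the clean factor $\bfL(\|\bfl\|_\infty{+}1)^\bfL c^{\bfL+1}$ in the numerator and the sharp exponent in the denominator — is where the argument must be carried out with care, presumably by invoking the quantitative concentration/approximation/Lipschitz lemmas developed elsewhere in the paper.
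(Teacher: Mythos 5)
Your overall plan — Jensen $L^1\!\to\!L^2$, then split the population risk of $\Theta_\bfk$ into approximation, generalization and optimization pieces, bound the generalization gap by a uniform covering argument, bound the optimization piece by the concentration of the nearest of $K$ i.i.d.\ uniform initializations to a fixed good parameter $\vartheta$, and use a maximum-convolution DNN to supply $\vartheta$ — is exactly the route the paper takes (via \cref{cor:sgdsimple}, \cref{cor:simple}, \cref{cor:main}, \cref{theo:main}, \cref{prop:fullerr}, \cref{prop:approximation_error}, \cref{cor:generalization_error}, and \cite[Cor.~5.8]{JentzenWelti2020arxiv}).

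There is, however, a genuine gap in the optimization step. You derive the tail bound $\P(\min_k\|\Theta_{k,0}-\vartheta\|_\infty>r)\le\exp(-K(r/(2c))^\bfd)$ and then state that ``using $\bfd\le\bfL(\|\bfl\|_\infty+1)^2$'' converts the resulting $K^{-1/\bfd}$ decay into the claimed exponent. But the hypotheses give only $\bfd\ge\sum_i\bfl_i(\bfl_{i-1}+1)$; the ambient dimension $\bfd$ can be arbitrarily large, so the inequality you invoke is false and the tail $\exp(-K(r/(2c))^\bfd)$ can fail to be summable at any useful rate. The missing idea is that $\theta\mapsto\scrN_\theta$ (and hence $\theta\mapsto\cR(\theta,\omega)$) depends only on the first $\bfd'\coloneqq\sum_{i=1}^\bfL\bfl_i(\bfl_{i-1}+1)\le\bfL(\|\bfl\|_\infty+1)^2$ coordinates of $\theta$. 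One must therefore measure the distance to the nearest initialization only on that coordinate block, giving $\P(\min_k\|\pi(\Theta_{k,0})-\pi(\vartheta)\|_\infty>r)\le\exp(-K(r/(2c))^{\bfd'})$ with $\pi$ the projection onto the first $\bfd'$ coordinates; $\bfd'$, not $\bfd$, is what gets bounded by $\bfL(\|\bfl\|_\infty+1)^2$. This effective-dimension reduction is precisely what \cite[Corollary~5.8]{JentzenWelti2020arxiv} encodes and is what the paper invokes inside \cref{prop:fullerr}. The same reduction is also needed, implicitly, in your covering-number estimate for the generalization gap: you claim a bound of order $\bfL(\|\bfl\|_\infty+1)\ln(eM)/\sqrt{M}$ over the full cube $[-c,c]^\bfd$, but a naive $\ell^\infty$-cover of $[-c,c]^\bfd$ has cardinality exponential in $\bfd$; again it is the parameter-Lipschitz pseudometric on $\theta\mapsto\scrN_\theta$, sensitive only to the first $\bfd'$ coordinates, that makes this work (as in \cref{prop:generalization_error} and \cref{cor:generalization_error}). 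With the effective dimension made explicit at both places, your proposal becomes a faithful sketch of the paper's argument; without it, the optimization exponent is simply wrong.
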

\cref{theo:intro} above is a direct consequence of \cref{cor:sgdsimple} (applied with $\fN \with \{0,1, \ldots, N \}$, $(X^{k,n}_j)_{j \in \{1, \ldots, M \}, \, (k,n) \in (\N_0)^2 } \with (X_j)_{j \in \{1, \ldots, M \}}$, $(Y_j^{k,n})_{j \in \{1, \ldots, M \}, \, (k,n ) \in (\N_0) ^2 } \with (\cE(X_j))_{j \in \{1, \ldots, M \}}$, $ (J_n)_{n \in \N} \allowbreak \with (M)_{n \in \N}$, $(\gamma_n)_{n \in \N} \with (\gamma)_{n \in \N}$ in the notation of \cref{cor:sgdsimple}). \cref{cor:sgdsimple} follows from \cref{cor:simple} below, which, in turn, is a consequence of the more general result in \cref{theo:main}, one of the main results of this article. 

In the following we provide additional explanations regarding the mathematical objects in \cref{theo:intro}. The vector $\bfl \in \N^{\bfL + 1}$ determines the architecture of an artificial neural network with input dimension $\bfl_0=d$, output dimension $\bfl_\bfL =1$, and $\bfL-1$ hidden layers of dimensions $\bfl_1, \bfl_2, \ldots, \bfl_{\bfL-1}$, respectively. For every $\theta \in \R^\bfd$ the function $\scrN _{\theta} \colon \R^d \to \R $ (cf.\ \eqref{intro:defdnn}) is the realization of an artificial neural network with parameters (weights and biases) given by the vector $\theta$ where the multidimensional rectifier functions $\bfa_i $, $i \in \{1,2, \ldots, \bfL-1 \}$, are the employed activation functions in front of the hidden layers and where the clipping function $\bfa_\bfL$ is the employed activation function in front of the output layer. We intend to approximate the unknown target function $\cE \colon [0,1]^d \to [0,1]$, which is assumed to be Lipschitz continuous with Lipschitz constant $c$. The training samples $(X_j,  \cE (X_j))$, $j \in \{1,2, \ldots, M\}$, are used to compute the empirical risk $\cR$ in \eqref{eq:intro:emprisk} and the function $\cG$ is defined as the gradient of the empirical risk with respect to the parameter vector $\theta\in \R ^\bfd$. The function $\cG$ is needed in order to compute the random parameter vectors $\Theta_{k,n} \colon \Omega \to \R^\bfd$, $k \in \N$, $n \in \N_0$, via the stochastic gradient descent algorithm with constant learning rate $\gamma$. Note that the index $n \in \N_0$ indicates the current gradient step and the index $k \in \N$ counts the number of random initializations. The starting values $\Theta_{k,0}$, $k \in \{1,2, \ldots, K\}$, are assumed to be independent and uniformly distributed on the hypercube $[-c,c]^\bfd$. After the SGD procedure has been started $K$ times, performing $N$ gradient steps in each case, the random double index $\bfk(\omega) \in (\N_0)^2$ represents the final choice of the parameter vector and is selected as follows. We consider those pairs of indices $(k, n) \in \{1,2, \ldots, K\} \times \{0, 1, \ldots, N\}$ which satisfy that the vector $\Theta_{k,n}$ is inside the hypercube $[-c,c]^\bfd$ (cf.\ \eqref{eq:intro: komega}). Among these parameter vectors, $\Theta_\bfk$ is the one which minimizes the empirical risk $\cR(\Theta_{k,n})$. 

The conclusion of \cref{theo:intro}, inequality \eqref{eq:theo:intro}, provides an upper estimate for the expected $L^1$-distance between the target function $\cE$ and the selected neural network $\scrN_{\Theta_\bfk}$ with respect to the distribution of the input data $X_1$. The right-hand side of \eqref{eq:theo:intro} consists of three terms: The first summand is an upper estimate for the approximation error and converges to zero as the number of hidden layers (the depth of the DNN) and their dimensions (the width of the DNN) increase to infinity. The second term corresponds to the optimization error and converges to zero as the number of random initializations $K$ increases to infinity. Finally, the third term provides an upper bound for the generalization error and goes to zero as the number of training samples $M$ increases to infinity. Observe that the right-hand side of \eqref{eq:theo:intro} does not depend on the number of gradient steps $N$. In other words, if the best approximation is chosen from the random initializations $\Theta_{k,0}$, $k \in \{1,2, \ldots, K\}$, without performing any gradient steps, the rate of convergence is the same, as it depends only on the number of random initializations $K$.
Comparing the statement of \cref{theo:intro} above to \cite[Theorem 1.1]{JentzenWelti2020arxiv}, the main improvement is that the term $\bfL$ in the denominator of the first summand has been replaced by $2^{\bfL}$, and thus we obtain exponential convergence with respect to the number of hidden layers $\bfL$. We derive this improved convergence rate by employing a well-known neural network representation for the maximum of $\mathscr{d}$ numbers, $\mathscr{d} \in \N$, which uses only $\cO(\log \mathscr{d})$ instead of $\cO( \mathscr{d} )$ layers (cf.\ \cref{def:max_d} and \cref{Prop:max_d} below). 

In one of the main results of this article, \cref{theo:main} below, we consider more generally the $L^p$-norm of the overall $L^2$-error instead of the expectation of the $L^1$-error, we do not restrict the training samples to unit hypercubes, and we allow the random variables $\Theta_{k,n} \colon \Omega \to \R^\bfd$, $k, n \in \N$, to be computed via an arbitrary stochastic optimization algorithm. Another main result of this article is \cref{theo:1d} below, which provides an improved estimate compared to \cref{eq:theo:intro} in the special case of one-dimensional input data. 

The remainder of this article is organized as follows. In \cref{sec:dnns} we recall two approaches how DNNs can be described in a mathematical way. Afterwards, in \cref{sec:dnnrep} we present three elementary DNN representations for certain functions which will be needed for the error analysis. In \cref{sec:approx} we employ the neural network representations from \cref{sec:dnnrep} to establish upper bounds for the approximation error. Thereafter, in \cref{sec:general} we analyze the generalization error by using elementary Monte Carlo estimates. Finally, in \cref{sec:results} we combine the estimates for the approximation error from \cref{sec:approx}, the estimates for the generalization error from \cref{sec:general}, and the known estimates for the optimization error from \cite{JentzenWelti2020arxiv} in order to obtain strong estimates for the overall error.

\section{Basics on deep neural networks (DNNs)} \label{sec:dnns}
In this section we review two ways of describing DNNs in a mathematical fashion, both of which will be used for the error analyses in the later sections. More specifically, we present in \cref{subsec:dnnvect} a vectorized description and in \cref{subsec:dnn} a structured description of DNNs. In \cref{cor:dnnvect} below we recall the equivalence between the two approaches. Afterwards, in \cref{subsec:composition}--\cref{subsec:dnnconc} we define several elementary DNN operations.

The content of this section is well-known in  the scientific literature; cf., e.g., Beck, Jentzen, \& Kuckuck \cite[Section 2]{BeckJentzenKuckuck2019arXiv}, Grohs, Jentzen, \& Salimova \cite[Section 3]{GrohsJentzenSalimova2019arXiv}, and Grohs et al.\ \cite[Section 2]{GrohsHornungJentzenZimmermann2019arXiv}.
In particular, Definitions \ref{def:affine}, \ref{def:DNNbasic}, \ref{def:rect}, \ref{def:clip}, \ref{def:multiversion}, \ref{def:multirect}, \ref{def:multiclip}, \ref{def:clippedDNN} are
\cite[Definitions 2.1, 2.2, 2.4, 2.6, 2.3, 2.5, 2.7, 2.8] {BeckJentzenKuckuck2019arXiv},
\cref{def:DNN} is an extended version of
\cite[Definition 2.9]{BeckJentzenKuckuck2019arXiv},
\cref{def:realization} is
\cite[Definition 2.10]{BeckJentzenKuckuck2019arXiv},
\cref{def:DNNparam} is very similar to 
\cite[Definition 2.11] {BeckJentzenKuckuck2019arXiv}, 
\cref{cor:dnnvect} is a slight generalization of 
\cite[Corollary 2.15]{BeckJentzenKuckuck2019arXiv},
\cref{def:composition} is 
\cite[Definition 2.19] {BeckJentzenKuckuck2019arXiv},
\cref{prop:comp} is a reformulation of 
\cite[Proposition 2.6]{GrohsHornungJentzenZimmermann2019arXiv},
\cref{prop:compasso} is
\cite[Lemma 2.8]{GrohsHornungJentzenZimmermann2019arXiv},
\cref{def:parallelization} is 
\cite[Definition 2.16] {BeckJentzenKuckuck2019arXiv},
\cref{prop:parallelization} is a combination of 
\cite[Propositions 2.19 and 2.20] {GrohsHornungJentzenZimmermann2019arXiv},
\cref{def:DNN:aff} is based on \cite[Definitions 3.7 and 3.10]{GrohsJentzenSalimova2019arXiv} (cf.\ \cite[Definition 2.17]{BeckJentzenKuckuck2019arXiv}),
\cref{def:sum} is \cite[Definition 3.17]{GrohsJentzenSalimova2019arXiv},
\cref{prop:sum} is \cite[Lemma 3.19]{GrohsJentzenSalimova2019arXiv},
\cref{def:concat} is \cite[Definition 3.22]{GrohsJentzenSalimova2019arXiv},
and \cref{prop:concat} is \cite[Lemma 3.25]{GrohsJentzenSalimova2019arXiv}.

\subsection{Vectorized description of DNNs} \label{subsec:dnnvect}

\begin{definition} [Affine functions] \label{def:affine}
Let $\bfd,m,n \in \N$, $s \in \N_0$, $\theta = (\theta_1, \theta_2, \ldots, \theta_\bfd) \in \R^\bfd$ satisfy $\bfd \geq s + mn + m$. Then we denote by $\scrA_{m,n}^{\theta, s} \colon \R^n \to \R^m$ the function which satisfies for all $x = (x_1, x_2, \ldots, x_n) \in \R^n$ that
\begin{equation}
    \scrA_{m,n}^{\theta, s}(x) = 
    \begin{pmatrix} \theta_{s+1} & \theta_{s+2} & \cdots & \theta_{s+n} \\
    \theta_{s+n+1} & \theta_{s+n+2} & \cdots & \theta_{s+2n} \\
    \theta_{s+2n+1} & \theta_{s+2n+2} &\cdots & \theta_{s+3n} \\
    \vdots & \vdots & \ddots & \vdots \\
    \theta_{s+(m-1)n+1} & \theta_{s+ (m-1)n+2} & \cdots & \theta_{s+m n}
    \end{pmatrix}
    \begin{pmatrix}
    x_1 \\
    x_2 \\
    x_3 \\
    \vdots \\
    x_n
    \end{pmatrix} +
    \begin{pmatrix}
    \theta_{s + m n + 1} \\
    \theta_{s + m n + 2} \\
    \theta_{s + m n + 3} \\
    \vdots \\
    \theta_{s + m n + m}
    \end{pmatrix}.
\end{equation}
\end{definition}
\cfclear
\begin{definition} [Fully connected feedforward artificial neural networks] \label{def:DNNbasic}
Let $\bfd, \bfL \in \N$, $\bfl = (\bfl_0, \bfl_1, \ldots, \bfl_\bfL) \in \N^{\bfL+1}$, $s \in \N_0$, $\theta \in \R^\bfd$ satisfy $\bfd \geq s + \sum_{k=1}^\bfL \bfl_k(\bfl_{k-1}+1)$,
and let $\bfa _k \colon \R^{\bfl_k} \to \R^{\bfl_k}$, $k \in \{1,2, \ldots, \bfL\}$, be functions. Then we denote by $\cN_{\bfa_1, \bfa_2, \ldots, \bfa_\bfL}^{\theta, s, \bfl_0} \colon \R^{\bfl_0} \to \R^{\bfl_\bfL}$ the function given by \cfadd{def:affine}
\begin{equation}
    \cN_{\bfa_1, \bfa_2, \ldots, \bfa_\bfL}^{\theta, s, \bfl_0} 
    = \bfa_{\bfL} \circ \scrA_{\bfl_\bfL, \bfl_{\bfL-1}}^{\theta, s+\sum_{k=1}^{\bfL-1}\bfl_k(\bfl_{k-1}+1)} 
    \circ \bfa_{\bfL-1} \circ \scrA_{\bfl_{\bfL-1}, \bfl_{\bfL-2}}^{\theta, s+\sum_{k=1}^{\bfL-2}\bfl_k(\bfl_{k-1}+1)} \circ \cdots 
    \circ \bfa_1 \circ \scrA_{\bfl_1, \bfl_0}^{\theta, s}    
\end{equation}
\cfload.
\end{definition}

\begin{definition}[Rectifier function] \label{def:rect}
We denote by $\fr \colon \R \to \R$ the function which satisfies for all $x \in \R$ that
\begin{equation}
    \fr (x) = \max \{x,0\}.
\end{equation}
\end{definition}

\begin{definition}[Clipping functions] \label{def:clip}
Let $u \in [-\infty, \infty)$, $v \in (u, \infty]$. Then we denote by $\fc_{u,v} \colon \R \to \R$ the function which satisfies for all $x \in \R$ that
\begin{equation}
    \fc_{u,v}(x) = \max \{ \min\{x,v\}, u \}. 
\end{equation}
\end{definition}

\begin{definition}[Multidimensional versions] \label{def:multiversion}
Let $d \in \N$ and let $a \colon \R \to \R$ be a function. Then we denote by $\fM_{a, d} \colon \R^d \to \R^d$ the function which satisfies for all $x=(x_1, x_2, \ldots, x_d) \in \R^d$ that
\begin{equation}
    \fM_{ a , d} (x)=(a(x_1), a(x_2), \ldots, a(x_d)).
\end{equation}
\end{definition}
\cfclear
\begin{definition}[Multidimensional rectifier functions] \label{def:multirect}
Let $d \in \N$. Then we denote by $\fR_d \colon \R^d \to \R^d$ the function given by \cfadd{def:multiversion} \cfadd{def:rect}
\begin{equation}
    \fR_d = \fM_{ \fr, d} 
\end{equation}
\cfload.
\end{definition}

\cfclear
\begin{definition}[Multidimensional clipping functions] \label{def:multiclip}
Let $u \in [-\infty, \infty)$, $v \in (u, \infty]$, $d \in \N$. Then we denote by $\fC_{u,v,d} \colon \R^d \to \R^d$ the function given by \cfadd{def:multiversion} \cfadd{def:clip}
\begin{equation}
    \fC_{u,v, d} = \fM_{\fc_{u,v}, d}
\end{equation}
\cfload.
\end{definition}

\cfclear
\begin{definition}[Rectified clipped DNNs] \label{def:clippedDNN}
\cfconsiderloaded{def:clippedDNN}
Let $\bfL, \bfd \in \N$, $\bfl = (\bfl_0, \bfl_1, \ldots, \bfl_\bfL) \in \N^{\bfL+1}$, $u \in [-\infty, \infty)$, $v \in (u, \infty]$, $\theta \in \R^\bfd$ satisfy
$\bfd \geq \sum_{k=1}^\bfL \bfl_k(\bfl_{k-1}+1)$.
Then we denote by $\scrN_{u,v}^{\theta, \bfl} \colon \R^{\bfl_0} \to \R^{\bfl_\bfL}$ the function given by
\begin{equation} \cfadd{def:DNNbasic} \cfadd{def:multiclip} \cfadd{def:multirect} 
    \scrN_{u,v}^{\theta, \bfl} = \begin{cases}
    \cN_{\fC_{u,v, \bfl_1}}^{\theta, 0, \bfl_0} & \colon  \bfL = 1 \\
    \cN_{\fR_{\bfl_1}, \fR_{\bfl_2}, \ldots, \fR_{\bfl_{\bfL-1}}, \fC_{u,v, \bfl_\bfL}}^{\theta, 0, \bfl_0} & \colon  \bfL > 1
    \end{cases}
\end{equation}
\cfload.
\end{definition}

\subsection{Structured description of DNNs} \label{subsec:dnn}
\begin{definition} [Deep neural networks] \label{def:DNN}
We denote by $\bfN$ the set given by
\begin{equation}
    \bfN = \bigcup_{\bfL \in \N} \bigcup_{\bfl_0, \bfl_1, \ldots, \bfl_\bfL \in \N} \left( \bigtimes_{k=1}^{\bfL} \left(\R^{\bfl_k \times \bfl_{k-1}} \times \R^{\bfl_k}\right) \! \right),
\end{equation}
we denote by $\cL, \cI, \cO, \cP \colon \bfN \to \N$, $\cH \colon \NN \to \N_0$, $\cA \colon \NN \to \bigcup_{n=2}^\infty \N^n$, and $\cD_i \colon \bfN \to \N_0$, $i \in \N_0$, the functions which satisfy for all $\bfL \in \N$, $\bfl_0, \bfl_1, \ldots, \bfl_\bfL \in \N$, $i \in \{1,2, \ldots, \bfL\}$, $j \in \{\bfL+1, \bfL+2, \dots\}$, $\Phi \in \left( \bigtimes_{k=1}^{\bfL} \left(\R^{\bfl_k \times \bfl_{k-1}} \times \R^{\bfl_k}\right) \! \right)$ that $\cL(\Phi) = \bfL$, $ \cD_0 ( \Phi) = \cI(\Phi) = \bfl_0$, $\cO(\Phi) = \bfl_\bfL$, $\cH(\Phi) = \bfL - 1$, $\cP(\Phi) = \sum_{i=1}^\bfL \bfl_i( \bfl_{i-1}+1)$, $\cA(\Phi) = (\bfl_0, \bfl_1, \ldots, \bfl_\bfL)$, $\cD_i(\Phi) = \bfl_i$, and $\cD_j(\Phi) = 0$, and we denote by
$\cW_i \colon \NN \to \bigcup_{m,n \in \N} \R^{m \times n}$,  $i \in \N$,
and $\cB_i \colon \NN \to \bigcup_{n \in \N} \R^n$, $i \in \N$, the functions which satisfy for all
$\bfL \in \N$, $\bfl_0, \bfl_1, \ldots, \bfl_\bfL \in \N$, $i \in \{1,2, \ldots, \bfL\}$, $j \in \{\bfL+1, \bfL+2, \dots\}$, and $\Phi = ((W_1,B_1), (W_2,B_2), \ldots,  (W_\bfL, B_\bfL)) \in \left( \bigtimes_{k=1}^{\bfL} \left(\R^{\bfl_k \times \bfl_{k-1}} \times \R^{\bfl_k}\right) \! \right)$
that $\cW_i(\Phi) = W_i$, $\cW_j(\Phi) = 0 \in \R$, 
$\cB_i(\Phi) = B_i$, and $\cB_j(\Phi) = 0 \in \R$. 
We say that $\Phi$ is a neural network if and only if $ \Phi \in \bfN$.
\end{definition}

\cfclear
\begin{definition}[Realizations of DNNs] \label{def:realization}
Let $a \in C(\R, \R)$. Then we denote by $\cR_a \colon \NN \to \bigcup_{m,n \in \N} C(\R^m, \R^n)$ the function which satisfies for all $\bfL \in \N$, $\bfl_0, \bfl_1, \ldots, \bfl_\bfL \in \N$, $\Phi = \bigl( (W_1, B_1), \allowbreak (W_2, B_2), \ldots, (W_\bfL, B_\bfL) \bigr) \in \left( \bigtimes_{k=1}^{\bfL}  \left(\R^{\bfl_k \times \bfl_{k-1}} \times \R^{\bfl_k} \right) \! \right)$, and all $x_k \in \R^{\bfl_k}$, $k \in \{0,1, \allowbreak \ldots,  \bfL-1 \}$, with $\forall\, k \in \{1,2, \ldots, \bfL-1 \} \colon x_k = \fM_{a, \bfl_k}(W_k x_{k-1}+B_k)$ that \cfadd{def:DNN} \cfadd{def:multiversion}
\begin{equation}
    \cR_a(\Phi) \in C( \R^{\bfl_0}, \R^{\bfl_\bfL}) \qandq (\cR_a(\Phi))(x_0) = W_\bfL x_{\bfL-1}+B_\bfL
\end{equation}
\cfload.
\end{definition}

\cfclear
\begin{definition} [Parameters of DNNs] \label{def:DNNparam}
We denote by $\cT \colon \NN \to \bigcup_{n=2}^\infty \R^n$ the function which satisfies for all $\bfL \in \N$, $\bfl_0, \bfl_1, \ldots, \bfl_\bfL \in \N$, $W_k = (W_{k,i,j})_{(i, j ) \in \{1, \ldots, \bfl_{k}\} \times \{1, \ldots, \bfl_{k-1} \} } \in \R^{\bfl_l \times \bfl_{k-1}}$, $k \in \{1,2, \ldots, \bfL \}$, and $B_k=(B_{k,i})_{i \in \{1, \ldots, \bfl_k\}} \in \R^{\bfl_k}$, $k \in \{1,2, \ldots, \bfL \}$, that \cfadd{def:DNN}
\begin{equation}
    \begin{split}
        \cT \bigl( (&(W_1,B_1),(W_2,B_2), \ldots, (W_\bfL, B_\bfL)) \bigr) \\
        = \bigl( &W_{1,1,1}, W_{1,1,2}, \ldots, W_{1,1, \bfl_0}, W_{1,2,1}, W_{1,2,2}, \ldots, W_{1,2, \bfl_0}, \ldots, W_{1, \bfl_1, \bfl_0}, B_{1,1}, \ldots, B_{1, \bfl_1}, \\
        & W_{2,1,1}, W_{2,1,2}, \ldots, W_{2,1, \bfl_0}, W_{2,2,1}, W_{2,2,2}, \ldots, W_{2,2, \bfl_0}, \ldots, W_{2, \bfl_1, \bfl_0}, B_{2,1}, \ldots, B_{2, \bfl_1}, \\
        &\ldots, W_{\bfL,1,1}, W_{\bfL,1,2}, \ldots, W_{\bfL,1, \bfl_0}, W_{\bfL,2,1}, \ldots, W_{\bfL,2, \bfl_0}, \ldots, W_{\bfL, \bfl_1, \bfl_0}, B_{\bfL,1}, \ldots, B_{\bfL, \bfl_1} \bigr)
    \end{split}
\end{equation}
\cfload.
\end{definition}

\cfclear
\begin{prop} \label{cor:dnnvect}
Let $u \in [-\infty, \infty)$, $v \in (u, \infty]$, $\Phi \in \NN$ \cfload. Then it holds for all $x \in \R^{\cI(\Phi)}$ that \cfadd{def:clippedDNN} \cfadd{def:multiclip} \cfadd{def:DNN} \cfadd{def:realization} \cfadd{def:DNNparam} \cfadd{def:rect}
\begin{equation}
    (\scrN_{u,v}^{\cT(\Phi), \cA (\Phi)})(x) = \fC_{u,v, \cO(\Phi)}((\cR_\fr(\Phi))(x))
\end{equation}
\cfout.
\end{prop}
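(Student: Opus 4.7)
The plan is to unpack the two descriptions and show that they agree layer by layer. Write $\Phi = ((W_1,B_1),(W_2,B_2),\ldots,(W_\bfL,B_\bfL))$ with $W_k \in \R^{\bfl_k \times \bfl_{k-1}}$ and $B_k \in \R^{\bfl_k}$, so that $\cA(\Phi) = (\bfl_0,\bfl_1,\ldots,\bfl_\bfL)$, $\cI(\Phi) = \bfl_0$, $\cO(\Phi) = \bfl_\bfL$, and $\cL(\Phi) = \bfL$. Let $\theta = \cT(\Phi) \in \R^{\cP(\Phi)}$ denote the flattened parameter vector.

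The first step, which is the main bookkeeping point, is to verify that for every $k \in \{1,2,\ldots,\bfL\}$ and every $y \in \R^{\bfl_{k-1}}$ one has
\begin{equation}
\scrA_{\bfl_k, \bfl_{k-1}}^{\theta, s_k}(y) \;=\; W_k y + B_k, \qquad \text{where } s_k := \smallsum_{i=1}^{k-1} \bfl_i(\bfl_{i-1}+1).
\end{equation}
This is an immediate comparison of \cref{def:affine} with \cref{def:DNNparam}: by construction of $\cT$, the block of $\theta$ starting at position $s_k+1$ consists first of the entries $W_{k,i,j}$ listed row by row (that is, in the order used in \eqref{eq:affine_linear}) and then of the entries $B_{k,i}$, which is exactly the matrix--vector pattern appearing in \cref{def:affine}. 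This is the one step that requires care, but it reduces to matching two explicit listings of the same numbers.

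Given this identification, the two cases in \cref{def:clippedDNN} are handled separately. If $\bfL = 1$, then by \cref{def:DNNbasic} and \cref{def:multiclip},
\begin{equation}
\scrN_{u,v}^{\theta,\cA(\Phi)}(x) \;=\; \cN_{\fC_{u,v,\bfl_1}}^{\theta,0,\bfl_0}(x) \;=\; \fC_{u,v,\bfl_1}\bigl( \scrA_{\bfl_1,\bfl_0}^{\theta,0}(x) \bigr) \;=\; \fC_{u,v,\bfl_1}(W_1 x + B_1),
\end{equation}
while on the other side \cref{def:realization} gives $(\cR_\fr(\Phi))(x) = W_1 x + B_1$, so both sides equal $\fC_{u,v,\cO(\Phi)}(W_1 x + B_1)$. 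If $\bfL > 1$, then iterating the identification above and using that $\fM_{\fr,\bfl_k} = \fR_{\bfl_k}$ (\cref{def:multirect}), one sees that both $\scrN_{u,v}^{\theta,\cA(\Phi)}(x)$ and $\fC_{u,v,\cO(\Phi)}((\cR_\fr(\Phi))(x))$ are obtained by applying the same sequence of affine maps $y \mapsto W_k y + B_k$ interleaved with $\fR_{\bfl_k}$ for $k \in \{1,\ldots,\bfL-1\}$ and then applying $\fC_{u,v,\bfl_\bfL}$ after the final affine map; an easy induction on $\bfL$ (or simply reading off the composition in \cref{def:DNNbasic} and comparing with the recursion in \cref{def:realization}) concludes the proof.

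The only genuine obstacle is the index bookkeeping in the first step; once the affine blocks are matched, everything else is a direct substitution because the hidden-layer activations in \cref{def:clippedDNN} and in \cref{def:realization} with $a = \fr$ coincide by construction, and the outer clipping $\fC_{u,v,\cO(\Phi)}$ is precisely the difference between the unclipped realization $\cR_\fr(\Phi)$ and the clipped version $\scrN_{u,v}^{\theta,\cA(\Phi)}$.
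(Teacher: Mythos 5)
Your proof is correct, and it checks out in detail: the offset $s_k = \sum_{i=1}^{k-1}\bfl_i(\bfl_{i-1}+1)$ does align the block of $\cT(\Phi)$ beginning at $s_k+1$ with the row-major listing of $W_k$ followed by $B_k$, so $\scrA_{\bfl_k,\bfl_{k-1}}^{\cT(\Phi),s_k}(y)=W_ky+B_k$, and the rest is a direct match of \cref{def:DNNbasic} and \cref{def:clippedDNN} against \cref{def:realization} with $a=\fr$ in the two cases $\bfL=1$ and $\bfL>1$. Note, however, that the paper does not carry out this verification at all: its ``proof'' of \cref{cor:dnnvect} is a single citation of \cite[Corollary 2.15]{BeckJentzenKuckuck2019arXiv}, so what you have written is a self-contained version of the argument that the paper delegates to that reference rather than a competing approach.
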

\begin{proof} [Proof of \cref{cor:dnnvect}]
This is a direct consequence of \cite[Corollary 2.15]{BeckJentzenKuckuck2019arXiv}.
\end{proof}

\subsection{Compositions of DNNs} \label{subsec:composition}
\cfclear
\begin{definition}[Compositions] \label{def:composition}
Let $\Phi_1, \Phi_2 \in \NN$ satisfy $\Phi_1 = ((W_1, B_1), (W_2, B_2), \ldots, \allowbreak (W_\bfL, B_\bfL))$, $\Phi_2 = ((\fW_1, \fB_1), (\fW_2, \fB_2), \ldots, (\fW_\fL, \fB_\fL))$, and $\cI(\Phi_1) = \cO(\Phi_2)$ \cfload. Then we denote by $\Phi_1 \bullet \Phi_2 \in \NN$ the neural network given by \cfadd{def:DNN}
\begin{equation}
    \Phi_1 \bullet \Phi_2 
    = \begin{cases}
    \bigl(W_1 \fW_1, W_1 \fB_1 + B_1 \bigr) & \colon \bfL = 1 = \fL \\
    \bigl( (\fW_1, \fB_1), \ldots, (\fW_{\fL-1}, \fB_{\fL-1}), (W_1 \fW_\fL, W_1 \fB_\fL + B_1) \bigr) & \colon \bfL=1<\fL \\
    \bigl( (W_1 \fW_1, W_1 \fB_1 + B_1), (W_2,B_2), \ldots, (W_\bfL, B_\bfL) \bigr) & \colon \bfL > 1 = \fL \\
    \bigl( (\fW_1, \fB_1), \ldots, (\fW_{\fL-1}, \fB_{\fL-1}), (W_1 \fW_\fL, W_1 \fB_\fL + B_1), \\ (W_2,B_2), \ldots, (W_\bfL, B_\bfL) \bigr) & \colon \bfL > 1 < \fL.
    \end{cases}
\end{equation}
\end{definition}
\cfclear
\begin{prop} \label{prop:comp}
Let $\Phi_1, \Phi_2 \in \NN$ satisfy $\cI(\Phi_1) = \cO(\Phi_2)$ \cfload. Then \cfadd{def:composition} \cfadd{def:realization}
\begin{enumerate} [(i)]
    \item it holds that $\cL(\Phi_1 \bullet \Phi_2) = \cL(\Phi_1) + \cL(\Phi_2) - 1$,
    \item it holds that $\cH(\Phi_1 \bullet \Phi_2) = \cH(\Phi_1) + \cH(\Phi_2)$,
    \item it holds for all $i \in \N_0$ that
    \begin{equation}
        \cD_i(\Phi_1 \bullet \Phi_2) = \begin{cases}
        \cD_i(\Phi_2) & \colon 0 \leq i \leq \cL(\Phi_2)-1 \\
        \cD_{i- \cL(\Phi_2)+1}(\Phi_1)  & \colon \cL(\Phi_2) \leq i \leq \cL(\Phi_1) + \cL(\Phi_2)-1 \\
        0 & \colon \cL(\Phi_1) + \cL(\Phi_2) \leq i,
        \end{cases}
    \end{equation}
    and
    \item it holds for all $a \in C(\R, \R) $ that $\cR_a(\Phi_1 \bullet \Phi_2) = [\cR_a(\Phi_1)] \circ [\cR_a(\Phi_2)]$
\end{enumerate}
\cfout.
\end{prop}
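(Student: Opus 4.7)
The plan is to proceed by case analysis on whether each of $\cL(\Phi_1), \cL(\Phi_2)$ equals $1$ or is greater than $1$, following the four branches in \cref{def:composition}. Write $\bfL = \cL(\Phi_1)$, $\fL = \cL(\Phi_2)$, $\Phi_1 = ((W_1, B_1), \ldots, (W_\bfL, B_\bfL))$, and $\Phi_2 = ((\fW_1, \fB_1), \ldots, (\fW_\fL, \fB_\fL))$; the hypothesis $\cI(\Phi_1) = \cO(\Phi_2)$ guarantees that the matrix product $W_1 \fW_\fL$ is well-defined, so $\Phi_1 \bullet \Phi_2$ is a legitimate element of $\mathbf{N}$ in each branch.

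Items (i)--(iii) would follow by direct inspection. In every branch of \cref{def:composition}, $\Phi_1 \bullet \Phi_2$ consists of exactly $\bfL + \fL - 1$ weight--bias pairs, which gives (i); since $\cH = \cL - 1$, this immediately yields (ii). For (iii), the width sequence of $\Phi_1 \bullet \Phi_2$ is obtained by concatenating $\cD_0(\Phi_2), \ldots, \cD_{\fL-1}(\Phi_2)$ with $\cD_1(\Phi_1), \ldots, \cD_\bfL(\Phi_1)$ (which is consistent at the junction because $\cD_\fL(\Phi_2) = \cO(\Phi_2) = \cI(\Phi_1) = \cD_0(\Phi_1)$), while higher-indexed dimensions vanish; this matches the piecewise formula in (iii).

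For (iv), fix $a \in C(\R, \R)$ and $x \in \R^{\cI(\Phi_2)}$. Using \cref{def:realization}, set $y_0 = x$ and $y_k = \fM_{a, \cD_k(\Phi_2)}(\fW_k y_{k-1} + \fB_k)$ for $k \in \{1, \ldots, \fL-1\}$, so that $z_0 := (\cR_a(\Phi_2))(x) = \fW_\fL y_{\fL-1} + \fB_\fL$. Similarly set $z_k = \fM_{a, \cD_k(\Phi_1)}(W_k z_{k-1} + B_k)$ for $k \in \{1, \ldots, \bfL-1\}$, so that $(\cR_a(\Phi_1))(z_0) = W_\bfL z_{\bfL-1} + B_\bfL$. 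The crucial algebraic identity
\begin{equation*}
(W_1 \fW_\fL) y_{\fL-1} + (W_1 \fB_\fL + B_1) = W_1 (\fW_\fL y_{\fL-1} + \fB_\fL) + B_1 = W_1 z_0 + B_1
\end{equation*}
shows that the junction layer $(W_1 \fW_\fL, W_1 \fB_\fL + B_1)$ prescribed by \cref{def:composition} reproduces exactly $z_1$ after applying $\fM_{a, \cD_1(\Phi_1)}$. Consequently, unrolling the definition of $\cR_a(\Phi_1 \bullet \Phi_2)$ at $x$ generates the sequence $y_1, \ldots, y_{\fL-1}, z_1, \ldots, z_{\bfL-1}$ and finally outputs $W_\bfL z_{\bfL-1} + B_\bfL = (\cR_a(\Phi_1))(z_0) = ((\cR_a(\Phi_1)) \circ (\cR_a(\Phi_2)))(x)$, which is (iv). The boundary cases $\bfL = 1$ or $\fL = 1$ are simpler variants of the same calculation.

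The main obstacle is purely bookkeeping across the four branches --- specifically, verifying that the activation function is never inserted at the junction between the realizations of $\Phi_2$ and $\Phi_1$. This is exactly the point at which the convention in \cref{def:realization} of omitting the activation on the final affine layer makes the composition identity go through. Since \cref{prop:comp} is in essence a reformulation of \cite[Proposition 2.6]{GrohsHornungJentzenZimmermann2019arXiv}, one could alternatively just invoke that reference directly, as the paper does for several analogous structural facts.
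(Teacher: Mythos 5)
Your proof is correct. The paper itself gives no proof of \cref{prop:comp}; it simply cites \cite[Proposition 2.6]{GrohsHornungJentzenZimmermann2019arXiv} in the preamble of \cref{sec:dnns}, exactly as you note at the end. Your detailed argument reconstitutes what that reference establishes: the counting of weight--bias pairs across the four branches of \cref{def:composition} gives (i) and, since $\cH = \cL - 1$, also (ii); the explicit description of the width sequence as a concatenation (with the junction consistency $\cD_{\cL(\Phi_2)}(\Phi_2) = \cO(\Phi_2) = \cI(\Phi_1) = \cD_0(\Phi_1)$) gives (iii); and for (iv) your algebraic identity $(W_1 \fW_\fL)y_{\fL-1} + (W_1 \fB_\fL + B_1) = W_1 z_0 + B_1$ is precisely the point that makes the composed affine maps merge into one layer, while the convention in \cref{def:realization} that no activation is applied to the output layer ensures the activation pattern of $\Phi_1 \bullet \Phi_2$ agrees with that of $[\cR_a(\Phi_1)] \circ [\cR_a(\Phi_2)]$. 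The trade-off is standard: citing the reference keeps the paper short, whereas your verification makes the treatment self-contained and exhibits explicitly where the hypothesis $\cI(\Phi_1) = \cO(\Phi_2)$ is used (well-definedness of $W_1 \fW_\fL$ and matching of $\cD_0(\Phi_1)$ with $\cD_{\fL}(\Phi_2)$).
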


\cfclear 
\begin{lemma} \label{prop:compasso}
Let $\Phi_1, \Phi_2, \Phi_3 \in \NN$ satisfy $\cI(\Phi_1) = \cO(\Phi_2)$ and $\cI(\Phi_2) = \cO(\Phi_3)$ \cfload. Then $(\Phi_1 \bullet \Phi_2) \bullet \Phi_3 = \Phi_1 \bullet (\Phi_2 \bullet \Phi_3)$ \cfadd{def:composition}\cfout.
\end{lemma}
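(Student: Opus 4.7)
The plan is to unfold both sides using \cref{def:composition} and verify equality by case analysis. Writing $\Phi_1 = ((W_j, B_j))_{j=1}^{\cL(\Phi_1)}$, $\Phi_2 = ((\fW_j, \fB_j))_{j=1}^{\cL(\Phi_2)}$, and $\Phi_3 = ((\widetilde{W}_j, \widetilde{B}_j))_{j=1}^{\cL(\Phi_3)}$, I would first record the unified formula that, with $\bfL = \cL(\Phi_1)$ and $\fL = \cL(\Phi_2)$ and regardless of whether either of these equals $1$,
\begin{equation*}
\Phi_1 \bullet \Phi_2 = \bigl( (\fW_1, \fB_1), \ldots, (\fW_{\fL-1}, \fB_{\fL-1}), (W_1 \fW_\fL, W_1 \fB_\fL + B_1), (W_2, B_2), \ldots, (W_\bfL, B_\bfL) \bigr),
\end{equation*}
with the convention that empty index ranges contribute no layers. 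Checking this just amounts to verifying that each of the four cases of \cref{def:composition} specializes to it, and with it in hand the eight a priori subcases in the associativity proof collapse to essentially two subcases, determined by $\cL(\Phi_2)$.

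Next, I would apply the unified formula twice to expand both sides. In the case $\cL(\Phi_2) > 1$, the first layer of $\Phi_1 \bullet \Phi_2$ is $(\fW_1, \fB_1)$ and the last layer of $\Phi_2 \bullet \Phi_3$ is $(\fW_\fL, \fB_\fL)$, so a second application of the formula yields, for both $(\Phi_1 \bullet \Phi_2) \bullet \Phi_3$ and $\Phi_1 \bullet (\Phi_2 \bullet \Phi_3)$, the same interleaved tuple: inner layers of $\Phi_3$, the merge $(\fW_1 \widetilde{W}_{\cL(\Phi_3)}, \fW_1 \widetilde{B}_{\cL(\Phi_3)} + \fB_1)$, inner layers of $\Phi_2$, the merge $(W_1 \fW_\fL, W_1 \fB_\fL + B_1)$, and inner layers of $\Phi_1$. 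Matching is entrywise.

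The only non-trivial case is $\cL(\Phi_2) = 1$, in which $\Phi_2$ is swallowed and both sides produce a single merged layer at the join of $\Phi_1$ and $\Phi_3$. On the left, the merged first layer $(W_1 \fW_1, W_1 \fB_1 + B_1)$ of $\Phi_1 \bullet \Phi_2$ is further merged with $(\widetilde{W}_{\cL(\Phi_3)}, \widetilde{B}_{\cL(\Phi_3)})$ to give
\begin{equation*}
\bigl( (W_1 \fW_1) \widetilde{W}_{\cL(\Phi_3)}, \; (W_1 \fW_1) \widetilde{B}_{\cL(\Phi_3)} + W_1 \fB_1 + B_1 \bigr),
\end{equation*}
while on the right, the merged last layer $(\fW_1 \widetilde{W}_{\cL(\Phi_3)}, \fW_1 \widetilde{B}_{\cL(\Phi_3)} + \fB_1)$ of $\Phi_2 \bullet \Phi_3$ is further merged with $(W_1, B_1)$ to give
\begin{equation*}
\bigl( W_1 ( \fW_1 \widetilde{W}_{\cL(\Phi_3)} ), \; W_1 ( \fW_1 \widetilde{B}_{\cL(\Phi_3)} + \fB_1 ) + B_1 \bigr).
\end{equation*}
These agree by associativity of matrix multiplication and distributivity of matrix–vector multiplication over addition, and the remaining entries (inner layers of $\Phi_1$ and $\Phi_3$) are identical on the two sides by inspection. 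The main obstacle is purely clerical — tracking the boundary subcases of \cref{def:composition} without indexing errors — and is essentially dispatched by the unified formula above.
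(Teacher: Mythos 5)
Your proof is correct. The paper does not supply its own argument for this lemma---it cites Lemma 2.8 of Grohs, Hornung, Jentzen, and Zimmermann---so there is no in-paper proof to compare against, but your route (unfold both sides via \cref{def:composition} and match tuples entrywise) is the natural one. The key observation, that all four branches of \cref{def:composition} collapse under the empty-range convention to the single expression
\begin{equation*}
\Phi_1 \bullet \Phi_2 = \bigl( (\fW_1, \fB_1), \ldots, (\fW_{\fL-1}, \fB_{\fL-1}), (W_1 \fW_\fL, W_1 \fB_\fL + B_1), (W_2, B_2), \ldots, (W_\bfL, B_\bfL) \bigr),
\end{equation*}
is the right simplification: it reduces the a priori eight-way case split to the dichotomy $\cL(\Phi_2) = 1$ versus $\cL(\Phi_2) > 1$, which is indeed the only distinction that matters, since the two merges interact (producing a single doubly-merged layer) precisely when $\Phi_2$ contributes only one layer. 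Your entrywise match in the case $\cL(\Phi_2) > 1$ is correct---the first layer of $\Phi_1 \bullet \Phi_2$ and the last layer of $\Phi_2 \bullet \Phi_3$ are then genuinely the first and last layers of $\Phi_2$, so the two joins occur at disjoint positions and the surrounding layers agree by inspection---and in the case $\cL(\Phi_2) = 1$ the two candidate merged layers agree by associativity of matrix multiplication and distributivity of matrix-vector products over addition, exactly as you state. No gaps.
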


\subsection{Parallelizations of DNNs} \label{subsec:parallelization}
\cfclear
\begin{definition}[Parallelizations] \label{def:parallelization}
Let $n, \bfL \in \N$, $\Phi_1, \Phi_2, \ldots, \Phi_n \in \NN$ satisfy for all $i \in \{1,2, \ldots, n \}$ that $\cL(\Phi_i) = \bfL$ \cfload. Then we denote by $\bfP_n(\Phi_1, \Phi_2, \ldots, \Phi_n) \in \NN$ the neural network given by 
\begin{equation}
    \begin{split}
        \bfP_n(\Phi_1, \Phi_2, \ldots, \Phi_n) 
        =& \left( \! \left( \! \begin{pmatrix}
        \cW_1(\Phi_1) & 0 & \cdots & 0 \\
        0 & \cW_1(\Phi_2) & \cdots & 0 \\
        \vdots & \vdots & \ddots & \vdots \\
        0 & 0 & \cdots & \cW_1(\Phi_n)
        \end{pmatrix}, \begin{pmatrix}
        \cB_1(\Phi_1) \\
        \cB_1(\Phi_2) \\
        \vdots \\
        \cB_1(\Phi_n)
        \end{pmatrix}\right) \! \right. , \\
        & \left( \! \begin{pmatrix}
        \cW_2(\Phi_1) & 0 & \cdots & 0 \\
        0 & \cW_2(\Phi_2) & \cdots & 0 \\
        \vdots & \vdots & \ddots & \vdots \\
        0 & 0 & \cdots & \cW_2(\Phi_n)
        \end{pmatrix}, \begin{pmatrix}
        \cB_2(\Phi_1) \\
        \cB_2(\Phi_2) \\
        \vdots \\
        \cB_2(\Phi_n)
        \end{pmatrix} \! \right), \ldots,\\
        & \left.  \left( \! \begin{pmatrix}
        \cW_\bfL(\Phi_1) & 0 & \cdots & 0 \\
        0 & \cW_\bfL(\Phi_2) & \cdots & 0 \\
        \vdots & \vdots & \ddots & \vdots \\
        0 & 0 & \cdots & \cW_\bfL(\Phi_n)
        \end{pmatrix}, \begin{pmatrix}
        \cB_\bfL(\Phi_1) \\
        \cB_\bfL(\Phi_2) \\
        \vdots \\
        \cB_\bfL(\Phi_n)
        \end{pmatrix} \! \right) \! \right).
    \end{split}
\end{equation}
\end{definition}
\cfclear
\begin{prop} \label{prop:parallelization}
Let $n \in \N$, $\Phi_1, \Phi_2, \ldots, \Phi_n \in \NN$ satisfy $\cL(\Phi_1) = \cL(\Phi_2) = \cdots = \cL(\Phi_n)$ \cfload. Then \cfadd{def:realization} \cfadd{def:DNN} \cfadd{def:parallelization}
\begin{enumerate} [(i)]
    \item it holds for all $i \in \N_0$ that
    \begin{equation}
        \cD_i( \bfP_n(\Phi_1, \Phi_2, \ldots, \Phi_n) ) = \sum_{k=1}^n \cD_i(\Phi_k),
    \end{equation}
    \item it holds for all $a \in C(\R, \R)$ that
    \begin{equation}
        \cR_a( \bfP_n(\Phi_1, \Phi_2, \ldots, \Phi_n) ) \in C \left( \R^{\sum_{k=1}^n \cI(\Phi_k)}, \R^{\sum_{k=1}^n \cO(\Phi_k)}\right),
    \end{equation}
    and
    \item it holds for all $a \in C(\R, \R)$, $x_1 \in \R^{\cI(\Phi_1)}, \ldots, x_n \in \R^{\cI(\Phi_n)}$ that
    \begin{equation}
        \bigl( \cR_a( \bfP_n(\Phi_1, \Phi_2, \ldots, \Phi_n)) \bigr) (x_1, x_2, \ldots, x_n) = \bigl( (\cR_a(\Phi_1))(x_1), \ldots, (\cR_a(\Phi_n))(x_n) \bigr)
    \end{equation}
\end{enumerate}
\cfout.
\end{prop}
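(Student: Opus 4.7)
The plan is to unwind \cref{def:parallelization} and track the block-diagonal structure through the forward pass defined in \cref{def:realization}. Throughout, set $\bfL = \cL(\Phi_1) = \cdots = \cL(\Phi_n)$ and $\bfl_{i,k} = \cD_i(\Phi_k)$ for $i \in \{0, 1, \ldots, \bfL\}$ and $k \in \{1, 2, \ldots, n\}$.

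For item (i), the strategy is to simply read off the dimensions from \cref{def:parallelization}. The weight matrix at layer $i \in \{1, \ldots, \bfL\}$ in $\bfP_n(\Phi_1, \ldots, \Phi_n)$ is block-diagonal with blocks $\cW_i(\Phi_k) \in \R^{\bfl_{i,k} \times \bfl_{i-1,k}}$, so its total row count is $\sum_{k=1}^n \bfl_{i,k}$; by \cref{def:DNN} this equals $\cD_i(\bfP_n(\Phi_1, \ldots, \Phi_n))$. The case $i = 0$ is analogous, and for $i > \bfL$ both sides vanish by the same convention. As a byproduct one reads off $\cI(\bfP_n(\Phi_1, \ldots, \Phi_n)) = \sum_{k=1}^n \cI(\Phi_k)$ and $\cO(\bfP_n(\Phi_1, \ldots, \Phi_n)) = \sum_{k=1}^n \cO(\Phi_k)$, so item (ii) will reduce to identifying the realization.

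The heart of the argument is item (iii). The key observation I would use is a block-diagonal action principle: for vectors $y_k \in \R^{\bfl_{i-1,k}}$, writing $(y_1, \ldots, y_n) \in \R^{\sum_k \bfl_{i-1,k}}$ for their concatenation, the block-diagonal weight matrix applied to $(y_1, \ldots, y_n)$ plus the concatenated bias equals exactly $(\cW_i(\Phi_1) y_1 + \cB_i(\Phi_1), \ldots, \cW_i(\Phi_n) y_n + \cB_i(\Phi_n))$. Moreover, since $\fM_{a,\cdot}$ applies $a$ componentwise (\cref{def:multiversion}), it respects concatenations, i.e.\ $\fM_{a, \sum_k \bfl_{i,k}}(z_1, \ldots, z_n) = (\fM_{a, \bfl_{i,1}}(z_1), \ldots, \fM_{a, \bfl_{i,n}}(z_n))$. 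Combining these two facts and unrolling the recursion in \cref{def:realization}, a straightforward induction on $i \in \{0, 1, \ldots, \bfL - 1\}$ shows that the layer-$i$ activation of $\bfP_n(\Phi_1, \ldots, \Phi_n)$ on input $(x_1, \ldots, x_n)$ coincides with the concatenation of the corresponding activations of each $\Phi_k$ on input $x_k$. Applying the final affine (block-diagonal) layer then yields the concatenation of $(\cR_a(\Phi_k))(x_k)$, which is (iii); item (ii) then follows from the continuity of each $\cR_a(\Phi_k)$.

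The whole argument is essentially bookkeeping with block matrices. The only mild care points are handling the edge case $\bfL = 1$ (no hidden activations, so one applies the block-diagonal affine map directly to the input) and correctly using the convention in \cref{def:DNN} that $\cD_j$ vanishes for $j$ beyond the last layer to close out (i); I do not foresee any real obstacle beyond keeping indices straight.
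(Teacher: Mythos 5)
The paper states \cref{prop:parallelization} without proof, citing Propositions~2.19 and~2.20 of Grohs, Hornung, Jentzen, \& Zimmermann [GrohsHornungJentzenZimmermann2019arXiv]; your block-diagonal bookkeeping argument (read the layer dimensions directly off the block structure for (i), then induct over layers using that block-diagonal affine maps and componentwise activations both commute with concatenation for (iii), with (ii) an immediate consequence) is correct and is exactly the standard argument used there. No gaps; the remarks on the $i=0$ and $i>\bfL$ conventions and on the $\bfL=1$ edge case are the only points requiring care, and you address them.
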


\subsection{Affine linear transformations as DNNs} \label{subsec:dnnaff}
\cfclear
\begin{definition}[Linear transformation DNNs] 
\label{def:DNN:aff} 
Let $m,n \in \N$, $W \in \R^{m \times n}$, $B \in \R^m$. Then we denote by $\bbA_{W,B} \in \NN$ the neural network given by $\bbA_{W,B} = (W,B)$ \cfout.
\end{definition}
\cfclear
\begin{prop} \label{prop:DNN:aff}
Let $m,n \in \N$, $W \in \R^{m \times n}$, $B \in \R^m$. Then
\begin{enumerate} [(i)] \cfadd{def:realization} \cfadd{def:DNN:aff} \cfadd{def:DNN}
    \item \label{item:prop:aff:1} it holds that $\cA (\bbA_{W,B})=(n,m)$,
    \item \label{item:prop:aff:2} it holds for all $a \in C(\R,\R)$ that $\cR_a(\bbA_{W,B}) \in C(\R^n, \R^m)$, and
    \item \label{item:prop:aff:3} it holds for all $a \in C(\R, \R)$, $x \in \R^n$ that $(\cR_a(\bbA_{W,B}))(x) = W x + B$
\end{enumerate}
\cfout.
\end{prop}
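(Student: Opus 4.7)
The proposition amounts to unpacking Definitions \ref{def:DNN:aff}, \ref{def:DNN}, and \ref{def:realization} for a one-layer network, so the plan is simply to verify each of the three items by matching the data in $\bbA_{W,B} = (W,B)$ with the symbols appearing in those definitions.

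First I would note that, according to \cref{def:DNN:aff}, the object $\bbA_{W,B}$ lies in the component $\bigtimes_{k=1}^{1}(\R^{\bfl_k \times \bfl_{k-1}} \times \R^{\bfl_k})$ of $\NN$ corresponding to the parameters $\bfL = 1$, $\bfl_0 = n$, and $\bfl_1 = m$. With this identification fixed, item \eqref{item:prop:aff:1} is immediate from the definition of $\cA$ in \cref{def:DNN}: it gives $\cA(\bbA_{W,B}) = (\bfl_0, \bfl_1) = (n, m)$.

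For items \eqref{item:prop:aff:2} and \eqref{item:prop:aff:3} I would apply \cref{def:realization} to $\Phi = \bbA_{W,B}$ with $(W_1, B_1) = (W, B)$. Since $\bfL = 1$, the universally quantified condition $\forall\, k \in \{1, \ldots, \bfL-1\}\colon x_k = \fM_{a, \bfl_k}(W_k x_{k-1} + B_k)$ is vacuous, so the definition asserts directly that $\cR_a(\bbA_{W,B}) \in C(\R^{\bfl_0}, \R^{\bfl_\bfL}) = C(\R^n, \R^m)$, which is \eqref{item:prop:aff:2}, and that $(\cR_a(\bbA_{W,B}))(x_0) = W_1 x_0 + B_1 = W x_0 + B$ for every $x_0 \in \R^n$, which is \eqref{item:prop:aff:3}.

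There is no substantive obstacle here; the only thing to take care of is that one correctly interprets the realization definition in the degenerate case $\bfL = 1$ (empty range of $k$) so that neither the activation $a$ nor any intermediate layer intervenes. Once that is observed, the three claims follow by direct substitution, so the proof can be stated in a few lines and essentially consists of the citations to \cref{def:DNN:aff,def:DNN,def:realization}.
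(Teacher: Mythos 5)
Your proof is correct and follows the same approach as the paper's: identify $\bbA_{W,B}=(W,B)$ as a one-layer element of $\NN$ (with $\bfL=1$, $\bfl_0=n$, $\bfl_1=m$), then read off item (i) from the definition of $\cA$ and items (ii)--(iii) from \cref{def:realization} with the vacuous intermediate-layer condition. The paper states this more tersely, but the reasoning is identical.
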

\begin{proof} [Proof of \cref{prop:DNN:aff}]
Note that the fact that $\bbA_{W,B} \in (\R^{m \times n} \times \R^m) \subseteq \NN$ establishes \eqref{item:prop:aff:1}. Moreover, observe that items \eqref{item:prop:aff:2} and \eqref{item:prop:aff:3} are direct consequences of \cref{def:realization}. This completes the proof of \cref{prop:DNN:aff}.
\end{proof}

\cfclear 
\begin{prop} \label{prop:affcomp}
Let $m,n \in \N$, $W \in \R^{m \times n}$, $B \in \R^m$, $a \in C(\R, \R)$, $\Phi, \Psi \in \NN$ satisfy $\cI(\Phi) = m$ and $\cO(\Psi) = n$ \cfload. Then
\begin{enumerate} [(i)] \cfadd{def:realization} \cfadd{def:DNN:aff}  \cfadd{def:composition}
    \item it holds that $\cA (\bbA_{W,B} \bullet \Psi) = (\cD_0(\Psi), \cD_1(\Psi), \ldots, \cD_{\cL(\Psi)-1}(\Psi), m)$,
    \item it holds that $\cR_a(\bbA_{W,B} \bullet \Psi) \in C(\R^{\cI(\Psi)}, \R^m)$,
    \item it holds for all $x \in \R^{\cI(\Psi)}$ that $(\cR_a(\bbA_{W,B} \bullet \Psi))(x) = W ( \cR_a(\Psi))(x)+B$,
    \item it holds that $\cA (\Phi \bullet \bbA_{W,B}) = (n, \cD_1(\Phi), \cD_2(\Phi), \ldots, \cD_{\cL(\Phi)}(\Phi))$,
    \item it holds that $\cR_a(\Phi \bullet \bbA_{W,B}) \in C(\R^n, \R^{\cO(\Phi)})$, and
    \item it holds for all $x \in \R^n$ that $(\cR_a(\bbA_{W,B}))(x) = (\cR_a(\Phi))(W x + B)$
\end{enumerate}
\cfout.
\end{prop}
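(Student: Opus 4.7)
The plan is to reduce the whole statement to two tools already available in the excerpt: \cref{prop:comp} (which handles composition of general DNNs) and \cref{prop:DNN:aff} (which handles the one-layer affine DNN $\bbA_{W,B}$). This lets me bypass the four-case definition of $\bullet$ in \cref{def:composition} entirely, which avoids the only tedious piece of bookkeeping.

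First I would record that by \cref{def:DNN:aff} and \cref{def:DNN} the affine DNN satisfies $\bbA_{W,B}=(W,B)\in\R^{m\times n}\times\R^m\subseteq \NN$, so $\cL(\bbA_{W,B})=1$, $\cA(\bbA_{W,B})=(n,m)$, $\cD_0(\bbA_{W,B})=n$, and $\cD_1(\bbA_{W,B})=m$. The assumptions $\cI(\bbA_{W,B})=n=\cO(\Psi)$ and $\cI(\Phi)=m=\cO(\bbA_{W,B})$ then make both compositions $\bbA_{W,B}\bullet\Psi$ and $\Phi\bullet\bbA_{W,B}$ well-defined in the sense of \cref{def:composition}.

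For items~(i)--(iii) I would apply \cref{prop:comp} with $\Phi_1=\bbA_{W,B}$, $\Phi_2=\Psi$. Item~(i) of \cref{prop:comp} gives $\cL(\bbA_{W,B}\bullet\Psi)=1+\cL(\Psi)-1=\cL(\Psi)$. Item~(iii) of \cref{prop:comp} then yields $\cD_i(\bbA_{W,B}\bullet\Psi)=\cD_i(\Psi)$ for $i\in\{0,1,\ldots,\cL(\Psi)-1\}$ and $\cD_{\cL(\Psi)}(\bbA_{W,B}\bullet\Psi)=\cD_1(\bbA_{W,B})=m$, which assembles into the architecture claimed in (i). For (ii)--(iii) I would combine \cref{prop:comp}(iv) with \cref{prop:DNN:aff}\eqref{item:prop:aff:2}--\eqref{item:prop:aff:3}: for any $x\in\R^{\cI(\Psi)}$,
\begin{equation}
(\cR_a(\bbA_{W,B}\bullet\Psi))(x)=(\cR_a(\bbA_{W,B}))((\cR_a(\Psi))(x))=W(\cR_a(\Psi))(x)+B,
\end{equation}
and the continuity statement in (ii) follows from the continuity parts of \cref{prop:comp} and \cref{prop:DNN:aff}.

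For items~(iv)--(vi) I would repeat the same argument with the roles reversed, taking $\Phi_1=\Phi$ and $\Phi_2=\bbA_{W,B}$. Here \cref{prop:comp}(i) gives $\cL(\Phi\bullet\bbA_{W,B})=\cL(\Phi)+1-1=\cL(\Phi)$, and \cref{prop:comp}(iii) yields $\cD_0(\Phi\bullet\bbA_{W,B})=\cD_0(\bbA_{W,B})=n$ together with $\cD_i(\Phi\bullet\bbA_{W,B})=\cD_{i-1+1}(\Phi)=\cD_i(\Phi)$ for $i\in\{1,\ldots,\cL(\Phi)\}$, i.e.\ exactly the architecture in (iv). Items (v)--(vi) follow from \cref{prop:comp}(iv) and \cref{prop:DNN:aff}\eqref{item:prop:aff:3}, giving $(\cR_a(\Phi\bullet\bbA_{W,B}))(x)=(\cR_a(\Phi))(Wx+B)$ for $x\in\R^n$ (I read the left-hand side in the printed statement of~(vi) as a typo for $\cR_a(\Phi\bullet\bbA_{W,B})$, since otherwise the identity would not even typecheck via \cref{prop:DNN:aff}). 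There is no real obstacle; the only pitfall to watch for is making sure the index shift in \cref{prop:comp}(iii) is applied consistently in the two cases (once with $\cL(\Phi_2)=\cL(\Psi)$, once with $\cL(\Phi_2)=1$), which is precisely what produces the asymmetric architectures in (i) versus (iv).
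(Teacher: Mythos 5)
Your proof proposal is correct and takes essentially the same route as the paper's own proof, which simply invokes \cref{prop:DNN:aff} to obtain $\cR_a(\bbA_{W,B}) \in C(\R^n, \R^m)$ and $(\cR_a(\bbA_{W,B}))(x) = Wx+B$, and then appeals to \cref{prop:comp} for the rest. Your version merely spells out the details that the paper leaves implicit — in particular, you correctly track the index shift in \cref{prop:comp}(iii) in both directions (once with $\cL(\Phi_2)=\cL(\Psi)$ and once with $\cL(\Phi_2)=1$), which is exactly what produces the two architecture formulas in items (i) and (iv) — and you correctly flag the left-hand side of item (vi) as a typo for $\cR_a(\Phi\bullet\bbA_{W,B})$.
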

\begin{proof} [Proof of \cref{prop:affcomp}]
Observe that \cref{prop:DNN:aff} establishes that it holds for all $x \in \R^n$ that $\cR_a(\bbA_{W,B}) \in C(\R^n, \R^m)$ and
\begin{equation}
    ( \cR_a(\bbA_{W,B}))(x) = W x + B.
\end{equation}
Combining this and \cref{prop:comp} completes the proof of \cref{prop:affcomp}.
\end{proof}

\subsection{Sums of vectors as DNNS} \label{subsec:dnnsum}

\begin{definition} [Identity matrix] \label{def:idmatrix}
Let $n \in \N$. Then we denote by $\id _n \in \R^{n \times n}$ the identity matrix in $\R^{n \times n}$.
\end{definition}

\cfclear
\begin{definition} [DNN representations for sums] \label{def:sum} \cfadd{def:DNN} \cfadd{def:DNN:aff} \cfadd{def:idmatrix}
Let $m,n \in \N$. Then we denote by $\fS_{m,n} \in (\R^{m \times (mn)} \times \R^m) \subseteq \NN$ the neural network given by $\fS_{m,n} = \bbA_{(\id _ m \, \id _m \, \ldots \, \id _m ), \, 0}$ \cfload.
\end{definition}

\cfclear
\begin{prop}\label{prop:sum}
Let $m,n \in \N$, $a \in C(\R, \R)$, $\Phi \in \NN$ satisfy $\cO(\Phi) = m n$ \cfload. Then 
\begin{enumerate} [(i)] \cfadd{def:sum} \cfadd{def:realization} \cfadd{def:composition}
    \item it holds that $\cR_a(\fS_{m,n} \bullet \Phi) \in C(\R^{\cI(\Phi)}, \R^m)$ and
    \item it holds for all $x \in \R^{\cI(\Phi)}$, $y_1, y_2, \ldots, y_n \in \R^m$ with $(\cR_a(\Phi))(x) = (y_1, y_2, \ldots, y_n)$ that
    \begin{equation}
        (\cR_a(\fS_{m,n} \bullet \Phi))(x) = \smallsum_{k=1}^n y_k
    \end{equation}
\end{enumerate}
\cfout.
\end{prop}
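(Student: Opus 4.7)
The plan is to recognize that $\fS_{m,n}$ is by definition the linear transformation DNN $\bbA_{W,0}$ with $W = (\id_m \, \id_m \, \cdots \, \id_m) \in \R^{m \times (mn)}$, and then reduce everything to \cref{prop:affcomp}. In particular, since $\cO(\Phi) = mn$ matches the input dimension $mn$ of $\bbA_{W,0}$, the composition $\fS_{m,n} \bullet \Phi$ is well-defined.

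With this identification, I would first invoke \cref{prop:affcomp} (applied with $W \with (\id_m \, \id_m \, \cdots \, \id_m)$, $B \with 0$, $\Psi \with \Phi$) to obtain that $\cR_a(\fS_{m,n} \bullet \Phi) \in C(\R^{\cI(\Phi)}, \R^m)$, which settles item~(i), and that for every $x \in \R^{\cI(\Phi)}$ one has
\begin{equation}
(\cR_a(\fS_{m,n} \bullet \Phi))(x) = W (\cR_a(\Phi))(x) + 0 = W (\cR_a(\Phi))(x).
\end{equation}
Then, given any decomposition $(\cR_a(\Phi))(x) = (y_1, y_2, \ldots, y_n)$ with $y_k \in \R^m$, the block structure of $W$ immediately yields
\begin{equation}
W (\cR_a(\Phi))(x) = (\id_m \, \id_m \, \cdots \, \id_m) \begin{pmatrix} y_1 \\ y_2 \\ \vdots \\ y_n \end{pmatrix} = \smallsum_{k=1}^n \id_m y_k = \smallsum_{k=1}^n y_k,
\end{equation}
establishing item~(ii).

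There is no genuine obstacle here: the entire argument is a bookkeeping combination of \cref{def:sum} (which gives $\fS_{m,n}$ as an affine DNN with the prescribed weight matrix) and \cref{prop:affcomp} (which describes realizations of compositions with affine DNNs). The only thing to be careful about is matching up dimensions and indices when reading off the block multiplication $W(y_1, y_2, \ldots, y_n)$, but this is routine.
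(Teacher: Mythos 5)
Your proof is correct and follows exactly the same route as the paper's: identify $\fS_{m,n}$ with the affine DNN $\bbA_{(\id_m \, \cdots \, \id_m),\,0}$, apply \cref{prop:affcomp}, and read off the block multiplication $(\id_m \, \cdots \, \id_m)(y_1, \ldots, y_n) = \sum_{k=1}^n y_k$. No differences worth noting.
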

\begin{proof} [Proof of \cref{prop:sum}]
Note that it holds for all $y_1, y_2, \ldots, y_n \in \R^m$ that 
\begin{equation}
    \begin{pmatrix}
    \id _ m & \id _m & \ldots & \id _m
    \end{pmatrix} (y_1, y_2, \ldots, y_m) = \smallsum_{k=1}^n y_k.
\end{equation} Combining this with \cref{prop:affcomp} completes the proof of \cref{prop:sum}.
\end{proof}

\subsection{Concatenations of vectors as DNNs} \label{subsec:dnnconc}

\cfclear 
\begin{definition} [Transpose] \label{def:transpose}
Let $B \in \R^{m \times n}$. Then we denote by $B^T \in \R^{n \times m}$ the transpose of $B$.
\end{definition}

\cfclear
\begin{definition} [DNN representations for concatenations]\label{def:concat} \cfadd{def:transpose} \cfadd{def:DNN:aff}
Let $m,n \in \N$. Then we denote by $\fT_{m,n} \in (\R^{(mn) \times n} \times \R^{mn}) \subseteq \NN$ the neural network given by $\fT_{m,n} = \bbA_{(\id _ m \, \id _m \, \ldots \, \id _m)^T, \, 0}$ \cfload.
\end{definition}

\cfclear 
\begin{prop} \label{prop:concat}
Let $m,n \in \N$, $a \in C(\R, \R)$, $\Phi \in \NN$ satisfy $\cI(\Phi) = m n$ \cfload. Then
\begin{enumerate} [(i)] \cfadd{def:composition} \cfadd{def:concat} \cfadd{def:realization}
    \item it holds that $\cR_a(\Phi \bullet \fT_{m,n}) \in C(\R^n, \R^{\cO(\Phi)})$ and
    \item it holds for all $x \in \R^n$ that $(\cR_a(\Phi \bullet \fT_{m,n}))(x) = (\cR_a(\Phi))(x,x, \ldots, x)$
\end{enumerate}
\cfout.
\end{prop}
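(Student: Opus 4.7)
The plan is to unfold the definition of $\fT_{m,n}$ as an affine-linear DNN of the form $\bbA_{W,0}$ for a specific matrix $W \in \R^{(mn)\times n}$, and then invoke \cref{prop:affcomp} to transfer the composition $\Phi \bullet \fT_{m,n}$ into an evaluation of $\cR_a(\Phi)$ on $Wx$. The computation then reduces to the observation that, by the block structure of $W$ (a vertical stacking of $m$ copies of $\id_n$), one has $Wx = (x,x,\ldots,x) \in \R^{mn}$ for every $x \in \R^n$.

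Concretely, I would first record the matrix identity
\begin{equation}
\begin{pmatrix} \id_n \\ \id_n \\ \vdots \\ \id_n \end{pmatrix} x
\,=\, (x,x,\ldots,x) \in \R^{mn}
\qquad\text{for all }x \in \R^n,
\end{equation}
which is immediate from the definition of matrix-vector multiplication. This is the analogue of the summation identity used in the proof of \cref{prop:sum} and is the only genuine computation needed.

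Next I would apply \cref{prop:affcomp} (items (iv)--(vi)) to $\Phi$ and the affine DNN $\bbA_{W,0} = \fT_{m,n}$, noting that the hypothesis $\cI(\Phi) = mn$ matches the output dimension of $\fT_{m,n}$, so that the composition $\Phi \bullet \fT_{m,n}$ is well-defined in the sense of \cref{def:composition}. This yields simultaneously that $\cR_a(\Phi \bullet \fT_{m,n}) \in C(\R^n, \R^{\cO(\Phi)})$ and that for all $x \in \R^n$,
\begin{equation}
\bigl(\cR_a(\Phi \bullet \fT_{m,n})\bigr)(x) \,=\, \bigl(\cR_a(\Phi)\bigr)(W x + 0).
\end{equation}
Combining this with the matrix identity above gives items (i) and (ii) at once.

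I do not anticipate any real obstacle here; the argument is a direct mimicry of the proof of \cref{prop:sum}, with vertical block-identity stacking replacing horizontal block-identity stacking. The only potential pitfall is purely notational, namely keeping straight that the relevant block-identity in the definition of $\fT_{m,n}$ must be $\id_n$ (not $\id_m$) in order to produce a matrix of the stated shape $(mn) \times n$ that maps $x \in \R^n$ to its $m$-fold concatenation in $\R^{mn}$; once this is clear the proof is a one-line appeal to \cref{prop:affcomp}.
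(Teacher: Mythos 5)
Your proof is correct and follows essentially the same route as the paper: record the matrix-vector identity that the vertically stacked block-identity matrix sends $x$ to its $m$-fold concatenation, and then appeal to \cref{prop:affcomp} to conclude. Your notational remark about $\id_n$ versus $\id_m$ is a fair observation (the paper writes the matrix as $(\id_m\,\id_m\,\cdots\,\id_m)^T$ while asserting it lies in $\R^{(mn)\times n}$, which is only consistent if the blocks are $\id_n$), but this is a typo in the paper's \cref{def:concat}, not a gap in your argument.
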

\begin{proof} [Proof of \cref{prop:concat}]
Note that it holds for all $x \in \R^n$ that $(\id _ m \, \id _m \, \ldots \, \id _m)^Tx = (x,x, \ldots, x) \allowbreak \in \R^{mn}$. Combining this with \cref{prop:affcomp} completes the proof of \cref{prop:concat}.
\end{proof}

\section{DNN representations} \label{sec:dnnrep}
In this section we present three DNN representation results which rely on the DNN calculus developed in \cref{sec:dnns}. These results are elementary, and we only include them for the purpose of being self-contained. First, in \cref{subsec:dnnl1} we recall in \cref{prop:dnn:l1norm} below that the standard $1$-norm on $\R^d$ (cf.\ \cref{def:p-norm} below) can be represented by a DNN with one hidden layer, and we analyze in \cref{lem:l1norm:param} the magnitude of the parameters of this DNN.

Afterwards, in \cref{subsec:dnnmax} we explain how the maximum of $d$ real numbers can be computed by a DNN with $\cO( \log d)$ hidden layers (cf.\ \cref{def:max_d} and \cref{Prop:max_d} below). This representation of the maximum is well-known in the scientific literature. The construction uses a DNN representation for the real identity with one hidden layer (cf.\ \cref{def:ReLU_identity}), which is also well-known in the literature (cf., e.g., \cite[Definition 2.18]{BeckJentzenKuckuck2019arXiv}).

In \cref{subsec:dnninterpolation} we employ these representations to construct a DNN which computes a maximum of a certain form: If $f \colon \R^d \to \R$ is a Lipschitz continuous function with Lipschitz constant $L \in [0, \infty)$ and if $\cM \subseteq \R^d$ is a suitably chosen finite subset of $\R^d$, then it is known that the function $F$ given by $F(x) = \max_{y \in \cM} (f(y) - L \norm{x-y}_1)$ is a good approximation for $f$ (cf., e.g., \cite[Lemma 3.1]{BeckJentzenKuckuck2019arXiv}). We show in \cref{dnn:intl1} below how this function $F$ can be represented by a DNN with depth $\cO(\log | \cM |)$ and we also estimate the layer dimensions and the magnitude of the parameters of this DNN from above. \cref{dnn:intl1} is a slightly strengthened version of \cite[Lemma 3.4] {BeckJentzenKuckuck2019arXiv}.

\subsection{DNN representations for the 1-norm} \label{subsec:dnnl1}

\begin{definition}[$p$-norms]
\label{def:p-norm}
We denote by
$ \norm{\cdot}_p \colon \bigcup_{d \in \N} \R^d \to [ 0, \infty ) $, $ p \in [ 1, \infty ] $,
the functions which satisfy for all
$ p \in [ 1, \infty ) $, $d \in \N$,
$ x = ( x_1, x_2, \ldots, x_d ) \in \R^d $
that
$\norm{\theta}_p = \bigl[\sum_{i=1}^d \abs{\theta_i}^p \bigr]^{ \nicefrac{1}{p} }$ and $\norm{\theta}_\infty
= \max_{ i \in \{ 1, 2, \ldots, d \} } \abs{\theta_i}$.
\end{definition}

\cfclear
\begin{definition} [$1$-norm DNN representations] \label{def:dnn:l1norm} \cfadd{def:sum} \cfadd{def:parallelization}
We denote by $(\bbL_d)_{d \in \N} \subseteq \NN$ the neural networks which satisfy that \cfadd{def:composition}
\begin{enumerate} [(i)]
    \item it holds that \begin{equation}
 \bbL_1 = \left( \! \left( \! \begin{pmatrix}
    1 \\ -1
    \end{pmatrix}, \begin{pmatrix}
    0 \\ 0
    \end{pmatrix} \!\right), \left( \begin{pmatrix}
    1 & 1
    \end{pmatrix}, \begin{pmatrix}
    0 
    \end{pmatrix} \right) \! \right) \in (\R^{2 \times 1} \times \R^{2}) \times (\R^{1 \times 2} \times \R^{1}) 
\end{equation}
and
\item it holds for all $d \in \{2,3,4, \ldots\}$ that $\bbL_d =  \fS_{1,d} \bullet \bfP_d(\bbL_1, \bbL_1, \ldots, \bbL_1)$
\end{enumerate}
\cfout.
\end{definition}

\cfclear
\begin{prop} \label{prop:dnn:l1norm} \cfadd{def:dnn:l1norm} \cfadd{def:realization} \cfadd{def:p-norm} \cfadd{def:rect}
Let $d \in \N$. Then \cfadd{def:DNN}
\begin{enumerate} [(i)]
    \item \label{item:prop:l1:1} it holds that $\cA (\bbL_d) = (d, 2d, 1)$,
    \item \label{item:prop:l1:2} it holds that $\cR_\fr(\bbL_d) \in C(\R^d, \R)$, and
    \item \label{item:prop:l1:3} it holds for all $x \in \R^d$ that $(\cR_\fr(\bbL_d))(x) = \norm{x}_1$
\end{enumerate} \cfout.
\end{prop}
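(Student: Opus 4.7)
The plan is to handle the base case $d=1$ by direct computation and then to reduce the case $d\ge 2$ to that base case using the parallelization and summation calculus from \cref{prop:parallelization} and \cref{prop:sum}.

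First, for $d=1$ I would read off from the definition of $\bbL_1$ that $\cA(\bbL_1)=(1,2,1)$, so that items \eqref{item:prop:l1:1} and \eqref{item:prop:l1:2} hold for $d=1$. For item \eqref{item:prop:l1:3} one unpacks \cref{def:realization}: for every $x\in\R$ one has
\begin{equation}
(\cR_\fr(\bbL_1))(x)
= \begin{pmatrix}1 & 1\end{pmatrix}\fM_{\fr,2}\!\left(\begin{pmatrix}1\\-1\end{pmatrix}x\right)
= \fr(x)+\fr(-x)
= \max\{x,0\}+\max\{-x,0\}
= \abs{x}
= \norm{x}_1.
\end{equation}

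Next, for $d\ge 2$ I would apply \cref{prop:parallelization} to the tuple $(\bbL_1,\bbL_1,\ldots,\bbL_1)$ (which all share $\cL=2$). Item (i) of that proposition yields $\cD_0(\bfP_d(\bbL_1,\ldots,\bbL_1))=d$, $\cD_1(\bfP_d(\bbL_1,\ldots,\bbL_1))=2d$, and $\cD_2(\bfP_d(\bbL_1,\ldots,\bbL_1))=d$, and items (ii)--(iii) together with the base case give that
\begin{equation}
\bigl(\cR_\fr(\bfP_d(\bbL_1,\ldots,\bbL_1))\bigr)(x_1,x_2,\ldots,x_d)=(\abs{x_1},\abs{x_2},\ldots,\abs{x_d})
\end{equation}
for all $x_1,\ldots,x_d\in\R$. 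Applying \cref{prop:sum} with $m=1$, $n=d$, $a=\fr$, and $\Phi=\bfP_d(\bbL_1,\ldots,\bbL_1)$ (whose output dimension is $d=1\cdot d$) then yields $\cR_\fr(\fS_{1,d}\bullet\bfP_d(\bbL_1,\ldots,\bbL_1))\in C(\R^d,\R)$ together with
\begin{equation}
\bigl(\cR_\fr(\fS_{1,d}\bullet\bfP_d(\bbL_1,\ldots,\bbL_1))\bigr)(x)=\sum_{k=1}^{d}\abs{x_k}=\norm{x}_1
\end{equation}
for all $x=(x_1,\ldots,x_d)\in\R^d$, which establishes items \eqref{item:prop:l1:2} and \eqref{item:prop:l1:3}.

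It remains to verify the architecture claim $\cA(\bbL_d)=(d,2d,1)$. Since $\fS_{1,d}\in\R^{1\times d}\times\R^1\subseteq\NN$ has $\cL(\fS_{1,d})=1$ and $\bfP_d(\bbL_1,\ldots,\bbL_1)$ has length $2$, the composition rule in \cref{def:composition} (the case $\bfL=1<\fL$) shows that $\fS_{1,d}\bullet\bfP_d(\bbL_1,\ldots,\bbL_1)$ has exactly two layers, with the first layer inherited from $\bfP_d(\bbL_1,\ldots,\bbL_1)$ and the last layer having output dimension $1$. Combined with the parallelization layer dimensions computed above, this yields $\cA(\bbL_d)=(d,2d,1)$, which is item \eqref{item:prop:l1:1}. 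No step here is a genuine obstacle; the only bookkeeping I would double-check is that the composition case selected in \cref{def:composition} indeed leaves the hidden dimension at $2d$ rather than collapsing it, since $\fS_{1,d}$ acts only on the output layer.
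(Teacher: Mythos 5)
Your proposal is correct and follows essentially the same route as the paper's proof: handle $d=1$ by direct computation of $\cR_\fr(\bbL_1)$, then reduce $d\ge 2$ to the scalar case via \cref{prop:parallelization} and \cref{prop:sum}, and obtain the architecture from the composition calculus (the paper invokes \cref{prop:comp} and \cref{prop:DNN:aff} rather than unfolding \cref{def:composition} directly, but this is merely a cosmetic difference).
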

\begin{proof} [Proof of \cref{prop:dnn:l1norm}]
Note that $\cA (\bbL_1) = (1,2,1)$. This and \cref{prop:parallelization} show for all $d \in \{2,3,4, \ldots\}$ that $\cA (\bfP_d(\bbL_1, \bbL_1, \ldots, \bbL_1)) = (d, 2d, d)$. Combining this, \cref{prop:comp}, and \cref{prop:DNN:aff} ensures for all $d \in \{2,3,4, \ldots\}$ that $\cA (\fS_{1,d} \bullet \bfP_d(\bbL_1, \bbL_1, \ldots, \bbL_1)) = (d, 2d, 1)$. This establishes item \eqref{item:prop:l1:1}.
Furthermore, observe that it holds for all $x \in \R$ that
\begin{equation}
    (\cR_\fr(\bbL_1))(x) = \fr(x) + \fr(-x) = \max \{x, 0\} + \max \{-x, 0\} = |x| = \| x \|_1.
\end{equation}
Combining this and \cref{prop:parallelization} shows for all $d \in \{2,3,4, \ldots, \}$, $x=(x_1, x_2, \ldots, x_d) \in \R^d$ that
\begin{equation}
    \bigl( \cR_\fr(\bfP_d(\bbL_1, \bbL_1, \ldots, \bbL_1)) \bigr) (x) = (|x_1|, |x_2|, \ldots, |x_d|).
\end{equation}
This and \cref{prop:sum} demonstrate that for all $d \in \{2,3,4, \ldots, \}$, $x=(x_1, x_2, \ldots, x_d) \in \R^d$ we have that
\begin{equation}
    (\cR_\fr(\bbL_d))(x) = \bigl( \cR_\fr(\fS_{1,d} \bullet \bfP_d(\bbL_1, \bbL_1, \ldots, \bbL_1)) \bigr) (x) = \smallsum_{k=1}^d |x_k| = \norm{x}_1.
\end{equation}
This establishes \eqref{item:prop:l1:2} and \eqref{item:prop:l1:3}. The proof of \cref{prop:dnn:l1norm} is thus complete.
\end{proof}

\cfclear
\begin{lemma} \label{lem:l1norm:param}
Let $d \in \N$. Then
\begin{enumerate} [(i)] \cfadd{def:dnn:l1norm} \cfadd{def:p-norm} \cfadd{def:DNN}
\item \label{item:lem:l1norm:1} it holds that $\cB_1(\bbL_d)=0 \in \R^{2d}$,
\item \label{item:lem:l1norm:1a} it holds that $\cB_2(\bbL_d) = 0 \in \R$,
    \item \label{item:lem:l1norm:2} it holds that $\cW_1(\bbL_d)  \in \{-1, 0, 1\} ^{(2d) \times d}$,
    \item \label{item:lem:l1norm:3} it holds for all $x \in \R^d$ that $\norm{\cW_1(\bbL_d) x }_\infty = \| x \|_\infty $, and
    \item \label{item:lem:l1norm:4} it holds that $\cW_2(\bbL_d) = ( 1 \ 1 \ \cdots \ 1) \in \R^{1 \times (2d)}$
\end{enumerate}
\cfout.
\end{lemma}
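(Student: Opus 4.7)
The plan is to unfold the recursive structure of $\bbL_d$ given in \cref{def:dnn:l1norm} and read off the weights and biases directly. I handle $d = 1$ by inspection and for $d \geq 2$ combine the parallelization formula (\cref{def:parallelization}) with the composition formula (\cref{def:composition}).

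For $d = 1$ the explicit definition in \cref{def:dnn:l1norm} gives $\cW_1(\bbL_1) = \binom{1}{-1} \in \{-1,0,1\}^{2 \times 1}$, $\cB_1(\bbL_1) = 0 \in \R^{2}$, $\cW_2(\bbL_1) = (1\ 1)$, and $\cB_2(\bbL_1) = 0 \in \R$, from which items (i)--(v) are immediate; item (iv) in this case just amounts to the identity $\|(x, -x)\|_\infty = |x|$ for every $x \in \R$.

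For $d \geq 2$, \cref{def:parallelization} yields that the first-layer weight of $\bfP_d(\bbL_1, \ldots, \bbL_1)$ is the $(2d) \times d$ block-diagonal matrix with $d$ copies of $\binom{1}{-1}$ on the diagonal, its second-layer weight is the $d \times (2d)$ block-diagonal matrix with $d$ copies of $(1\ 1)$ on the diagonal, and both of its bias vectors vanish. Since $\fS_{1,d} = \bbA_{(\id_1\,\id_1\,\ldots\,\id_1),\,0}$ has one layer while $\bfP_d(\bbL_1, \ldots, \bbL_1)$ has two, the relevant case of \cref{def:composition} is $\bfL = 1 < \fL$, which preserves the first layer of the parallelization and replaces the second layer by $\bigl(W_1 \fW_2,\, W_1 \fB_2 + B_1\bigr)$ with $W_1 = (1\ 1\ \ldots\ 1) \in \R^{1 \times d}$ and $B_1 = 0$. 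Multiplying $(1\ 1\ \ldots\ 1)$ into the block-diagonal matrix of $(1\ 1)$-blocks produces the row vector $(1\ 1\ \ldots\ 1) \in \R^{1 \times 2d}$, while the bias evaluates to $0 \in \R$. This establishes (i), (ii), (iii), and (v).

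Item (iv) then follows by a direct calculation from (iii): the block-diagonal action of $\cW_1(\bbL_d)$ on $x = (x_1, \ldots, x_d) \in \R^d$ yields $\cW_1(\bbL_d) x = (x_1, -x_1, x_2, -x_2, \ldots, x_d, -x_d) \in \R^{2d}$, whose $\infty$-norm equals $\max_{i \in \{1, \ldots, d\}} |x_i| = \|x\|_\infty$. I do not anticipate any substantive obstacle; the only point requiring care is correctly identifying the $\bfL = 1 < \fL$ branch of \cref{def:composition} so that the single weight matrix of $\fS_{1,d}$ is absorbed into the outermost layer of the parallelization rather than prepended as an additional layer.
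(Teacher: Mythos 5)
Your proposal is correct and follows essentially the same route as the paper: reading off the parameters of $\bbL_1$ explicitly, applying the parallelization formula to get block-diagonal weights and zero biases in $\bfP_d(\bbL_1,\ldots,\bbL_1)$, and then absorbing the single-layer affine network $\fS_{1,d}$ into the last layer via the composition rule. The only cosmetic difference is that you spell out the $\bfL=1<\fL$ branch of \cref{def:composition} in full, whereas the paper states the resulting weight/bias identities more directly; this is a welcome clarification, not a divergence.
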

\begin{proof}[Proof of \cref{lem:l1norm:param}]
Note first that $\cB_1(\bbL_1) = 0 \in \R^2$ and $\cB_2(\bbL_1)=0 \in \R$. This, the fact that $\forall \, d \in \{2,3,4, \ldots \} \colon  \bbL_d =  \fS_{1,d} \bullet \bfP_d(\bbL_1, \bbL_1, \ldots, \bbL_1)$, and the fact that $\forall \, d \in \{2,3,4, \ldots \} \colon \cB_1(\fS_{1,d}) \allowbreak = 0 \in \R$ establish \eqref{item:lem:l1norm:1} and \eqref{item:lem:l1norm:1a}.
In addition, observe that it holds for all $d \in \{2,3,4, \ldots\}$ that
\begin{equation}
    \cW_1(\bbL_1) = \begin{pmatrix}
    1 \\ -1
    \end{pmatrix}
    \qandq
    \cW_1(\bbL_d) = \begin{pmatrix}
     \cW_1(\bbL_1) & 0 & \cdots & 0 \\
    0 & \cW_1(\bbL_1) & \cdots & 0 \\
    \vdots & \vdots & \ddots & \vdots \\
    0 & 0 &\cdots & \cW_1(\bbL_1)
    \end{pmatrix} \in \R^{(2d) \times d}.
\end{equation}
This proves items \eqref{item:lem:l1norm:2} and \eqref{item:lem:l1norm:3}. Finally, note that the fact that $\cW_2(\bbL_1) = (1 \ 1)$ and the fact that $\forall \, d \in \{2,3,4, \ldots \} \colon  \bbL_d =  \fS_{1,d} \bullet \bfP_d(\bbL_1, \bbL_1, \ldots, \bbL_1)$ show for all $d \in \{2,3,4, \ldots\}$ that
\begin{equation}
    \cW_2( \bbL_d) = \underbrace{\begin{pmatrix} 1 & 1 & \cdots & 1\end{pmatrix}}_{\in \R^{1 \times d}} \begin{pmatrix}
    \cW_2(\bbL_1) & 0 & \cdots & 0 \\
    0 & \cW_2(\bbL_1) & \cdots & 0 \\
    \vdots & \vdots & \ddots & \vdots \\
    0 & 0 &\cdots & \cW_2(\bbL_1) \end{pmatrix}
    = \begin{pmatrix} 1 & 1 & \cdots & 1 \end{pmatrix} \in \R^{1 \times (2d)}.
\end{equation}
This establishes \eqref{item:lem:l1norm:4} and thus completes the proof of \cref{lem:l1norm:param}.
\end{proof}

\subsection{DNN representations for maxima}
\label{subsec:dnnmax}

\begin{definition} [Real identity as DNN] \label{def:ReLU_identity}
We denote by $\ReLUidANN{1} \in \NN$ the neural network given by
\begin{equation}
    \ReLUidANN{1} = \left( \! \left( \! \begin{pmatrix}
    1 \\ -1
    \end{pmatrix}, \begin{pmatrix}
    0 \\ 0
    \end{pmatrix} \!\right), \left( \begin{pmatrix}
    1 & -1
    \end{pmatrix}, \begin{pmatrix}
    0 
    \end{pmatrix} \right) \! \right) \in (\R^{2 \times 1} \times \R^{2}) \times (\R^{1 \times 2} \times \R^{1}) 
\end{equation}
\cfload.
\end{definition}

\cfclear
\begin{prop} \label{prop:idDNN} \cfadd{def:realization} \cfadd{def:ReLU_identity}
It holds for all $x \in \R$ that $(\cR_\fr(\ReLUidANN{1}))(x)=x$ \cfload.
\end{prop}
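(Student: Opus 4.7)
The plan is to directly unfold \cref{def:realization} applied to the two-layer network $\ReLUidANN{1}$ and verify by elementary case analysis that the realization coincides with the identity map.

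First, I would observe that $\ReLUidANN{1}$ is a neural network with $\cL(\ReLUidANN{1}) = 2$ layers, input dimension $\cI(\ReLUidANN{1}) = 1$, output dimension $\cO(\ReLUidANN{1}) = 1$, and hidden dimension $\cD_1(\ReLUidANN{1}) = 2$, with first layer weights $W_1 = \begin{pmatrix} 1 \\ -1 \end{pmatrix}$, $B_1 = \begin{pmatrix} 0 \\ 0 \end{pmatrix}$ and second layer weights $W_2 = \begin{pmatrix} 1 & -1 \end{pmatrix}$, $B_2 = 0$. By \cref{def:realization}, for every $x_0 = x \in \R$ the hidden activation is $x_1 = \fM_{\fr, 2}(W_1 x_0 + B_1) = (\fr(x), \fr(-x))$, and hence $(\cR_\fr(\ReLUidANN{1}))(x) = W_2 x_1 + B_2 = \fr(x) - \fr(-x)$.

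Next I would use \cref{def:rect} to rewrite $\fr(x) - \fr(-x) = \max\{x, 0\} - \max\{-x, 0\}$ and split into the two cases $x \geq 0$ and $x < 0$. In the first case, $\max\{x,0\} = x$ and $\max\{-x,0\} = 0$, so the expression equals $x$. In the second case, $\max\{x,0\} = 0$ and $\max\{-x, 0\} = -x$, so the expression equals $-(-x) = x$. This yields the claim for all $x \in \R$.

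There is essentially no obstacle here: the statement is a direct unfolding of the realization map for the explicit two-layer network defined in \cref{def:ReLU_identity}, combined with the elementary identity $\fr(x) - \fr(-x) = x$. The only point requiring care is to make sure that the matrix-vector multiplications are interpreted consistently with \cref{def:realization} (note the sign flip in the second layer weights $\begin{pmatrix} 1 & -1 \end{pmatrix}$ compared to \cref{def:dnn:l1norm}, which is precisely what distinguishes the identity representation from the $1$-norm representation $\bbL_1$).
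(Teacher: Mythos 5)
Your proposal is correct and follows essentially the same route as the paper, which simply unfolds the realization to $\fr(x)-\fr(-x) = \max\{x,0\}-\max\{-x,0\} = x$; you merely spell out the intermediate hidden-layer computation and the two-case verification in slightly more detail.
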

\begin{proof} [Proof of \cref{prop:idDNN}]
Observe that it holds for all $x \in \R$ that
\begin{equation}
    (\cR_\fr(\ReLUidANN{1}))(x) = \fr(x) - \fr(-x) = \max \{x, 0 \} - \max \{ -x, 0 \} = x.
\end{equation}
The proof of \cref{prop:idDNN} is thus complete.
\end{proof}

\cfclear
\begin{lemma} \label{lem:max_d_welldef}
There exist unique $\phi_d \in  \NN$, $d \in \{2,3, 4,\ldots \}$, which satisfy that \cfadd{def:parallelization} \cfadd{def:composition} \cfadd{def:ReLU_identity} \cfadd{def:DNN}
\begin{enumerate} [(i)]
    \item \label{welldef:item1} it holds for all $d \in  \{2,3,4, \ldots \}$ that $\cI(\phi_d) = d$,
    \item \label{welldef:item2} it holds for all $d \in  \{2,3,4, \ldots \}$ that $\cO(\phi_d) = 1$,
    \item it holds that
    \begin{equation}
        \phi_2 = \left( \! \left( \!
	\begin{pmatrix}
		1 & -1 \\
		0 & 1 \\
		0 & -1
	\end{pmatrix},
	\begin{pmatrix}
		0 \\
		0 \\
		0
	\end{pmatrix} \!  \right), \left(
	\begin{pmatrix}
		1 & 1 & -1
	\end{pmatrix}, \begin{pmatrix} 0 \end{pmatrix} \right)  \! \right) \in (\R^{3 \times 2} \times \R^3) \times (\R^{1 \times 3} \times \R^1 ),
    \end{equation}
    \item \label{welldef:item3} it holds for all $d \in  \{2,3,4, \ldots \}$ that $\phi_{2d} = 	\phi_{d} \bullet \big(\bfP_{d}( \phi_2, \phi_2, \ldots, \phi_2) \big)$, and
    \item it holds for all $d \in \{2,3,4, \ldots \}$ that $\phi_{2d-1} = \phi_d \bullet \bigl( \bfP_d(\phi_2, \phi_2, \dots, \phi_2, \ReLUidANN{1} ) \bigr)$.
    \end{enumerate}
\cfload.
\end{lemma}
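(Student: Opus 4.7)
The plan is to proceed by strong induction on $d \in \{2, 3, 4, \ldots\}$, using the recursive prescriptions in items (iii)--(v) as the definition of $\phi_d$ and verifying at each stage that items (i) and (ii) hold and that the compositions and parallelizations involved are admissible. The base case $d=2$ is immediate from item (iii), which specifies $\phi_2$ explicitly as an element of $(\R^{3\times 2}\times \R^3)\times(\R^{1\times 3}\times \R^1)\subseteq \NN$ and directly yields $\cI(\phi_2)=2$, $\cO(\phi_2)=1$, and (what will be crucial below) $\cL(\phi_2)=2$.

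For the inductive step, fix $d\geq 3$ and assume that $\phi_k\in\NN$ has been uniquely defined for every $k\in\{2,3,\ldots,d-1\}$ and satisfies $\cI(\phi_k)=k$ and $\cO(\phi_k)=1$. If $d$ is even, write $d=2k$ for the unique $k\in\{2,3,\ldots\}$ with $k<d$, so that item (iv) forces $\phi_d=\phi_k\bullet \bfP_k(\phi_2,\phi_2,\ldots,\phi_2)$; if $d$ is odd, write $d=2k-1$ for the unique $k\in\{2,3,\ldots\}$ with $k<d$ (this uses $d\geq 3$), so that item (v) forces $\phi_d=\phi_k\bullet \bfP_k(\phi_2,\ldots,\phi_2,\ReLUidANN{1})$. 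Since every $d\geq 3$ falls into exactly one of these two cases, each formula uniquely specifies $\phi_d$, which gives uniqueness of the full sequence.

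It remains to verify that the resulting $\phi_d$ is actually a neural network. \cref{def:parallelization} requires its constituents to share a common length; in the even case this is trivial, while in the odd case it follows from $\cL(\ReLUidANN{1})=2=\cL(\phi_2)$, as is seen by inspecting \cref{def:ReLU_identity}. Using $\cI(\phi_2)=2$, $\cO(\phi_2)=1$, $\cI(\ReLUidANN{1})=1$, and $\cO(\ReLUidANN{1})=1$, \cref{prop:parallelization} then gives that $\bfP_k(\phi_2,\ldots,\phi_2)$ has input dimension $2k$ and output dimension $k$, while $\bfP_k(\phi_2,\ldots,\phi_2,\ReLUidANN{1})$ has input dimension $2(k-1)+1=2k-1$ and output dimension $k$. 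Since $\cI(\phi_k)=k$ by the inductive hypothesis, \cref{def:composition} applies, and \cref{prop:comp} yields $\cI(\phi_d)=d$ and $\cO(\phi_d)=\cO(\phi_k)=1$, which closes the induction on items (i) and (ii).

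The only part requiring real attention is the layer-count bookkeeping for the parallelizations: item (v) is only admissible because $\ReLUidANN{1}$ has the same depth as $\phi_2$, which is precisely the reason the recursion uses a two-layer identity network in the last slot rather than a shorter object. Everything else reduces to mechanical application of the composition and parallelization calculus from \cref{subsec:composition} and \cref{subsec:parallelization}.
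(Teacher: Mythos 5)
Your proof is correct and takes essentially the same route as the paper's: verify the base case $d=2$ explicitly (including $\cL(\phi_2)=2=\cL(\ReLUidANN{1})$), then proceed by strong induction on $d$, using \cref{prop:parallelization} to read off the input/output dimensions of the parallelizations and \cref{prop:comp} to verify that the composition with $\phi_k$ is admissible and that $\cI(\phi_d)=d$, $\cO(\phi_d)=1$. Your write-up is somewhat more explicit about the even/odd case split and the uniqueness argument, but the substance is identical.
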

\begin{proof}[Proof of \cref{lem:max_d_welldef}]
Throughout this proof let $\psi \in \NN$ be given by
\begin{equation}
    \psi = \left( \! \left( \!
	\begin{pmatrix}
		1 & -1 \\
		0 & 1 \\
		0 & -1
	\end{pmatrix},
	\begin{pmatrix}
		0 \\
		0 \\
		0
	\end{pmatrix} \!  \right), \left(
	\begin{pmatrix}
		1 & 1 & -1
	\end{pmatrix},
	\begin{pmatrix} 0 \end{pmatrix} \right) \!  \right) \in (\R^{3 \times 2} \times \R^3) \times (\R^{1 \times 3} \times \R^1 ).
\end{equation} Observe that it holds that $\cI(\psi) = 2$, $\cO(\psi) = 1$, and $\cL(\psi) = \cL(\ReLUidANN{1})=2$. Combining this with \cref{prop:parallelization} shows for all $d \in \N$ that
$\cI(\bfP_d (\psi, \psi, \ldots, \psi)) = 2d$, $\cO(\bfP_{d} (\psi, \psi, \ldots, \psi)) = d$, 
$\cI(\bfP_d (\psi, \psi, \ldots, \psi, \ReLUidANN{1})) = 2d-1$,
and
$\cO( \bfP_{d} (\psi, \psi, \ldots, \psi, \ReLUidANN{1})) = d$.
This, \cref{prop:comp}, and induction establish that for all $d \in \{2,3,4, \ldots \}$, $\phi_d$ is well-defined and satisfies $\cI(\phi_d)=d$ and $\cO(\phi_d)=1$. The proof of \cref{lem:max_d_welldef} is thus complete.
\end{proof}

\cfclear

\begin{definition}
[Maxima DNN representations]
\label{def:max_d}
We denote by $(\bbM_d)_{d \in \{2,3,4, \ldots \}} \subseteq \NN$ \cfadd{def:DNN} the neural networks which satisfy that \cfadd{def:parallelization} \cfadd{def:composition} \cfadd{def:ReLU_identity} \cfadd{lem:max_d_welldef}
\begin{enumerate} [(i)]
    \item it holds for all $d \in  \{2,3,4, \ldots \}$ that $\cI(\bbM_d) = d$,
    \item it holds for all $d \in  \{2,3,4, \ldots \}$ that $\cO(\bbM_d) = 1$, and
     \item it holds that
    \begin{equation} \label{eq:def:m2}
        \bbM_2 = \left( \! \left( \!
	\begin{pmatrix}
		1 & -1 \\
		0 & 1 \\
		0 & -1
	\end{pmatrix},
	\begin{pmatrix}
		0 \\
		0 \\
		0
	\end{pmatrix} \!  \right), \left(
	\begin{pmatrix}
		1 & 1 & -1
	\end{pmatrix}, \begin{pmatrix} 0 \end{pmatrix} \right)  \! \right) \in (\R^{3 \times 2} \times \R^3) \times (\R^{1 \times 3} \times \R^1 ),
    \end{equation}
    \item it holds for all $d \in  \{2,3,4, \ldots \}$ that $\bbM_{2d} = 	\bbM_{d} \bullet \big(\bfP_{d}( \bbM_2, \bbM_2, \ldots, \bbM_2) \big)$, and
    \item it holds for all $d \in \{2,3,4, \ldots \}$ that $\bbM_{2d-1} = \bbM_d \bullet \bigl( \bfP_d(\bbM_2, \bbM_2, \dots, \bbM_2, \ReLUidANN{1} ) \bigr)$.
\end{enumerate}
\cfload.
\end{definition}

\begin{definition}
[Floor and ceiling of real numbers]
\label{def:ceiling}
We denote by $\ceil{\cdot} \! \colon \R \to \Z$ and $\floor{\cdot} \! \colon \R \to \Z$ the functions which satisfy for all $x \in \R$ that $\ceil{x} = \min(\Z \cap [x, \infty))$ and $\floor{x} = \max(\Z \cap (-\infty, x])$.
\end{definition}

\cfclear
\begin{prop}
\label{Prop:max_d}
Let $d \in  \{2,3,4, \ldots \}$.
Then \cfadd{def:DNN} \cfadd{def:max_d} \cfadd{def:realization} 
\begin{enumerate}[(i)]
\item \label{max_d:item_3} it holds that
$\cH(\bbM_d) = \ceil{\log_2(d)}$,
\item \label{max_d:item_4} it holds for all $i \in \N$ that
$\cD_i(\bbM_d) \leq 3\ceil{\tfrac{d}{2^{i}}}$,
\item \label{max_d:item_5} it holds that $\cR_\fr(\bbM_d)  \in C(\R^d,\R)$, and
\item \label{max_d:item_6} it holds for all $x = (x_1,x_2, \ldots, x_d) \in \R^d$ that $(\cR_\fr(\bbM_d))(x) = \max\{x_1, x_2, \ldots, x_d\}$
\end{enumerate}
\cfload.
\end{prop}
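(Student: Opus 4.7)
My plan is to verify all four items simultaneously by strong induction on $d\in\{2,3,4,\ldots\}$, following the recursive definition of $\bbM_d$ in \cref{def:max_d}. For the base case $d=2$, I read off directly from the explicit form \eqref{eq:def:m2} that $\cL(\bbM_2)=2$, hence $\cH(\bbM_2)=1=\ceil{\log_2(2)}$ and $\cD_1(\bbM_2)=3=3\ceil{\nicefrac{2}{2}}$ (with $\cD_i(\bbM_2)=0$ for $i\geq 2$), and a direct computation via \cref{def:realization} gives for all $x_1,x_2\in\R$ that
\begin{equation}
(\cR_\fr(\bbM_2))(x_1,x_2) = \fr(x_1-x_2) + \fr(x_2) - \fr(-x_2) = \max\{x_1-x_2,0\}+x_2 = \max\{x_1,x_2\},
\end{equation}
which settles (iii) and (iv) for $d=2$.

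For the inductive step, I assume the four claims for all $2\leq d'<d$ and split into the even case $d=2k$ and the odd case $d=2k-1$ with $k\geq 2$. Since $\cL(\ReLUidANN{1})=\cL(\bbM_2)=2$, the parallelizations in \cref{def:max_d} satisfy the hypothesis of \cref{prop:parallelization}. In the even case, combining \cref{prop:comp}(ii)--(iv) and \cref{prop:parallelization}(i),(iii) with the induction hypothesis applied to $\bbM_k$ yields $\cH(\bbM_{2k}) = \cH(\bbM_k)+1 = \ceil{\log_2(2k)}$, $\cD_1(\bbM_{2k}) = 3k = 3\ceil{\nicefrac{2k}{2}}$, $\cD_i(\bbM_{2k}) = \cD_{i-1}(\bbM_k) \leq 3\ceil{\nicefrac{k}{2^{i-1}}} = 3\ceil{\nicefrac{2k}{2^i}}$ for $i\geq 2$, and
\begin{equation}
(\cR_\fr(\bbM_{2k}))(x_1,\ldots,x_{2k}) = (\cR_\fr(\bbM_k))(\max\{x_1,x_2\},\ldots,\max\{x_{2k-1},x_{2k}\}) = \max\{x_1,\ldots,x_{2k}\}.
\end{equation}
The odd case $d=2k-1$ is handled in the same way, except that \cref{prop:idDNN} is used to ensure the last coordinate is passed through unchanged by $\ReLUidANN{1}$, and the layer-one bound becomes $\cD_1(\bbM_{2k-1}) = 3(k-1) + \cD_1(\ReLUidANN{1}) = 3k-1 \leq 3\ceil{\nicefrac{2k-1}{2}}$.

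The only non-routine ingredient is a small arithmetic identity needed to bound $\cD_i(\bbM_{2k-1})$ for $i\geq 2$: one has to observe that $\ceil{\nicefrac{2k-1}{2^i}} = \ceil{\nicefrac{k}{2^{i-1}}}$, which holds because $2k-1$ is odd and $2^i$ is even, so the residue of $2k-1$ modulo $2^i$ lies strictly between $0$ and $2^i$ and its ceiling coincides with $\ceil{\nicefrac{2k}{2^i}} = \ceil{\nicefrac{k}{2^{i-1}}}$. Apart from this mild number-theoretic remark, the entire argument is a mechanical application of the DNN calculus developed in \cref{sec:dnns}.
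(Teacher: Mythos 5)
Your proposal is correct and takes essentially the same route as the paper: both proceed by strong induction on $d$ exploiting the recursive definitions $\bbM_{2k} = \bbM_k \bullet \bfP_k(\bbM_2,\dots,\bbM_2)$ and $\bbM_{2k-1} = \bbM_k \bullet \bfP_k(\bbM_2,\dots,\bbM_2,\ReLUidANN{1})$, deriving the recursions $\cH(\bbM_d) = \cH(\bbM_{\lceil d/2 \rceil})+1$ and $\cD_i(\bbM_d) = \cD_{i-1}(\bbM_{\lceil d/2 \rceil})$ (for $i\geq 2$) from \cref{prop:comp} and \cref{prop:parallelization}, and verifying the base case $d=2$ directly. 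The one point you handle more explicitly than the paper is the arithmetic identity $\lceil (2k-1)/2^i \rceil = \lceil k/2^{i-1} \rceil$ for $i \geq 2$, which the paper leaves to the reader inside its "together with induction" step; your parity argument for it is correct.
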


\begin{proof}[Proof of \cref{Prop:max_d}.]
Note that \eqref{eq:def:m2} ensures that $\cH(\bbM_2) = 1$.
This and \cref{def:parallelization} demonstrate that for all $\fd \in \{2,3,4, \ldots\}$ it holds that
\begin{equation}
    \cH (\bfP_{\fd} (\bbM_2, \bbM_2, \ldots, \bbM_2)) =  \cH (\bfP_{\fd} (\bbM_2, \bbM_2, \ldots, \bbM_2, \ReLUidANN{1})) = 1.
\end{equation}
Combining this with \cref{prop:comp} establishes for all $\fd \in \{3,4,5,\ldots\}$ that $\cH (\bbM_\fd) = \cH (\bbM_{\ceil{\nicefrac{\fd}{2}}})+1$. This and induction establish item \eqref{max_d:item_3}. Next note that $\cA (\bbM_2) = (2,3,1)$.
Moreover, observe that \cref{def:max_d},
\cref{prop:parallelization}, and \cref{prop:comp} imply that for all $\fd \in \{2,3,4, \ldots\}$, $i \in \N$ it holds that
\begin{equation}
    \cD_i(\bbM_{2 \fd}) = \begin{cases}
    3 \fd & \colon i=1 \\
    \cD_{i-1}(\bbM_\fd) & \colon i \geq 2
    \end{cases}
\end{equation}
and
\begin{equation}
     \cD_i(\bbM_{2\fd-1}) = \begin{cases}
    3\fd-1 & \colon i=1 \\
    \cD_{i-1}(\bbM_\fd) & \colon i \geq 2.
    \end{cases}
\end{equation}
Together with induction this proves item \eqref{max_d:item_4}. In addition, observe that
\eqref{eq:def:m2} ensures for all $x=(x_1, x_2) \in \R^2$ that
\begin{equation}
\begin{split}
    (\cR_\fr(\bbM_2))(x) &= \max \{x_1-x_2, 0\} + \max \{ x_2 , 0\} - \max\{ -x_2 , 0\} \\
    &= \max \{x_1-x_2, 0\} + x_2 = \max\{x_1, x_2\}.
    \end{split}
\end{equation}
Combining this, \cref{prop:parallelization}, \cref{prop:comp}, \cref{prop:idDNN}, and induction implies that for all $d \in  \{2,3, 4, \ldots \}$, $x= ( x_1,x_2,\dots,x_d) \in \R^d$ it holds that $\cR_\fr(\bbM_d) \in C(\R^d,\R)$
and
$\left(\cR_\fr({\bbM_d})\right)(x) = \max\{x_1,x_2,\dots,x_d\}$.
This establishes items \eqref{max_d:item_5}--\eqref{max_d:item_6} and thus completes the proof of \cref{Prop:max_d}.
\end{proof}

 \cfclear
\begin{lemma} \cfadd{def:max_d} \cfadd{def:DNN}
\label{t:max:d} Let $d \in \{2,3,4, \ldots \}$, $i \in \{1,2,  \ldots, \cL(\bbM_d) \}$ \cfload. 
Then \cfadd{def:p-norm} 
\begin{enumerate}[(i)]
\item \label{t:max_d:item_1} it holds that $\cB_i(\bbM_d)= 0 \in \R^{\cD_i(\bbM_d)}$,
\item \label{t:max_d:item_2} it holds that $ \cW_i(\bbM_d) \in \{-1,0,1\}^{\cD_i(\bbM_d) \times \cD_{i-1}(\bbM_d) }$, and
\item \label{t:max:d:item_3} it holds for all $x \in \R^d$ that $\| \cW_1( \bbM_d) x \|_\infty \leq 2 \| x \|_\infty$.
\end{enumerate}
\cfout.
\end{lemma}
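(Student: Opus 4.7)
The plan is to establish items (i)--(iii) simultaneously by induction on $d \in \{2, 3, 4, \ldots\}$ following the recursive construction of $\bbM_d$ in \cref{def:max_d}. For the base case $d = 2$, I would read all three claims directly off the explicit formula \eqref{eq:def:m2}: both biases are zero, both weight matrices have entries in $\{-1, 0, 1\}$, and for $x = (x_1, x_2) \in \R^2$ the vector $\cW_1(\bbM_2) x = (x_1 - x_2, x_2, -x_2)$ has $\infty$-norm at most $|x_1| + |x_2| \leq 2 \|x\|_\infty$. In parallel I would record the analogous facts for $\ReLUidANN{1}$ from \cref{def:ReLU_identity} (zero biases, weights in $\{-1, 0, 1\}$, and $\|\cW_1(\ReLUidANN{1}) x\|_\infty = |x|$ for $x \in \R$), which will be needed in the odd branch of the recursion.

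For the inductive step, fix $d \geq 3$, set $e = \lceil d/2 \rceil < d$, and assume the claim for $\bbM_e$. By \cref{def:max_d} one has $\bbM_d = \bbM_e \bullet \Psi$ with $\Psi = \bfP_e(\bbM_2, \ldots, \bbM_2)$ if $d = 2e$ and $\Psi = \bfP_e(\bbM_2, \ldots, \bbM_2, \ReLUidANN{1})$ if $d = 2e-1$. From \cref{def:parallelization} each $\cW_k(\Psi)$ is block diagonal with blocks drawn from $\{\cW_k(\bbM_2), \cW_k(\ReLUidANN{1})\}$ and each $\cB_k(\Psi)$ is a concatenation of zero vectors, so $\Psi$ itself has $\{-1, 0, 1\}$-valued weights and vanishing biases. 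Since $\cL(\Psi) = 2$ and $\cL(\bbM_e) \geq 2$, I would then invoke the $\bfL > 1 < \fL$ branch of \cref{def:composition}, according to which the layers of $\bbM_d$ are layer $1$ of $\Psi$, a merged layer with weight $\cW_1(\bbM_e)\cW_2(\Psi)$ and bias $\cW_1(\bbM_e)\cB_2(\Psi) + \cB_1(\bbM_e)$, and layers $2, \ldots, \cL(\bbM_e)$ of $\bbM_e$.

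Item (i) is then immediate: all inherited biases vanish by construction or by induction, and the merged bias equals $\cW_1(\bbM_e) \cdot 0 + 0 = 0$. For item (iii), the key point is that $\cW_1(\bbM_d) = \cW_1(\Psi)$ is block diagonal, so partitioning $x$ into blocks matching the input dimensions of the $\bbM_2$ components and $\ReLUidANN{1}$ and applying the base-case bounds block by block yields $\|\cW_1(\bbM_d) x\|_\infty \leq 2 \|x\|_\infty$. The delicate point, and thus what I expect to be the main obstacle, is item (ii) at the merged layer: a product of two $\{-1, 0, 1\}$-valued matrices need not be $\{-1, 0, 1\}$-valued in general. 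I would resolve this by exploiting the very sparse structure of $\cW_2(\Psi)$, which is block diagonal with $1 \times 3$ blocks $(1, 1, -1)$ (and possibly one trailing $1 \times 2$ block $(1, -1)$); in particular every column of $\cW_2(\Psi)$ has exactly one nonzero entry, equal to $\pm 1$. Consequently the product $\cW_1(\bbM_e)\cW_2(\Psi)$ is obtained from $\cW_1(\bbM_e)$ merely by selecting and sign-flipping columns, so each of its entries equals $\pm$ some entry of $\cW_1(\bbM_e)$ and hence lies in $\{-1, 0, 1\}$ by the inductive hypothesis, closing the induction.
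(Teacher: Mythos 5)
Your proposal is correct and follows essentially the same strategy as the paper's proof: a (strong) induction on $d$ tracking the recursive construction $\bbM_{2\fd} = \bbM_\fd \bullet \bfP_\fd(\bbM_2, \ldots, \bbM_2)$ and $\bbM_{2\fd-1} = \bbM_\fd \bullet \bfP_\fd(\bbM_2, \ldots, \bbM_2, \ReLUidANN{1})$, reading off layers $1$, the merged layer $2$, and the inherited layers $\geq 3$ via \cref{def:composition}. The one place you go further than the paper is the merged-layer step for item (ii): the paper simply writes $\cW_2(\bbM_{2\fd}) = \cW_1(\bbM_\fd) \cW_2(\bfP_\fd(\ldots))$ and asserts the conclusion by induction, whereas you explicitly justify why this product stays $\{-1,0,1\}$-valued by observing that each column of $\cW_2(\bfP_\fd(\ldots))$ has a single $\pm 1$ entry, so the product merely selects and sign-flips columns of $\cW_1(\bbM_\fd)$ --- a genuinely helpful clarification of a step the paper leaves implicit.
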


\begin{proof}[Proof of \cref{t:max:d}]
Throughout this proof let $A_1 \in \R^{3 \times 2}$, $A_2 \in \R ^{1 \times 3}$, $C_1 \in \R^{2 \times 1}$, $C_2 \in \R^{1 \times 2}$ be given by $A_1 = \begin{pmatrix} 1 & -1 \\ 0 & 1 \\ 0 & -1 \end{pmatrix}$, $A_2 = \begin{pmatrix}1 & 1 & -1 \end{pmatrix}$, $C_1 = \begin{pmatrix}1 \\ -1 \end{pmatrix}$, $C_2 = \begin{pmatrix} 1 & -1 \end{pmatrix}$. Observe that \eqref{eq:def:m2} ensures that all four statements hold for $d=2$. Furthermore, note that for all $\fd \in \{2,3,4,\ldots \}$ it holds that
\begin{equation} \label{propmd:eq1}
\begin{split}
    \cW_1(\bbM_{2\fd-1}) =  \underbrace{\begin{pmatrix}A_1 & 0 & \cdots & 0 \\
    0 & A_1 & \cdots & 0 \\
    \vdots & \vdots & \ddots & \vdots \\
    0 & 0 & \cdots & C_1 \end{pmatrix}}_{\in \R^{(3\fd-1)\times (2\fd-1)}}, & \quad 
    \cW_1( \bbM_{2\fd }) =  \underbrace{
    \begin{pmatrix}A_1 & 0 & \cdots & 0 \\
    0 & A_1 & \cdots & 0 \\
    \vdots & \vdots & \ddots & \vdots \\
    0 & 0 & \cdots & A_1 \end{pmatrix}}_{\in \R^{(3 \fd) \times (2 \fd)}}, \\
    \cB_1(\bbM_{2\fd-1}) = 0 \in \R^{3 \fd -1}, &\qandq  \cB_1(\bbM_{2\fd}) = 0 \in \R^{3 \fd}.
    \end{split}
\end{equation}
This proves item \eqref{t:max:d:item_3}. In addition, observe that for all $\fd \in \{2,3,4,\ldots \}$ it holds that
\begin{equation} \label{propmd:eq2}
\begin{split}
    \cW_2(\bbM_{2\fd-1}) = \cW_1(\bbM_\fd)   \underbrace{\begin{pmatrix}A_2 & 0 & \cdots & 0 \\
    0 & A_2 & \cdots & 0 \\
    \vdots & \vdots & \ddots & \vdots \\
    0 & 0 & \cdots & C_2  \end{pmatrix}}_{\in\R^{\fd\times (3\fd -1)}}, &\quad 
      \cW_2(\bbM_{2\fd}) = \cW_1(\bbM_\fd) \underbrace{
    \begin{pmatrix}A_2 & 0 & \cdots & 0 \\
    0 & A_2 & \cdots & 0 \\
    \vdots & \vdots & \ddots & \vdots \\
    0 & 0 & \cdots & A_2  \end{pmatrix}}_{\in \R^{\fd \times (3 \fd)}}, \\
    \cB_2(\bbM_{2\fd-1}) = \cB_1(\bbM_\fd), &\qandq \cB_2(\bbM_{2 \fd}) = \cB_1(\bbM_\fd).
\end{split}
\end{equation}
Finally, observe that \cref{prop:comp} demonstrates that for all $ \fd \in \{2,3,4, \ldots, \}$, $i \in \{3,4, \allowbreak \ldots, \cL(\bbM_\fd)+1 \}$ we have that
\begin{equation} \label{propmd:eq3}
    \cW_i(\bbM_{2 \fd -1}) = \cW_i(\bbM_{2 \fd}) = \cW_{i-1}(\bbM_\fd) \qandq   \cB_i(\bbM_{2 \fd -1}) = \cB_i(\bbM_{2 \fd}) = \cB_{i-1}(\bbM_\fd).
\end{equation}
Combining \eqref{propmd:eq1}--\eqref{propmd:eq3} with induction establishes items \eqref{t:max_d:item_1} and \eqref{t:max_d:item_2}. The proof of \cref{t:max:d} is thus complete.
\end{proof}

\subsection{DNN representations for maximum convolutions}
\label{subsec:dnninterpolation}
\cfclear
\begin{lemma}\label{dnn:intl1}
Let $d \in \N$, $L \in [0, \infty)$, $K \in \{2,3,4, \ldots \}$, $\fx_1, \fx_2, \ldots, \fx_K \in \R^d$, $\fy = (\fy_1, \fy_2, \ldots, \allowbreak \fy_K) \in \R^K$, let $F \colon \R^d \to \R$ satisfy for all $x  \in \R^d$ that \cfadd{def:p-norm}
\begin{equation}
    F(x) = \max\nolimits_{k  \in \{1, 2, \ldots, K \} } \left( \fy_k - L \norm{x-\fx_k}_1 \right),
\end{equation}
and let $\Phi \in \NN$ be given by \cfadd{def:concat} \cfadd{def:dnn:l1norm} \cfadd{def:composition} \cfadd{def:DNN:aff} \cfadd{def:parallelization} \cfadd{def:max_d}
\begin{equation}
    \Phi = \bbM_{K} \bullet \bbA_{-L \id_{K}, \fy} \bullet \bfP_{K}  \bigl(  \bbL_d \bullet \bbA_{\id _d, -\fx_1},  \bbL_d \bullet \bbA_{\id _d, -\fx_2}, \ldots,  \bbL_d \bullet \bbA_{\id _d, -\fx_K} \bigr) \bullet \fT_{K, d}
\end{equation}
\cfload. Then \cfadd{def:realization} \cfadd{def:DNNparam}
\begin{enumerate}[(i)]
\item \label{item:dnn:intl1:1}
it holds that $\cI(\Phi) = d$,
\item \label{item:dnn:intl1:1a} it holds that $\cO(\Phi) = 1$,
   \item \label{item:dnn:intl1:2}
    it holds that $\cH (\Phi) = \ceil{ \log _2 K } + 1$,
\item \label{item:dnn:intl1:3}
it holds that $\cD_1 ( \Phi) = 2 d K$,
\item \label{item:dnn:intl1:4}
it holds for all $i \in \{2,3, \ldots \}$ that $\cD_i (\Phi) \leq 3 \ceil{ \frac{K}{2^{i-1}}} $,
    \item \label{item:dnn:intl1:5}
    it holds that $\norm{ \cT(\Phi)}_\infty \leq \max \{ 1,L, \max_{k \in \{1, 2, \ldots, K\}} \norm{ \fx_k}_\infty, 2 \norm{\fy}_\infty\}$, and
    \item \label{item:dnn:intl1:6}
    it holds that $\cR_\fr(\Phi) = F$
\end{enumerate}
\cfout.
\end{lemma}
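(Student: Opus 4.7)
The plan is to verify the seven claims by unfolding $\Phi$ from the inside out and applying the DNN calculus from \cref{sec:dnns} together with the representation results from \cref{subsec:dnnl1,subsec:dnnmax}. First I would catalogue the architectures of the four constituent networks. Using \cref{prop:DNN:aff}, \cref{prop:concat}, \cref{prop:dnn:l1norm}, and \cref{Prop:max_d}, the pieces $\fT_{K,d}$, $\bbL_d \bullet \bbA_{\id_d, -\fx_k}$, $\bbA_{-L\id_K, \fy}$, and $\bbM_K$ have architectures $(d, Kd)$, $(d, 2d, 1)$, $(K, K)$, and $(K, \cD_1(\bbM_K), \ldots, \cD_{\cL(\bbM_K)-1}(\bbM_K), 1)$ with $\cL(\bbM_K) = \ceil{\log_2 K} + 1$. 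By \cref{prop:parallelization} the parallelization $\bfP_K(\bbL_d \bullet \bbA_{\id_d, -\fx_k})$ has architecture $(Kd, 2dK, K)$. Iterating \cref{prop:comp} (or \cref{prop:affcomp}) through the three affine/concatenation compositions and the single genuine composition with $\bbM_K$ then yields $\cI(\Phi) = d$, $\cO(\Phi) = 1$, $\cL(\Phi) = \ceil{\log_2 K} + 2$, $\cD_1(\Phi) = 2dK$, and $\cD_i(\Phi) = \cD_{i-1}(\bbM_K) \leq 3\ceil{K/2^{i-1}}$ for $i \geq 2$ (with the trivial bound for $i$ beyond $\cL(\Phi)$). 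This establishes items (i)--(v).

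For the realization claim (vii), I would track an input $x \in \R^d$ through the layers. By \cref{prop:concat}, the precomposition with $\fT_{K,d}$ replicates $x$ to $(x, x, \ldots, x) \in \R^{Kd}$. Then \cref{prop:parallelization}, combined with \cref{prop:affcomp} and \cref{prop:dnn:l1norm}, shows that the parallelized block maps this to $(\norm{x - \fx_1}_1, \norm{x - \fx_2}_1, \ldots, \norm{x - \fx_K}_1) \in \R^K$. Applying \cref{prop:affcomp} to $\bbA_{-L\id_K, \fy}$ yields $(\fy_1 - L\norm{x - \fx_1}_1, \ldots, \fy_K - L\norm{x - \fx_K}_1)$, and \cref{Prop:max_d} finally gives the maximum of these $K$ components, which is precisely $F(x)$.

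The main obstacle is (vi), the parameter bound, because it requires tracing through the block-diagonal and concatenation structures while tracking entrywise magnitudes at each layer. Using \cref{lem:l1norm:param} and \cref{def:composition}, the layer-$1$ weight of $\bbL_d \bullet \bbA_{\id_d, -\fx_k}$ equals $\cW_1(\bbL_d) \in \{-1, 0, 1\}^{2d \times d}$ and the layer-$1$ bias equals $-\cW_1(\bbL_d) \fx_k$, which by \cref{lem:l1norm:param}(iii) satisfies $\norm{-\cW_1(\bbL_d) \fx_k}_\infty = \norm{\fx_k}_\infty$. Parallelizing preserves the entry set, and precomposing with $\fT_{K,d}$ stacks copies of $\cW_1(\bbL_d)$, again giving entries in $\{-1, 0, 1\}$ and biases of $\infty$-norm at most $\max_k \norm{\fx_k}_\infty$. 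The layer-$2$ weight of the parallelized piece consists of blocks of ones (by \cref{lem:l1norm:param}(iv)) and has zero bias.

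It remains to analyse the composition with $\bbA_{-L\id_K, \fy}$ and then with $\bbM_K$. The composition $\bbA_{-L\id_K, \fy} \bullet \bfP_K(\cdots)$ replaces the layer-$2$ weight by $-L \cdot \cW_2(\bfP_K(\cdots))$, whose entries lie in $\{-L, 0\}$, and the layer-$2$ bias by $\fy$. Finally, the composition with $\bbM_K$ merges this layer with $\cW_1(\bbM_K)$: the new layer-$2$ weight is $\cW_1(\bbM_K) \cdot \cW_2(\Psi)$ where $\Psi$ denotes the subnetwork to the right of $\bbM_K$; because $\cW_2(\Psi)$ has exactly one nonzero entry of value $-L$ per column by the block structure, this product has entries in $\{-L, 0, L\}$. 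The new bias is $\cW_1(\bbM_K) \fy$, whose $\infty$-norm is at most $2 \norm{\fy}_\infty$ by \cref{t:max:d}(iii). The remaining layers of $\Phi$ coincide with layers $2, 3, \ldots, \cL(\bbM_K)$ of $\bbM_K$ and, by \cref{t:max:d}, have weights in $\{-1, 0, 1\}$ and zero biases. Taking the maximum of all these layerwise $\infty$-norm bounds yields (vi), completing the proof.
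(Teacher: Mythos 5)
Your proposal is correct and follows essentially the same route as the paper's proof: identify the constituent architectures via \cref{prop:DNN:aff}, \cref{prop:concat}, \cref{prop:dnn:l1norm}, \cref{prop:parallelization}, and \cref{Prop:max_d} to establish (i)--(v); track an input through the blocks for (vii); and for (vi), write $\Phi$ as $\bbM_K \bullet \Xi$ with $\Xi$ the affine/parallelized/concatenation tail, so that the merged middle layer has weight $\cW_1(\bbM_K)\cW_2(\Xi)$ and bias $\cW_1(\bbM_K)\fy$, then use \cref{lem:l1norm:param} and \cref{t:max:d} to bound each block. Your key observation that $\cW_2(\Xi)$ has exactly one nonzero entry per column (equal to $-L$), so the product with the $\{-1,0,1\}$-valued $\cW_1(\bbM_K)$ stays bounded by $L$, is exactly the paper's argument for the middle-layer weight, and the bias bound $\|\cW_1(\bbM_K)\fy\|_\infty \le 2\|\fy\|_\infty$ via \cref{t:max:d}(iii) also matches.
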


\begin{proof}[Proof of \cref{dnn:intl1}]
Throughout this proof let $\Psi_k \in \NN$, $k \in \{1,2, \ldots, K\}$, satisfy for all $k \in \{1,2, \ldots, K\}$ that $\Psi_k =  \bbL_d \bullet \bbA_{\id _d, -\fx_k}$, let $\Xi \in \NN$ be given by 
\begin{equation}
    \Xi = \bbA_{-L \id_{K}, \fy} \bullet \bfP_{K}  \bigl( \Psi_1, \Psi_2, \ldots, \Psi_{K} \bigr) \bullet \fT_{K, d},
\end{equation}
 and let $\normmm{\cdot} \colon \bigcup_{m, n \in \N} \R^{m \times n} \to [0, \infty)$ satisfy for all $m,n \in \N$, $M = (M_{i,j})_{i \in \{1, \ldots, m\}, \,  j \in \{1, \ldots, n \} } \in \R^{m \times n}$ that $\normmm{M} = \max_{i \in \{1, \ldots, m\}, \, j \in \{1, \ldots, n\}} |M_{i,j}|$.
Observe that it holds that $\Phi = \bbM_K \bullet \Xi$. \cref{def:max_d} therefore shows that $\cO(\Phi) = \cO(\bbM_{K})=1$. Next note that \cref{def:concat} implies that $\cI(\Phi) = \cI( \fT_{K, d}) = d$. This proves \eqref{item:dnn:intl1:1} and \eqref{item:dnn:intl1:1a}. Moreover, the fact that $\cH (\bbL_d) = 1$, the fact that $\forall \, m,n \in \N, \, \fW \in \R^{m \times n}, \, \fB \in \R^m \colon \cH (\bbA_{\fW, \fB}) = 0$, \cref{prop:comp}, and \cref{Prop:max_d} ensure that for all $i \in \{2,3, \ldots\}$ we have that
\begin{equation}
    \cH (\Phi) = \cH (\bbM_{K} \bullet \Xi) = \cH (\bbM_{K}) + \cH (\Xi) = \cH (\bbM_{K}) + 1 = \ceil{ \log _2 K} +1
\end{equation}
and
\begin{equation}
    \cD_i (\Phi) = \cD_{i-1}(\bbM_{K}) \leq 3 \ceil{ \tfrac{K}{2^{i-1}}} .
\end{equation}
Furthermore, \cref{prop:comp}, \cref{prop:parallelization}, and \cref{prop:dnn:l1norm} assure that
\begin{equation}
    \cD_1(\Phi) = \cD_1(\Xi)= \cD_1 \! \left( \bfP_{K}(\Psi_1, \Psi_2, \ldots, \Psi_{K})\right) = \sum_{i=1}^{K} \cD_1 \! \left( \Psi_i\right) = \sum_{i=1}^{K} \cD_1( \bbL_d) = 2 d K.
\end{equation}
This establishes items \eqref{item:dnn:intl1:2}--\eqref{item:dnn:intl1:4}. In the next step we prove \eqref{item:dnn:intl1:5}. Observe that \cref{t:max:d} implies that
\begin{equation}  \label{proof:eqphi}
\begin{split}
    \Phi = \bigl( &(\cW_1(\Xi), \cB_1(\Xi)),(\cW_1( \bbM_{K}) \cW_2(\Xi), \cW_1( \bbM_{K}) \cB_2(\Xi)), \\
    &(\cW_2( \bbM_{K}), 0), \ldots, (\cW_{\cL( \bbM_{K})}( \bbM_{K}), 0) \bigr).
\end{split}
\end{equation}
Moreover, note that it holds for all $k \in \{1,2, \ldots, K \}$ that $\cW_1(\Psi_k) = \cW_1(\bbL_d)$. This proves that
\begin{equation}
    \cW_1(\Xi) = \cW_1 \bigl( \bfP_{K}(\Psi_1, \Psi_2, \ldots, \Psi_{K}) \bullet \fT_{K, d} \bigr) = \begin{pmatrix}
    \cW_1(\Psi_1) \\
    \cW_1(\Psi_2) \\
    \vdots \\
    \cW_1(\Psi_{K})
    \end{pmatrix}
    = \begin{pmatrix}
    \cW_1(\bbL_d) \\
    \cW_1(\bbL_d) \\
    \vdots \\
    \cW_1(\bbL_d)
    \end{pmatrix}.
\end{equation}
\cref{lem:l1norm:param} hence demonstrates that $\normmm{\cW_1(\Xi)} = 1$. 
In addition, observe that \cref{lem:l1norm:param} implies for all $k \in \{1,2, \ldots, K \}$ that $\cB_1(\Psi_k) = \cB_1(\bbL_d \bullet \bbA_{\id_d, -\fx_k}) = -\cW_1( \bbL_d) \fx_k$ and therefore $\| \cB_1(\Psi_k) \|_\infty = \| \fx_k \|_\infty \leq \max_{k \in \{1, 2, \ldots, K\}} \norm{ \fx_k}_\infty$. This and the fact that
\begin{equation}
    \cB_1(\Xi) = \cB_1 \bigl(\bfP_{K}(\Psi_1, \Psi_2, \ldots, \Psi_{K}) \bullet \fT_{K, d} \bigr) = \begin{pmatrix}
    \cB_1(\Psi_1) \\
    \cB_1(\Psi_2) \\
    \vdots \\
    \cB_1(\Psi_{K})
    \end{pmatrix}
\end{equation}
demonstrate that $\| \cB_1(\Xi)\|_\infty \leq \max_{k \in \{1, 2, \ldots, K\}} \norm{ \fx_k}_\infty$.
Combining this, \eqref{proof:eqphi}, and \cref{t:max:d} shows that
\begin{equation} \label{eq:tphiinfty}
    \begin{split}
        \norm{\cT(\Phi)}_\infty 
        &= \max \left\{ \normmm{\cW_1(\Xi)}, \norm{\cB_1(\Xi)}_\infty, \normmm{\cW_1 (\bbM_{K}) \cW_2(\Xi)}, \|\cW_1(\bbM_{K}) \cB_2(\Xi)\|_\infty , 1 \right\} \\
        &\leq \max \left\{ 1, \max\nolimits_{k \in \{1, 2, \ldots, K\}} \norm{ \fx_k}_\infty, \normmm{ \cW_1(\bbM_{K}) \cW_2 (\Xi)}, \| \cW_1(\bbM_{K}) \cB_2 (\Xi) \|_\infty \right\}.
    \end{split}
\end{equation}
Next note that \cref{lem:l1norm:param} ensures for all $k \in \{1,2, \ldots, K \}$ that $\cB_2(\Psi_k) = \cB_2(\bbL _d) = 0$ and therefore $\cB_2 \bigl( \bfP_{K}(\Psi_1, \Psi_2, \ldots, \Psi_{K}) \bigr) = 0$. This implies that
\begin{equation} \label{proof:b2xi}
    \cB_2(\Xi) = \cB_2 \bigl( \bbA_{-L \id_{K}, \fy} \bullet \bfP_{K}  ( \Psi_1, \Psi_2, \ldots, \Psi_{K} ) \bigr) = \fy.
\end{equation}
In addition, observe that it holds for all $k \in \{1,2, \ldots, K \}$ that $\cW_2(\Psi_k) = \cW_2(\bbL_d) $ and thus
\begin{equation} \label{proof:w2xi}
\begin{split}
\cW_2(\Xi) &= \cW_2 \bigl( \bbA_{-L \id_{K}, \fy} \bullet \bfP_{K}  ( \Psi_1, \Psi_2, \ldots, \Psi_{K} ) \bigr) = -L \cW_2 \bigl( \bfP_{K}  ( \Psi_1, \Psi_2, \ldots, \Psi_{K} ) \bigr) \\
&= \begin{pmatrix}
 -L\cW_2(\bbL_d) & 0 & \cdots & 0 \\
    0 & -L\cW_2(\bbL_d) & \cdots & 0 \\
    \vdots & \vdots & \ddots & \vdots \\
    0 & 0 &\cdots & -L\cW_2(\bbL_d)
\end{pmatrix}
\end{split}
\end{equation}
Moreover, note that \cref{lem:l1norm:param} ensures that $\cW_2(\bbL_d) = \begin{pmatrix} 1 & 1 & \cdots & 1 \end{pmatrix}$.
Combining this, \eqref{proof:b2xi}, and \eqref{proof:w2xi} with \cref{t:max:d} implies that $\normmm{\cW_1(\bbM_{K}) \cW_2(\Xi) } \leq L$ and $\| \cW_1 (\bbM_{K}) \cB_2(\Xi) \|_\infty \leq 2 \norm{\cB_2(\Xi)}_\infty = 2 \norm{\fy}_\infty $. Together with \eqref{eq:tphiinfty} this completes the proof of \eqref{item:dnn:intl1:5}. 

It remains to prove \eqref{item:dnn:intl1:6}. Observe that \cref{prop:dnn:l1norm} and \cref{prop:affcomp} show for all $x \in \R^d$, $k \in \{1,2, \ldots, K \}$ that $(\cR_\fr(\Psi_k))(x) = \| x- \fx_k \|_1$.
This, \cref{prop:parallelization}, and \cref{prop:concat} imply that for all $x \in \R^d$ we have that
\begin{equation} 
 \bigl(\cR_\fr \! \left( \bfP_{K}(\Psi_1, \Psi_2, \ldots, \Psi_{K}) \bullet \fT_{K, d} \right) \bigr) (x) 
= \bigl( \| x- \fx_1 \|_1, \| x- \fx_2 \|_1, \ldots, \| x - \fx_K \|_1 \bigr).
\end{equation}
Combining this and \cref{prop:affcomp} proves for all $x \in \R^d$ that
\begin{equation}
    \begin{split}
          (\cR_\fr(\Xi))(x) &= \bigl(\cR_\fr  \left( \bbA_{-L \id_{K}, \fy} \bullet \bfP_{K}(\Psi_1, \Psi_2, \ldots, \Psi_{K}) \bullet \fT_{K, d} \right) \bigr) (x) \\
    &= \bigl( \fy_1 - L \| x- \fx_1 \|_1, \fy_2 - L \| x - \fx_2 \|_1, \ldots, \fy_K - L \| x- \fx_K \|_1 \bigr).
    \end{split}
\end{equation}
This, \cref{prop:comp}, and \cref{Prop:max_d} establish \eqref{item:dnn:intl1:6}. The proof of \cref{dnn:intl1} is thus complete.
\end{proof}

\section{Analysis of the approximation error}
\label{sec:approx}
 In this section we show how Lipschitz continuous functions defined on a hypercube $[a,b]^d \subseteq \R^d$ can be approximated by DNNs with respect to the uniform norm. These results are elementary and we only include the detailed proofs for completeness. First, in \cref{subsection:1dim} we consider the case $d=1$. In this particular case a neural network with a single hidden layer with $K \in \N$ neurons is sufficient in order for the approximation error to converge to zero with a rate $\cO(K^{-1})$ (cf.\ \cref{lem:1dapprox} below). The construction relies on well-known and elementary properties of the linear interpolation. Afterwards, in \cref{cor: 1dapproxinf} and \cref{cor:1dapprox} we reformulate this approximation result in terms of the vectorized DNN description. Using the fact that DNNs can be embedded into larger architectures (cf., e.g., \cite[Subsection 2.2.8]{BeckJentzenKuckuck2019arXiv}), we replace the exact values of the parameters by lower bounds.
 
 The main result in \cref{subsec:multidim} is \cref{prop:approximation_error}, which provides an upper estimate for the approximation error in the multidimensional case.  We use as an approximation for a Lipschitz continuous function $f \colon [a,b]^d \to \R$ with Lipschitz constant $L$ the maximum convolution $F(x) = \max\nolimits_{k \in \{1, \ldots, K\}} (f( \fx _k) - L \norm{x- \fx_k}_1)$ for a suitably chosen finite subset $\{\fx_1, \fx_2, \ldots, \fx _K\} \allowbreak \subseteq [a,b]^d$ (cf., e.g., \cite[Lemma 3.1]{BeckJentzenKuckuck2019arXiv}). This function has been implemented as a DNN in \cref{dnn:intl1} above. In \cref{dnn:intp1} we estimate the distance between this approximation and the function $f$ in the uniform norm. Next, in \cref{approx:lip} and \cref{approx:lipuv} we express the results in terms of the vectorized description of DNNs, similarly to \cref{subsection:1dim}.
Finally, \cref{prop:approximation_error} follows from \cref{approx:lipuv} by defining the points $\fx_1, \fx_2, \ldots, \fx_K$ appropriately. The choice of $\fx_1, \fx_2, \ldots, \fx_K$ in the proof of \cref{prop:approximation_error} relies on the covering numbers of certain hypercubes, which we introduce in \cref{def:covering_number} below (cf., e.g., \cite[Definition 3.11]{BeckJentzenKuckuck2019arXiv} or \cite[Definition 3.2]{JentzenWelti2020arxiv}).
Since these covering numbers grow exponentially in the dimension, we obtain a convergence rate of $A^{-1/d}$ with respect to the architecture parameter $A$, and therefore this rate of convergence suffers from the curse of dimensionality. The main improvement in \cref{prop:approximation_error} compared to \cite[Proposition 3.5]{JentzenWelti2020arxiv} is that the length of the employed neural network only increases logarithmically with respect to the parameter $A$.  Finally, in \cref{cor:approxerror:eps} we reformulate \cref{prop:approximation_error} in terms of the number of parameters of the employed DNN. In particular, we show for arbitrary $\varepsilon \in (0,1 ]$ that $\cO( \varepsilon ^{-2d})$ parameters are sufficient to obtain an $\varepsilon$-approximation with respect to the uniform norm.

Recently, in \cite{LuShenYangZhang2020} and \cite{ShenYangZhang2020} faster convergence rates for the approximation error have been obtained. We employ the construction from \cref{dnn:intl1} for simplicity and because we also need clear control over the size of parameters of the DNN. For further results on the approximation error we refer, e.g., to \cite{Barron1993,
Cybenko1989,
Funahashi1989,
HartmanKeelerKowalski1990,
Hornik1991,
HornikStinchcombeWhite1989}. 

\subsection{One-dimensional DNN approximations}
\label{subsection:1dim}

\cfclear
\begin{lemma} \label{lem:1dapprox}
Let $A \in (0, \infty)$, $L \in [0, \infty)$, $a \in \R$, $b \in (a, \infty)$, and let $f \colon [a,b] \to \R$ satisfy for all $x,y \in [a,b]$ that $|f(x)-f(y)| \leq L|x-y|$. Then there exists $\Phi \in \NN$ such that  \cfadd{def:DNNparam} \cfadd{def:realization} \cfadd{def:p-norm}
\begin{enumerate} [(i)]
    \item it holds that $\cH(\Phi) = 1$,
    \item it holds that $\cI(\Phi) = \cO(\Phi)=1$,
    \item it holds that $\cD_1(\Phi) \leq A+2$,
    \item it holds that $\norm{\cT(\Phi)}_\infty \leq \max \{ 1, 2L, \sup\nolimits_{x \in [a,b]} |f(x)|, |a|, |b| \}$,
    and
    \item it holds that 
    \begin{equation}
        \sup\nolimits _{x \in [a,b]} | (\cR_\fr(\Phi))(x)-f(x)| \leq \frac{L(b-a)}{A}
    \end{equation}
\end{enumerate}
\cfload.
\end{lemma}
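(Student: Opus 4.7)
\textbf{Proof plan for \cref{lem:1dapprox}.} The plan is to take $\Phi$ to be the standard shallow ReLU representation of the piecewise linear interpolant of $f$ at equally spaced nodes in $[a,b]$. Concretely, set $K = \ceil{A}$, $h = (b-a)/K$, and $\xi_i = a + ih$ for $i \in \{0,1,\ldots,K\}$, and let $s_i = (f(\xi_i) - f(\xi_{i-1}))/h$ for $i \in \{1,2,\ldots,K\}$. I would then define $\Phi \in \NN$ by the hidden layer weights $W_1 = (1,1,\ldots,1)^T \in \R^{K \times 1}$, hidden biases $B_1 = -(\xi_0, \xi_1, \ldots, \xi_{K-1})^T \in \R^K$, output weights $W_2 = (s_1,\ s_2 - s_1,\ s_3 - s_2,\ \ldots,\ s_K - s_{K-1}) \in \R^{1 \times K}$, and output bias $B_2 = f(a) \in \R$. (If the bound $\cD_1 \leq A+2$ is to be an equality with slack, one can pad with zero neurons.)

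The identity step would then verify that for all $x \in [a,b]$,
\begin{equation}
(\cR_\fr(\Phi))(x) = f(a) + \smallsum_{i=0}^{K-1} (s_{i+1} - s_i \cdot \indicator{i \geq 1}) \fr(x - \xi_i),
\end{equation}
and that the right-hand side equals the piecewise linear interpolant $g$ of $f$ at $\xi_0, \xi_1, \ldots, \xi_K$; the key observation is that $\fr(x - \xi_i) = x - \xi_i$ for $x \in [\xi_i, b]$ and vanishes for $x \in [a, \xi_i]$, so telescoping the slope increments reproduces $g$ on each subinterval.

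The approximation bound follows from the elementary Lipschitz interpolation estimate: for $x \in [\xi_{i-1}, \xi_i]$, writing $x = (1-t)\xi_{i-1} + t\xi_i$ with $t \in [0,1]$,
\begin{equation}
|f(x) - g(x)| \leq (1-t)|f(x)-f(\xi_{i-1})| + t|f(x)-f(\xi_i)| \leq 2Lt(1-t)h \leq \tfrac{Lh}{2} \leq \tfrac{L(b-a)}{2K} \leq \tfrac{L(b-a)}{A}.
\end{equation}
The parameter bound is pure bookkeeping: $\norm{W_1}_\infty = 1$, each entry of $B_1$ lies in $[-|b|,|a|] \cup [-|a|,|b|]$ so $\norm{B_1}_\infty \leq \max\{|a|,|b|\}$, the Lipschitz hypothesis gives $|s_i| \leq L$ hence $|W_2|$-entries are bounded by $2L$, and $|B_2| = |f(a)| \leq \sup_{x \in [a,b]} |f(x)|$.

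The only mildly delicate point is the width bookkeeping: we have $K = \ceil{A}$ neurons, and the claimed bound is $A+2$, which is comfortable since $\ceil{A} \leq A+1$; any slack can be absorbed by inserting neurons with zero output weight (or by appealing to an embedding of DNNs into larger architectures as referenced in the paper's preamble to the section). Apart from that, the whole argument is a direct verification of identities and norm estimates, with no real obstacle.
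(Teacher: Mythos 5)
Your construction is correct and takes essentially the same approach as the paper's proof: the paper also realizes the piecewise linear interpolant of $f$ at $K = \ceil{A}$ equally spaced nodes as a one-hidden-layer ReLU network and performs the same Lipschitz/interpolation and parameter-size bookkeeping. The only cosmetic differences are that you use $K$ hidden neurons (kinks at $\xi_0, \ldots, \xi_{K-1}$) whereas the paper uses $K+1$ (with a vacuous neuron at $b$), and your intermediate error bound $\tfrac{Lh}{2}$ is slightly sharper than the paper's $Lh$, though both yield the stated $\tfrac{L(b-a)}{A}$.
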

\begin{proof} [Proof of \cref{lem:1dapprox}]
Throughout this proof let $K \in \N$ be given by $K = \ceil{A}$ \cfload, let $(r_k)_{k \in \{0,1, \ldots, K\}} \subseteq [a,b]$ be given by $\forall\, k \in \{0,1, \ldots, K\} \colon r_k=a+\frac{k(b-a)}{K}$, let $(f_k)_{k \in \{0,1, \ldots, K\}} \subseteq \R$ be given by $\forall\, k \in \{0,1, \ldots, K\} \colon f_k = f(r_k)$, let $(c_k)_{k \in \{0, 1, \ldots, K\}} \subseteq \R$ satisfy for all $k \in \{0,1, \ldots, K\}$ that
\begin{equation}
    c_k = \frac{f_{\min\{k+1, K\}} - f_k}{r_{\min\{k+1, K\}} - r_{\min\{k, K-1 \}}} - \frac{f_k - f_{\max\{k-1, 0\}}}{r_{\max\{k,1\}}-r_{\max\{k-1, 0\}}},
\end{equation}
and let $\Phi \in \left( (\R^{(K+1) \times 1} \times \R^{K+1}) \times (\R^{1 \times (K+1)} \times \R)\right) \subseteq \NN$ be given by
\begin{equation}
    \Phi = \left( \! \left( \! \begin{pmatrix}
    1 \\ 1 \\ \vdots \\ 1
    \end{pmatrix}, \begin{pmatrix}
    -r_0 \\ -r_1 \\ \vdots \\ -r_K
    \end{pmatrix}
    \! \right), \left( \begin{pmatrix}
    c_0 & c_1 & \cdots & c_K
    \end{pmatrix}, \begin{pmatrix}
    f_0
    \end{pmatrix}
    \right) \! \right).
\end{equation}
Observe that it holds that $\cH(\Phi) = \cI(\Phi) = \cO(\Phi)=1$ and $\cD_1(\Phi) = K+1 = \ceil{A}+1 \leq A+2$. Moreover, the facts that $\forall\, k \in \{0,1, \ldots, K\} \colon f_k = f(r_k)$ and $\forall \, x,y \in [a,b] \colon |f(x)-f(y)| \leq L|x-y|$ imply for all $k \in \{0,1, \ldots, K\}$ that
\begin{equation}
    \frac{|f_{\min\{k+1, K\}} - f_k|}{|r_{\min\{k+1, K\}} - r_{\min\{k, K-1 \}}|} \leq L \frac{|r_{\min\{k+1, K\}} - r_k|}{|r_{\min\{k+1, K\}} - r_{\min\{k, K-1 \}}|} \leq L
\end{equation}
and 
\begin{equation}
    \frac{|f_k - f_{\max\{k-1, 0\}}|}{|r_{\max\{k,1\}}-r_{\max\{k-1, 0\}}|} \leq  L\frac{|r_k - r_{\max\{k-1, 0\}}|}{|r_{\max\{k,1\}}-r_{\max\{k-1, 0\}}|} \leq L.
\end{equation}
This shows for all $k \in \{0,1, \ldots, K\}$ that $|c_k| \leq 2L$. Combining this with the fact that for all $k \in \{0,1, \ldots, K\}$ it holds that $|r_k| \leq \max \{|a|, |b| \}$ demonstrates that
  \begin{equation}
        \norm{\cT(\Phi)}_\infty \leq \max \{ 1, 2L, \sup\nolimits_{x \in [a,b]} |f(x)|, |a|, |b| \}.
    \end{equation}
Next observe that it holds for all $x \in \R$ that
\begin{equation}
    (\cR_\fr(\Phi))(x) = \sum_{k=0}^K c_k \fr (x - r_k ) + f_0 = \sum_{k=0}^K c_k \max \{ x - r_k , 0\} + f_0.
\end{equation}
This implies for all $k \in \{1,2, \ldots, K\}$, $x \in [r_{k-1}, r_k]$ that $(\cR_\fr(\Phi))(r_0) = f_0$ and
\begin{equation}
    \begin{split}
        (\cR_\fr(\Phi))(x) - (\cR_\fr(\Phi))(r_{k-1}) &= \sum_{j=0}^K c_j ( \max \{ x - r_j , 0\} - \max \{ r_{k-1} - r_j , 0\}) \\
        &= \sum_{j=0}^{k-1} c_j (x-r_{k-1}) = \frac{f_k-f_{k-1}}{r_k-r_{k-1}} (x-r_{k-1}).
    \end{split}
\end{equation}
Combining this and induction establishes for all $k \in \{1,2, \ldots, K\}$, $x \in [r_{k-1}, r_k]$ that $(\cR_\fr(\Phi))(x) \allowbreak = f_{k-1} + \frac{f_k -f_{k -1}}{r_k -r_{k -1}} (x-r_{k -1})$. Hence, $\cR_\fr(\Phi)$ is linear on each interval $[r_{k-1}, r_k]$, $k \in \{1,2, \ldots, K\}$, and satisfies for all $k \in \{0,1, \ldots, K \}$ that $ (\cR_\fr(\Phi))(r_k) = f_k = f(r_k)$.  This and the fact that $f$ is Lipschitz continuous with Lipschitz constant $L$ imply that
  \begin{equation}
        \sup\nolimits _{x \in [a,b]} \abs{ (\cR_\fr(\Phi))(x)-f(x) } \leq L \max\nolimits_{k \in \{1, \ldots, K \}} (r_k-r_{k-1}) = \frac{L(b-a)}{K} \leq \frac{L(b-a)}{A}.
    \end{equation}
    The proof of \cref{lem:1dapprox} is thus complete.
    \end{proof}
    
 \cfclear   
\begin{cor} \label{cor: 1dapproxinf}
Let $A \in (0, \infty)$, $L \in [0, \infty)$, $a \in \R$, $b \in (a, \infty)$, $\bfd , \bfL \in \N$, $\bfl = (\bfl_0, \bfl_1, \ldots,  \bfl_\bfL) \in \N^{\bfL+1}$ satisfy $\bfL \geq 2$, $\bfl_0=\bfl_\bfL = 1$, $\bfl_1 \geq A+2$, and $\bfd \geq \sum_{i=1}^\bfL \bfl_i (\bfl_{i-1}+1)$, assume for all $i \in \{2,3, \ldots, \bfL-1\}$ that $\bfl_i \geq 2$, and let $f \colon [a,b] \to \R$ satisfy for all $x,y \in [a,b]$ that $|f(x)-f(y)| \leq L|x-y|$. Then there exists $\vartheta \in \R^\bfd$ such that $\norm{\vartheta}_\infty \leq \max \{ 1, 2L, \sup\nolimits_{x \in [a,b]}  |f(x)|, |a|, |b| \}$ and \cfadd{def:p-norm} 
\begin{equation}
      \sup\nolimits _{x \in [a,b]} | \scrN_{-\infty, \infty} ^{\vartheta, \bfl}(x)-f(x)| \leq \frac{L(b-a)}{A}
\end{equation}
\cfload.
\end{cor}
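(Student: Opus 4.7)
\begin{proof}[Proof of \cref{cor: 1dapproxinf}]
The plan is to first apply \cref{lem:1dapprox} to obtain a shallow DNN that already approximates $f$ to the required accuracy with the required parameter bound, and then to embed this shallow DNN into the prescribed deeper architecture $\bfl$ by padding with zeros and propagating the intermediate scalar value through the remaining layers via the ReLU identity $x = \fr(x) - \fr(-x)$.

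\textbf{Step 1: Apply \cref{lem:1dapprox}.} Note that \cref{lem:1dapprox} yields a DNN $\Phi = ((W_1, B_1), (W_2, B_2)) \in (\R^{m \times 1} \times \R^m) \times (\R^{1 \times m} \times \R)$ with $m = \cD_1(\Phi) \leq A + 2 \leq \bfl_1$ such that $\norm{\cT(\Phi)}_\infty \leq c_0 := \max\{1, 2L, \sup_{x \in [a,b]} |f(x)|, |a|, |b|\}$ and $\sup_{x \in [a,b]} |(\cR_\fr(\Phi))(x) - f(x)| \leq L(b-a)/A$. Writing $u(x) = W_2 \fR_m(W_1 x + B_1) + B_2 \in \R$, we thus have $(\cR_\fr(\Phi))(x) = u(x)$.

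\textbf{Step 2: Build the padded parameter vector $\vartheta \in \R^{\bfd}$.} Define the weight matrices $\widetilde W_i$ and bias vectors $\widetilde B_i$ for $i \in \{1, 2, \ldots, \bfL\}$ as follows. For $\widetilde W_1 \in \R^{\bfl_1 \times 1}$ and $\widetilde B_1 \in \R^{\bfl_1}$ put $W_1$ into the first $m$ rows of $\widetilde W_1$ and $B_1$ into the first $m$ entries of $\widetilde B_1$, and fill the remaining rows/entries with zeros. For $\widetilde W_2 \in \R^{\bfl_2 \times \bfl_1}$ let the first row be $(W_2 \ 0 \cdots 0)$ and the second row be $(-W_2 \ 0 \cdots 0)$, and let $\widetilde B_2 = (B_2, -B_2, 0, \ldots, 0) \in \R^{\bfl_2}$; if $\bfL = 2$ then instead take $\widetilde W_2 = (W_2 \ 0 \cdots 0) \in \R^{1 \times \bfl_1}$, $\widetilde B_2 = B_2 \in \R$, and skip Step 2(b) below.
\par(a) For each $i \in \{3, 4, \ldots, \bfL - 1\}$ (if any) let $\widetilde W_i \in \R^{\bfl_i \times \bfl_{i-1}}$ have first two rows $(1,-1,0,\ldots,0)$ and $(-1,1,0,\ldots,0)$ and all other entries zero, and let $\widetilde B_i = 0 \in \R^{\bfl_i}$.
\par(b) Let $\widetilde W_\bfL \in \R^{1 \times \bfl_{\bfL-1}}$ equal $(1\ -1\ 0 \ \cdots\ 0)$ and let $\widetilde B_\bfL = 0 \in \R$.
\par Let $\vartheta \in \R^\bfd$ be the vector obtained by concatenating these matrices and biases in the order used in \cref{def:DNNbasic}, padded with zeros if $\bfd > \sum_{i=1}^\bfL \bfl_i(\bfl_{i-1}+1)$.

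\textbf{Step 3: Verify the realization.} Since $\fc_{-\infty, \infty}$ is the identity on $\R$, the function $\scrN_{-\infty, \infty}^{\vartheta, \bfl}$ is simply the standard ReLU realization with weights $\widetilde W_i$ and biases $\widetilde B_i$. A direct computation shows that after the first layer the pre-activation equals $(W_1 x + B_1, 0, \ldots, 0)$, so the post-activation is $(\fR_m(W_1 x + B_1), 0, \ldots, 0)$; the second layer then produces pre-activation $(u(x), -u(x), 0, \ldots, 0)$, whence post-activation $(\fr(u(x)), \fr(-u(x)), 0, \ldots, 0)$. An easy induction using the identity $\fr(y) - \fr(-y) = y$ shows that this pair $(\fr(u(x)), \fr(-u(x)))$ is preserved through layers $3, \ldots, \bfL-1$, and the output layer outputs $\fr(u(x)) - \fr(-u(x)) = u(x) = (\cR_\fr(\Phi))(x)$.

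\textbf{Step 4: Verify the parameter bound and conclude.} Inspecting Step 2 entry by entry: the nonzero entries of $\widetilde W_1, \widetilde B_1$ coincide with those of $W_1, B_1$; the nonzero entries of $\widetilde W_2, \widetilde B_2$ are $\pm$ entries of $W_2, B_2$; and all other $\widetilde W_i$ have entries in $\{-1, 0, 1\}$ with $\widetilde B_i = 0$. Hence $\norm{\vartheta}_\infty \leq \max\{1, \norm{\cT(\Phi)}_\infty\} = c_0$. Combining Steps 1 and 3 yields
\begin{equation}
\sup_{x \in [a,b]} \abs{\scrN_{-\infty, \infty}^{\vartheta, \bfl}(x) - f(x)} = \sup_{x \in [a,b]} \abs{(\cR_\fr(\Phi))(x) - f(x)} \leq \frac{L(b-a)}{A},
\end{equation}
completing the proof.
\end{proof}

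The only nontrivial step is the identity-propagation construction in Step 2(a), which requires the hypothesis $\bfl_i \geq 2$ for $i \in \{2, \ldots, \bfL-1\}$; everything else is bookkeeping. Keeping track of the max-norm of the weights through the padding is the main thing to be careful about.
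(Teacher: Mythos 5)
Your proof is correct and follows the same strategy as the paper's: apply \cref{lem:1dapprox} to obtain a one-hidden-layer network achieving the bound, then embed it into the architecture $\bfl$ via zero-padding and propagating the scalar through the extra layers with the ReLU identity pair $(\fr(y), \fr(-y))$. The paper compresses the embedding step into a citation of \cite[Lemma~2.30]{BeckJentzenKuckuck2019arXiv}, whereas you carry out the padding and identity-propagation construction explicitly, including the check that padded weights have entries in $\{-1,0,1\}$ so the $\sup$-norm bound is preserved; this is precisely what that cited lemma encapsulates.
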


\begin{proof}[Proof of \cref{cor: 1dapproxinf}]
Observe that \cref{lem:1dapprox} ensures that there exists $\Phi \in \NN$ such that \cfadd{def:DNN} \cfadd{def:DNNparam} \cfadd{def:realization}
\begin{enumerate} [(i)]
    \item it holds that $\cH(\Phi) = 1$,
    \item it holds that $\cI(\Phi) = \cO(\Phi) = 1$,
    \item it holds that $\cD_1(\Phi)  \leq A+2 $,
    \item it holds that $\norm{ \cT(\Phi)}_\infty \leq \max \{ 1, 2L,  \sup\nolimits_{x \in [a,b]} |f(x)|, |a|, |b| \}$, and
    \item it holds that
  \begin{equation}
        \sup\nolimits _{x \in [a,b]} | (\cR_\fr(\Phi))(x)-f(x)|  \leq \frac{L(b-a)}{A}
    \end{equation}
\end{enumerate}
\cfload.
Combining this, the facts that $\bfL \geq  2$, $\bfl_0=1, \bfl_\bfL=1$, $\bfl_1 \geq A+2$, and the fact that for all $i \in \{2,3, \ldots, \bfL-1\}$ it holds that $\bfl_i \geq 2 $ with \cite[Lemma 2.30]{BeckJentzenKuckuck2019arXiv} completes the proof of \cref{cor: 1dapproxinf}.
\end{proof}

\cfclear
\begin{cor} \label{cor:1dapprox}
Let $A \in (0, \infty)$, $L \in [0, \infty)$, $a \in \R$, $u \in [-\infty, \infty)$, $b \in (a, \infty)$, $v \in (u, \infty]$, $\bfd , \bfL \in \N$, $\bfl = (\bfl_0, \bfl_1, \ldots,  \bfl_\bfL) \in \N^{\bfL+1}$ satisfy $\bfL \geq 2$, $\bfl_0=\bfl_\bfL = 1$, $\bfl_1 \geq A+2$, and $\bfd \geq \sum_{i=1}^\bfL \bfl_i (\bfl_{i-1}+1)$, assume for all $i \in \{2,3, \ldots, \bfL-1\}$ that $\bfl_i \geq 2$, and let $f \colon [a,b] \to [u,v]$ satisfy for all $x,y \in [a,b]$ that $|f(x)-f(y)| \leq L|x-y|$. Then there exists $\vartheta \in \R^\bfd$ such that $\norm{\vartheta}_\infty \leq \max \{ 1, 2L, \sup\nolimits_{x \in [a,b]} |f(x)|, |a|, |b| \}$ and \cfadd{def:p-norm}
\begin{equation}
      \sup\nolimits _{x \in [a,b]} | \scrN_{u, v} ^{\vartheta, \bfl}(x)-f(x)| \leq \frac{L(b-a)}{A}
\end{equation}
\cfload.
\end{cor}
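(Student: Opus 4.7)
\textbf{Proof proposal for \cref{cor:1dapprox}.} The plan is to reduce the clipped case to the unclipped case \cref{cor: 1dapproxinf} by exploiting that the additional clipping activation $\fc_{u,v}$ applied at the output layer is $1$-Lipschitz and fixes every point of $[u,v]$, and that $f$ takes values in $[u,v]$.

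First, I would apply \cref{cor: 1dapproxinf} to $f$ (viewed as an $\R$-valued function on $[a,b]$, which is possible since the Lipschitz hypothesis is unchanged) with the same architecture vector $\bfl$ and parameter $A$. This yields $\vartheta \in \R^\bfd$ with $\norm{\vartheta}_\infty \leq \max\{1, 2L, \sup_{x \in [a,b]}|f(x)|, |a|, |b|\}$ and
\begin{equation}
\sup\nolimits_{x \in [a,b]} \bigl| \scrN_{-\infty,\infty}^{\vartheta,\bfl}(x) - f(x) \bigr| \leq \frac{L(b-a)}{A}.
\end{equation}

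Next, I would compare $\scrN_{u,v}^{\vartheta,\bfl}$ with $\scrN_{-\infty,\infty}^{\vartheta,\bfl}$ at the same $\vartheta$. By \cref{def:clippedDNN} the two realizations share all affine transformations and all hidden-layer activations $\fR_{\bfl_i}$; they differ only in the output activation, which is $\fC_{u,v,1}$ in the clipped case and $\fC_{-\infty,\infty,1}$ (the identity on $\R$) in the unclipped case. Hence for every $x \in \R$ it holds that $\scrN_{u,v}^{\vartheta,\bfl}(x) = \fc_{u,v}\bigl(\scrN_{-\infty,\infty}^{\vartheta,\bfl}(x)\bigr)$.

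Finally, since $f([a,b]) \subseteq [u,v]$, for every $x \in [a,b]$ we have $\fc_{u,v}(f(x)) = f(x)$, and since $\fc_{u,v} = \max\{\min\{\cdot,v\},u\}$ is $1$-Lipschitz on $\R$, we get
\begin{equation}
\bigl| \scrN_{u,v}^{\vartheta,\bfl}(x) - f(x) \bigr|
= \bigl| \fc_{u,v}\bigl(\scrN_{-\infty,\infty}^{\vartheta,\bfl}(x)\bigr) - \fc_{u,v}(f(x)) \bigr|
\leq \bigl| \scrN_{-\infty,\infty}^{\vartheta,\bfl}(x) - f(x) \bigr| \leq \frac{L(b-a)}{A}.
\end{equation}
Taking the supremum over $x \in [a,b]$ concludes the proof. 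There is no real obstacle here; the only point to verify carefully is that inserting the clipping activation in place of the identity at the output produces exactly $\fc_{u,v} \circ \scrN_{-\infty,\infty}^{\vartheta,\bfl}$, which is immediate from \cref{def:clippedDNN} and \cref{def:DNNbasic}.
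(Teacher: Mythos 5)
Your proposal is correct and follows essentially the same route as the paper: invoke \cref{cor: 1dapproxinf} to obtain $\vartheta$, observe that $\scrN_{u,v}^{\vartheta,\bfl} = \fc_{u,v}\circ\scrN_{-\infty,\infty}^{\vartheta,\bfl}$, and conclude via the $1$-Lipschitz property of $\fc_{u,v}$ together with the fact that $\fc_{u,v}$ fixes $f(x) \in [u,v]$. The only difference is that you spell out the identity $\scrN_{u,v}^{\vartheta,\bfl} = \fc_{u,v}\circ\scrN_{-\infty,\infty}^{\vartheta,\bfl}$ explicitly from \cref{def:clippedDNN}, whereas the paper uses it tacitly.
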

\begin{proof} [Proof of \cref{cor:1dapprox}]
Observe that \cref{cor: 1dapproxinf} establishes that there exists $\vartheta \in \R^\bfd$ such that
$\norm{\vartheta}_\infty \leq \max \{ 1, 2L, \sup\nolimits_{x \in [a,b]} |f(x)|, |a|, |b| \}$ and
\begin{equation}
      \sup\nolimits _{x \in [a,b]} | \scrN_{-\infty, \infty} ^{\vartheta, \bfl}(x)-f(x)| \leq \frac{L(b-a)}{A}.
\end{equation}
Moreover, the assumption that $ f( [a,b]) \subseteq [u,v]$ implies for all $x \in [a,b]$ that\cfadd{def:clip} $\fc_{u,v}(f(x))=f(x)$ \cfload. Combining this with the fact that for all $x,y \in \R$ it holds that $|\fc_{u,v}(x)-\fc_{u,v}(y)| \leq |x-y|$ demonstrates that
\begin{equation}
    \begin{split}
         \sup\nolimits _{x \in [a,b]} | \scrN_{u, v} ^{\vartheta, \bfl}(x)-f(x)| 
         &=   \sup\nolimits _{x \in [a,b]} | \fc_{u,v}(\scrN_{-\infty, \infty} ^{\vartheta, \bfl}(x))- \fc_{u,v}(f(x))| \\
         &\leq  \sup\nolimits _{x \in [a,b]} | \scrN_{-\infty, \infty} ^{\vartheta, \bfl}(x)-f(x)| \leq \frac{L(b-a)}{A}.
    \end{split}
\end{equation}
The proof of \cref{cor:1dapprox} is thus complete. 
\end{proof}

\subsection{Multidimensional DNN approximations}
\label{subsec:multidim}

\cfclear
\begin{prop} \label{dnn:intp1}
Let $d  \in \N$, $L \in [0, \infty)$, $K \in \{2,3,4, \dots \}$, let $E \subseteq \R^d$ be a set, let $\fx_1, \fx_2, \ldots, \fx_K \in E$, let $f \colon E \to \R$ satisfy for all $x, y  \in E$ that $|f(x)-f(y)| \leq L \norm{x-y}_1$, 
let $\fy \in \R^{K}$ be given by $\fy = (f(\fx_1), f(\fx_2), \ldots, f(\fx_K))$,
 and let $\Phi \in \NN$ satisfy \cfadd{def:concat} \cfadd{def:dnn:l1norm} \cfadd{def:composition} \cfadd{def:DNN:aff} \cfadd{def:parallelization} \cfadd{def:max_d} \cfadd{def:p-norm}
\begin{equation}
    \Phi = \bbM_{K} \bullet \bbA_{-L \id_{K}, \fy} \bullet \bfP_{K}  \bigl( \bbL_d \bullet \bbA_{\id _d, -\fx_1}, \bbL_d \bullet \bbA_{\id _d, -\fx_2}, \ldots, \bbL_d \bullet \bbA_{\id _d, -\fx_K} \bigr) \bullet \fT_{K, d}
\end{equation}
\cfload. Then \cfadd{def:realization}
    \begin{equation}
        \sup\nolimits_{x \in E} |f(x)- (\cR_\fr(\Phi))(x)| \leq 2L \left[\sup\nolimits_{x \in E} \left( \inf\nolimits_{k \in \{1, 2, \ldots, K \} } \norm{x-\fx_k}_1 \right)\right]
    \end{equation}
\cfout.
\end{prop}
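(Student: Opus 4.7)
The strategy is to identify $\cR_\fr(\Phi)$ with the maximum convolution $F(x) = \max_{k \in \{1,\ldots,K\}}(f(\fx_k) - L\|x-\fx_k\|_1)$ via \cref{dnn:intl1} and then bound $|f - F|$ pointwise by two applications of the Lipschitz hypothesis.

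First I would invoke item \eqref{item:dnn:intl1:6} of \cref{dnn:intl1} (applied with $\fy \with (f(\fx_1),\ldots,f(\fx_K))$) to conclude that $\cR_\fr(\Phi) \colon \R^d \to \R$ is given for all $x \in \R^d$ by
\begin{equation}
(\cR_\fr(\Phi))(x) = \max\nolimits_{k \in \{1,2,\ldots,K\}} \bigl( f(\fx_k) - L\norm{x - \fx_k}_1 \bigr).
\end{equation}
This reduces the claim to bounding $\sup_{x \in E} |f(x) - (\cR_\fr(\Phi))(x)|$.

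Next I would establish the two-sided estimate pointwise. Fix $x \in E$. For the upper bound on $(\cR_\fr(\Phi))(x)$, the Lipschitz hypothesis yields $f(\fx_k) \leq f(x) + L\norm{x-\fx_k}_1$ for every $k$, hence $f(\fx_k) - L\norm{x-\fx_k}_1 \leq f(x)$, and taking the maximum over $k$ gives $(\cR_\fr(\Phi))(x) \leq f(x)$. For the lower bound, pick any $k \in \{1,\ldots,K\}$; the Lipschitz hypothesis gives $f(\fx_k) \geq f(x) - L\norm{x-\fx_k}_1$, so
\begin{equation}
(\cR_\fr(\Phi))(x) \geq f(\fx_k) - L\norm{x-\fx_k}_1 \geq f(x) - 2L\norm{x-\fx_k}_1.
\end{equation}
Taking the infimum over $k$ then yields $(\cR_\fr(\Phi))(x) \geq f(x) - 2L \inf_{k \in \{1,\ldots,K\}} \norm{x-\fx_k}_1$.

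Combining the two bounds gives $|f(x) - (\cR_\fr(\Phi))(x)| \leq 2L \inf_{k \in \{1,\ldots,K\}} \norm{x-\fx_k}_1$ for every $x \in E$, and taking the supremum over $x \in E$ yields the desired inequality. There is no real obstacle: the entire argument is two applications of the triangle/Lipschitz inequality once \cref{dnn:intl1}\eqref{item:dnn:intl1:6} has done the work of identifying the realization with the maximum convolution.
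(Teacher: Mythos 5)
Your proposal is correct and follows the same structure as the paper's proof: both first apply item (vi) of \cref{dnn:intl1} with $\fy = (f(\fx_1),\ldots,f(\fx_K))$ to identify $\cR_\fr(\Phi)$ with the maximum convolution $F$, and then bound $\sup_{x\in E}|f(x)-F(x)|$. The only difference is that the paper outsources the second step to a citation of Lemma~3.1 of Beck, Jentzen, and Kuckuck, whereas you supply the short self-contained two-sided Lipschitz argument that proves it, which is precisely the content of that cited lemma.
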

\begin{proof}[Proof of \cref{dnn:intp1}]
Let $F \colon \R^d \to \R$ satisfy for all $x \in \R ^d$ that 
\begin{equation}
       F(x) = \max\nolimits_{k \in \{1, 2, \ldots, K \} } \left( f(\fx_k) - L \norm{x- \fx_k}_1 \right).
\end{equation}
Observe that \cref{dnn:intl1} (applied with $(\fy_k)_{k \in \{1, \ldots, K \}} \with (f(\fx _k))_{k \in \{1, \ldots, K \}}$ in the notation of \cref{dnn:intl1}) ensures for all $x \in E $ that $F(x)= (\cR_\fr(\Phi))(x)$. Combining this and \cite[Lemma 3.1]{BeckJentzenKuckuck2019arXiv} (applied with $\cM \with \{ \fx_1, \fx_2, \ldots, \fx_K \}$, $(E, \delta) \with (E, \delta_1 |_{E \times E})$ in the notation of \cite[Lemma 3.1]{BeckJentzenKuckuck2019arXiv}) completes the proof of \cref{dnn:intp1}.
\end{proof} 

\cfclear
\begin{cor} \label{approx:lip} \cfadd{def:p-norm}
Let $d, \bfd, \bfL \in \N$, $\bfl=(\bfl_0,\bfl_1, \ldots, \bfl_\bfL) \in \N^{\bfL+1}$, $L \in [0, \infty)$, $K \in \{2,3,4, \ldots\}$ satisfy for all $i \in \{2,3, \ldots, \bfL-1\}$ that $\bfL \geq \ceil{ \log _2 K} +2$, $\bfl_0=d$, $\bfl_\bfL=1$, $\bfl_1 \geq 2d K$, $\bfl_i \geq 3 \ceil{ \frac{ K}{2^{i-1}} }$, and $\bfd \geq \sum_{i=1}^\bfL \bfl_i(\bfl_{i-1}+1)$, let $E \subseteq \R^d$ be a set, let $\fx_1, \fx_2, \ldots, \fx_K \in E$, and let $f \colon E \to \R$ satisfy for all $x, y \in E$ that $|f(x)-f(y)| \leq L \norm{x-y}_1$ \cfload. Then there exists $\theta \in \R^{\bfd}$ such that 
\begin{equation}
    \norm{ \theta }_\infty \leq \max \{ 1,L, \max\nolimits_{k \in \{1, 2, \ldots, K\}} \norm{ \fx_k }_\infty, 2 \max\nolimits_{k \in \{1, 2, \ldots, K\}} |f(\fx_k)|\}
\end{equation}
and 
\begin{equation}
    \sup\nolimits_{x \in E}  \bigl| f(x)- \scrN_{-\infty, \infty}^{\theta, \bfl}(x) \bigr| \leq 2L \left[\sup\nolimits_{x \in E} \left( \inf\nolimits_{k  \in \{1,2, \ldots, K \}} \norm{x- \fx_k}_1\right)\right]
\end{equation}
\cfout.
\end{cor}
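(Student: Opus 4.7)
The plan is to construct the desired parameter vector by building the explicit neural network from \cref{dnn:intl1} and then embedding it into the prescribed architecture $\bfl$. Specifically, I would first let $\fy = (f(\fx_1), f(\fx_2), \ldots, f(\fx_K)) \in \R^K$ and define
\begin{equation}
    \Phi = \bbM_{K} \bullet \bbA_{-L \id_{K}, \fy} \bullet \bfP_{K}  \bigl( \bbL_d \bullet \bbA_{\id _d, -\fx_1}, \ldots, \bbL_d \bullet \bbA_{\id _d, -\fx_K} \bigr) \bullet \fT_{K, d} \in \NN.
\end{equation}
By \cref{dnn:intl1}, this neural network has input dimension $d$, output dimension $1$, exactly $\ceil{\log_2 K} + 1$ hidden layers, first hidden layer dimension $2dK$, and $i$-th hidden layer dimension bounded by $3\ceil{K/2^{i-1}}$ for $i \geq 2$. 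Moreover, its parameter vector satisfies $\norm{\cT(\Phi)}_\infty \leq \max\{1, L, \max_{k} \norm{\fx_k}_\infty, 2\norm{\fy}_\infty\}$, which equals the right-hand side of the desired parameter bound since $\norm{\fy}_\infty = \max_k |f(\fx_k)|$.

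Next, I would apply \cref{dnn:intp1} to obtain the uniform approximation bound
\begin{equation}
    \sup\nolimits_{x \in E} |f(x) - (\cR_\fr(\Phi))(x)| \leq 2L \left[\sup\nolimits_{x \in E} \left( \inf\nolimits_{k \in \{1, \ldots, K\}} \norm{x - \fx_k}_1 \right)\right].
\end{equation}
At this point, the architecture $\cA(\Phi)$ of $\Phi$ matches a prefix of the hypothesized target architecture $\bfl$ layer by layer in the sense that $\cL(\Phi) = \ceil{\log_2 K} + 2 \leq \bfL$, $\cD_0(\Phi) = d = \bfl_0$, $\cO(\Phi) = 1 = \bfl_\bfL$, and $\cD_i(\Phi) \leq \bfl_i$ for every hidden index $i$. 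To convert this into the vectorized description $\scrN_{-\infty,\infty}^{\theta, \bfl}$, I would invoke the standard DNN embedding/extension result \cite[Lemma 2.30]{BeckJentzenKuckuck2019arXiv}, which produces a parameter vector $\theta \in \R^\bfd$ whose realization (under the rectifier activation, with the identity as the output activation given by $\fc_{-\infty,\infty}$) coincides with $\cR_\fr(\Phi)$ on all of $\R^d$ and whose $\ell^\infty$-norm is bounded by the same constant as $\norm{\cT(\Phi)}_\infty$. Combining this with \cref{cor:dnnvect} (which identifies the two DNN descriptions up to the outer clipping, trivially here since $\fC_{-\infty,\infty,1}$ is the identity) yields the claim.

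The main obstacle is essentially bookkeeping: I need to verify that the architecture of $\Phi$ truly fits inside $\bfl$, taking care of both the additional depth (since $\bfL$ may strictly exceed $\ceil{\log_2 K} + 2$, requiring extra identity-like layers) and of the width at each layer including the first layer where $\bfl_1 \geq 2dK$ is crucial. Once these dimensional inequalities are confirmed and the extension lemma is applied, the parameter bound transfers verbatim and the approximation error is preserved since $\cR_\fr$ of the extended network equals $\cR_\fr(\Phi)$ pointwise. This completes the proof of \cref{approx:lip}.
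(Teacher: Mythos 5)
Your proposal is correct and follows essentially the same approach as the paper: construct the network $\Phi$ from \cref{dnn:intl1}, apply \cref{dnn:intp1} for the uniform approximation bound, and then embed into the target architecture $\bfl$ via \cite[Lemma 2.30]{BeckJentzenKuckuck2019arXiv}. The one small point you gloss over (but the paper states explicitly at the outset) is that $K \geq 2$ forces $\bfl_i \geq 3\lceil K/2^{i-1}\rceil \geq 2$ for the intermediate layers, which is exactly the lower bound on hidden-layer widths that the embedding lemma needs when padding with extra identity layers.
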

\begin{proof}[Proof of \cref{approx:lip}]
Note that the assumption that $K \in \{2,3,\ldots \}$ implies for all $i \in \{2,3, \ldots, \bfL-1\}$ that $\bfl_i \geq 3 \ceil{ \frac{ K }{2^{i-1}} } \geq 2$.
Furthermore, observe that \cref{dnn:intl1} and \cref{dnn:intp1} establish that there exists $\Phi \in \NN$ such that \cfadd{def:DNN} \cfadd{def:DNNparam} \cfadd{def:realization}
\begin{enumerate} [(i)]
    \item it holds that $\cH (\Phi) = \ceil{ \log _2 K} + 1$,
    \item it holds that $\cI(\Phi)=d$, $\cO(\Phi)=1$,
    \item it holds that $\cD_1(\Phi) =2d K$,
    \item it holds for all $i \in \{2,3, \ldots, \cL(\Phi)-1\}$ that $\cD_i(\Phi) \leq 3 \ceil{ \frac{K}{2^{i-1}} } $,
    \item it holds that $\norm{ \cT(\Phi)}_\infty \leq \max \{ 1,L, \max_{k \in \{1, 2, \ldots, K\}} \norm{ \fx_k }_\infty, 2 \max_{k \in \{1, 2, \ldots, K\}} |f(\fx_k)|\}$, and
    \item it holds that
$\sup\nolimits_{x \in E} |f(x)- (\cR_\fr(\Phi))(x)| \leq 2L \left[\sup\nolimits_{x \in E} \left( \inf\nolimits_{k \in \{1,2, \ldots, K\}} \norm{x - \fx_k}_1 \right)\right]$
\end{enumerate}
\cfload. Combining this, the fact that $\bfL \geq \ceil{ \log _2 K} +2$, and the fact that for all $i \in \{2,3, \ldots, \bfL-1\}$ it holds that $\bfl_0=d$, $\bfl_\bfL=1$, $\bfl_1 \geq 2d K$, and $\bfl_i \geq 3 \ceil{ \tfrac{ K }{2^{i-1}} } \geq 2 $ with \cite[Lemma 2.30]{BeckJentzenKuckuck2019arXiv} completes the proof of \cref{approx:lip}.
\end{proof}

\cfclear 
\begin{cor} \label{approx:lipuv} \cfadd{def:p-norm} 
Let $d, \bfd, \bfL \in \N$, $\bfl=(\bfl_0,\bfl_1, \ldots, \bfl_\bfL) \in \N^{\bfL+1}$, $L \in [0, \infty)$, $K \in \{2,3,4, \ldots\}$, $u \in [-\infty, \infty)$, $v \in (u, \infty]$ satisfy for all $i \in \{2,3, \ldots, \bfL-1\}$ that $\bfL \geq \ceil{ \log _2 K} +2$, $\bfl_0=d$, $\bfl_\bfL=1$, $\bfl_1 \geq 2d K$, $\bfl_i \geq 3 \ceil{ \frac{ K}{2^{i-1}} }$, and $\bfd \geq \sum_{i=1}^\bfL \bfl_i(\bfl_{i-1}+1)$, let $E \subseteq \R^d$ be a set, let $\fx_1, \fx_2, \ldots, \fx_K \in E$, and let $f \colon E \to ([u,v] \cap \R)$ satisfy for all $x , y \in E$ that $|f(x)-f(y)| \leq L \norm{x-y}_1$ \cfload. Then there exists $\theta \in \R^{\bfd}$ such that 
\begin{equation}
    \norm{ \theta }_\infty \leq \max \{ 1,L, \max\nolimits_{k \in \{1, 2, \ldots, K\}} \norm{ \fx_k }_\infty, 2 \max\nolimits_{k \in \{1, 2, \ldots, K\}} |f(\fx_k)|\}
\end{equation}
and 
\begin{equation}
    \sup\nolimits_{x \in E}  \bigl| f(x)- \scrN_{u , v}^{\theta, \bfl}(x) \bigr| \leq 2L \left[\sup\nolimits_{x \in E} \left( \inf\nolimits_{k  \in \{1,2, \ldots, K \}} \norm{x- \fx _k}_1\right)\right]
\end{equation}
\cfload.
\end{cor}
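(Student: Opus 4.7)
The plan is to mimic the reduction used in the proof of \cref{cor:1dapprox}: first invoke \cref{approx:lip} to produce a parameter vector $\theta\in\R^{\bfd}$ whose associated unclipped realization $\scrN_{-\infty,\infty}^{\theta,\bfl}$ already approximates $f$, and then observe that post-composing with the clipping function $\fc_{u,v}$ only improves the approximation because $f$ takes values in $[u,v]\cap\R$.

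More concretely, the hypotheses on $\bfL$, $\bfl$, $\bfd$, and $K$ in \cref{approx:lipuv} coincide with those in \cref{approx:lip}, and $f\colon E\to([u,v]\cap\R)\subseteq\R$ is Lipschitz with constant $L$ in the $\norm{\cdot}_1$-norm. Hence \cref{approx:lip} yields a $\theta\in\R^{\bfd}$ such that
\begin{equation}
\norm{\theta}_\infty\leq \max\{1,L,\max\nolimits_{k\in\{1,2,\ldots,K\}}\norm{\fx_k}_\infty,2\max\nolimits_{k\in\{1,2,\ldots,K\}}\abs{f(\fx_k)}\}
\end{equation}
and
\begin{equation}
\sup\nolimits_{x\in E}\bigl|f(x)-\scrN_{-\infty,\infty}^{\theta,\bfl}(x)\bigr|\leq 2L\left[\sup\nolimits_{x\in E}\bigl(\inf\nolimits_{k\in\{1,2,\ldots,K\}}\norm{x-\fx_k}_1\bigr)\right].
\end{equation}

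The key observation is then that, by \cref{def:clippedDNN}, for every $x\in\R^{d}$ one has $\scrN_{u,v}^{\theta,\bfl}(x)=\fc_{u,v}(\scrN_{-\infty,\infty}^{\theta,\bfl}(x))$, while the assumption $f(E)\subseteq[u,v]\cap\R$ ensures $\fc_{u,v}(f(x))=f(x)$ for every $x\in E$. Since $\fc_{u,v}$ is $1$-Lipschitz on $\R$ (by \cref{def:clip}, as $\fc_{u,v}=\max\{\min\{\cdot,v\},u\}$ is a composition of $1$-Lipschitz maps), combining these two facts gives for every $x\in E$ that
\begin{equation}
\bigl|f(x)-\scrN_{u,v}^{\theta,\bfl}(x)\bigr|=\bigl|\fc_{u,v}(f(x))-\fc_{u,v}(\scrN_{-\infty,\infty}^{\theta,\bfl}(x))\bigr|\leq\bigl|f(x)-\scrN_{-\infty,\infty}^{\theta,\bfl}(x)\bigr|,
\end{equation}
and taking the supremum over $x\in E$ together with the bound from \cref{approx:lip} yields the claimed estimate. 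The parameter bound on $\norm{\theta}_\infty$ is inherited directly from \cref{approx:lip}.

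There is no genuine obstacle here; the argument is a one-line reduction to \cref{approx:lip} plus the Lipschitz property of $\fc_{u,v}$. The only thing to verify carefully is that the structural hypotheses on $(\bfL,\bfl,\bfd)$ in \cref{approx:lipuv} match those in \cref{approx:lip} verbatim, so that the call to \cref{approx:lip} is legitimate.
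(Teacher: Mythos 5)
Your proof is correct and matches the paper's own argument essentially verbatim: both invoke \cref{approx:lip} to produce $\theta$, then exploit $f(E)\subseteq[u,v]$ together with the $1$-Lipschitz property of $\fc_{u,v}$ to pass from the unclipped realization $\scrN_{-\infty,\infty}^{\theta,\bfl}$ to the clipped one $\scrN_{u,v}^{\theta,\bfl}$. The only small addition you make is to spell out explicitly the identity $\scrN_{u,v}^{\theta,\bfl}=\fc_{u,v}\circ\scrN_{-\infty,\infty}^{\theta,\bfl}$, which the paper uses implicitly.
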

\begin{proof}[Proof of \cref{approx:lipuv}]
Observe that \cref{approx:lip} implies that there exists $\theta \in \R^{\bfd}$ such that
\begin{equation}
    \norm{ \theta }_\infty \leq \max \{ 1,L, \max\nolimits_{k \in \{1, 2, \ldots, K\}} \norm{ \fx_k }_\infty, 2 \max\nolimits_{k \in \{1, 2, \ldots, K\}} |f(\fx_k)|\}
\end{equation}
and
\begin{equation}
      \sup\nolimits_{x \in E}  \bigl| f(x)- \scrN_{-\infty, \infty}^{\theta, \bfl}(x) \bigr| \leq 2L \left[\sup\nolimits_{x \in E} \left( \inf\nolimits_{k  \in \{1,2, \ldots, K \}} \norm{x- \fx_k}_1\right)\right].
\end{equation}
Moreover, the assumption that $f(E) \subseteq [u,v]$ shows that for all $x \in E$ it holds that\cfadd{def:clip} $f(x)= \fc_{u,v}(f(x))$ \cfout. The fact that for all $x,y \in \R$ it holds that $|\fc_{u,v}(x)-\fc_{u,v}(y)| \leq |x-y|$ hence establishes that
\begin{equation}
    \begin{split}
          &\sup\nolimits_{x \in E} \bigl| f(x)- \scrN_{u,v}^{\theta, \bfl}(x) \bigr| =   \sup\nolimits_{x \in E} \abs{ \fc_{u,v}(f(x))- \fc_{u,v}(\scrN_{\infty,\infty}^{\theta, \bfl}(x)) } \\ 
          &\leq  \sup\nolimits_{x \in E} \bigl| f(x)- \scrN_{-\infty, \infty}^{\theta, \bfl}(x) \bigr| \leq 2L \left[\sup\nolimits_{x \in E} \left( \inf\nolimits_{k  \in \{1,2, \ldots, K \}} \norm{x- \fx_k}_1\right)\right].  
    \end{split}
\end{equation}
The proof of \cref{approx:lipuv} is thus complete.
\end{proof}

\begin{definition}[Covering numbers]
\label{def:covering_number}
Let
$ ( E, \delta ) $ be a metric space and let
$ r \in [ 0, \infty ) $. Then we denote by
$ \cC_{ ( E, \delta ), r } \in \N_0 \cup \{ \infty \} $ the extended real number given by
\begin{equation}
\begin{split}
\cC_{ ( E, \delta ), r }
=
\inf\biggl(
    \biggl\{
        n \in \N_0 \colon
        \biggl[ \exists \, \scrD \subseteq E \colon \biggl(
                \arraycolsep=0pt \begin{array}{c}
        ( | \scrD| \leq n ) \wedge ( \forall \, x \in E \colon \\
            \exists \, y \in \scrD \colon  \delta(x,y) \leq r )
        \end{array}
        \biggr) \biggr]
    \biggr\}
    \cup \{ \infty \}
\biggr).
\end{split}
\end{equation}
\end{definition}

\cfclear
\begin{prop} \cfadd{def:p-norm}
\label{prop:approximation_error}
Let
$ d, \bfd, \bfL \in \N $,
$ A \in ( 0, \infty ) $, $L \in [0, \infty)$, $a \in \R$,
$ b \in ( a, \infty ) $,
$ u \in [ -\infty, \infty ) $,
$ v \in ( u, \infty ] $,
$ \bfl = ( \bfl_0, \bfl_1, \ldots, \bfl_\bfL ) \in \N^{ \bfL + 1 } $,
assume
$ \bfL \geq 1 + (\ceil{ \log_2 \left(\nicefrac{ A  }{ ( 2d ) } \right)} + 1) \indicator{ ( 6^d, \infty )}( A ) $,
$ \bfl_0 = d $,
$ \bfl_1 \geq A \indicator{ ( 6^d, \infty )}( A ) $,
$ \bfl_\bfL = 1 $,
and
$ \bfd \geq \sum_{i=1}^{\bfL} \bfl_i( \bfl_{ i - 1 } + 1 ) $,
assume for all
$ i \in \{ 2, 3, \ldots , \bfL-1\}$
that
$ \bfl_i \geq   3 \ceil{\nicefrac{A}{(2^id)}} \indicator{ ( 6^d, \infty )}( A )$,
and
let
$ f \colon [ a, b ]^d \to ( [ u, v ] \cap \R ) $
satisfy for all
$ x, y \in [ a, b ]^d $
that
$ \abs{ f( x ) - f( y ) } \leq L \norm{ x - y }_{ 1 } $ \cfload.
Then
there exists $ \vartheta \in \R^\bfd $
such that
$ \norm{ \vartheta }_\infty
\leq \max\{ 1, L, |a|, |b|, \allowbreak 2[ \sup_{ x \in [ a, b ]^d } \abs{ f( x ) } ] \} $
and
\begin{equation}
\sup\nolimits_{ x \in [ a, b ]^d }
    \abs{ \scrN^{\vartheta,\bfl}_{u,v}( x ) - f( x ) }
\leq
\frac{ 3 d L ( b - a ) }{ A^{ \nicefrac{1}{d} } }
\end{equation}
\cfout.
\end{prop}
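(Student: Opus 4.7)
The strategy is a case split based on whether $A \leq 6^d$ or $A > 6^d$.

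For $A \leq 6^d$ the indicators in the architecture hypotheses all vanish and only the conditions $\bfL \geq 1$, $\bfl_0 = d$, and $\bfl_\bfL = 1$ remain. In this regime I would approximate $f$ by the constant $c := f(m,m,\ldots,m)$ with $m := \tfrac{a+b}{2}$. Since $\norm{x - (m,\ldots,m)}_1 \leq \tfrac{d(b-a)}{2}$ for every $x \in [a,b]^d$, the Lipschitz property of $f$ yields $\sup_{x \in [a,b]^d} |f(x) - c| \leq \tfrac{dL(b-a)}{2} \leq \tfrac{3dL(b-a)}{A^{1/d}}$, where the last inequality uses $A^{1/d} \leq 6$. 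The constant $c$ would be realized by choosing $\vartheta \in \R^\bfd$ with all entries equal to zero except the final bias entry which is set to $c$: an easy inductive computation through the layers shows that the ReLU layers all output $0$ and the final affine map yields $c$, which the clipping $\fc_{u,v}$ leaves unchanged since $c \in [u,v] \cap \R$. The parameter bound $\norm{\vartheta}_\infty = |c| \leq \sup_{x \in [a,b]^d} |f(x)|$ is then immediate.

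For $A > 6^d$ I would apply \cref{approx:lipuv} with $N := \ceil{A^{1/d}/3}$, $K := N^d$, and with $\fx_1, \ldots, \fx_K$ chosen as the centers of the uniform $N^d$-grid on $[a,b]^d$, that is, the points of the form $(a + (2j_1-1)(b-a)/(2N), \ldots, a + (2j_d-1)(b-a)/(2N))$ for $(j_1, \ldots, j_d) \in \{1, \ldots, N\}^d$. This grid has $\norm{\cdot}_1$-covering radius at most $\tfrac{d(b-a)}{2N}$, and since $N \geq A^{1/d}/3$ the error estimate from \cref{approx:lipuv} simplifies to $2L \cdot \tfrac{d(b-a)}{2N} \leq \tfrac{3dL(b-a)}{A^{1/d}}$, which is exactly the claimed bound. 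The parameter estimate of \cref{approx:lipuv} translates into the stated one using $\norm{\fx_k}_\infty \leq \max\{|a|, |b|\}$ and $|f(\fx_k)| \leq \sup_{x \in [a,b]^d} |f(x)|$.

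The technical step requiring care is verifying that the architecture hypotheses of \cref{approx:lipuv} follow from the hypotheses of \cref{prop:approximation_error}. The central estimate is $K \leq A/(2d)$: from $A^{1/d} > 6$ one obtains $\ceil{A^{1/d}/3} \leq A^{1/d}/3 + 1 \leq A^{1/d}/2$, hence $K = N^d \leq A/2^d$, and the elementary inequality $2^d \geq 2d$ (valid for all $d \in \N$) then yields $K \leq A/(2d)$. From this, $\bfl_1 \geq A \geq 2dK$, $\bfL \geq \ceil{\log_2(A/(2d))} + 2 \geq \ceil{\log_2 K} + 2$, and $\bfl_i \geq 3\ceil{A/(2^i d)} \geq 3\ceil{K/2^{i-1}}$ for $i \in \{2, \ldots, \bfL-1\}$ all follow immediately, while $K \geq 3^d \geq 2$ (again from $A^{1/d} > 6$) ensures that \cref{approx:lipuv} is applicable.
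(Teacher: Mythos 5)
Your proof is correct and follows essentially the same route as the paper: in the case $A > 6^d$ both arguments reduce to \cref{approx:lipuv} applied to a uniform grid on $[a,b]^d$ and then verify the architecture hypotheses via the crude bound $2d \leq 2^d$; your explicit choice $N = \ceil{A^{\nicefrac{1}{d}}/3}$, $K = N^d$ with $K \leq A/(2d)$ matches the paper's $\fZ = \floor{(A/(2d))^{\nicefrac{1}{d}}}$, $K \leq \fZ^d$ up to cosmetic rearrangement, and your covering-radius and parameter-norm computations are correct. The one genuine (and welcome) difference is in the case $A \leq 6^d$: the paper defers to \cite[Lemma 3.4]{JentzenWelti2020arxiv}, whereas you give a self-contained argument by realizing the constant $f(m,\ldots,m)$ with $m = \tfrac{a+b}{2}$ via a zero-weight network with a single nonzero output bias, using $\tfrac{dL(b-a)}{2} \leq \tfrac{3dL(b-a)}{A^{\nicefrac{1}{d}}}$ when $A^{\nicefrac{1}{d}} \leq 6$ — this is a clean, correct, and more elementary treatment of the degenerate regime.
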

\begin{proof}[Proof of \cref{prop:approximation_error}]
Throughout this proof we assume w.l.o.g.\ that $A > 6^d$ (if $A \leq 6^d$ the assertion follows from \cite[Lemma 3.4]{JentzenWelti2020arxiv}). Let $ \fZ \in \Z $ be given by $\fZ = \floor{\bigl(\tfrac{ A }{ 2d } \bigr)^{ \nicefrac{1}{d} }}$.
Note that it holds for all
$ k \in \N $
that
\begin{equation}
\label{eq:exponential_estimate}
2 k \leq 2 \cdot 2^{ k - 1 } = 2^k.
\end{equation}
This implies that
$ 3^d
= \nicefrac{6^d}{2^d}
\leq \nicefrac{A}{(2d)} $ and therefore
\begin{equation} \label{eq:fnestimate}
    2 \leq \tfrac{2}{3}\bigl(\tfrac{ A }{ 2d } \bigr)^{ \nicefrac{1}{d} } \leq \bigl(\tfrac{ A }{ 2d } \bigr)^{ \nicefrac{1}{d} }-1 < \fZ. 
\end{equation}
Next, let $r \in (0, \infty)$ be given by $r=\nicefrac{d(b-a)}{2\fZ}$\cfadd{def:covering_number}, let $\delta \colon [a,b]^d \times [a,b]^d \to \R$ satisfy for all $x,y \in [a,b]^d$ that $\delta(x,y) = \norm{x-y}_1$, and let $K \in \N \cup \{ \infty \}$ be given by $K = \max(2, \cC_{ ( [ a, b ]^d, \delta ), r })$ \cfload.
Observe that equation \eqref{eq:fnestimate} and item (i)
in \cite[Lemma 3.3]{JentzenWelti2020arxiv}
(applied with $ p \with 1 $
in the notation of \cite[Lemma 3.3]{JentzenWelti2020arxiv}) establish that
\begin{equation}
\begin{split}
K = \max\{ 2, \cC_{ ( [ a, b ]^d, \delta ), r } \}
\leq\max\Bigl\{ 2, \Bigl( \ceil{ \tfrac{ d ( b - a ) }{2r} } \Bigr)^{ \! d }
\Bigr\}
=\max\{ 2, ( \lceil \fZ \rceil )^d \}
=\fZ^d < \infty.
\end{split}
\end{equation}
This implies that
\begin{equation}
\label{eq:fN_upper_bound}
4 \leq 2 d K \leq 2 d \fZ^d
\leq\tfrac{ 2 d A }{2d} = A.
\end{equation}
Combining this and the fact that
$ \bfL \geq 1+ (\ceil{ \log_2 \left(\nicefrac{ A  }{ ( 2d ) } \right)} + 1) \indicator{ ( 6^d, \infty )}( A ) = \ceil{ \log_2 \left(\nicefrac{ A  }{ ( 2d ) } \right)} + 2$
hence proves that
$ \ceil{\log_2 K}
\leq \ceil{\log_2 \left(\nicefrac{A}{(2d)}\right)}
\leq \bfL - 2 $.
This,
\cref{eq:fN_upper_bound},
and the assumptions that
$ \bfl_1 \geq A \indicator{ ( 6^d, \infty )}( A ) = A $
and
$ \forall \, i \in  \{ 2, 3, \ldots, \bfL - 1 \} \colon
 \bfl_i \geq  3 \ceil{\nicefrac{A}{(2^id)}}  \indicator{ ( 6^d, \infty )}( A ) =   3 \ceil{\nicefrac{A}{(2^id)}} $
imply for all
$ i \in \{ 2, 3, \ldots, \bfL-1 \} $ that
\begin{equation}
\label{eq:bfl_assumptions1}
\bfL
\geq \ceil{\log_2 K} + 2,
\quad
\bfl_1 \geq A \geq 2 d K,
\qandq 
\bfl_i \geq 3\ceil{\tfrac{A}{2^i d}}
\geq  3\ceil{\tfrac{ K }{ 2^{i-1} }}.
\end{equation}
Let $\scrD = \{ \fx_1, \fx_2, \ldots, \fx_K \} \subseteq [a,b]^d$ satisfy
\begin{equation} \label{eq:maxdist}
    \sup\nolimits_{x \in [a,b]^d} \left[ \inf\nolimits_{y \in \scrD} \delta(x,y) \right] = \sup\nolimits_{x \in [a,b]^d} \left[ \inf\nolimits_{k  \in \{1,2, \ldots, K \}} \delta (x,  \fx_k) \right] \leq r.
\end{equation}
Observe that \eqref{eq:bfl_assumptions1}, the assumptions that
$ \bfl_0 = d $,
$ \bfl_\bfL = 1 $,
$ \bfd \geq \sum_{i=1}^{\bfL} \bfl_i( \bfl_{ i - 1 } + 1 ) $, and $\forall \, x,y \in [a,b]^d \colon |f(x)-f(y)| \leq L \norm{x-y}_1$, and \cref{approx:lipuv} (applied with $E \with [a,b]^d$ in the notation of \cref{approx:lipuv}) show that there exists
$ \vartheta \in \R^\bfd $
such that
\begin{equation} \label{eq:prop:approx:normtheta}
     \norm{ \vartheta }_{ \infty } \leq
\max \{ 1,L, \max\nolimits_{k \in \{1, 2, \ldots, K\}} \norm{ \fx_k }_\infty, 2 \max\nolimits_{k \in \{1, 2, \ldots, K\}} |f(\fx_k)|\}
\end{equation}
and
\begin{equation}
\label{eq:NN_approximation}
\begin{split}
\sup\nolimits_{ x \in [ a, b ]^d }
    \abs{ \scrN^{\vartheta, \bfl}_{u, v}( x ) - f( x ) }
&\leq
2 L\left[
\sup\nolimits_{ x \in [ a, b ]^d }
 \left(\inf\nolimits_{k  \in \{1, 2, \ldots, K \}}
    \norm{x- \fx_k}_{ 1 } \right)\right] \\
&=2 L\left[\sup\nolimits_{ x \in [ a, b ]^d }
\left( \inf\nolimits_{k  \in \{1, 2, \ldots, K \}}
    \delta ( x, \fx_k ) \right) \right].
    \end{split}
\end{equation}
Note that \eqref{eq:prop:approx:normtheta} implies that
\begin{equation}
\label{eq:norm_estimate}
\norm{\vartheta }_{ \infty }
\leq
\max\{ 1, L, |a|, |b|, 2  \sup\nolimits_{ x \in [ a, b ]^d } \abs{f(x)}  \}.
\end{equation}
Moreover,
\cref{eq:NN_approximation}, \eqref{eq:exponential_estimate}, \eqref{eq:fnestimate}, and \eqref{eq:maxdist} demonstrate that
\begin{equation}
\begin{split}
\sup\nolimits_{ x \in [ a, b ]^d }
    \abs{ \scrN^{\vartheta, \bfl}_{u, v}( x ) - f( x ) }
&\leq 2 L
\bigl[
\sup\nolimits_{ x \in [ a, b ]^d }
\left(  \inf\nolimits_{k  \in \{1, 2, \ldots, K \}}
    \delta ( x, \fx_k )\right)
\bigr]
\leq
2 L r
=
\frac{ d L ( b - a ) }{ \fZ } \\
& \leq
\frac{ d L ( b - a ) }{ \frac{2}{3} \bigl( \frac{ A }{ 2d } \bigr)^{ \nicefrac{1}{d} } }
= \frac{ ( 2 d )^{ \nicefrac{1}{d} } 3 d L ( b - a ) }{ 2 A^{ \nicefrac{1}{d} } }
\leq \frac{ 3 d L ( b - a ) }{ A^{ \nicefrac{1}{d} } }.
\end{split}
\end{equation}
Combining this with
\cref{eq:norm_estimate} completes the proof of \cref{prop:approximation_error}.
\end{proof}

\cfclear
\begin{cor} \cfadd{def:p-norm}
\label{cor:approxerror:eps}
Let
$ d  \in \N $,
$L \in [0, \infty)$, $a \in \R$,
$ b \in ( a, \infty ) $,
and let
$ f \colon [ a, b ]^d \to  \R $
satisfy for all
$ x, y \in [ a, b ]^d $
that
$ \abs{ f( x ) - f( y ) } \leq L \norm{ x - y }_{ 1 } $ \cfload.
Then there exist $\fC = \fC(a,b,L) \in \R$ (not depending on $d$), $C = C(a,b,d, L) \in \R$, and $\Phi = (\Phi_\varepsilon)_{\varepsilon \in (0,1]} \colon (0,1] \to \NN$ such that for all $\varepsilon \in (0,1]$ it holds that
$ \norm{ \cT(\Phi_\varepsilon ) }_\infty
\leq \max\{ 1, L, |a|, |b|, \allowbreak 2[ \sup_{ x \in [ a, b ]^d } \abs{ f( x ) } ] \} $, $\sup\nolimits_{ x \in [ a, b ]^d }
    \abs{ (\cR_\fr(\Phi_\varepsilon))( x ) - f( x ) }
\leq \varepsilon$, $\cH(\Phi_\varepsilon) \leq d( \log _2(\varepsilon^{-1}) + \log_2 (d) + \fC )$,
and $\cP(\Phi_\varepsilon) \leq C \varepsilon^{-2d}$ \cfadd{def:realization}\cfout.
\end{cor}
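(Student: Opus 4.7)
The plan is to derive this corollary directly from \cref{prop:approximation_error} by selecting the architecture parameters as explicit functions of $\varepsilon$. For each $\varepsilon \in (0,1]$ I would set
\begin{equation*}
A_\varepsilon = \max\{6^d + 1,\, \lceil (3dL(b-a))^d \varepsilon^{-d}\rceil\},
\end{equation*}
which guarantees $3dL(b-a) A_\varepsilon^{-\nicefrac{1}{d}} \leq \varepsilon$ and simultaneously activates the non-trivial branch of the indicator $\indicator{(6^d,\infty)}$ in \cref{prop:approximation_error}. I would then take the depth $\bfL_\varepsilon = \lceil \log_2(A_\varepsilon/(2d))\rceil + 2$ and the architecture vector $\bfl = (d,\, \lceil A_\varepsilon\rceil,\, 3\lceil A_\varepsilon/(4d)\rceil,\, 3\lceil A_\varepsilon/(8d)\rceil,\, \ldots,\, 3\lceil A_\varepsilon/(2^{\bfL_\varepsilon-1}d)\rceil,\, 1)$.

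With $\bfd = \sum_{i=1}^{\bfL_\varepsilon}\bfl_i(\bfl_{i-1}+1)$, I would apply \cref{prop:approximation_error} with $u=-\infty$ and $v=\infty$ to obtain a parameter vector $\vartheta \in \R^\bfd$ satisfying both the claimed parameter-norm estimate and the uniform approximation bound $\sup_{x\in[a,b]^d}|\mathscr{N}^{\vartheta,\bfl}_{-\infty,\infty}(x)-f(x)| \leq 3dL(b-a) A_\varepsilon^{-\nicefrac{1}{d}} \leq \varepsilon$. I would then define $\Phi_\varepsilon \in \mathbf{N}$ as the structured DNN whose weight matrices and bias vectors are read off from $\vartheta$ according to the architecture $\bfl$ (in the sense of \cref{def:DNN} and \cref{def:DNNparam}); by \cref{cor:dnnvect} applied with $u=-\infty$, $v=\infty$, the realization $\cR_{\mathfrak{r}}(\Phi_\varepsilon)$ coincides with $\mathscr{N}^{\vartheta,\bfl}_{-\infty,\infty}$, so the parameter-norm bound and the uniform approximation bound transfer immediately to $\Phi_\varepsilon$.

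It then remains to verify the depth and parameter count estimates. For the depth, $\cH(\Phi_\varepsilon) = \bfL_\varepsilon - 1 \leq \log_2(A_\varepsilon/(2d)) + 2$, and expanding $\log_2((3dL(b-a))^d \varepsilon^{-d}/(2d))$ produces the summands $d\log_2(\varepsilon^{-1})$, a term bounded above by $d\log_2(d)$, and a $d$-independent additive constant that I would absorb into $\fC = \fC(a,b,L)$ (with the $6^d+1$ branch handled separately, since in that regime $\bfL_\varepsilon$ is already small). For the parameter count, the bounds $\bfl_1 \leq A_\varepsilon+1$ and $\bfl_i \leq 3A_\varepsilon/(2^i d)+3$ for $i \geq 2$ cause the layer widths to decay geometrically, so the sum $\cP(\Phi_\varepsilon) = \sum_{i=1}^{\bfL_\varepsilon}\bfl_i(\bfl_{i-1}+1)$ is dominated by the $i=2$ term of order $A_\varepsilon^2/d$ together with a convergent geometric tail, yielding $\cP(\Phi_\varepsilon) \leq C(a,b,d,L) A_\varepsilon^2 \leq C(a,b,d,L)\varepsilon^{-2d}$.

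The main (minor) obstacle is bookkeeping around the constants: I must check that the $d\log_2(d)$ summand in the depth estimate genuinely absorbs the factor $d^d$ hidden inside $A_\varepsilon$, and that the leftover additive constant $\fC$ truly depends only on $a, b, L$, not on $d$. The $d$-dependence in the parameter bound, by contrast, is unconstrained by the statement, so it may freely be absorbed into $C(a,b,d,L)$. With this bookkeeping complete, every other step is a direct transcription from \cref{prop:approximation_error} via the vectorized–structured equivalence of \cref{cor:dnnvect}.
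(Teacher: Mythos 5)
Your proposal is correct and follows essentially the same route as the paper's proof: choose $A$ as a power of $3dL(b-a)/\varepsilon$, feed the induced architecture into \cref{prop:approximation_error} with $u=-\infty$, $v=\infty$, pass to the structured DNN via \cref{cor:dnnvect}, and then do the bookkeeping for $\cH$ and $\cP$. The paper uses $A = (3dL(b-a)/\varepsilon)^d$ directly and lets \cref{prop:approximation_error} handle the $A \leq 6^d$ case internally, whereas you force $A_\varepsilon > 6^d$ via the max with $6^d+1$; both are fine, though your choice leads to slightly larger constants. One small imprecision in your sketch: in the $6^d+1$ branch, $\bfL_\varepsilon$ is not actually small — it is $\Theta(d\log_2 6)$ — but since the target bound $d(\log_2(\varepsilon^{-1})+\log_2 d+\fC)$ is at least $d\fC$, that branch is still absorbed by taking $\fC$ at least a suitable universal numerical constant, so the conclusion stands.
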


\begin{proof} [Proof of \cref{cor:approxerror:eps}]
Throughout this proof assume w.l.o.g.\ that $L > 0$, let $\varepsilon \in (0, 1 ]$, let $A \in (0, \infty)$ satisfy $A = \left( \tfrac{3d L (b-a)}{\varepsilon} \right) ^d$, let $\bfL \in \N$ be given by $\bfL = \max \{2 + \ceil{\log_2 ( \nicefrac{A}{(2d)})} , 1 \}$, let $\bfl = (\bfl_0, \bfl_1, \ldots, \bfl_\bfL) \in \N^{\bfL + 1}$ satisfy $\bfl_0=d$, $\bfl_\bfL = 1$, $\bfl_1 = \ceil{A}$, and $\forall \, i \in \{2,3, \ldots, \bfL - 1 \} \colon \bfl_i = 3 \ceil {\nicefrac{A}{(2^i d) } }$, let $\fC \in \R$ satisfy
\begin{equation}
    \fC =  \max \{\log_2 ( 3L(b-a)) + 1, 0 \},
\end{equation}
and let $C \in \R$ be given by
\begin{equation} \label{eq:approxeps:defc}
    C = \tfrac{9}{8} \bigl( 3dL(b-a) \bigr) ^{2d} + (d+ 19) \bigl( 3dL(b-a) \bigr) ^d + d + 11.
\end{equation}\cfload. Observe that $\bfL \geq 1 + \left( \ceil{ \log_2 \left(\nicefrac{ A  }{ (2d) } \right)} + 1 \right) \indicator{ ( 6^d, \infty )}( A )$,
 $\bfl_1 \geq A \indicator{ ( 6^d, \infty )}( A )$, and 
$\forall \, i \in \{2,3, \ldots, \bfL - 1 \} \colon \bfl_i \geq   3 \ceil{\nicefrac{A}{(2^i d)}} \indicator{ ( 6^d, \infty )}( A )$.
Combining this and the facts that $\bfl_0 = d$ and $\bfl_\bfL = 1$ with \cref{cor:dnnvect} and \cref{prop:approximation_error} (applied with $\bfd \with \sum_{i=1}^\bfL \bfl_i ( \bfl_{i-1} + 1)$ in the notation of \cref{prop:approximation_error}) demonstrates that there exists $\Psi \in \left( \bigtimes_{i=1}^{\bfL}  \left(\R^{\bfl_i \times \bfl_{i-1}} \times \R^{\bfl_i} \right) \! \right) \subseteq \NN$ which satisfies $\norm{\cT(\Psi)}_\infty \leq \max\{ 1, L, |a|, |b|, \allowbreak 2[ \sup_{ x \in [ a, b ]^d } \abs{ f( x ) } ] \} $ and
\begin{equation}
    \sup \nolimits_{x \in [a,b]^d }   \abs{ (\cR_\fr(\Psi ))( x ) - f( x ) } \leq \frac{ 3 d L ( b - a ) }{ A^{ \nicefrac{1}{d} } } = \varepsilon.
\end{equation}
Note that the facts that $d \geq 1$ and $\varepsilon \in (0,1]$ imply that
\begin{equation}
    \begin{split}
        \cH(\Psi) &= \bfL - 1 = \max \{ 1 + \ceil{\log_2 ( \nicefrac{A}{(2d)})} , 0 \} =  \max \left\{ \ceil{\log_2 ( \tfrac{A}{d})} , 0 \right\} \\
        &\leq \max \left\{ \ceil{\log_2(A)} , 0 \right\} 
        = \max \left\{ \ceil {d \log_2 \left( \tfrac{3dL(b-a)}{\varepsilon} \right)} , 0 \right\} \\
        &\leq  \max \left\{ d \big( \log_2(\varepsilon^{-1}) + \log_2 (d) + \log_2(3L(b-a)) \bigr) + 1 , 0\right \} \\
        &\leq d( \log _2(\varepsilon^{-1}) + \log_2 (d) + \fC ).
    \end{split}
\end{equation}
To prove the estimate for $\cP(\Psi)$, observe that
\begin{equation} \label{eq:approxeps1}
    \cP(\Psi) = \sum_{i=1}^\bfL \bfl_i (\bfl_{i-1} + 1) = \ceil{A} (d+1) + 3 \ceil{\tfrac{A}{4d}}( \ceil{A}+1) + \sum_{i=3}^{\bfL - 1} \left[ 3 \ceil{\tfrac{A}{2^i d}} \left( 3 \ceil{\tfrac{A}{2^{i-1}d}} + 1 \right) \right] + 3 \ceil{\tfrac{A}{2^{\bfL - 1}d}}+1.
\end{equation}
Next note that $\bfL  \geq 1+\ceil{\log_2 ( \nicefrac{A}{d})} \geq 1 + \log _2 ( \nicefrac{A}{d})$. This, the fact that $d \geq 1$, and the fact that $\forall \, x \in \R \colon \ceil{x} \leq x + 1$ show that
\begin{equation} \label{eq:approxeps2}
    \begin{split}
          \ceil{A} (d+1) + 3 \ceil{\tfrac{A}{4d}}( \ceil{A}+1)  + 3 \ceil{\tfrac{A}{2^{\bfL - 1}d}}+1 
   & \leq (A+1)(d+1) + 3 \left( \tfrac{A}{4d} + 1 \right) (A+2) + 4 \\
    &= \tfrac{3}{4d} A^2 + (d+4+\tfrac{3}{2d})A + d + 11 \\
    &\leq \tfrac{3}{4} A^2 + (d + \tfrac{11}{2}) A + d + 11.
    \end{split}
\end{equation}
Moreover, note that the fact that $\forall \, x \in (0, \infty) \colon \log_2 x = \log_2 ( \nicefrac{x}{2}) + 1 \leq \nicefrac{x}{2} + 1$ implies that
\begin{equation}
    \bfL \leq \max \left\{ 2 + \log_2 ( \tfrac{A}{d}), 1 \right\} \leq 3 + \tfrac{A}{2d} \leq 3 + \tfrac{A}{2}.
\end{equation}
This demonstrates that
\begin{equation} \label{eq:approxeps3}
    \begin{split}
        \sum_{i=3}^{\bfL - 1}  \left[3 \ceil{\tfrac{A}{2^i d}} \left( 3 \ceil{\tfrac{A}{2^{i-1}d}} + 1 \right) \right]
        &\leq 3 \sum_{i=3}^{\bfL - 1} \left( \tfrac{A}{2^i d } + 1 \right) \left( \tfrac{3A}{2^{i-1}d} + 4 \right) \\
        &= \tfrac{9A^2 }{d^2 } \sum_{i=3}^{\bfL - 1} 2^{1-2i} + \tfrac{12 A }{d} \sum_{i=3}^{\bfL - 1} 2^{-i} + \tfrac{9A}{d} \sum_{i=3}^{\bfL - 1} 2^{1-i} + 12(\bfL - 3)  \\
        &\leq \tfrac{3}{8} A^2 + 3A + \tfrac{9}{2} A + 6A = \tfrac{3}{8} A^2 + \tfrac{27}{2} A.
    \end{split}
\end{equation}
  Combining \eqref{eq:approxeps:defc}, \eqref{eq:approxeps1}, \eqref{eq:approxeps2}, and \eqref{eq:approxeps3} with the fact that $\varepsilon \in (0, 1 ]$ shows that
  \begin{equation}
  \begin{split}
          \cP(\Psi) &\leq \tfrac{9}{8} A^2 + (d+19) A + d + 11 \\
          &= \tfrac{9}{8} \bigl(3dL(b - a ) \bigr) ^{2d} \varepsilon ^{-2d} + (d+19) \bigl( 3 d L ( b-a) \bigr) ^d \varepsilon ^{- d} + d + 11 \\
          &\leq \left[ \tfrac{9}{8} \bigl(3dL(b - a ) \bigr) ^{2d} + (d+19) \bigl( 3 d L ( b-a) \bigr) ^d + d + 11 \right] \varepsilon ^{-2d} = C \varepsilon^{-2d}.
  \end{split}
  \end{equation}
The proof of \cref{cor:approxerror:eps} is thus complete.
\end{proof}

\section{Analysis of the generalization error}
\label{sec:general}
In this section, we provide in \cref{cor:generalization_error} below a uniform upper bound for the generalization error over all artificial neural networks with a fixed architecture and coefficients in a given hypercube. More specifically, \cref{cor:generalization_error} shows that the $L^p$-distance between the true risk $\bfR$ and the empirical risk $\cR$ based on $M$ i.i.d.\ samples is of order $\cO(\log M / \sqrt{M})$ as $M \to \infty$, uniformly over all DNNs with a fixed architecture and parameters in a given interval $[-\beta, \beta]$. The conclusion of \cref{cor:generalization_error} improves on \cite[Corollary 4.15]{JentzenWelti2020arxiv} by reducing the constant factor on the right-hand side.

 In \cref{Subsection:montecarlo} we prove some elementary estimates for sums of independent random variables. The results in this subsection are well-known, in particular, \cref{lem:l2montecarlo} is a reformulation of, e.g., \cite[Lemma 2.3]{GrohsHornungJentzenVonWurstemberger2018arXiv} and \cref{lem:lpmontecarlo} is an immediate consequence of \cite[Theorem 2.1]{Rio2009}. It should be noted that \cref{cor:lpmontecarlo} improves on \cite[Corollary 4.5]{JentzenWelti2020arxiv} by removing a factor of $2$ on the right-hand side. The simple estimates in \cref{lem:vartrivial} and \cref{cor:lpvar} are also well-known, and we only include their proofs for completeness.

In \cref{subsection:generalization} we combine these results to obtain an upper estimate for the generalization error. The proof of \cref{cor:generalization_error} is very similar to the proofs in \cite[Section 4]{JentzenWelti2020arxiv}. The main difference is that we apply the improved Monte Carlo inequality from \cref{cor:lpmontecarlo} instead of \cite[Corollary 4.15]{JentzenWelti2020arxiv}. Most of the proofs are therefore omitted.

\subsection{General Monte Carlo estimates}
\label{Subsection:montecarlo}
\begin{lemma} \label{lem:l2montecarlo}
Let $M \in \N$, let $(\Omega, \cF, \P)$ be a probability space, let $X_j \colon \Omega \to \R$, $j \in \{1,2, \ldots, M \}$, be i.i.d.\ random variables, and assume $ \E  [ |X_1| ] < \infty$. Then
\begin{equation}
    \E \! \left[ \left( \smallsum_{j=1}^M X_j - \smallsum_{j=1}^M \E [X_j] \right) ^2\right] = M\, \E \bigl[ \abs{ X_1 - \E [X_1] } ^2 \bigr].
\end{equation}
\end{lemma}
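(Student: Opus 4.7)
The plan is to expand the square on the left-hand side, exploit the i.i.d.\ assumption to identify the expectations of the cross terms as products of individual means, and then invoke the fact that the centred random variables $X_j - \E[X_j]$ have mean zero to kill all off-diagonal contributions.

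More concretely, I would first observe that by the i.i.d.\ assumption we have $\E[X_j] = \E[X_1]$ for every $j \in \{1,2,\ldots,M\}$, so $\sum_{j=1}^{M} \E[X_j] = M \E[X_1]$ and the left-hand side equals $\E[(\sum_{j=1}^{M}(X_j - \E[X_1]))^2]$. Setting $Y_j = X_j - \E[X_1]$, the variables $Y_1,Y_2,\ldots,Y_M$ are i.i.d.\ with $\E[Y_j] = 0$ (note that $\E[X_1]$ is well-defined because $\E[|X_1|] < \infty$).

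Next I would split into two cases according to whether the second moment is finite. If $\E[|Y_1|^2] = \infty$, then by Jensen's inequality (or by an easy direct argument using the triangle inequality $|Y_1|^2 \leq 2^{M-1}\sum_{j=1}^{M} |Y_j|^2$ combined with positivity of $|Y_j|^2$) the expectation on the left-hand side also equals $\infty$, matching $M \cdot \infty$ on the right. If $\E[|Y_1|^2] < \infty$, then by bilinearity of expectation (using Fubini to justify interchanging the finite sum with the expectation)
\begin{equation*}
\E\!\left[\Bigl(\smallsum_{j=1}^{M} Y_j\Bigr)^{\!2}\right]
= \smallsum_{i=1}^{M} \smallsum_{j=1}^{M} \E[Y_i Y_j]
= \smallsum_{j=1}^{M} \E[Y_j^2] + \smallsum_{i \neq j} \E[Y_i Y_j].
\end{equation*}
For $i \neq j$ the independence of $Y_i$ and $Y_j$ gives $\E[Y_i Y_j] = \E[Y_i]\E[Y_j] = 0$, while each diagonal term equals $\E[|X_1 - \E[X_1]|^2]$ by the i.i.d.\ assumption. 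Summing yields exactly $M \cdot \E[|X_1 - \E[X_1]|^2]$.

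There is no real obstacle here; the only point worth being careful about is ensuring that the centred variables remain independent (which is immediate since each $Y_j$ is a measurable function of $X_j$) and that we have enough integrability to apply Fubini and split the expectation of the product into a product of expectations. The infinite-variance case is handled separately purely to make the identity literally an equality in $[0,\infty]$ rather than requiring an a priori finite-second-moment hypothesis.
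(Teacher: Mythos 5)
Your core argument is identical to the paper's: expand the square via bilinearity, use the independence and integrability of $X_i - \E[X_i]$ and $X_j - \E[X_j]$ to annihilate the off-diagonal terms, and identify the diagonal terms via the i.i.d.\ assumption. The extra case split for $\E[|Y_1|^2]=\infty$ is careful bookkeeping that the paper silently omits, though the auxiliary inequality you cite there ($|Y_1|^2 \leq 2^{M-1}\sum_j |Y_j|^2$) bounds the wrong quantity (you would need to bound the left-hand side of the claimed identity from below, e.g.\ by conditioning on $\sum_{j\geq 2} Y_j$); this side remark does not affect the main argument, which matches the paper's.
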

\begin{proof} [Proof of \cref{lem:l2montecarlo}]
The assumption that $X_j$, $j \in \{1,2, \ldots, M \}$, are independent random variables and $\forall \, j \in \{1,2, \ldots, M\} \colon \E [ |X_j| ] < \infty$ ensures that for all $i,j \in \{1, 2,\ldots, M\}$ with $i \not= j$ it holds that
\begin{equation}
    \E \bigl[ (X_i-\E [X_i])(X_j-\E [X_j]) \bigr] = \E \bigl[ X_i-\E [X_i] \bigr]  \E \bigl[ X_j - \E [X_j] \bigr] = 0.
\end{equation}
This implies that
\begin{equation}
    \begin{split}
         \E \! \left[ \left( \smallsum_{j=1}^M X_j - \smallsum_{j=1}^M \E [X_j] \right) ^2\right] 
         &= \smallsum_{i,j=1}^M   \E \bigl[ (X_i-\E [X_i])(X_j-\E [X_j]) \bigr] \\
         &= \smallsum_{j=1}^M \E \bigl[ \abs{ X_j - \E [X_j] } ^2 \bigr] =  M\, \E \bigl[ \abs{ X_1 - \E [X_1] } ^2 \bigr].
    \end{split}
\end{equation}
   The proof of \cref{lem:l2montecarlo} is thus complete.
\end{proof}

\begin{lemma}\label{lem:lpmontecarlo}
Let $M \in \N$, $p \in (2, \infty)$, let $(\Omega, \cF, \P)$ be a probability space, let $X_j \colon \Omega \to \R, \ j \in \{1,2, \ldots, M \}$, be i.i.d.\ random variables, and assume that $ \E [ |X_1| ] < \infty$. Then
\begin{equation} \label{eq:lpmontecarlo}
    \left(   \E  \! \left[\abs{\smallsum_{j=1}^M X_j - \smallsum_{j=1}^M \E [X_j] } ^p\right] \right)^{\! \nicefrac{1}{p}} 
    \leq  \sqrt{M(p-1)} \left(\E \bigl[ \abs{ X_1 - \E [X_1] }  ^p \bigr] \right)^{\nicefrac{1}{p}}
\end{equation}
\end{lemma}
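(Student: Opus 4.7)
The plan is to reduce the problem to centered independent random variables and then invoke the sharp Marcinkiewicz--Zygmund inequality due to Rio \cite{Rio2009}, which yields the constant $\sqrt{p-1}$ appearing on the right-hand side of \eqref{eq:lpmontecarlo}.

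First, I would assume without loss of generality that $\E[|X_1|^p] < \infty$, since otherwise the right-hand side of \eqref{eq:lpmontecarlo} equals $\infty$ and the inequality is trivial. Next, I would introduce the centered random variables $Y_j = X_j - \E[X_j]$ for $j \in \{1, 2, \ldots, M\}$. The assumption that $(X_j)_{j \in \{1, \ldots, M\}}$ are i.i.d.\ with $\E[|X_1|^p] < \infty$ ensures that $(Y_j)_{j \in \{1, \ldots, M\}}$ are i.i.d., centered, and satisfy $\E[|Y_1|^p] < \infty$.

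The second step is to apply \cite[Theorem 2.1]{Rio2009}, which for independent centered random variables $Y_1, Y_2, \ldots, Y_M$ with $\E[|Y_j|^p] < \infty$ and $p \in [2, \infty)$ asserts
\begin{equation}
\left( \E \! \left[ \abs{ \smallsum_{j=1}^M Y_j } ^p \right] \right)^{\! \nicefrac{1}{p}} \leq \sqrt{p-1} \left( \smallsum_{j=1}^M \left( \E \bigl[ \abs{Y_j}^p \bigr] \right)^{\nicefrac{2}{p}} \right)^{\! \nicefrac{1}{2}}.
\end{equation}
The i.i.d.\ property of $(Y_j)_{j \in \{1, \ldots, M\}}$ implies $\E[|Y_j|^p] = \E[|Y_1|^p] = \E[|X_1 - \E[X_1]|^p]$ for all $j \in \{1, \ldots, M\}$, so the sum inside the square root collapses to $M (\E[|X_1 - \E[X_1]|^p])^{2/p}$. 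Combining this with the identity $\sum_{j=1}^M Y_j = \sum_{j=1}^M X_j - \sum_{j=1}^M \E[X_j]$ yields \eqref{eq:lpmontecarlo}.

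There is essentially no obstacle beyond verifying that \cite[Theorem 2.1]{Rio2009} applies in the stated form; the sharp constant $\sqrt{p-1}$ is the entire content of Rio's result and is already tailored to this situation. The only minor care is to dispose of the trivial case when the $p$-th moment is infinite before invoking the theorem.
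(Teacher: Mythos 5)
Your proposal is correct and follows essentially the same route as the paper: both center the variables and invoke Rio's \cite[Theorem 2.1]{Rio2009} to obtain the constant $\sqrt{p-1}$, then collapse the sum using the i.i.d.\ assumption. The only cosmetic difference is that the paper explicitly verifies the conditional-mean-zero (martingale-difference) hypothesis required by Rio's theorem, whereas you cite the specialization to independent centered variables directly, which is equivalent.
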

\begin{proof}[Proof of \cref{lem:lpmontecarlo}]
Note that the fact that $X_j \colon \Omega \to \R$, $j \in \{1,2, \ldots, M \}$, are i.i.d.\ random variables and the fact that $\forall \, j \in \{1,2, \ldots, M \} \colon \E [|X_j|] < \infty$ ensures that for all $n \in \{1,2, \ldots, M \}$ it holds that
\begin{equation}
    \E \! \left[ X_n - \E [X_n] \mid \smallsum_{j=1}^{n-1} (X_j - \E [X_j])\right] = \E \bigl[X_n - \E [X_n] \bigr]=0.
\end{equation}
Rio \cite[Theorem 2.1]{Rio2009} (applied with $n \with M$, $(X_k)_{k \in \{1, \ldots, n\}} \with (X_j - \E [X_j])_{j \in \{1, \ldots M\}}$, $(S_k)_{k \in \{0, \ldots, n\}} \allowbreak \with (\sum_{j=1}^k (X_j - \E [X_j]))_{k \in \{0, \dots, M\}}$ in the notation of \cite[Theorem 2.1]{Rio2009}) therefore demonstrates that
\begin{equation}
\begin{split}
      \left(   \E \!  \left[\abs{ \smallsum_{j=1}^M X_j - \smallsum_{j=1}^M \E [X_j] } ^p\right] \right)^{\! \nicefrac{2}{p}} 
      &\leq (p-1) \smallsum_{j=1}^M  \left(\E \bigl[ \abs{ X_j - \E [X_j] } ^p \bigr] \right)^{\nicefrac{2}{p}} \\
      &= M(p-1) \left(\E \bigl[ \abs{ X_1 - \E [X_1] } ^p \bigr] \right)^{\nicefrac{2}{p}}.
      \end{split}
\end{equation} 
This establishes \eqref{eq:lpmontecarlo} and thus completes the proof of \cref{lem:lpmontecarlo}.
\end{proof}

\begin{cor} \label{cor:lpmontecarlo}
Let $M \in \N$, $p \in [2, \infty)$, let $(\Omega, \cF, \P)$ be a probability space, let $X_j \colon \Omega \to \R$, $j \in \{1,2, \ldots, M \}$, be i.i.d.\ random variables, and assume that $\E [|X_1| ] < \infty$. Then
\begin{equation}
    \left( \E \! \left[ \abs{ \frac{1}{M} \left[\sum_{j=1}^M X_j \right] - \frac{1}{M} \left[ \sum_{j=1}^M \E [X_j] \right] }^p \, \right] \right)^{\! \! \nicefrac{1}{p}} 
    \leq \sqrt{\frac{p-1}{M}} \left(\E \bigl[ \abs{ X_1 - \E [X_1] } ^p \bigr] \right)^{\nicefrac{1}{p}}.
\end{equation}
\end{cor}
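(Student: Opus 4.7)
The plan is to do a case split on whether $p = 2$ or $p \in (2, \infty)$, and in each case invoke the appropriate preceding Monte Carlo lemma together with the homogeneity of the $L^p$-norm to pull out the factor $\nicefrac{1}{M}$.

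First I would handle the case $p = 2$. Here I would apply \cref{lem:l2montecarlo} (whose hypothesis $\E[\abs{X_1}] < \infty$ is exactly what is assumed), obtaining
\begin{equation*}
  \E\!\left[\left(\smallsum_{j=1}^M X_j - \smallsum_{j=1}^M \E[X_j]\right)^2\right] = M\,\E\bigl[\abs{X_1 - \E[X_1]}^2\bigr].
\end{equation*}
Dividing both sides by $M^2$, taking square roots, and observing that $\sqrt{(p-1)/M} = \sqrt{1/M}$ when $p = 2$ then yields the desired inequality (with equality, in fact).

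Next I would treat the case $p \in (2, \infty)$ by applying \cref{lem:lpmontecarlo} and then pulling out $\nicefrac{1}{M}$ from inside the $L^p$-norm by positive homogeneity. Concretely, \cref{lem:lpmontecarlo} gives
\begin{equation*}
  \left(\E\!\left[\,\abs{\smallsum_{j=1}^M X_j - \smallsum_{j=1}^M \E[X_j]}^{p}\,\right]\right)^{\!\nicefrac{1}{p}} \leq \sqrt{M(p-1)}\,\bigl(\E\bigl[\abs{X_1 - \E[X_1]}^p\bigr]\bigr)^{\nicefrac{1}{p}},
\end{equation*}
and multiplying both sides by $\nicefrac{1}{M}$ (which slides inside the $L^p$-norm on the left) produces the factor $\sqrt{M(p-1)}/M = \sqrt{(p-1)/M}$ on the right, as required.

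There is really no substantive obstacle here: the work has already been done in \cref{lem:l2montecarlo} and \cref{lem:lpmontecarlo}, and the corollary is just a normalization by $M$ combined with a case distinction to cover $p = 2$ (since \cref{lem:lpmontecarlo} is stated only for $p > 2$). The only mild subtlety is to note that in the $p = 2$ case the bound is sharp and matches the $p \downarrow 2$ limit of Rio's inequality, which is why the two cases can be combined into a single statement valid for all $p \in [2, \infty)$.
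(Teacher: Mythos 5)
Your proposal is correct and follows the same route as the paper: both invoke \cref{lem:l2montecarlo} for the endpoint $p = 2$ and \cref{lem:lpmontecarlo} for $p > 2$, then pull out the factor $\nicefrac{1}{M}$ by homogeneity to obtain $\sqrt{M(p-1)}/M = \sqrt{(p-1)/M}$. The paper's proof is just a more compressed version of the same case split and normalization.
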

\begin{proof}[Proof of \cref{cor:lpmontecarlo}]
Observe that \cref{lem:l2montecarlo} and \cref{lem:lpmontecarlo} show that
\begin{equation}
    \begin{split}
          \left( \E \! \left[ \abs{ \frac{1}{M} \left[ \sum_{j=1}^M X_j \right] - \frac{1}{M} \left[ \sum_{j=1}^M \E [X_j] \right] }^p \right] \right)^{\! \!  \nicefrac{1}{p}} 
          &= \frac{1}{M} \left(   \E \!  \left[\abs{ \sum_{j=1}^M X_j - \sum_{j=1}^M \E [X_j] } ^p\right] \right)^{\! \! \nicefrac{1}{p}} \\
          &\leq  \sqrt{\frac{p-1}{M}} \left( \E \bigl[ \abs{ X_1 - \E [X_1] } ^p \bigr] \right)^{\nicefrac{1}{p}}.
    \end{split}
\end{equation}
The proof of \cref{cor:lpmontecarlo} is thus complete.
\end{proof}
\begin{lemma} \label{lem:vartrivial}
Let $p \in [2, \infty)$, let $(\Omega, \cF, \P)$ be a probability space, and let $Y \colon \Omega \to [0,1]$ be a random variable. Then
\begin{equation}
    \E  \bigl[ | Y-\E[Y]|^p \bigr] \leq \tfrac{1}{4}.
\end{equation}
\end{lemma}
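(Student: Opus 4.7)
The plan is to reduce the $p$-th moment to the variance and then bound the variance of a $[0,1]$-valued random variable by $\tfrac14$ using the classical one-line argument. The key observation is that since $Y$ takes values in $[0,1]$ and $\E[Y] \in [0,1]$, the centered random variable satisfies $|Y - \E[Y]| \leq 1$ pointwise. Therefore, for $p \in [2,\infty)$, we have $|Y - \E[Y]|^p \leq |Y - \E[Y]|^2$, so it suffices to prove the bound for $p = 2$.

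Next I would compute $\E\bigl[|Y - \E[Y]|^2\bigr] = \E[Y^2] - (\E[Y])^2$ via the standard variance identity. Using that $Y \in [0,1]$ implies $Y^2 \leq Y$ almost surely, I obtain $\E[Y^2] \leq \E[Y]$, and hence
\begin{equation}
\E\bigl[|Y-\E[Y]|^2\bigr] \leq \E[Y] - (\E[Y])^2 = \E[Y]\bigl(1-\E[Y]\bigr).
\end{equation}
The elementary inequality $x(1-x) \leq \tfrac14$ for $x \in [0,1]$ (which follows from $(x-\tfrac12)^2 \geq 0$) then yields the desired bound $\tfrac14$.

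Chaining these two reductions gives $\E[|Y-\E[Y]|^p] \leq \E[|Y-\E[Y]|^2] \leq \tfrac14$, completing the proof. There is no substantial obstacle here; the statement is a routine preliminary estimate, and the only point worth flagging is to justify the reduction from general $p \geq 2$ to $p = 2$ carefully using that the centered variable is bounded in $[-1,1]$.
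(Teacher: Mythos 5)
Your proof is correct and follows essentially the same route as the paper: reduce the $p$-th moment to the second moment using $|Y-\E[Y]|\leq 1$, then bound the variance by $\tfrac14$ via $\E[Y^2]\leq\E[Y]$ and the elementary inequality $x(1-x)\leq\tfrac14$ (the paper phrases this last step as $u\leq u^2+\tfrac14$ for all $u\in\R$, which is equivalent). No substantive difference.
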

\begin{proof}[Proof of \cref{lem:vartrivial}]
The assumption that $\forall \, \omega \in \Omega \colon 0 \leq Y(\omega) \leq 1$ ensures that $\E[Y^2] \leq \E[Y]$. Combining this and the fact that $\forall \, u \in \R \colon u \leq u^2+\tfrac{1}{4}$ shows that
\begin{equation} \label{eq:vartrivial:1}
      \E \bigl[ | Y-\E[Y]|^2 \bigr] = \E[Y^2] - \left( \E[Y] \right)^2 \leq  \E[Y]- \left( \E[Y] \right)^2 \leq \tfrac{1}{4}.
\end{equation}
Furthermore, note that the assumption that $p \geq 2$ implies that for all $v \in [0,1]$ it holds that $v^p \leq v^2$. The facts that $0 \leq Y \leq 1$ and $0 \leq \E[Y] \leq 1$ hence prove that $|Y-\E[Y]|^p \leq |Y-\E [Y]|^2$. Combining this and \eqref{eq:vartrivial:1} demonstrates that
\begin{equation}
     \E \bigl[ | Y-\E[Y]|^p \bigr] \leq   \E \bigl[ | Y-\E[Y]|^2 \bigr] \leq \tfrac{1}{4}.
\end{equation}
The proof of \cref{lem:vartrivial} is thus complete.
\end{proof}
\begin{cor} \label{cor:lpvar}
Let $D \in (0, \infty)$, $p \in [2, \infty)$, let $(\Omega, \cF, \P)$ be a probability space, and let $Y \colon \Omega \to [0, D]$ be a random variable. Then
\begin{equation} \label{eq:lpvar}
    \left( \E \bigl[ |Y-\E[Y]|^p \bigr] \right)^{\nicefrac{1}{p}} \leq 4^{-\nicefrac{1}{p}} D.
\end{equation}
\end{cor}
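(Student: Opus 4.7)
The proof plan is to reduce to \cref{lem:vartrivial} by a simple rescaling argument. Define the normalized random variable $Z \colon \Omega \to [0,1]$ by $Z = Y/D$, which is well-defined since $D > 0$ and takes values in $[0,1]$ by the hypothesis on $Y$.

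By linearity of expectation, $\E[Z] = \E[Y]/D$, so that $Y - \E[Y] = D(Z - \E[Z])$ pointwise. Raising to the $p$-th power and taking expectations yields
\begin{equation}
\E\bigl[|Y - \E[Y]|^p\bigr] = D^p\, \E\bigl[|Z - \E[Z]|^p\bigr].
\end{equation}
Now I would apply \cref{lem:vartrivial} to the $[0,1]$-valued random variable $Z$, which gives $\E[|Z - \E[Z]|^p] \leq \tfrac{1}{4}$. Combining these two facts produces $\E[|Y - \E[Y]|^p] \leq D^p/4$, and taking $p$-th roots yields the claimed bound $4^{-\nicefrac{1}{p}} D$.

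There is essentially no obstacle here: the lemma is a direct corollary of \cref{lem:vartrivial}, and the scaling argument is routine. The only subtlety worth checking is that the hypothesis $D > 0$ (rather than $D \geq 0$) is needed to make the division by $D$ meaningful, and that the hypothesis $p \geq 2$ is inherited solely through the appeal to \cref{lem:vartrivial}.
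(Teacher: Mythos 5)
Your proposal is correct and matches the paper's proof exactly: the paper applies \cref{lem:vartrivial} to the rescaled random variable $\omega \mapsto D^{-1}Y(\omega)$ and scales the resulting bound by $D$, which is precisely your argument spelled out in slightly more detail.
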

\begin{proof}[Proof of \cref{cor:lpvar}]
Note that \cref{lem:vartrivial} (applied with $Y \with \bigl[ \Omega \ni \omega \mapsto D^{-1} Y(\omega) \in [0,1]\bigr]$ in the notation of \cref{lem:vartrivial}) establishes \eqref{eq:lpvar}. The proof of \cref{cor:lpvar} is thus complete.
\end{proof}

\subsection{Upper bounds for the generalization error}
\label{subsection:generalization}

\cfclear
\begin{lemma}
\label{lem:abstract_generalization_error}
Let
$ M \in \N $,
$ p \in [ 2, \infty ) $,
$ L , D \in ( 0, \infty ) $,
let
$ ( E, \delta ) $
be a separable metric space,
assume
$ E \neq \emptyset $,
let
$ ( \Omega, \cF, \P ) $
be a probability space,
for every $x \in E$ let
$ (X_{ x, j }, Y_j) \colon \Omega \to \R \times \R  $,
$ j \in \{ 1, 2, \ldots, M \} $,
be i.i.d.\ random variables,
assume for all
$ x, y \in E $,
$ j \in \{ 1, 2, \ldots, M \} $
that
$ \abs{ X_{ x, j } - Y_j } \leq D $
and
$ \abs{ X_{ x, j } - X_{ y, j } } \leq L \delta( x, y ) $,
let
$ \bfR \colon E \to [ 0, \infty ) $
satisfy for all
$ x \in E $
that
$ \bfR( x ) = \E \bigl[ \abs{ X_{ x, 1 } - Y_1 } ^2 \bigr] $,
and
let
$ \cR \colon E \times \Omega \to [ 0, \infty ) $
satisfy for all
$ x \in E $,
$ \omega \in \Omega $
that
\begin{equation}
\cR( x, \omega )
= \frac{1}{M} \left[
\sum_{j=1}^M  \abs{ X_{ x, j }( \omega ) - Y_j( \omega ) }^2 \right].
\end{equation}
Then \cfadd{def:covering_number}
\begin{enumerate}[(i)]
\item
\label{item:lem:abstract_generalization_error:1}
the function
$ \Omega \ni \omega \mapsto
\sup\nolimits_{ x \in E } \abs{ \cR( x, \omega ) - \bfR( x ) } \in [ 0, \infty ] $
is measurable
and
\item
\label{item:lem:abstract_generalization_error:2}
it holds for all $C \in (0, \infty)$ that
\begin{equation} \begin{split}
\bigl( \E\bigl[
    \sup\nolimits_{ x \in E }
    \abs{ \cR( x ) - \bfR( x ) } ^p
\bigr] \bigr)^{ \nicefrac{1}{p} }
&\leq
\Bigl( \cC_{ ( E, \delta ), \frac{ C D \sqrt{ p - 1 } }{ 2 L \sqrt{M} } } \Bigr)^{ \! \nicefrac{1}{p} }
\biggl[ \frac{ (2C + 4^{-\nicefrac{1}{p}} ) D ^2 \sqrt{ p - 1 } }{ \sqrt{M} } \biggr]
\\ &\leq \Bigl( \cC_{ ( E, \delta ), \frac{ C D \sqrt{ p - 1 } }{ 2 L \sqrt{M} } } \Bigr)^{ \! \nicefrac{1}{p} }
\biggl[ \frac{ (2C + 1 ) D ^2 \sqrt{ p - 1 } }{ \sqrt{M} } \biggr]
\end{split}
\end{equation}
\end{enumerate}
\cfload.
\end{lemma}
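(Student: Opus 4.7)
The strategy is the standard chaining-by-covering argument: cover $E$ by a finite $\epsilon$-net whose cardinality is controlled by the covering number at an appropriate scale, reduce the supremum to a finite maximum via a Lipschitz continuity estimate on $x \mapsto \cR(x,\omega) - \bfR(x)$, and then control the finite maximum by a uniform $L^p$-Monte Carlo bound together with a crude $\|\max\|_p \leq n^{1/p}\max\|\cdot\|_p$ inequality. The measurability claim \eqref{item:lem:abstract_generalization_error:1} follows by observing that $x \mapsto \cR(x,\omega) - \bfR(x)$ is (deterministically) Lipschitz continuous on $E$, so by separability of $E$ the supremum over $E$ agrees with the supremum over a countable dense subset, which is a measurable function.

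For \eqref{item:lem:abstract_generalization_error:2} the first step is the Lipschitz estimate. From $|X_{x,j}-Y_j|,|X_{y,j}-Y_j|\leq D$ and $|X_{x,j}-X_{y,j}|\leq L\delta(x,y)$ we get, via the identity $a^2-b^2=(a-b)(a+b)$, that $\bigl||X_{x,j}-Y_j|^2-|X_{y,j}-Y_j|^2\bigr|\leq 2LD\,\delta(x,y)$, and therefore for all $x,y\in E$
\begin{equation}
\abs{\cR(x)-\cR(y)} \leq 2LD\,\delta(x,y),
\qquad
\abs{\bfR(x)-\bfR(y)} \leq 2LD\,\delta(x,y).
\end{equation}
Set $\epsilon = \tfrac{CD\sqrt{p-1}}{2L\sqrt{M}}$ and let $\scrD\subseteq E$ be a (minimal) $\epsilon$-net, so that $|\scrD|\leq \cC_{(E,\delta),\epsilon}$. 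For each $x\in E$ choose $y(x)\in\scrD$ with $\delta(x,y(x))\leq \epsilon$; then
\begin{equation}
\sup\nolimits_{x\in E}\abs{\cR(x)-\bfR(x)} \leq 4LD\,\epsilon + \max\nolimits_{y\in\scrD}\abs{\cR(y)-\bfR(y)}
= \frac{2CD^2\sqrt{p-1}}{\sqrt{M}} + \max\nolimits_{y\in\scrD}\abs{\cR(y)-\bfR(y)}.
\end{equation}

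The second step is to bound the $L^p$-norm of $\max_{y\in\scrD}|\cR(y)-\bfR(y)|$. For each fixed $y\in\scrD$ the random variables $|X_{y,j}-Y_j|^2$, $j\in\{1,\ldots,M\}$, are i.i.d.\ with values in $[0,D^2]$, so \cref{cor:lpmontecarlo} (applied with $X_j\with |X_{y,j}-Y_j|^2$) combined with \cref{cor:lpvar} (applied with $D\with D^2$) yields
\begin{equation}
\bigl(\E\bigl[\abs{\cR(y)-\bfR(y)}^p\bigr]\bigr)^{\nicefrac{1}{p}}
\leq \sqrt{\tfrac{p-1}{M}}\cdot 4^{-\nicefrac{1}{p}}D^2.
\end{equation}
A union bound in $L^p$ (i.e.\ $\E[\max_{y\in\scrD}Z_y^p]\leq\sum_{y\in\scrD}\E[Z_y^p]$) then gives
\begin{equation}
\bigl(\E\bigl[\max\nolimits_{y\in\scrD}\abs{\cR(y)-\bfR(y)}^p\bigr]\bigr)^{\nicefrac{1}{p}}
\leq |\scrD|^{\nicefrac{1}{p}}\,\sqrt{\tfrac{p-1}{M}}\cdot 4^{-\nicefrac{1}{p}}D^2
\leq (\cC_{(E,\delta),\epsilon})^{\nicefrac{1}{p}}\cdot \tfrac{4^{-\nicefrac{1}{p}}D^2\sqrt{p-1}}{\sqrt{M}}.
\end{equation}
Taking $L^p$-norms in the pointwise estimate above and applying Minkowski's inequality, together with $\cC_{(E,\delta),\epsilon}\geq 1$ (since $E\neq\emptyset$), yields the first asserted bound. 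The second inequality in the claim follows from $4^{-\nicefrac{1}{p}}\leq 1$ for $p\geq 2$. The main subtlety is keeping the constants sharp enough so that $4^{-\nicefrac{1}{p}}$ (rather than a worse constant) appears, which is exactly the improvement over the analogous estimate in \cite{JentzenWelti2020arxiv}; this is where the explicit bound from \cref{cor:lpvar} must be used in place of a cruder variance estimate.
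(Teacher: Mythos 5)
Your proof is correct and takes essentially the same approach as the paper's: a covering/chaining argument using the Lipschitz continuity of $x\mapsto\cR(x,\omega)-\bfR(x)$, a union bound over the $\epsilon$-net, and the $L^p$ Monte Carlo estimate from \cref{cor:lpmontecarlo} together with the sharpened moment bound from \cref{cor:lpvar}. The paper's proof merely abbreviates this by citing the analogous argument in Jentzen--Welti and noting that only the improved constant $4^{-\nicefrac{1}{p}}D^2$ (from \cref{cor:lpvar}) and the improved Monte Carlo factor need to be substituted, which is exactly what you have spelled out in full.
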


\begin{proof}[Proof of \cref{lem:abstract_generalization_error}]
Throughout this proof
let
$ \cY_{ x, j } \colon \Omega \to \R $,
$ j \in \{ 1, 2, \ldots, M \} $,
$ x \in E $,
satisfy for all
$ x \in E $,
$ j \in \{ 1, 2, \ldots, M \} $
that
$ \cY_{ x, j } = \abs{ X_{ x, j } - Y_j }^2 $.
Observe that the assumption that $\forall \, x \in E,
\, j \in \{ 1, 2, \ldots, M \} \colon
\abs{ X_{ x, j } - Y_j } \leq D $
shows for all
$ x \in E $,
$ j \in \{ 1, 2, \ldots, M \} $
that $0 \leq \cY_{x,j} \leq D^2$. \cref{cor:lpvar} therefore implies for all $x \in E$, $j \in \{1,2, \ldots, M\}$ that
\begin{equation}\label{eq:bound_Lp_centred}
   \bigl( \E\bigl[ |\cY_{ x, j } - \E[ \cY_{ x, j } ] |^p \bigr] \bigr)^{ \nicefrac{1}{p} }
\leq 4^{-\nicefrac{1}{p}} D^2.
\end{equation}
The rest of the proof is identical to the proof of \cite[Lemma 4.13]{JentzenWelti2020arxiv}, the only difference being that we apply the improved Monte Carlo estimate \cref{cor:lpmontecarlo} instead of \cite[Corollary 4.5]{JentzenWelti2020arxiv}.
The proof of \cref{lem:abstract_generalization_error} is thus complete.
\end{proof}

\cfclear
\begin{prop}
\label{prop:generalization_error}
Let $ d, \bfd, M \in \N $, $ L, D, p \in ( 0, \infty ) $, $ \alpha \in \R $, $ \beta \in ( \alpha, \infty ) $, $ \scrD \subseteq \R^d $, let
$ ( \Omega, \cF, \P ) $
be a probability space,
let
$ (X_j, Y_j) \colon \Omega \to \scrD \times \R $,
$ j \in \{ 1, 2, \ldots, M \} $,
be i.i.d.\ random variables,
let
$ H = ( H_\theta )_{ \theta \in [ \alpha, \beta ]^\bfd } \colon
[ \alpha, \beta ]^\bfd \to C( \scrD, \R ) $
be a function,
assume for all
$ \theta, \vartheta \in [ \alpha, \beta ]^\bfd $,
$ j \in \{ 1, 2, \ldots, M \} $,
$ x \in \scrD $
that
$ \abs{ H_\theta( X_j ) - Y_j } \leq D $
and
$ \abs{ H_\theta( x ) - H_\vartheta( x ) }
\leq
L \norm{  \theta - \vartheta }_\infty $,
let
$ \bfR \colon [ \alpha, \beta ]^\bfd \to [ 0, \infty ) $
satisfy for all
$ \theta \in [ \alpha, \beta ]^\bfd $
that
$ \bfR( \theta )
= \E[ \abs{ H_\theta( X_1 ) - Y_1 }^2 ] $,
and
let
$ \cR \colon [ \alpha, \beta ]^\bfd \times \Omega \to [ 0, \infty ) $
satisfy for all
$ \theta \in [ \alpha, \beta ]^\bfd $,
$ \omega \in \Omega $
that \cfadd{def:p-norm}
\begin{equation}
\cR( \theta, \omega )
= \frac{1}{M} \left[ \sum_{j=1}^M
    \abs{ H_\theta( X_j( \omega ) ) - Y_j( \omega ) }^2 \right]
\end{equation}
\cfload. Then 
\begin{enumerate}[(i)]
\item
\label{item:prop:generalization_error:1}
the function
$ \Omega \ni \omega \mapsto
\sup\nolimits_{ \theta \in [ \alpha, \beta ]^\bfd } \abs{ \cR( \theta, \omega ) - \bfR( \theta ) }
\in [ 0, \infty ] $
is measurable
and
\item \label{item:prop:generalization_error:2}
it holds that
\begin{equation}
\bigl( \E\bigl[
    \sup\nolimits_{ \theta \in [ \alpha, \beta ]^\bfd }
    \abs{ \cR( \theta ) - \bfR( \theta ) } ^p \bigr]
\bigr)^{ \nicefrac{1}{p} } 
 \leq \frac{2 D^2
\sqrt{ e \max\{ p, 1, \bfd \ln( 16 M L^2 ( \beta - \alpha )^2  D ^{ -2 } ) \} }
}{ \sqrt{M} }.
\end{equation}
\end{enumerate}
\end{prop}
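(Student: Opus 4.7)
The plan is to apply \cref{lem:abstract_generalization_error} to the metric space $(E, \delta) := ([\alpha, \beta]^\bfd, \norm{\cdot}_\infty)$ with the parametric family $X_{\theta, j} := H_\theta(X_j)$ indexed by $\theta \in E$ and $j \in \{1, 2, \ldots, M\}$. The hypotheses of \cref{lem:abstract_generalization_error} are immediate: $E$ is separable under the sup-norm, $\abs{X_{\theta, j} - Y_j} \leq D$ by assumption, and $\abs{X_{\theta, j} - X_{\vartheta, j}} \leq L \norm{\theta - \vartheta}_\infty$ by the Lipschitz hypothesis on $H$. Item (i) of the proposition is thus an immediate consequence of item (i) of \cref{lem:abstract_generalization_error}.

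For item (ii), I would plug the elementary covering number estimate $\cC_{([\alpha, \beta]^\bfd, \norm{\cdot}_\infty), r} \leq \ceil{(\beta - \alpha)/(2r)}^\bfd$ (valid for every $r > 0$, obtained by placing centers on a uniform grid of spacing $\leq 2r$) into the conclusion of \cref{lem:abstract_generalization_error}. Abbreviate $\alpha_0 := 16 M L^2 (\beta - \alpha)^2 D^{-2}$ and set $p^* := \max\{p, 2, \bfd \ln(\alpha_0)\}$, so that $p^* \geq 2$ in every case. By monotonicity of $L^q$-norms one has $(\E[\sup\nolimits_\theta \abs{\cR(\theta) - \bfR(\theta)}^p])^{\nicefrac{1}{p}} \leq (\E[\sup\nolimits_\theta \abs{\cR(\theta) - \bfR(\theta)}^{p^*}])^{\nicefrac{1}{p^*}}$, so it suffices to bound the right-hand side.

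To bound this I would apply \cref{lem:abstract_generalization_error} with $p \with p^*$ and $C \with \nicefrac{1}{2}$, which gives $2C + 1 = 2$ and radius $r = D\sqrt{p^* - 1}/(4L\sqrt{M})$. The covering number is then bounded by $\ceil{2L(\beta - \alpha)\sqrt{M}/(D\sqrt{p^* - 1})}^\bfd$; short elementary arithmetic using $\sqrt{p^* - 1} \geq 1$, absorbing the ceiling via a case distinction on whether $\sqrt{\alpha_0}/2 \leq 1$, and exploiting that $p^* \geq \bfd \ln(\alpha_0)$ by construction shows that the $\nicefrac{1}{p^*}$-th power of this covering number is bounded by $\exp(\bfd \ln(\alpha_0)/(2p^*)) \leq e^{\nicefrac{1}{2}}$. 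Combined with $(2C+1) = 2$, \cref{lem:abstract_generalization_error} thus produces the bound $2\sqrt{e} \cdot D^2 \sqrt{p^* - 1}/\sqrt{M} \leq 2 D^2 \sqrt{e p^*}/\sqrt{M}$, and the elementary inequality $p^* \leq 2 \max\{p, 1, \bfd \ln(\alpha_0)\}$ (checked by cases on which of the three terms attains the maximum) absorbs the remaining constant factor to give the claimed bound.

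The main obstacle is the careful balancing of the constant $C$ against the enlargement of $p$ to $p^*$, so that the covering-number factor $\cC^{\nicefrac{1}{p^*}}$ is controlled by a bounded multiplicative constant rather than blowing up with $\bfd$; in particular the choice $p^* \geq \bfd \ln(\alpha_0)$ is what kills the $\bfd$-th power in the covering number estimate. Beyond this technical juggling the argument is essentially identical to the proof of \cite[Corollary 4.15]{JentzenWelti2020arxiv}, with the sharper Monte Carlo inequality \cref{cor:lpmontecarlo} (used inside the proof of \cref{lem:abstract_generalization_error}) providing the improved leading constant in the final bound.
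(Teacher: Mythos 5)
Your overall strategy matches the paper's: invoke \cref{lem:abstract_generalization_error} on $([\alpha,\beta]^{\bfd},\|\cdot\|_\infty)$, use the grid covering-number bound $\cC_{(E,\delta),r}\leq\lceil(\beta-\alpha)/(2r)\rceil^{\bfd}$, enlarge $p$ to a power $p^*$ large enough to neutralize the $\bfd$-th power, and choose $C=\nicefrac12$. The covering-number computation you sketch is correct. However, there is a genuine slip in the final step that leaves you short of the stated constant.

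Concretely: after establishing $\cC^{\nicefrac{1}{p^*}}\leq e^{\nicefrac12}$ and $(2C+1)=2$, \cref{lem:abstract_generalization_error} yields $2\sqrt{e}\,D^2\sqrt{p^*-1}/\sqrt{M}$. You then write $\sqrt{p^*-1}\leq\sqrt{p^*}$ and finish with $p^*\leq 2\max\{p,1,\bfd\ln(\alpha_0)\}$. But chaining these gives
\begin{equation*}
\frac{2\sqrt{e}\,D^2\sqrt{p^*-1}}{\sqrt{M}}\leq\frac{2D^2\sqrt{e\,p^*}}{\sqrt{M}}\leq\frac{2\sqrt{2}\,D^2\sqrt{e\max\{p,1,\bfd\ln(\alpha_0)\}}}{\sqrt{M}},
\end{equation*}
which is \emph{larger} than the target by a factor of $\sqrt{2}$; the inequality $p^*\leq 2\max\{\cdots\}$ does not "absorb" anything here, it introduces the extra factor. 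The intended (and loss-free) observation is
\begin{equation*}
p^*-1=\max\{2,p,\bfd\ln(\alpha_0)\}-1=\max\{1,\,p-1,\,\bfd\ln(\alpha_0)-1\}\leq\max\{1,\,p,\,\bfd\ln(\alpha_0)\},
\end{equation*}
so that $\sqrt{p^*-1}\leq\sqrt{\max\{p,1,\bfd\ln(\alpha_0)\}}$ directly, giving the claimed bound without the spurious $\sqrt{2}$. Replacing your last two inequalities by this single one completes the proof.
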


\begin{proof}[Proof of \cref{prop:generalization_error}]
Analogously to the proof of \cite[Proposition 4.14]{JentzenWelti2020arxiv}, one can show that \cref{lem:abstract_generalization_error} implies that the function
$ \Omega \ni \omega \mapsto
\sup\nolimits_{ \theta \in [ \alpha, \beta ]^\bfd } \abs{ \cR( \theta, \omega ) - \bfR( \theta ) }
\in [ 0, \infty ] $
is measurable
and
\begin{equation}
\begin{split}
    & \bigl( \E\bigl[
    \sup\nolimits_{ \theta \in [ \alpha, \beta ]^\bfd }
    \abs{ \cR( \theta ) - \bfR( \theta ) }^p \bigr] \bigr)^{ \nicefrac{1}{p} } \\&
    \leq  \inf_{ C \in ( 0, \infty ) }
\left[ \frac{ (2C + 1 ) D^2
\sqrt{ e \max\{ 1, p, \bfd \ln( 4 M L^2 ( \beta - \alpha )^2 ( C D )^{ -2 } ) \} }
}{ \sqrt{M} } \right] 
\\& \leq  \frac{2 D^2
\sqrt{ e \max\{ p, 1, \bfd \ln( 16 M L^2 ( \beta - \alpha )^2  D ^{ -2 } ) \} }
}{ \sqrt{M} }.
\end{split} 
\end{equation}
The proof of \cref{prop:generalization_error} is thus complete.
\end{proof}

\cfclear
\begin{cor}
\label{cor:generalization_error}
Let
$ d, \bfd, \bfL, M \in \N $,
$ \beta , c \in [ 1, \infty ) $, $p \in (0, \infty)$,
$ u \in \R $,
$ v \in [ u + 1, \infty ) $,
$ \bfl = ( \bfl_0, \bfl_1, \ldots, \bfl_\bfL ) \in \N^{ \bfL + 1 } $,
$ \scrD \subseteq [ -c, c ]^d $,
assume
$ \bfl_0 = d $,
$ \bfl_\bfL = 1 $,
and
$ \bfd \geq \sum_{i=1}^{\bfL} \bfl_i( \bfl_{ i - 1 } + 1 ) $,
let
$ ( \Omega, \cF, \P ) $
be a probability space,
let
$ (X_j,Y_j) \colon \Omega \to \scrD \times [u,v] $, $j \in \{1,2, \ldots, M\}$, be i.i.d.\ random variables,
let
$ \bfR \colon [ -\beta, \beta ]^\bfd \to [ 0, \infty ) $
satisfy for all
$ \theta \in [ -\beta, \beta ]^\bfd $
that
$ \bfR( \theta )
= \E \bigl[ \abs{ \scrN^{\theta, \bfl}_{u,v}( X_1 ) - Y_1 }^2 \bigr] $,
and let
$ \cR \colon [ -\beta, \beta ]^\bfd \times \Omega \to [ 0, \infty ) $
satisfy for all
$ \theta \in [ -\beta, \beta ]^\bfd $,
$ \omega \in \Omega $
that 
\begin{equation}
\cR( \theta, \omega )
= \frac{1}{M} \left[ \sum_{j=1}^M
    \abs{ \scrN^{\theta,\bfl}_{u, v}( X_j( \omega ) ) - Y_j( \omega ) }^2 \right]
\end{equation}
\cfload.
Then
\begin{enumerate}[(i)]
\item
\label{item:cor:generalization_error:1}
the function
$ \Omega \ni \omega \mapsto
\sup\nolimits_{ \theta \in [ -\beta, \beta ]^\bfd } \abs{ \cR( \theta, \omega ) - \bfR( \theta ) }
\in [ 0, \infty ] $
is measurable
and
\item
\label{item:cor:generalization_error:2}
it holds that \cfadd{def:p-norm}
\begin{equation}
\begin{split}
& \bigl(
\E\bigl[
    \sup\nolimits_{ \theta \in [ -\beta, \beta ]^\bfd }
    \abs{ \cR( \theta ) - \bfR( \theta ) }^p
\bigr]
\bigr)^{ \nicefrac{1}{p} }
\\ &
\leq \frac{
    5 ( v - u )^2
    \bfL (\norm{\bfl}_{ \infty } + 1 )
    \sqrt{
        \max\{
            p,
            \ln( 8 ( M c )^{ \nicefrac{1}{ \bfL } } ( \norm{\bfl}_{ \infty } + 1 ) \beta )
        \}
    }
}{ \sqrt{M} }
\\ &
\leq
\frac{
    5 ( v - u )^2
    \bfL ( \norm{\bfl}_{ \infty } + 1 )^{\nicefrac{3}{2}}
    \max\{
        p,
        \ln( 4 M \beta c )
    \}
}{ \sqrt{M} }
\end{split}
\end{equation}
\end{enumerate}
\cfout.
\end{cor}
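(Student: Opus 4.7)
The plan is to deduce this corollary from \cref{prop:generalization_error} by an appropriate choice of the hypothesis class $H = (H_\theta)_\theta$ and the associated Lipschitz and boundedness constants. Concretely, I would apply \cref{prop:generalization_error} with $\alpha \with -\beta$, with $H_\theta \with \scrN^{\theta,\bfl}_{u,v}$ for $\theta \in [-\beta,\beta]^\bfd$, and with a suitable $L$ and $D$. Since $\scrN^{\theta,\bfl}_{u,v}$ takes values in $[u,v]$ by \cref{def:clippedDNN} and $Y_1 \in [u,v]$ by assumption, one may take $D = v - u$. The nontrivial ingredient is the parameter-Lipschitz constant: using the standard telescoping estimate layer by layer (writing $\scrN^{\theta,\bfl}_{u,v}$ as a composition of affine maps and $1$-Lipschitz activations, and using that $\max\{1,\beta,c\} \leq \beta c$ since $\beta,c \geq 1$), one obtains for all $\theta,\vartheta \in [-\beta,\beta]^\bfd$, $x \in \scrD$ an estimate of the form
\begin{equation}
\abs{\scrN^{\theta,\bfl}_{u,v}(x) - \scrN^{\vartheta,\bfl}_{u,v}(x)} \leq L \norm{\theta - \vartheta}_\infty,
\qquad\text{with}\qquad L = \bfL (\norm{\bfl}_\infty + 1)^\bfL (\beta c)^{\bfL-1}.
\end{equation}
This is exactly the content of, e.g., \cite[Proposition 4.4 / Lemma 4.10]{JentzenWelti2020arxiv}, which I would simply cite rather than redo.

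Next, I would observe that parameters past index $\sum_{i=1}^\bfL \bfl_i(\bfl_{i-1}+1)$ do not affect $\scrN^{\theta,\bfl}_{u,v}$, so for the purposes of the covering-number argument in \cref{prop:generalization_error} one can replace $\bfd$ by
\begin{equation}
\bfd' \;=\; \smallsum_{i=1}^{\bfL} \bfl_i(\bfl_{i-1}+1) \;\leq\; \bfL\,\norm{\bfl}_\infty(\norm{\bfl}_\infty+1) \;\leq\; \bfL (\norm{\bfl}_\infty+1)^2.
\end{equation}
Plugging $\alpha = -\beta$, $D = v-u \geq 1$, and the above $L$ and $\bfd'$ into the conclusion of \cref{prop:generalization_error} therefore yields
\begin{equation}
\bigl(\E\bigl[\sup\nolimits_{\theta \in [-\beta,\beta]^\bfd}\abs{\cR(\theta)-\bfR(\theta)}^p\bigr]\bigr)^{\nicefrac{1}{p}}
\leq \frac{2(v-u)^2 \sqrt{e\max\{p,1,\bfd'\ln(64 M L^2 \beta^2 (v-u)^{-2})\}}}{\sqrt{M}}.
\end{equation}

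The remaining work is purely arithmetic bookkeeping of the logarithm. Using $\ln(L^2) \leq 2\bfL\ln((\norm{\bfl}_\infty+1)\beta c) + 2\ln\bfL$ together with $\bfd' \leq \bfL(\norm{\bfl}_\infty+1)^2$, one gets
\begin{equation}
\bfd' \ln\bigl(64 M L^2 \beta^2 (v-u)^{-2}\bigr) \;\lesssim\; \bfL^2(\norm{\bfl}_\infty+1)^2 \ln\bigl(8(Mc)^{\nicefrac{1}{\bfL}}(\norm{\bfl}_\infty+1)\beta\bigr),
\end{equation}
where I pull a factor $\bfL$ inside the logarithm as $(Mc)^{1/\bfL}$ and absorb $v-u \geq 1$ and numerical constants. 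Taking square roots gives the first claimed bound with constant $5$, and for the second bound I use the elementary estimate $\sqrt{\ln x} \leq \sqrt{\norm{\bfl}_\infty+1}\max\{1,\ln x\}$ together with $(Mc)^{1/\bfL} \leq Mc$. The measurability claim in item \eqref{item:cor:generalization_error:1} is inherited directly from item \eqref{item:prop:generalization_error:1} of \cref{prop:generalization_error}. The only real obstacle is the constant tracking in the logarithm; the Lipschitz estimate itself is a direct consequence of results already available in \cite{JentzenWelti2020arxiv}.
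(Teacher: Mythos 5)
Your overall plan is correct and is essentially the same route as the paper's proof: apply \cref{prop:generalization_error} with $\alpha = -\beta$, $H_\theta = \scrN^{\theta,\bfl}_{u,v}$, $D = v - u$, replace the effective parameter dimension by $\bfd' = \sum_{i=1}^\bfL \bfl_i(\bfl_{i-1}+1) \leq \bfL(\norm{\bfl}_\infty+1)^2$, and then massage the logarithm using $64\bfL^2 \leq 2^{6\bfL}$ and similar elementary bounds to extract the factors $\bfL(\norm{\bfl}_\infty+1)$ and $\bfL(\norm{\bfl}_\infty+1)^{3/2}$. The measurability claim does indeed carry over directly.

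However, the parameter-Lipschitz constant you write down is not the one the argument needs, and this is not a cosmetic issue. The paper takes
$L = c\, \bfL (\norm{\bfl}_\infty+1)^\bfL \beta^{\bfL-1}$,
i.e.\ the input bound $c$ appears \emph{once}, while the parameter bound $\beta$ appears to the power $\bfL-1$. This reflects the structure of the layer-by-layer telescoping estimate: $\|x\|_\infty \leq c$ enters only through the first affine map, while $\beta$ propagates through the remaining $\bfL-1$ weight matrices. You instead propose $L = \bfL(\norm{\bfl}_\infty+1)^\bfL(\beta c)^{\bfL-1}$. For $\bfL = 1$ this is $\norm{\bfl}_\infty+1$, which is \emph{too small} (it drops the $c$), so it is not even a valid upper bound on the parameter-Lipschitz constant. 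For $\bfL \geq 3$ it is a valid but strictly larger bound, and then the bookkeeping fails to produce the stated logarithm: from $L^2 = \bfL^2(\norm{\bfl}_\infty+1)^{2\bfL}\beta^{2\bfL-2}c^{2\bfL-2}$ one gets $64 M L^2 \beta^2 \leq 2^{6\bfL} M (\norm{\bfl}_\infty+1)^{2\bfL}\beta^{2\bfL}c^{2\bfL}$ at best, and dividing the exponent by $2\bfL$ yields $\ln\bigl(8\, M^{1/(2\bfL)}\,(\norm{\bfl}_\infty+1)\,\beta\, c\bigr)$, \emph{not} $\ln\bigl(8\,(Mc)^{1/\bfL}(\norm{\bfl}_\infty+1)\beta\bigr)$. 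The latter dominates the former only when $c^{2(\bfL-1)} \leq M$, which is not assumed. With the paper's $L$ the factor $c$ contributes only $c^2$ inside the log, and after dividing by $2\bfL$ you get precisely the $(Mc)^{1/\bfL}$ in the stated bound. So the exponent on $c$ in $L$ is load-bearing; you should replace $(\beta c)^{\bfL-1}$ by $c\,\beta^{\bfL-1}$ (the form coming from \cite[Corollary 2.37]{BeckJentzenKuckuck2019arXiv}), after which the rest of your plan matches the paper's proof.

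One further small gap: for the second displayed inequality, going from $\ln(8(Mc)^{1/\bfL}(\norm{\bfl}_\infty+1)\beta)$ to $\ln(4M\beta c)$ requires absorbing the potentially large factor $\norm{\bfl}_\infty+1$ out of the logarithm, not just bounding $(Mc)^{1/\bfL}\leq Mc$. The paper does this via $8(\norm{\bfl}_\infty+1) \leq 4^{\norm{\bfl}_\infty+1}$ and then pulling the exponent $\norm{\bfl}_\infty+1$ outside the log, yielding the extra $(\norm{\bfl}_\infty+1)^{1/2}$ and the argument $4M\beta c$; you will want to make that step explicit.
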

\begin{proof}[Proof of \cref{cor:generalization_error}]
Throughout this proof let
$ L \in ( 0, \infty ) $
be given by $ L = c \bfL ( \norm{\bfl}_\infty + 1 )^\bfL \beta^{ \bfL - 1 } $. Using the same arguments as in the proof of \cite[Corollary 4.15]{JentzenWelti2020arxiv}, we obtain from \cite[Corollary 2.37]{BeckJentzenKuckuck2019arXiv} and \cref{prop:generalization_error} that the function $ \Omega \ni \omega \mapsto
\sup\nolimits_{ \theta \in [ -\beta, \beta ]^\bfd } \abs{ \cR( \theta, \omega ) - \bfR( \theta ) }
\in [ 0, \infty ] $
is measurable
and 
\begin{equation}
         \bigl( \E\bigl[
    \sup\nolimits_{ \theta \in [ -\beta, \beta ]^\bfd }
    \abs{ \cR( \theta ) - \bfR( \theta ) }^p
\bigr] \bigr)^{ \nicefrac{1}{p} }
\leq
\frac{ 2 ( v - u )^2
\sqrt{ e \max\{ 1, p, \bfL (\norm{\bfl}_{ \infty } + 1 )^2 \ln( 64 M L^2 \beta^2 ) \} }
}{ \sqrt{M} }.
\end{equation}
This, the fact that 
$ 64 \bfL^2 \leq 2^6  2^{ 2 ( \bfL - 1 ) } = 2^{ 4 + 2 \bfL } \leq 2^{ 4 \bfL + 2 \bfL } = 2^{ 6 \bfL } $, 
and the facts that $ \sqrt{2e} < \tfrac{5}{2} $, $ \beta \geq 1 $, $ \bfL \geq 1 $, $ M \geq 1 $, and $ c \geq 1 $ show that
\begin{equation}
\label{eq:generalization_error_NN1}
\begin{split}
& \bigl(
\E\bigl[
    \sup\nolimits_{ \theta \in [ -\beta, \beta ]^\bfd }
    \abs{ \cR( \theta ) - \bfR( \theta ) }^p
\bigr]
\bigr)^{ \nicefrac{1}{p} }
\\ &
\leq
 \frac{ 2 ( v - u )^2
\sqrt{
    e \max\{
        p,
        \bfL ( \norm{ \bfl }_{ \infty } + 1 )^2
        \ln( 64 M c^2 \bfL^2 (\norm{\bfl}_{ \infty } + 1 )^{ 2 \bfL } \beta^{ 2 \bfL } )
    \}
    }
}{ \sqrt{M} }
\\ &
\leq
\frac{ 2  ( v - u )^2
\sqrt{
    e
    \max\{
        p,
        2 \bfL^2 (\norm{\bfl}_{ \infty } + 1 )^2
        \ln( [ 2^{ 6 \bfL } M c^2 (\norm{\bfl}_{ \infty } + 1 )^{ 2 \bfL } \beta^{ 2 \bfL } ]^{ \nicefrac{1}{ ( 2 \bfL ) } } )
    \}
    }
}{ \sqrt{M} }
\\ &
\leq
\frac{ 2 ( v - u )^2
\sqrt{
    e
    \max\{
        p,
        2 \bfL^2 (\norm{\bfl}_{ \infty } + 1 )^2
        \ln( 2^3 ( M c^2 )^{ \nicefrac{1}{ ( 2 \bfL ) } } ( \norm{\bfl}_{ \infty } + 1 ) \beta )
    \}
    }
}{ \sqrt{M} }
\\ &
\leq
\frac{ 5 ( v - u )^2
\bfL (\norm{\bfl}_{ \infty } + 1 )
\sqrt{
    \max\{
        p,
        \ln( 8 ( M c )^{ \nicefrac{1}{ \bfL } } (\norm{\bfl}_{ \infty } + 1 ) \beta )
    \}
    }
}{ \sqrt{M} }.
\end{split}
\end{equation}
In addition,
the facts that $\norm{\bfl}_{ \infty } \geq 1$ and $ \forall \, n \in \N \colon n \leq 2^{ n - 1 }$ imply that
\begin{equation}
8 (\norm{\bfl}_{ \infty } + 1 )
\leq 2^3  2^{ (\norm{\bfl}_{ \infty } + 1 ) - 1 }
=  2^{ \norm{\bfl}_{ \infty } +3 }
\leq 2 ^{2\norm{\bfl}_\infty+2} = 4^{\norm{\bfl}_\infty+1}.
\end{equation}
This demonstrates that
\begin{equation}
\begin{split}
&
\frac{ 5 ( v - u )^2
\bfL (\norm{\bfl}_{ \infty } + 1 )
\sqrt{
    \max\{
        p,
        \ln( 8 ( M c )^{ \nicefrac{1}{ \bfL } } (\norm{\bfl}_{ \infty } + 1 ) \beta )
    \}
    }
}{ \sqrt{M} }
\\ &
\leq
\frac{ 5 ( v - u )^2
\bfL (\norm{\bfl}_{ \infty } + 1 )
\sqrt{
    \max\{
        p, (\norm{\bfl}_{ \infty } + 1 )
        \ln( [ 4^{ (\norm{\bfl}_{ \infty } + 1 ) } ( M c )^{ \nicefrac{1}{ \bfL } } \beta ]^{ \nicefrac{1}{ ( \norm{\bfl}_{ \infty } + 1 ) } } )
    \} }
}{ \sqrt{M} }
\\ 
& \leq \frac{
    5 ( v - u )^2
    \bfL (\norm{\bfl}_{ \infty } + 1 )^{\nicefrac{3}{2}}
    \max\{
        p,
        \ln( 4 M \beta c )
    \}
}{ \sqrt{M} }.
\end{split}
\end{equation}
Combining this and \eqref{eq:generalization_error_NN1} completes the proof of \cref{cor:generalization_error}.
\end{proof}

\section{Overall error analysis}
\label{sec:results}
In this section we combine the upper bounds for the approximation error and for the generalization error with the results for the optimization error from \cite{JentzenWelti2020arxiv} to obtain strong overall error estimates for the training of DNNs. The two main results of this section are \cref{theo:1d} and \cref{theo:main} which estimate the overall error in the case of one-dimensional and multidimensional input data, respectively. In both cases the random variables $\Theta_{k,n}$, $k,n \in \N_0$, are allowed to be computed via an arbitrary optimization algorithm with i.i.d.\ random initializations. A central feature of \cref{theo:1d} is that a single hidden layer is sufficient in order for the overall error to converge to zero reasonably fast. In \cref{theo:main} the error converges to zero exponentially with respect to the number of layers. 

The two main theorems follow from the more general \cref{prop:fullerr} in \cref{subsec:generr} below. The proof of \cref{prop:fullerr} relies on the $L^2$-error decomposition \cite[Proposition 6.1]{JentzenWelti2020arxiv} (cf.\ \cite[Lemma 4.3]{BeckJentzenKuckuck2019arXiv}), which shows that the overall error can be decomposed into the approximation error, the generalization error, and the optimization error. To estimate the generalization error we use \cref{cor:generalization_error}, and for the optimization error we apply the result from Jentzen \& Welti \cite[Corollary 5.8]{JentzenWelti2020arxiv}. We also note that the overall $L^2$-error is measurable and therefore the expectation on the left-hand side of \eqref{eq:theo:1dd} (and similarly in subsequent results) is well-defined. \cref{prop:fullerr} is very similar to \cite[Proposition 6.3]{JentzenWelti2020arxiv} and the proof is only included for completeness.

Afterwards, in \cref{subsec:1d} we combine this with the estimate for one-dimensional functions, \cref{cor:1dapprox}, to derive \cref{theo:1d}.  In \cref{cor:1d} we state a simplified estimate where the architecture parameter $A$ is replaced by the dimension of the single hidden layer. The proof uses the elementary \cref{lem:est:exp}, which is very similar to \cite[Lemma 6.4]{JentzenWelti2020arxiv}. Next, in \cref{subsec:1dsgd} we specialize the results by assuming that the $\Theta_{k,n}$, $k,n \in \N_0$, are computed via stochastic gradient descent (cf.\ \cref{cor:sgd1d} below). 

Thereafter, in \cref{subsec:multid} we combine \cref{prop:fullerr} with the upper bound for the approximation error in the multidimensional case, \cref{prop:approximation_error}, to obtain in \cref{theo:main} a strong overall error estimate. In \cref{cor:main} we replace the parameter $A$ by the minimum of $2^{\bfL-1}$, where $\bfL$ is the depth of the employed DNN, and the layer dimensions $\bfl_1, \bfl_2, \ldots, \bfl_{\bfL - 1}$. The next result, \cref{cor:simple}, is a simplified version of \cref{cor:main}. In particular, the $L^p$-norm of the $L^2$-error is replaced by the expectation of the $L^1$-error and the training samples are restricted to unit hypercubes. 

Finally, in \cref{subsec:multisgd} we apply the results from \cref{subsec:multid} to the case where the random parameter vectors $\Theta_{k,n}$, $k,n \in \N_0$, are again computed by stochastic gradient descent. More specifically, \cref{cor:sgd} specializes \cref{cor:main} to this case, and \cref{cor:sgdsimple} is an immediate consequence of \cref{cor:simple}.

\subsection{General error analysis for the training of DNNs via random initializations} \label{subsec:generr}
\cfclear
\begin{prop} \label{prop:fullerr}
Let $d, \bfd, \bfL, M,K,N \in \N$, $p \in (0, \infty)$, $a, u \in \R$, $b \in (a, \infty)$, $v \in (u, \infty)$, $c \in [ \max\{1, |a|, |b| \}, \infty)$, $\beta \in [c, \infty)$, $\bfl = (\bfl_0, \bfl_1, \ldots, \bfl_\bfL) \in \N^{\bfL+1}$, $\fN \subseteq \{0,1,\ldots, N\}$ satisfy $0 \in \fN$, $\bfl_0=d$, $\bfl_\bfL = 1$, and $\bfd \geq \sum_{i=1}^\bfL \bfl_i(\bfl_{i-1}+1)$, let $(\Omega, \cF, \P)$ be a probability space, let $(X_j, Y_j) \colon \Omega \to [a,b]^d \times [u,v]$, $j \in \{1,2, \ldots, M\}$, be i.i.d.\ random variables, let $\cE \colon [a,b]^d \to [u,v]$ be a measurable function, assume that it holds $\P$-a.s.\ that $\cE(X_1) = \E[Y_1|X_1]$, let $\Theta_{k,n} \colon \Omega \to \R^\bfd$, $k,n \in \N_0$, and $\bfk \colon \Omega \to (\N_0)^2$ be random variables, assume that $\Theta_{k,0}$, $k \in \{1,2, \ldots, K\}$, are i.i.d., assume that $\Theta_{1,0}$ is continuous uniformly distributed on $[-c,c]^\bfd$, and let $\cR \colon \R^\bfd \times \Omega \to [0, \infty)$  satisfy for all $\theta \in \R^\bfd$, $\omega \in \Omega$ that \cfadd{def:p-norm}
\begin{equation}
    \cR(\theta, \omega) = \frac{1}{M} \left[ \sum_{j=1}^M \abs{ \scrN^{\theta, \bfl}_{u,v}(X_j(\omega))-Y_j(\omega)}^2 \right],
\end{equation}
\begin{equation}
    \bfk( \omega) \in \arg \min\nolimits_{(k,n) \in \{1,2, \ldots, K\} \times \fN,\, \| \Theta_{k,n} (\omega) \|_\infty \leq \beta} \cR(\Theta_{k,n}(\omega), \omega)
\end{equation}
\cfload.
Then
\begin{enumerate} [(i)]
\item \label{item:prop:fullerr1} 
the function
\begin{equation}
    \Omega \ni \omega \mapsto \int_{[a,b]^d} \bigl|  \scrN^{\Theta_{\bfk(\omega)}(\omega), \bfl}_{u,v}  (x)-\cE (x) \bigr| ^2 \, \P_{X_1}(\dx x) \in [0, \infty]
\end{equation}
is measurable and
    \item \label{item:prop:fullerr2}
    it holds that
\begin{equation}
    \begin{split}
        & \left( \E \! \left[ \left(   \int_{[a,b]^d} \abs{  \scrN^{\Theta_\bfk, \bfl}_{u,v}(x)-\cE (x) }^2 \, \P_{X_1}(\dx x)  \right)^p \right] \right) ^{\! \nicefrac{1}{p}} \\
        &\leq \inf\nolimits_{\theta \in[-c,c]^\bfd} \sup\nolimits_{x \in [a,b]^d} \abs{ \scrN_{u,v}^{\theta, \bfl} (x) - \cE(x) } ^2+
        \frac{4 (v-u) \bfL ( \norm{\bfl}_\infty+1)^\bfL c^{\bfL + 1} \max \{p, 1 \}}
            {K^{[\bfL^{-1}(\norm{\bfl}_\infty+1)^{-2}]}} \\
        &+ \frac{10 (  \max \{v - u, 1 \} )^2 \bfL ( \norm{\bfl}_\infty +1)^{\nicefrac{3}{2}} \max \{p, \ln (4 M \beta c)\}}{\sqrt{M}}.
    \end{split}
\end{equation}
\end{enumerate}
\cfout
\end{prop}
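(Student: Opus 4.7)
The plan is to follow the strategy of \cite[Proposition 6.3]{JentzenWelti2020arxiv}, with \cref{cor:generalization_error} above substituted in place of the generalization bound used there.

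For the measurability claim, the continuity of $\cR(\cdot,\omega)$ and of $(\theta,x)\mapsto \scrN^{\theta,\bfl}_{u,v}(x)$, together with the fact that the argmin defining $\bfk$ ranges over the finite set $\{1,\ldots,K\}\times\fN$ intersected with the closed sup-norm ball of radius $\beta$, imply that $\bfk$ is an $(\N_0)^2$-valued random variable and that the resulting integral is measurable in $\omega$.

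The main step is a bias--variance decomposition. Writing
\begin{equation}
\scrE(\theta) \, = \, \int_{[a,b]^d} \bigl| \scrN^{\theta,\bfl}_{u,v}(x) - \cE(x) \bigr|^2 \,\P_{X_1}(\dx x), \qquad \bfR_{\mathrm{opt}} \, = \, \E\bigl[|Y_1 - \cE(X_1)|^2\bigr],
\end{equation}
the hypothesis $\cE(X_1) = \E[Y_1 \mid X_1]$ combined with orthogonality yields $\bfR(\theta) = \scrE(\theta) + \bfR_{\mathrm{opt}}$ for every deterministic $\theta$, which extends pointwise in $\omega$ when evaluated at random arguments. Because $\Theta_{k,0}$ takes values in $[-c,c]^\bfd \subseteq [-\beta,\beta]^\bfd$ and $0 \in \fN$, each pair $(k,0)$ is admissible in the argmin defining $\bfk$, so $\cR(\Theta_\bfk) \leq \min_{k \in \{1,\ldots,K\}} \cR(\Theta_{k,0})$. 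Combining this with the trivial sandwich $\cR(\vartheta) \leq \bfR(\vartheta) + \sup_{\vartheta \in [-\beta,\beta]^\bfd} |\cR(\vartheta) - \bfR(\vartheta)|$ (applied both to $\Theta_\bfk$ and to the index $k$ realizing the minimum) produces the key pointwise estimate
\begin{equation} \label{eq:plan:key}
\scrE(\Theta_\bfk) \, \leq \, \min_{k \in \{1,\ldots,K\}} \scrE(\Theta_{k,0}) \, + \, 2 \sup_{\vartheta \in [-\beta,\beta]^\bfd} |\cR(\vartheta) - \bfR(\vartheta)|.
\end{equation}

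Taking the $L^p$-norm of \eqref{eq:plan:key} and applying subadditivity of $\|\cdot\|_{L^p}$ reduces the proof to controlling three quantities. The generalization term $\|\sup_\vartheta |\cR(\vartheta) - \bfR(\vartheta)|\|_{L^p}$ is bounded by \cref{cor:generalization_error}, yielding the third summand in the claim. The optimization term $\|\min_k \scrE(\Theta_{k,0})\|_{L^p}$ is bounded by invoking \cite[Corollary 5.8]{JentzenWelti2020arxiv}: since the initializations are i.i.d.\ uniform on $[-c,c]^\bfd$ and realizations of clipped rectified DNNs are Lipschitz in $\theta$ on $[-\beta,\beta]^\bfd$ with a Lipschitz constant of order $c\,\bfL(\norm{\bfl}_\infty+1)^\bfL c^{\bfL-1}$, this produces $\inf_{\theta^* \in [-c,c]^\bfd} \scrE(\theta^*)$ plus exactly the second summand. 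Finally, the deterministic bound $\scrE(\theta^*) \leq \sup_{x \in [a,b]^d} |\scrN^{\theta^*,\bfl}_{u,v}(x) - \cE(x)|^2$ allows one to swap in the uniform approximation error and to take the infimum over $\theta^* \in [-c,c]^\bfd$, yielding the first summand.

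The principal obstacle is the bookkeeping required for the precise constants in \cite[Corollary 5.8]{JentzenWelti2020arxiv}: one must verify that the Lipschitz constant in $\theta$ for the clipped rectified DNN on the $[-\beta,\beta]^\bfd$-cube combines with the $L^p$-rate of order $c\, K^{-1/[\bfL(\norm{\bfl}_\infty+1)^2]}$ for the minimum distance from $\theta^*$ among $K$ i.i.d.\ uniform initializations (using $\bfd \leq \bfL(\norm{\bfl}_\infty+1)^2$) so as to yield precisely the factor $4(v-u)\bfL(\norm{\bfl}_\infty+1)^\bfL c^{\bfL+1}\max\{p,1\}$ with the displayed exponent. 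The remaining steps are elementary.
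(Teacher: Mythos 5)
Your plan is essentially the paper's: decompose the overall error into approximation, generalization, and optimization pieces, bound the generalization piece with \cref{cor:generalization_error}, and bound the optimization piece with \cite[Corollary 5.8]{JentzenWelti2020arxiv}. The key pointwise inequality you write down, namely $\scrE(\Theta_\bfk)\leq\min_{k}\scrE(\Theta_{k,0})+2\sup_{\vartheta\in[-\beta,\beta]^\bfd}|\cR(\vartheta)-\bfR(\vartheta)|$, is correct and is precisely what the cited $L^2$-error decomposition \cite[Proposition 6.1]{JentzenWelti2020arxiv} packages (after one further step). There are two small imprecisions worth fixing. First, the subadditivity of $\|\cdot\|_{L^p}$ you invoke after \eqref{eq:plan:key} is Minkowski's inequality and only holds for $p\geq1$; as the paper does, one should first pass from $p$ to $\max\{p,1\}$ via Jensen's inequality and only at the very end conclude for $p<1$. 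Second, your route bounds the optimization term by $\scrE(\theta^*)+\min_{k}|\bfR(\Theta_{k,0})-\bfR(\theta^*)|$, which requires applying \cite[Corollary 5.8]{JentzenWelti2020arxiv} to the \emph{true} risk $\bfR$, whereas the paper applies it to the \emph{empirical} risk $\cR$ (i.e., to $\min_{k}|\cR(\Theta_{k,0})-\cR(\vartheta)|$); since $\bfR$ and $\cR(\cdot,\omega)$ enjoy the same boundedness and Lipschitz constants the bound is the same, but you should check that the statement of that corollary actually covers a deterministic risk, or instead keep the empirical-risk formulation as the paper's decomposition lemma does. With these adjustments your argument matches the paper's proof step for step, and the final bookkeeping of the constants in \cite[Corollary 5.8]{JentzenWelti2020arxiv} is exactly the verification you flag at the end.
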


\begin{proof} [Proof of \cref{prop:fullerr}]
Observe that item \eqref{item:prop:fullerr1} follows from \cite[Lemma 6.2 (iii)]{JentzenWelti2020arxiv} (applied with $D \with [a,b]^d $, $X \with X_1$, $p \with 2$ in the notation of \cite[Lemma 6.2]{JentzenWelti2020arxiv}). To prove \eqref{item:prop:fullerr2}, let $\bfR \colon \R^\bfd \to [0, \infty)$ satisfy for all $\theta \in \R^\bfd$ that $\bfR(\theta) = \E | \scrN^{\theta, \bfl}_{u,v}(X_1)-Y_1|^2$. The assumptions that $\Theta_{k,0}$, $k \in \{1,2, \ldots, K\}$, are continuous uniformly distributed on $[-c, c]^\bfd$ and $\beta \geq c$ ensure that $\P$-a.s.\ we have that $\left( \bigcup_{k=1}^K \Theta_{k,0}(\Omega) \right) \subseteq [-\beta,\beta]^\bfd$. This,  the fact that $0 \in \fN$, and the $L^2$-error decomposition \cite[Proposition 6.1]{JentzenWelti2020arxiv} show that for all $\vartheta \in [-\beta,\beta]^\bfd$ it holds $\P$-a.s.\ that
\begin{equation}
    \begin{split}
         &\int_{[a,b]^d} | \scrN^{\Theta_\bfk, \bfl}_{u,v}(x)-\cE (x) |^2 \, \P_{X_1}(\dx x) \\
        &\leq \sup\nolimits_{x \in [a,b]^d} |\scrN_{u,v}^{\vartheta, \bfl}(x)- \cE(x)|^2 
        + 2 \sup\nolimits_{\theta \in [-\beta,\beta]^\bfd} |\cR(\theta)- \bfR(\theta)|\\
        &+ \min\nolimits_{(k,n) \in \{1,2, \ldots, K\} \times \fN,\, \| \Theta_{k,n}(\omega) \|_\infty \leq \beta} |\cR(\Theta_{k,n})-\cR(\vartheta)| \\
        &\leq \sup\nolimits_{x \in [a,b]^d} |\scrN_{u,v}^{\vartheta, \bfl}(x)- \cE(x)|^2 
        + 2 \sup\nolimits_{\theta \in [-\beta,\beta]^\bfd} |\cR(\theta)- \bfR(\theta)|\\
        &+ \min\nolimits_{k \in \{1,2, \ldots, K\}} |\cR(\Theta_{k,0})-\cR(\vartheta)|.
    \end{split}
\end{equation}
Combining this and Minkowski's inequality demonstrates for all $q \in [1, \infty)$, $\vartheta \in [-c,c]^\bfd$ that
\begin{equation} \label{eq:prop:fullerr1}
    \begin{split}
         & \left( \E \! \left[ \left(   \int_{[a,b]^d} \abs{ \scrN^{\Theta_\bfk, \bfl}_{u,v}(x)-\cE (x) }^2 \, \P_{X_1}(\dx x)  \right)^q \right] \right)^{\! \nicefrac{1}{q}} \\
         &\leq \left(\E \! \left[\sup\nolimits_{x \in [a,b]^d} |\scrN_{u,v}^{\vartheta, \bfl}(x)- \cE(x)|^{2q} \right] \right)^{\nicefrac{1}{q}} 
         + 2 \left( \E \! \left[\sup\nolimits_{\theta \in [-\beta,\beta]^\bfd} |\cR(\theta)- \bfR(\theta)|^q \right]\right)^{\nicefrac{1}{q}} \\
         &+ \left( \E \! \left[\min\nolimits_{k \in \{1,2, \ldots, K\}} |\cR(\Theta_{k,0})-\cR(\vartheta)|^q \right]\right)^{\nicefrac{1}{q}} \\
         &\leq \sup\nolimits_{x \in [a,b]^d} |\scrN_{u,v}^{\vartheta, \bfl}(x)- \cE(x)|^2  
          + 2 \left( \E \! \left[\sup\nolimits_{\theta \in [-\beta,\beta]^\bfd} |\cR(\theta)- \bfR(\theta)|^q \right]\right)^{\nicefrac{1}{q}} \\
          &+ \sup\nolimits_{\theta \in [-c,c]^\bfd} \left( \E \! \left[\min\nolimits_{k \in \{1,2, \ldots, K\}} |\cR(\Theta_{k,0})-\cR(\theta)|^q \right]\right)^{\nicefrac{1}{q}}.
    \end{split}
\end{equation}
Next, observe that the fact that $[a,b]^d \subseteq [-c, c ]^d$ and \cref{cor:generalization_error} (applied with $ \scrD  \with [a,b]^d$, $v \with \max \{u+1, v\}$, $p \with q$ in the notation of \cref{cor:generalization_error}) imply for all $q \in (0, \infty)$ that
\begin{equation} \label{eq:prop:fullerr2}
\begin{split}
   &2 \left( \E \! \left[\sup\nolimits_{\theta \in [-\beta,\beta]^\bfd} |\cR(\theta)- \bfR(\theta)|^q \right]\right)^{\nicefrac{1}{q}} \\ &
     \leq \frac{
    10 (  \max \{v - u, 1 \} )^2
    \bfL ( \norm{ \bfl }_{ \infty } + 1 )^{\nicefrac{3}{2}}
    \max\{ q, \ln( 4 M \beta c ) \} }{ \sqrt{M} }.
\end{split}
\end{equation}
Moreover, \cite[Corollary 5.8]{JentzenWelti2020arxiv} (applied with $D \with [a,b]^d$, $b \with c$, $\beta \with c$, $(\Theta_k)_{k \in \{1, \ldots, K\}} \with (\Theta_{k,0})_{k \in \{1, \ldots, K \} }$, $p \with q$ in the notation of \cite[Corollary 5.8]{JentzenWelti2020arxiv}) proves for all $q \in (0, \infty)$ that
\begin{equation} \label{eq:prop:fullerr3}
\begin{split}
    &\sup\nolimits _{\theta \in [-c,c]^\bfd} \left( \E  \bigl[ \min\nolimits_{k \in \{1,2, \ldots, K\}}|\cR(\Theta_{k,0})-\cR(\theta)|^q \bigr] \right)^{\nicefrac{1}{q}} \\
            &\leq \frac{4(v-u) \bfL ( \norm{\bfl}_\infty+1)^\bfL c^{\bfL + 1} \max \{q, 1 \}}
            {K^{[\bfL^{-1}(\norm{\bfl}_\infty+1)^{-2}]}}.
\end{split}
\end{equation}
Combining \eqref{eq:prop:fullerr1}--\eqref{eq:prop:fullerr3} with Jensen's inequality and the fact that $\ln (4 M \beta c) \geq 1$ demonstrates that
\begin{equation}
      \begin{split}
         & \left( \E \! \left[ \left(   \int_{[a,b]^d} \abs{ \scrN^{\Theta_\bfk, \bfl}_{u,v}(x)-\cE (x) }^2 \, \P_{X_1}(\dx x)  \right)^p \right] \right)^{ \! \nicefrac{1}{p}} \\
         &\leq \left( \E \! \left[ \left(   \int_{[a,b]^d} \abs{ \scrN^{\Theta_\bfk, \bfl}_{u,v}(x)-\cE (x) }^2 \, \P_{X_1}(\dx x)  \right)^{\max\{p, 1\}} \right] \right)^{\! \nicefrac{1}{\max\{p, 1\}}} \\
         &\leq \inf\nolimits_{\theta \in [-c,c]^\bfd}\sup\nolimits_{x \in [a,b]^d} |\scrN_{u,v}^{\theta, \bfl}(x)- \cE(x)|^2  
          + 2 \left( \E \! \left[\sup\nolimits_{\theta \in [-\beta,\beta]^\bfd} |\cR(\theta)- \bfR(\theta)|^{\max\{p, 1 \}} \right]\right)^{\nicefrac{1}{\max\{ p, 1\}}} \\
          &+ \sup\nolimits_{\theta \in [-c,c]^\bfd} \left( \E \! \left[\min\nolimits_{k \in \{1,2, \ldots, K\}} |\cR(\Theta_{k,0})-\cR(\theta)|^{\max\{p, 1 \}} \right]\right)^{\nicefrac{1}{{\max\{p, 1 \}}}} \\
          &\leq \inf\nolimits_{\theta \in [-c,c]^\bfd}\sup\nolimits_{x \in [a,b]^d} \!  |\scrN_{u,v}^{\theta, \bfl}(x)- \cE(x)|^2  
          + \frac{10 (  \max \{v - u, 1 \} )^2
    \bfL ( \norm{ \bfl }_{ \infty } + 1 )^{\nicefrac{3}{2}} \!
    \max\{ p, \ln( 4 M \beta c ) \}}{ \sqrt{M} }\\
          &+ \frac{4(v-u) \bfL ( \norm{\bfl}_\infty+1)^\bfL c^{\bfL + 1} \max \{p, 1 \}}
            {K^{[\bfL^{-1}(\norm{\bfl}_\infty+1)^{-2}]}}.
         \end{split}
\end{equation}
This completes the proof of \cref{prop:fullerr}.
\end{proof}

\subsection{One-dimensional strong overall error analysis} \label{subsec:1d}
\cfclear
\begin{theorem} \label{theo:1d}
Let $\bfd, \bfL, M,K,N \in \N$, $A, p \in (0, \infty)$, $L \in [0, \infty)$, $a, u \in \R$, $v \in (u, \infty)$, $b \in (a, \infty)$, $c \in [ \max\{1, |u|, |v|, |a|, |b|, 2L \}, \infty)$, $\beta \in [c, \infty)$, $\bfl = (\bfl_0, \bfl_1, \ldots, \bfl_\bfL) \in \N^{\bfL+1}$, $\fN \subseteq \{0,1,\ldots, N\}$ satisfy $0 \in \fN$,
$ \bfL \geq 2 $,
$ \bfl_0 = \bfl_\bfL=1 $,
$ \bfl_1 \geq A +2 $,
and
$ \bfd \geq \sum_{i=1}^{\bfL} \bfl_i( \bfl_{ i - 1 } + 1 ) $,
assume for all
$ i \in \{ 2, 3, \ldots , \bfL-1\}$
that
$ \bfl_i \geq 2$, let $(\Omega, \cF, \P)$ be a probability space, let $(X_j, Y_j) \colon \Omega \to [a,b] \times [u,v]$, $j \in \{1,2, \ldots, M\}$, be i.i.d.\ random variables, let $\cE \colon [a,b] \to [u,v]$ satisfy $\P$-a.s.\ that $\cE(X_1) = \E[Y_1|X_1]$, assume for all $x,y \in [a,b]$ that $|\cE (x)- \cE (y)| \leq L |x-y|$, let $\Theta_{k,n} \colon \Omega \to \R^\bfd$, $k,n \in \N_0$, and $\bfk \colon \Omega \to (\N_0)^2$ be random variables, assume that $\Theta_{k,0}$, $k \in \{1,2, \ldots, K\}$, are i.i.d., assume that $\Theta_{1,0}$ is continuous uniformly distributed on $[-c,c]^\bfd$, and let $\cR \colon \R^\bfd \times \Omega \to [0, \infty)$  satisfy for all $\theta \in \R^\bfd$, $\omega \in \Omega$ that \cfadd{def:p-norm}
\begin{equation}
    \cR(\theta, \omega) = \frac{1}{M} \left[ \sum_{j=1}^M \abs{ \scrN^{\theta, \bfl}_{u,v}(X_j(\omega))-Y_j(\omega)} ^2 \right],
\end{equation}
\begin{equation}
    \bfk( \omega) \in \arg \min\nolimits_{(k,n) \in \{1,2, \ldots, K\} \times \fN,\, \| \Theta_{k,n} (\omega) \|_\infty \leq \beta} \cR(\Theta_{k,n}(\omega), \omega)
\end{equation}
\cfload. Then
\begin{equation} \label{eq:theo:1dd}
 \begin{split}
        & \left( \E \! \left[ \left(   \int_{[a,b]} \abs{ \scrN^{\Theta_\bfk, \bfl}_{u,v}(x)-\cE (x) }^2 \, \P_{X_1}(\dx x)  \right)^p \right] \right)^{\! \nicefrac{1}{p}} \\
        &\leq \frac{ L^2(b-a)^2}{A^2}+
        \frac{4(v-u) \bfL ( \norm{\bfl}_\infty+1)^\bfL c^{\bfL+1} \max \{p, 1 \}}
            {K^{[\bfL^{-1}(\norm{\bfl}_\infty+1)^{-2}]}} \\
        &+ \frac{10 (  \max \{v - u, 1 \} )^2 \bfL ( \norm{\bfl}_\infty +1)^{\nicefrac{3}{2}} \max \{p, \ln (4 M \beta c)\}}{\sqrt{M}}.
    \end{split}
\end{equation}
\end{theorem}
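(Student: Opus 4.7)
The plan is to obtain \cref{theo:1d} as a direct specialization of the general error decomposition \cref{prop:fullerr} combined with the one-dimensional approximation result \cref{cor:1dapprox}. Applying \cref{prop:fullerr} (with $d=1$) immediately yields
\begin{equation*}
\left( \E \! \left[ \left(   \int_{[a,b]} \abs{ \scrN^{\Theta_\bfk, \bfl}_{u,v}(x)-\cE (x) }^2 \, \P_{X_1}(\dx x)  \right)^p \right] \right)^{\! \nicefrac{1}{p}}
\end{equation*}
is bounded above by the sum of the approximation term $\inf_{\theta \in [-c,c]^\bfd} \sup_{x \in [a,b]} | \scrN_{u,v}^{\theta, \bfl}(x) - \cE(x) |^2$, the optimization term, and the generalization term, the latter two of which already match the second and third summands on the right-hand side of \eqref{eq:theo:1dd}. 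Thus, the only remaining task is to bound the approximation term by $L^2(b-a)^2/A^2$.

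To handle this, I would invoke \cref{cor:1dapprox}, whose hypotheses ($\bfL \geq 2$, $\bfl_0 = \bfl_\bfL = 1$, $\bfl_1 \geq A+2$, $\bfl_i \geq 2$ for $i \in \{2,\ldots,\bfL-1\}$, and Lipschitz continuity of $\cE$ with constant $L$) are exactly those assumed in \cref{theo:1d}. This yields some $\vartheta \in \R^\bfd$ satisfying
\begin{equation*}
\norm{\vartheta}_\infty \leq \max\{1, 2L, \sup\nolimits_{x \in [a,b]} |\cE(x)|, |a|, |b|\}
\qandq
\sup\nolimits_{x \in [a,b]} |\scrN^{\vartheta, \bfl}_{u,v}(x) - \cE(x)| \leq \frac{L(b-a)}{A}.
\end{equation*}
The key check is that $\vartheta$ actually lies in the cube $[-c,c]^\bfd$: since $\cE$ takes values in $[u,v]$, one has $\sup_{x \in [a,b]} |\cE(x)| \leq \max\{|u|,|v|\}$, and then the hypothesis $c \geq \max\{1, |u|, |v|, |a|, |b|, 2L\}$ gives $\norm{\vartheta}_\infty \leq c$. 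Squaring the uniform-error bound then yields
\begin{equation*}
\inf\nolimits_{\theta \in [-c,c]^\bfd} \sup\nolimits_{x \in [a,b]} |\scrN^{\theta, \bfl}_{u,v}(x) - \cE(x)|^2 \leq \frac{L^2(b-a)^2}{A^2}.
\end{equation*}
Substituting this estimate into the bound from \cref{prop:fullerr} completes the argument.

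In terms of difficulty, there is no real obstacle here: the heavy lifting has already been done in \cref{prop:fullerr} (which packages the $L^2$-error decomposition, the generalization estimate from \cref{cor:generalization_error}, and the optimization estimate from \cite[Corollary 5.8]{JentzenWelti2020arxiv}) and in \cref{cor:1dapprox} (which provides the one-dimensional piecewise-linear interpolation DNN). The only place where mild care is required is verifying the parameter-bound inequality $\norm{\vartheta}_\infty \leq c$; this is precisely the reason for including $|u|$, $|v|$, and $2L$ in the lower bound on $c$ in the hypotheses of \cref{theo:1d}.
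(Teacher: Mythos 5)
Your proposal is correct and takes essentially the same route as the paper: apply \cref{prop:fullerr} to decompose the $L^p$-error into approximation, optimization, and generalization terms, then invoke \cref{cor:1dapprox} to produce $\vartheta \in [-c,c]^\bfd$ with $\sup_{x \in [a,b]} |\scrN_{u,v}^{\vartheta,\bfl}(x) - \cE(x)| \leq L(b-a)/A$, checking via $\sup_{x}|\cE(x)| \leq \max\{|u|,|v|\} \leq c$ that $\|\vartheta\|_\infty \leq c$. You even quote the parameter bound from \cref{cor:1dapprox} correctly as $\max\{1, 2L, \sup_x |\cE(x)|, |a|, |b|\}$, whereas the paper's proof of \cref{theo:1d} contains a harmless typographical slip writing $2\sup_x|\cE(x)|$ in that spot.
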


\begin{proof} [Proof of \cref{theo:1d}]
First of all, note that \cref{prop:fullerr} proves that
\begin{equation} \label{eq:theo:1d}
 \begin{split}
        & \left( \E \! \left[ \left(   \int_{[a,b]} \abs{ \scrN^{\Theta_\bfk, \bfl}_{u,v}(x)-\cE (x) }^2 \, \P_{X_1}(\dx x)  \right)^p \right] \right)^{\! \nicefrac{1}{p}} \\
        &\leq \inf\nolimits_{\theta \in[-c,c]^\bfd} \sup\nolimits_{x \in [a,b]} |\scrN_{u,v}^{\theta, \bfl} (x) - \cE(x)|^2+
        \frac{4(v-u) \bfL ( \norm{\bfl}_\infty+1)^\bfL c^{\bfL+1} \max \{p, 1 \}}
            {K^{[\bfL^{-1}(\norm{\bfl}_\infty+1)^{-2}]}} \\
        &+ \frac{10 (  \max \{v - u, 1 \} )^2 \bfL ( \norm{\bfl}_\infty +1)^{\nicefrac{3}{2}} \max \{p, \ln (4 M \beta c)\}}{\sqrt{M}}.
    \end{split}
\end{equation}
Furthermore, observe that \cref{cor:1dapprox} ensures that there exists $\vartheta \in \R^\bfd$ such that $\norm{\vartheta}_\infty \leq \max \{1, 2L, |a|, |b|, 2\sup\nolimits_{x \in [a,b]} |\cE(x)| \}$ and
\begin{equation} \label{theo:est:1dapprox}
    \sup\nolimits _{x \in [a,b]} |\scrN_{u,v}^{\vartheta, \bfl} (x) - \cE(x)| \leq \frac{L(b-a)}{A}.
\end{equation}
The assumption that $\cE ([a,b]) \subseteq [u,v]$ implies that $\sup\nolimits_{x \in [a,b]} |\cE(x)| \leq \max \{|u|, |v|\} \leq c$ and thus $\norm{\vartheta}_\infty \leq c$. Equation \eqref{theo:est:1dapprox} therefore demonstrates that
\begin{equation}
    \inf\nolimits_{\theta \in [-c,c]^\bfd}\sup\nolimits_{x \in [a,b]} \abs{ \scrN_{u,v}^{\theta, \bfl}(x)- \cE(x) } ^2  \leq \frac{L^2(b-a)^2}{A^2}.
\end{equation}
Combining this and \eqref{eq:theo:1d} completes the proof of \cref{theo:1d}.
\end{proof}

\begin{lemma} \label{lem:est:exp}
Let $c, M \in [1, \infty)$ and $\beta \in [c, \infty)$. Then it holds that
\begin{equation}
    \ln (4 M \beta c) \leq \tfrac{3\beta}{2} \ln (e M).
\end{equation}
\end{lemma}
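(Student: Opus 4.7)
The plan is to separate the logarithm on the left, absorb each piece into a corresponding piece on the right, and reduce the problem to a clean one-variable calculus inequality. First I would expand $\ln(4M\beta c) = \ln 4 + \ln M + \ln \beta + \ln c$ and use the hypothesis $c \leq \beta$ to bound $\ln c \leq \ln \beta$, giving $\ln(4M\beta c) \leq \ln 4 + \ln M + 2 \ln \beta$. On the right, $\frac{3\beta}{2} \ln(eM) = \frac{3\beta}{2} + \frac{3\beta}{2} \ln M$. Since $\beta \geq c \geq 1$ and $\ln M \geq 0$, the term $\frac{3\beta}{2} \ln M$ dominates $\ln M$. Hence it suffices to establish the parameter-free inequality $\ln 4 + 2 \ln \beta \leq \frac{3\beta}{2}$ for every $\beta \in [1, \infty)$.

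To prove this reduction, I would introduce $f \colon [1, \infty) \to \R$ by $f(\beta) = \frac{3\beta}{2} - \ln 4 - 2 \ln \beta$, compute $f'(\beta) = \frac{3}{2} - \frac{2}{\beta}$, and observe that $f'$ vanishes exactly at $\beta_\ast = 4/3 \in [1, \infty)$ with $f''(\beta_\ast) = 2/\beta_\ast^2 > 0$, so $\beta_\ast$ is the global minimizer on $[1, \infty)$. Evaluation gives
\begin{equation}
    f(\beta_\ast) = 2 - \ln 4 - 2 \ln(4/3) = 2 - \ln\!\bigl(4 \cdot \tfrac{16}{9}\bigr) = 2 - \ln(64/9).
\end{equation}
Thus $f(\beta) \geq 0$ for all $\beta \geq 1$ if and only if $\ln(64/9) \leq 2$, equivalently $64 \leq 9 e^2$, which in turn follows from the elementary bound $e > 8/3$ (indeed $3e > 3 \cdot 2.7 = 8.1 > 8$, so $9 e^2 > 64$).

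The main—indeed essentially the only—obstacle is that the reduction $\ln 4 + 2\ln \beta \leq \frac{3\beta}{2}$ is tight: at the minimizer $\beta_\ast = 4/3$ the slack is the small positive number $2 - \ln(64/9) \approx 0.04$, so crude bounds like $\ln \beta \leq \beta - 1$ are not sharp enough. The proof therefore has to use the exact critical point analysis above together with the concrete numerical estimate $9e^2 > 64$; all remaining steps are routine monotonicity arguments.
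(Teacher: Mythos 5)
Your proof is correct but takes a genuinely different route from the paper's. The paper avoids any critical-point analysis: it starts from the elementary inequality $e^{y-1} \geq y$ applied to $y = \nicefrac{2\beta}{e}$, then squares to get $\exp(\nicefrac{4\beta}{e}) = (e \exp(\nicefrac{2\beta}{e} - 1))^2 \geq (e \cdot \nicefrac{2\beta}{e})^2 = 4\beta^2$, whence $\ln(4M\beta c) \leq \ln(4M\beta^2) \leq \ln M + \nicefrac{4\beta}{e}$. Since $\nicefrac{4\beta}{e} > 1$ (because $\beta \geq 1$) and $\ln M \geq 0$, this is $\leq \nicefrac{4\beta}{e}\,(1 + \ln M) = \nicefrac{4\beta}{e}\,\ln(eM)$, and $\nicefrac{4}{e} < \nicefrac{3}{2}$ finishes it. Your approach instead expands the logarithm, peels off the $\ln M$ piece, and reduces to the one-variable inequality $\ln 4 + 2 \ln\beta \leq \nicefrac{3\beta}{2}$, which you settle by locating the minimizer $\beta_\ast = \nicefrac{4}{3}$ and checking $f(\beta_\ast) = 2 - \ln(\nicefrac{64}{9}) > 0$ via $e > \nicefrac{8}{3}$. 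The paper's route buys you a single clean chain of inequalities with no optimization and no explicit numerical estimate of $e$; your route is more direct to discover and makes transparent exactly how tight the constant $\nicefrac{3}{2}$ is (the slack at the minimizer is only about $0.04$), but it requires the concrete bound $9e^2 > 64$ and slightly more bookkeeping. One small polish to your write-up: rather than invoking $f''(\beta_\ast) > 0$ (which only gives a local minimum), it is cleaner to note directly that $f'$ is negative on $[1, \nicefrac{4}{3})$ and positive on $(\nicefrac{4}{3}, \infty)$, which immediately yields that $\beta_\ast$ is the global minimizer on $[1,\infty)$.
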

\begin{proof} [Proof of \cref{lem:est:exp}]
First, observe that the fact that $\forall \, y \in \R \colon e^{y-1} \geq y$ implies that
\begin{equation}
    \exp \left( \tfrac{4\beta}{e}\right) = \left( e  \exp \left( \tfrac{2\beta}{e}-1 \right) \right)^2 \geq \left( e \left( \tfrac{2\beta}{e} \right) \right)^2 = 4\beta^2.
\end{equation}
This, the assumption that $\beta \geq c$, and the facts that $\tfrac{4\beta}{e} > 1$, $\ln (M) \geq 0$, and $\tfrac{4}{e} < \tfrac{3}{2}$ demonstrate that
\begin{equation}
    \begin{split}
           \ln (4 M \beta c) &\leq \ln (4 M \beta^2) \leq \ln \left(M \exp \left( \tfrac{4\beta}{e}\right) \right) = \ln (M) +  \tfrac{4\beta}{e} \\
           &\leq \tfrac{4\beta}{e} (1+ \ln (M)) \leq \tfrac{3\beta}{2} \ln (e M).
    \end{split}
\end{equation}
The proof of \cref{lem:est:exp} is thus complete.
\end{proof}

\cfclear
\begin{cor} \label{cor:1d}
Let $\bfd, M,K,N \in \N$, $p \in (0, \infty)$, $ L \in [0, \infty)$, $a, u \in \R$, $v \in (u, \infty)$, $b \in (a, \infty)$, $c \in [ \max\{1, 2|u|, 2|v|, |a|, |b|, 2L \}, \infty)$, $\beta \in [c, \infty)$, $\ell  \in \N \cap [3, \infty) \cap (0, \nicefrac{(\bfd - 1)}{3}]$, let $\fN \subseteq \{0,1,\ldots, N\}$ satisfy $0 \in \fN$,
 let $(\Omega, \cF, \P)$ be a probability space, let $(X_j, Y_j) \colon \Omega \to [a,b] \times [u,v]$, $j \in \{1,2, \ldots, M\}$, be i.i.d.\ random variables, let $\cE \colon [a,b] \to [u,v]$ satisfy $\P$-a.s.\ that $\cE(X_1) = \E[Y_1|X_1]$, assume for all $x,y \in [a,b]$ that $| \cE (x)- \cE (y)| \leq L |x-y|$, let $\Theta_{k,n} \colon \Omega \to \R^\bfd$, $k,n \in \N_0$, and $\bfk \colon \Omega \to (\N_0)^2$ be random variables, assume that $\Theta_{k,0}$, $k \in \{1,2, \ldots, K\}$, are i.i.d., assume that $\Theta_{1,0}$ is continuous uniformly distributed on $[-c,c]^\bfd$, and let $\cR \colon \R^\bfd \times \Omega \to [0, \infty)$  satisfy for all $\theta \in \R^\bfd$, $\omega \in \Omega$ that  \cfadd{def:p-norm}
\begin{equation}
    \cR(\theta, \omega) = \frac{1}{M} \left[ \sum_{j=1}^M \abs{ \scrN^{\theta, (1, \ell, 1)}_{u,v}(X_j(\omega))-Y_j(\omega) } ^2 \right],
\end{equation}
\begin{equation}
    \bfk( \omega) \in \arg \min\nolimits_{(k,n) \in \{1,2, \ldots, K\} \times \fN,\, \| \Theta_{k,n} (\omega) \|_\infty \leq \beta} \cR(\Theta_{k,n}(\omega), \omega)
\end{equation}
\cfload. Then
\begin{equation}
 \begin{split}
        & \left( \E \! \left[ \left(   \int_{[a,b]} \abs{ \scrN^{\Theta_\bfk, (1, \ell, 1) }_{u,v}(x)-\cE (x) } ^2 \, \P_{X_1}(\dx x)  \right)^{\! \nicefrac{p}{2}} \right] \right)^{\! \nicefrac{1}{p}} \\
        &\leq \frac{ L(b-a)}{\ell - 2}+
        \frac{\left[8(v-u)  (\ell+1)^2 c^{3} \max \{ \nicefrac{p}{2}, 1 \}\right]^{\nicefrac{1}{2}}}
            {K^{[(\ell+1)^{-2}/4]}} \\
        &+ \frac{ \max \{v-u, 1\} \left[20 (\ell+1)^{\nicefrac{3}{2}} \max \{ \nicefrac{p}{2}, \ln (4 M \beta c)\}\right]^{\nicefrac{1}{2}}}{M^{\nicefrac{1}{4}}} \\
           &\leq \frac{3c^2}{\ell} + \frac{4c^2 \ell \max \{p, 1 \}}{K^{[(\ell+1)^{-2}/4]}} + \frac{6 \beta c \, \ell \max \{p, \ln (eM) \}}{M^{\nicefrac{1}{4}}}.
    \end{split}
    \end{equation}

\end{cor}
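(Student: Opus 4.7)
The plan is to apply \cref{theo:1d} with $\bfL \with 2$, $\bfl \with (1,\ell,1)$, $A \with \ell-2$, and $p \with p/2$, and then take square roots to match the form of the stated left-hand side, since $(\E[X^{p/2}])^{1/p} = \bigl((\E[X^{p/2}])^{2/p}\bigr)^{1/2}$. First I would verify the hypotheses of \cref{theo:1d}: the condition $\bfL \geq 2$ is immediate, $\bfl_1 = \ell \geq (\ell-2)+2$, the condition $\bfl_i \geq 2$ for $i \in \{2,\ldots,\bfL-1\}$ is vacuous (since $\bfL = 2$), and the inequality $\bfd \geq \sum_{i=1}^2 \bfl_i(\bfl_{i-1}+1) = 3\ell+1$ follows from the assumption $\ell \leq (\bfd-1)/3$.

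With these choices $\norm{\bfl}_\infty = \ell$ and the exponent in the optimization term becomes $\bigl[\bfL^{-1}(\norm{\bfl}_\infty+1)^{-2}\bigr] = (\ell+1)^{-2}/2$. Plugging into \cref{theo:1d}, applying $\sqrt{a+b+c} \leq \sqrt{a}+\sqrt{b}+\sqrt{c}$, and noting that $\sqrt{K^{-(\ell+1)^{-2}/2}} = K^{-(\ell+1)^{-2}/4}$ directly yields the first line of the target estimate.

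For the simplified bound (the second line), I would bound each of the three summands separately using the assumption $c \geq \max\{1, 2|u|, 2|v|, |a|, |b|, 2L\}$, which in particular gives $L(b-a) \leq c^2$ and $v-u \leq c$. The first summand is handled by $1/(\ell-2) \leq 3/\ell$ for $\ell \geq 3$, giving $L(b-a)/(\ell-2) \leq 3c^2/\ell$. For the second summand, I would use $v-u \leq c$ and $\sqrt{\max\{p/2,1\}} \leq \max\{p,1\}$, and absorb the factor $\sqrt{8}(\ell+1) \leq 4\ell$ (valid for $\ell \geq 3$) into the bound $4c^2\ell \max\{p,1\}$. The third summand is the main obstacle: here I would apply \cref{lem:est:exp} to replace $\max\{p/2,\ln(4M\beta c)\}$ by $(3\beta/2)\max\{p,\ln(eM)\}$, use $\max\{v-u,1\} \leq c$, bound $\sqrt{20(\ell+1)^{3/2}} \leq \sqrt{20}(\ell+1)^{3/4} \leq 2\sqrt{20}\,\ell$, and use $\sqrt{x} \leq x$ for $x \geq 1$ to absorb the square root of $\max\{p,\ln(eM)\}$ into itself. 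Collecting constants and using $\beta \geq 1$ should yield at most $6\beta c\ell \max\{p,\ln(eM)\}/M^{1/4}$.

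The only nontrivial step is the constant-tracking in the third summand; everything else is a direct substitution into \cref{theo:1d} followed by elementary estimates using the hypotheses on $c$.
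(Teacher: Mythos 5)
Your plan follows the paper's proof exactly: apply \cref{theo:1d} with $\bfL \with 2$, $\bfl \with (1,\ell,1)$, $A \with \ell-2$, $p \with \nicefrac{p}{2}$, take square roots term by term, use \cref{lem:est:exp}, and invoke the hypotheses on $c$ and $\beta$ to absorb constants. The hypothesis check and the treatment of the first two summands are fine: $L(b-a) \le c^2$, $3(\ell-2) \ge \ell$, $v-u \le c$, $\sqrt{\max\{p/2,1\}} \le \max\{p,1\}$, and $\sqrt{8}(\ell+1) \le 4\ell$ for $\ell \ge 3$ all hold.

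However, the constant-tracking in the third summand — which you yourself flag as the only nontrivial step — does not close with the bound you state. After applying \cref{lem:est:exp} and $\max\{v-u,1\} \le c$, the third term becomes
\begin{equation}
\frac{c\,\sqrt{30\beta}\,(\ell+1)^{3/4}\sqrt{\max\{p,\ln(eM)\}}}{M^{1/4}}.
\end{equation}
You bound $(\ell+1)^{3/4} \le 2\ell$. Combined with $\sqrt{\beta}\le\beta$ and $\sqrt{x}\le x$ this yields a leading constant of $2\sqrt{30}\approx 10.95$, which exceeds $6$. To reach $6\beta c\,\ell\max\{p,\ln(eM)\}$ you need the sharper estimate $(\ell+1)^{3/4} \le \ell$, i.e.\ $(\ell+1)^3 \le \ell^4$, which is valid for $\ell \ge 3$ (indeed $\ell^4 \ge 3\ell^3 > (\nicefrac{4}{3})^3\ell^3 = (\nicefrac{4\ell}{3})^3 \ge (\ell+1)^3$); with this, the constant $\sqrt{30} < 6$ and the bound follows. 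This is exactly the inequality the paper records and uses; your version is off by a factor of $2$ and the claimed final constant does not follow from the estimates as you wrote them.
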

\begin{proof} [Proof of \cref{cor:1d}]
Observe that the assumption that $\bfd \geq 3 \ell +1$ and \cref{theo:1d} (applied with $\bfL \with 2$, $\bfl \with (1, \ell, 1)$,  $A \with \ell-2$, $p \with \nicefrac{p}{2}$ in the notation of \cref{theo:1d}) demonstrate that
\begin{multline}
         \left( \E \! \left[ \left(   \int_{[a,b]} \abs{ \scrN^{\Theta_\bfk, (1, \ell, 1)}_{u,v}(x)-\cE (x) }^2 \, \P_{X_1}(\dx x)  \right)^{\! \nicefrac{p}{2}} \right] \right)^{\! \nicefrac{2}{p}} 
        \leq \frac{ L^2(b-a)^2}{(\ell-2)^2} \\
        + \frac{8(v-u) ( \ell+1)^2 c^{3} \max \{ \nicefrac{p}{2} , 1 \}}
            {K^{[2^{-1}(\ell +1)^{-2}]}} 
        + \frac{20 (  \max \{v - u, 1 \} )^2  ( \ell + 1)^{\nicefrac{3}{2}} \max \{\nicefrac{p}{2}, \ln (4 M \beta c)\}}{\sqrt{M}}.
    \end{multline}
Combining this, \cref{lem:est:exp}, and the facts that $c \geq |u|+|v| \geq |u-v|$, $L \leq \frac{c}{2}$, and $|b-a| \leq 2c$ shows that
\begin{equation}
    \begin{split}
          & \left( \E \! \left[ \left(   \int_{[a,b]} \abs{ \scrN^{\Theta_\bfk, (1, \ell, 1)}_{u,v}(x)-\cE (x) }^2 \, \P_{X_1}(\dx x)  \right)^{\! \nicefrac{p}{2}} \right] \right)^{\! \nicefrac{1}{p}} \\
        &\leq \frac{ L(b-a)}{\ell - 2}+
        \frac{ \left[8(v-u)  (\ell+1)^2 c^{3} \max \{ \nicefrac{p}{2}, 1 \}\right]^{\nicefrac{1}{2}}}
            {K^{[(\ell+1)^{-2}/4 ] }} \\
        &+ \frac{ \max \{v-u, 1 \} \left[20 (\ell+1)^{\nicefrac{3}{2}} \max \{ \nicefrac{p}{2} , \ln (4 M \beta c)\}\right]^{\nicefrac{1}{2}}}{M^{\nicefrac{1}{4}}} \\
        &\leq \frac{ c^2}{\ell-2}
        +  \frac{\sqrt{8} c^2 ( \ell+1) \max \{p,1 \}^{\nicefrac{1}{2}}}
            {K^{[(\ell+1)^{-2}/4 ] }} 
         + \frac{\sqrt{30 } c \beta^{\nicefrac{1}{2}}  ( \ell +1)^{\nicefrac{3}{4}} \max \{p, \ln (e M)\}}{M^{\nicefrac{1}{4}}}.
    \end{split}
\end{equation}
Furthermore, the fact that  $\ell  \in \{3,4,5, \ldots\}$ implies that $\ell ^4 \geq 3 \ell ^3 > \tfrac{4^3}{3^3}  \ell ^3 = \left( \tfrac{4 \ell }{3} \right) ^3 \geq ( \ell +1)^3$, $4 \ell  = 3 \ell  +  \ell  > \sqrt{8} \,  \ell  + \sqrt{8} = \sqrt{8}( \ell +1)$, and $3( \ell -2) \geq \ell $
and therefore
\begin{equation}
\begin{split}
    &\frac{ c^2}{\ell-2}
        +  \frac{\sqrt{8} c^2 ( \ell+1) \max \{p, 1 \}^{\nicefrac{1}{2}}}
            {K^{[(\ell+1)^{-2}/4 ] }} 
         + \frac{\sqrt{30 } c \beta^{\nicefrac{1}{2}}  ( \ell +1)^{\nicefrac{3}{4}} \max \{p, \ln (e M)\}}{M^{\nicefrac{1}{4}}} \\
         &\leq \frac{3c^2}{\ell} + \frac{4c^2 \ell \max \{p, 1 \}}{K^{[(\ell+1)^{-2}/4 ] }} + \frac{6 \beta c \, \ell \max \{p, \ln (eM) \}}{M^{\nicefrac{1}{4}}}.
\end{split}
\end{equation}
This completes the proof of \cref{cor:1d}.
\end{proof}

\subsection{Stochastic gradient descent for one-dimensional target functions} \label{subsec:1dsgd}

\cfclear
\begin{cor} \label{cor:sgd1d}
Let $\bfd, M,K,N \in \N$, $p \in (0, \infty)$, $ L \in [0, \infty)$, $a, u \in \R$, $v \in (u, \infty)$, $b \in (a, \infty)$, $c \in [ \max\{1, 2|u|, 2|v|, |a|, |b|, 2L \}, \infty)$, $\beta \in [c, \infty)$, 
$(\gamma_n)_{n \in \N} \subseteq \R$,
$(J_n)_{n \in \N} \subseteq \N$,
$\ell  \in \N \cap [3, \infty) \cap (0, \nicefrac{(\bfd - 1)}{3}]$, let $\fN \subseteq \{0,1,\ldots, N\}$ satisfy $0 \in \fN$,
 let $(\Omega, \cF, \P)$ be a probability space, let $(X_j^{k,n}, Y_j^{k,n}) \colon \Omega \to [a,b] \times [u,v]$, $j \in \N$, $k,n \in \N_0$, be random variables, assume that $(X_j^{0,0}, Y_j^{0,0})$, $j \in \{1,2, \ldots, M\}$, are i.i.d.,
 let $\cE \colon [a,b] \to [u,v]$ satisfy $\P$-a.s.\ that $\cE(X_1^{0,0}) = \E[Y_1^{0,0}|X_1^{0,0}]$, assume for all $x,y \in [a,b]$ that $| \cE (x)- \cE (y)| \leq L |x-y|$, let $\Theta_{k,n} \colon \Omega \to \R^\bfd$, $k,n \in \N_0$, and $\bfk \colon \Omega \to (\N_0)^2$ be random variables, assume that $\Theta_{k,0}$, $k \in \{1,2, \ldots, K\}$, are i.i.d., assume that $\Theta_{1,0}$ is continuous uniformly distributed on $[-c,c]^\bfd$, let $\cR_J^{k,n} \colon \R^\bfd \times \Omega \to [0, \infty)$  satisfy for all $J \in \N$, $k,n \in \N_0$, $\theta \in \R^\bfd$, $\omega \in \Omega$ that  \cfadd{def:p-norm}
\begin{equation}
    \cR_J^{k,n}(\theta, \omega) = \frac{1}{J} \left[ \sum_{j=1}^J \bigl| \scrN^{\theta, (1, \ell, 1)}_{u,v}(X_j^{k,n}(\omega))-Y_j^{k,n}(\omega) \bigr| ^2 \right],
\end{equation}
let $\cG ^{k,n} \colon \R^\bfd \times \Omega \to \R^\bfd$, $k,n\in \N$, satisfy for all $ k,n \in \N$, $\omega \in \Omega$, $\theta \in \{ \vartheta \in \R^\bfd \colon (\cR^{k,n}_{J_n}(\cdot, \omega) \colon \R^\bfd \to [0, \infty) \text{ is differentiable at } \vartheta )\}$ that $\cG^{k,n}(\theta, \omega) = (\nabla_\theta \cR^{k,n}_{J_n})(\theta, \omega)$, assume for all $ k,n \in \N$ that $\Theta_{k,n} = \Theta_{k, n-1} - \gamma_n \cG^{k,n}(\Theta_{k, n-1})$, and assume for all $\omega \in \Omega$ that
\begin{equation}
    \bfk( \omega) \in \arg \min\nolimits_{(k,n) \in \{1,2, \ldots, K\} \times \fN,\, \| \Theta_{k,n} (\omega) \|_\infty \leq \beta} \cR_M^{0,0} (\Theta_{k,n}(\omega), \omega)
\end{equation}
\cfload. Then
\begin{equation}
 \begin{split}
         &\left( \E \! \left[ \left(   \int_{[a,b]} \abs{ \scrN^{\Theta_\bfk, (1, \ell, 1) }_{u,v}(x)-\cE (x) } ^2 \, \P_{X_1^{0,0}}(\dx x)  \right)^{\! \nicefrac{p}{2}} \right] \right)^{\! \nicefrac{1}{p}} \\ &
           \leq \frac{3c^2}{\ell} + \frac{4c^2 \ell \max \{p, 1 \}}{K^{[(\ell+1)^{-2}/4]}} + \frac{6 \beta c \, \ell \max \{p, \ln (eM) \}}{M^{\nicefrac{1}{4}}}.
    \end{split}
    \end{equation}
\end{cor}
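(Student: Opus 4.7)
The plan is to observe that \cref{cor:sgd1d} is a direct specialization of \cref{cor:1d} and to explain precisely which identification of objects makes this work. The key point is that \cref{cor:1d} does not require anything about \emph{how} the parameters $\Theta_{k,n}$ are computed; it only uses three properties of the setup, namely that the samples used to form the risk are i.i.d., that the initializations $\Theta_{k,0}$ for $k \in \{1, 2, \ldots, K\}$ are i.i.d., with $\Theta_{1,0}$ uniformly distributed on $[-c,c]^\bfd$, and that $\bfk$ is a minimizer of the empirical risk $\cR$ over those index pairs inside the hypercube $[-\beta, \beta]^\bfd$. In the setting of \cref{cor:sgd1d} all three conditions already appear in the hypotheses (the i.i.d.\ samples are precisely $(X_j^{0,0}, Y_j^{0,0})_{j \in \{1, \ldots, M\}}$), and $\bfk$ is defined as the arg-min of $\cR_M^{0,0}$ rather than of some $\cR$ driven by the SGD minibatches.

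First, I would introduce the shorthand $X_j \with X_j^{0,0}$, $Y_j \with Y_j^{0,0}$, and $\cR(\theta, \omega) \with \cR_M^{0,0}(\theta, \omega)$ for $j \in \{1,2, \ldots, M\}$, $\theta \in \R^\bfd$, $\omega \in \Omega$. With this identification, $(X_j, Y_j) \colon \Omega \to [a,b] \times [u,v]$, $j \in \{1, \ldots, M\}$, are i.i.d.\ by hypothesis, and $\cR$ coincides exactly with the empirical risk appearing in \cref{cor:1d}. The regression-function identity $\cE(X_1) = \E[Y_1|X_1]$ $\P$-a.s.\ and the Lipschitz bound $|\cE(x) - \cE(y)| \leq L|x-y|$ transfer verbatim, as do the hypotheses on the initializations $\Theta_{k,0}$. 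Since $\bfk(\omega)$ is defined as an arg-min of $\cR_M^{0,0}(\Theta_{k,n}(\omega), \omega) = \cR(\Theta_{k,n}(\omega), \omega)$ over the admissible index set, the minimization condition in \cref{cor:1d} is also satisfied.

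Second, I would invoke \cref{cor:1d} with these identifications. Crucially, the random variables $\Theta_{k,n}$ for $n \geq 1$ enter the bound in \cref{cor:1d} only through $\bfk$, and the proof of \cref{cor:1d} only uses the value of $\cR$ at $(\Theta_{k, 0})_{k \in \{1, \ldots, K\}}$ for the optimization-error estimate (via \cite[Corollary 5.8]{JentzenWelti2020arxiv}, as made explicit in the proof of \cref{prop:fullerr}). Hence the recursion $\Theta_{k,n} = \Theta_{k,n-1} - \gamma_n \cG^{k,n}(\Theta_{k,n-1})$, as well as the minibatch samples $(X_j^{k,n}, Y_j^{k,n})$ for $(k,n) \neq (0,0)$, play no role in the estimate beyond ensuring measurability of $\bfk$.

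There is essentially no obstacle: no new calculation is needed and the conclusion of \cref{cor:sgd1d} is literally the conclusion of \cref{cor:1d} rewritten in the current notation. The only very mild point to verify is that the SGD iterates $\Theta_{k,n}$ are in fact $\cF$-measurable random variables, which follows routinely by induction on $n$ from the measurability of $\cG^{k,n}$ and the hypothesis that $\Theta_{k,0}$ are random variables. This completes the proof of \cref{cor:sgd1d}.
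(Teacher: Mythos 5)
Your proposal is correct and is exactly the argument the paper gives: \cref{cor:sgd1d} follows by applying \cref{cor:1d} with $\cR \with \cR_M^{0,0}$ and $((X_j,Y_j))_{j\in\{1,\ldots,M\}} \with ((X_j^{0,0},Y_j^{0,0}))_{j\in\{1,\ldots,M\}}$. Your additional remarks about why the SGD recursion and the auxiliary minibatch samples are irrelevant to the bound are a fair elaboration of why this specialization is legitimate.
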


\begin{proof} [Proof of \cref{cor:sgd1d}]
This is a direct consequence \cref{cor:1d} (applied with $\cR \with \cR_M^{0,0}$, $((X_j, Y_j))_{j \in \{1, \ldots, M \} } \with ((X_j^{0,0}, Y_j^{0,0}))_{j \in \{1, \ldots, M \} }$ in the notation of \cref{cor:1d}). The proof of \cref{cor:sgd1d} is thus complete.
\end{proof}

\subsection{Multidimensional strong overall error analysis} \label{subsec:multid}
\cfclear
\begin{theorem} \label{theo:main}
Let $d, \bfd, \bfL, M,K,N \in \N$, $A, p \in (0, \infty)$, $L \in [ 0, \infty)$, $a, u \in \R$, $v \in (u, \infty)$, $b \in (a, \infty)$, $c \in [ \max\{1, 2|u|, 2|v|, |a|, |b|, L \}, \infty)$, $\beta \in [c, \infty)$, $\bfl = (\bfl_0, \bfl_1, \ldots, \bfl_\bfL) \in \N^{\bfL+1}$, let $\fN \subseteq \{0,1,\ldots, N\}$ satisfy $0 \in \fN$, assume
$ \bfL \geq 1 + (\ceil{ \log_2 \left(\nicefrac{ A  }{ ( 2d ) } \right)} + 1) \indicator{  ( 6^d, \infty ) }( A )$,
$ \bfl_0 = d $,
$ \bfl_1 \geq A \indicator{ ( 6^d, \infty )}( A ) $,
$ \bfl_\bfL = 1 $,
and
$ \bfd \geq \sum_{i=1}^{\bfL} \bfl_i( \bfl_{ i - 1 } + 1 ) $,
assume for all
$ i \in \{ 2, 3, \ldots , \bfL-1\}$
that
$ \bfl_i \geq 3 \ceil{\nicefrac{A}{(2^id)}} \indicator{( 6^d, \infty )}( A ) $, let $(\Omega, \cF, \P)$ be a probability space, let $(X_j, Y_j) \colon \Omega \to [a,b]^d \times [u,v]$, $j \in \{1,2, \ldots, M\}$, be i.i.d.\ random variables, let $\cE \colon [a,b]^d \to [u,v]$ satisfy $\P$-a.s.\ that $\cE(X_1) = \E[Y_1|X_1]$, assume for all $x,y \in [a,b]^d$ that $| \cE (x)- \cE (y)| \leq L \norm{x-y}_1$, let $\Theta_{k,n} \colon \Omega \to \R^\bfd$, $k,n \in \N_0$, and $\bfk \colon \Omega \to (\N_0)^2$ be random variables, assume that $\Theta_{k,0}$, $k \in \{1,2, \ldots, K\}$, are i.i.d., assume that $\Theta_{1,0}$ is continuous uniformly distributed on $[-c,c]^\bfd$, and let $\cR \colon \R^\bfd \times \Omega \to [0, \infty)$  satisfy for all $\theta \in \R^\bfd$, $\omega \in \Omega$ that \cfadd{def:p-norm} 
\begin{equation}
    \cR(\theta, \omega) = \frac{1}{M} \left[ \sum_{j=1}^M \abs{ \scrN^{\theta, \bfl}_{u,v}(X_j(\omega))-Y_j(\omega) } ^2 \right],
\end{equation}
\begin{equation}
    \bfk( \omega) \in \arg \min\nolimits_{(k,n) \in \{1,2, \ldots, K\} \times \fN,\, \| \Theta_{k,n} (\omega) \|_\infty \leq \beta} \cR(\Theta_{k,n}(\omega), \omega)
\end{equation}
\cfload.
Then
\begin{equation} \label{eq:theo:main}
 \begin{split}
        & \left( \E \! \left[ \left(   \int_{[a,b]^d} \abs{ \scrN^{\Theta_\bfk, \bfl}_{u,v}(x)-\cE (x) }^2 \, \P_{X_1}(\dx x)  \right)^p \right] \right)^{\! \nicefrac{1}{p}} \\
        &\leq \frac{9d^2 L^2(b-a)^2}{A^{\nicefrac{2}{d}}}+
        \frac{4(v-u) \bfL ( \norm{\bfl}_\infty+1)^\bfL c^{\bfL+1} \max \{p, 1 \}}
            {K^{[\bfL^{-1}(\norm{\bfl}_\infty+1)^{-2}]}} \\
        &+ \frac{10 (  \max \{v - u, 1 \} )^2 \bfL ( \norm{\bfl}_\infty +1)^{\nicefrac{3}{2}} \max \{p, \ln (4 M \beta c)\}}{\sqrt{M}}.
    \end{split}
    \end{equation}

\end{theorem}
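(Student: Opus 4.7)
The plan is to combine the general decomposition from \cref{prop:fullerr} with the multidimensional approximation estimate from \cref{prop:approximation_error}, in direct analogy with the proof of \cref{theo:1d}. Since the hypotheses of \cref{prop:fullerr} are all met (note that the assumption $\bfd \geq \sum_{i=1}^\bfL \bfl_i(\bfl_{i-1}+1)$ and the i.i.d.\ uniform initialization are given), applying that proposition already yields the second and third summands in \eqref{eq:theo:main} verbatim, and reduces the proof to bounding the approximation term
\[
\inf\nolimits_{\theta \in[-c,c]^\bfd} \sup\nolimits_{x \in [a,b]^d} \abs{ \scrN_{u,v}^{\theta, \bfl} (x) - \cE(x) } ^2 \leq \frac{9d^2 L^2(b-a)^2}{A^{\nicefrac{2}{d}}}.
\]

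First I would apply \cref{prop:approximation_error} with $f \with \cE$, which is legitimate because the hypotheses on $\bfL$, $\bfl_0$, $\bfl_1$, $\bfl_\bfL$, $\bfd$, and $\bfl_i$ for $i \in \{2,\dots,\bfL-1\}$ in \cref{theo:main} are precisely the hypotheses required by \cref{prop:approximation_error}, and the assumption $\abs{\cE(x)-\cE(y)} \leq L\norm{x-y}_1$ gives the required Lipschitz property. This provides some $\vartheta \in \R^\bfd$ with
\[
\norm{\vartheta}_\infty \leq \max\{1,L,|a|,|b|,2[\sup\nolimits_{x\in[a,b]^d}\abs{\cE(x)}]\} \qandq \sup\nolimits_{x\in[a,b]^d} \abs{\scrN^{\vartheta,\bfl}_{u,v}(x) - \cE(x)} \leq \frac{3dL(b-a)}{A^{\nicefrac{1}{d}}}.
\]

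Next I would verify that $\vartheta \in [-c,c]^\bfd$. Since $\cE([a,b]^d) \subseteq [u,v]$, it holds that $\sup_{x\in[a,b]^d}\abs{\cE(x)} \leq \max\{\abs{u},\abs{v}\}$, so $2\sup\abs{\cE} \leq 2\max\{\abs{u},\abs{v}\} \leq c$ by the hypothesis $c \geq \max\{1,2\abs{u},2\abs{v},\abs{a},\abs{b},L\}$. The same hypothesis bounds each of the remaining quantities $1,L,\abs{a},\abs{b}$ by $c$, so $\norm{\vartheta}_\infty \leq c$. Squaring the pointwise bound then gives
\[
\inf\nolimits_{\theta \in[-c,c]^\bfd} \sup\nolimits_{x \in [a,b]^d} \abs{ \scrN_{u,v}^{\theta, \bfl}(x) - \cE(x) }^2 \leq \frac{9 d^2 L^2 (b-a)^2}{A^{\nicefrac{2}{d}}},
\]
and substituting into the conclusion of \cref{prop:fullerr} yields \eqref{eq:theo:main}.

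There is essentially no obstacle here beyond careful bookkeeping: the entire argument is a direct splicing of \cref{prop:fullerr} (which handles optimization and generalization via \cite[Corollary 5.8]{JentzenWelti2020arxiv} and \cref{cor:generalization_error}) with \cref{prop:approximation_error} (which handles approximation). The only subtlety is checking the numerical inequality $\norm{\vartheta}_\infty \leq c$, which uses that the factor $2$ in front of $\max\{\abs{u},\abs{v}\}$ in the approximation bound is absorbed by the factor $2$ in the hypothesis $c \geq \max\{1,2\abs{u},2\abs{v},\abs{a},\abs{b},L\}$; this is exactly the same check performed at the end of the proof of \cref{theo:1d}, only with the multidimensional Lipschitz constant $L$ replacing $2L$ (which is consistent with the one-dimensional statement using $\norm{\cdot}_1 = \abs{\cdot}$ but requiring a larger Lipschitz constant threshold).
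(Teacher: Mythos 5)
Your proposal is correct and follows essentially the same argument as the paper's proof: both combine \cref{prop:fullerr} with \cref{prop:approximation_error} and verify that the approximating parameter vector $\vartheta$ satisfies $\norm{\vartheta}_\infty \leq c$ via $\sup_{x\in[a,b]^d}\abs{\cE(x)} \leq \max\{\abs{u},\abs{v}\} \leq \nicefrac{c}{2}$, then square the approximation bound. The only cosmetic difference is the order in which the two propositions are invoked; the mathematics is identical.
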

\begin{proof} [Proof of \cref{theo:main}]
Observe that \cref{prop:approximation_error} ensures that there exists $\vartheta \in \R^\bfd$ such that $\norm{\vartheta}_\infty \leq \max \{1, L, |a|, |b|, 2\sup\nolimits_{x \in [a,b]^d} |\cE(x)| \}$ and
\begin{equation} \label{theo:est:approx}
    \sup\nolimits _{x \in [a,b]^d} |\scrN_{u,v}^{\vartheta, \bfl} (x) - \cE(x)| \leq \frac{3dL(b-a)}{A^{\nicefrac{1}{d}}}.
\end{equation}
The assumption that $\cE ([a,b]^d) \subseteq  [u,v]$ implies that $\sup\nolimits_{x \in [a,b]^d} |\cE(x)| \leq \max \{|u|, |v|\} \leq \frac{c}{2}$ and thus $\norm{\vartheta}_\infty \leq c$. Equation \eqref{theo:est:approx} therefore shows that
\begin{equation}
    \inf\nolimits_{\theta \in [-c,c]^\bfd}\sup\nolimits_{x \in [a,b]^d} |\scrN_{u,v}^{\theta, \bfl}(x)- \cE(x)|^2  \leq \frac{9d^2L^2(b-a)^2}{A^{\nicefrac{2}{d}}}.
\end{equation}
Combining this with \cref{prop:fullerr} establishes \eqref{eq:theo:main} and thus completes the proof of \cref{theo:main}.
\end{proof}

\cfclear
\begin{cor} \label{cor:main}
Let $d, \bfd, \bfL, M,K,N \in \N$, $p \in (0, \infty)$, $ L \in [0, \infty)$, $a, u \in \R$, $v \in (u, \infty)$, $b \in (a, \infty)$, $c \in [ \max\{1, 2|u|, 2|v|, |a|, |b|, L \}, \infty)$, $\beta \in [c, \infty)$, $\bfl = (\bfl_0, \bfl_1, \ldots, \bfl_\bfL) \in \N^{\bfL+1}$, $\fN \subseteq \{0,1,\ldots, N\}$ satisfy $0 \in \fN$, 
$ \bfl_0 = d $, 
$ \bfl_\bfL = 1 $,
and
$ \bfd \geq \sum_{i=1}^{\bfL} \bfl_i( \bfl_{ i - 1 } + 1 ) $,
 let $(\Omega, \cF, \P)$ be a probability space, let $(X_j, Y_j) \colon \Omega \to [a,b]^d \times [u,v]$, $j \in \{1,2, \ldots, M\}$, be i.i.d.\ random variables, let $\cE \colon [a,b]^d \to [u,v]$ satisfy $\P$-a.s.\ that $\cE(X_1) = \E[Y_1|X_1]$, assume for all $x,y \in [a,b]^d$ that $| \cE (x)- \cE (y)| \leq L \norm{x-y}_1$, let $\Theta_{k,n} \colon \Omega \to \R^\bfd$, $k,n \in \N_0$, and $\bfk \colon \Omega \to (\N_0)^2$ be random variables, assume that $\Theta_{k,0}$, $k \in \{1,2, \ldots, K\}$, are i.i.d., assume that $\Theta_{1,0}$ is continuous uniformly distributed on $[-c,c]^\bfd$, and let $\cR \colon \R^\bfd \times \Omega \to [0, \infty)$  satisfy for all $\theta \in \R^\bfd$, $\omega \in \Omega$ that \cfadd{def:p-norm}
\begin{equation}
    \cR(\theta, \omega) = \frac{1}{M} \left[ \sum_{j=1}^M  \abs{\scrN^{\theta, \bfl}_{u,v}(X_j(\omega))-Y_j(\omega) } ^2 \right],
\end{equation}
\begin{equation}
    \bfk( \omega) \in \arg \min\nolimits_{(k,n) \in \{1,2, \ldots, K\} \times \fN,\, \| \Theta_{k,n} (\omega) \|_\infty \leq \beta} \cR(\Theta_{k,n}(\omega), \omega)
\end{equation}
\cfload.
Then
\begin{equation}
 \begin{split}
        & \left( \E \! \left[ \left(   \int_{[a,b]^d} \abs{ \scrN^{\Theta_\bfk, \bfl}_{u,v}(x)-\cE (x) }^2 \, \P_{X_1}(\dx x)  \right)^{ \! \nicefrac{p}{2}} \right] \right)^{\! \nicefrac{1}{p}} \\
        &\leq \frac{3d L(b-a)}{\left( \min\{2^{{\bfL} - 1}, \bfl_1, \ldots, \bfl_{\bfL-1} \} \right)^{ \! \nicefrac{1}{d}}}+
        \frac{2 \left[(v-u) \bfL ( \norm{\bfl}_\infty+1)^\bfL c^{\bfL+1} \max \{ \nicefrac{p}{2}, 1 \}\right]^{\nicefrac{1}{2}}}
            {K^{[(2\bfL)^{-1}(\norm{\bfl}_\infty+1)^{-2}]}} \\
        &+ \frac{ \max \{v-u, 1 \} \left[10\bfL ( \norm{\bfl}_\infty +1)^{\nicefrac{3}{2}} \max \{ \nicefrac{p}{2} , \ln (4 M \beta c)\}\right]^{\nicefrac{1}{2}}}{M^{\nicefrac{1}{4}}} \\
        &\leq \frac{6 d c^2}{\left( \min\{2^{{\bfL} - 1}, \bfl_1, \ldots, \bfl_{\bfL-1} \} \right)^{\nicefrac{1}{d}}}
        +  \frac{2 \left[ \bfL ( \norm{\bfl}_\infty+1)^\bfL c^{\bfL+2} \max \{p, 1 \}\right]^{\nicefrac{1}{2}}}
            {K^{[(2\bfL)^{-1}(\norm{\bfl}_\infty+1)^{-2}]}} \\
         &+ \frac{4c \beta^{\nicefrac{1}{2}} \bfL^{\nicefrac{1}{2}} ( \norm{\bfl}_\infty +1)^{\nicefrac{3}{4}} \max \{p, \ln (e M)\}}{M^{\nicefrac{1}{4}}}.
    \end{split}
    \end{equation}

\end{cor}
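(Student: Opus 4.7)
The plan is to deduce \cref{cor:main} from \cref{theo:main} via the choice $A = \fm$ with $\fm = \min\{2^{\bfL-1}, \bfl_1, \ldots, \bfl_{\bfL-1}\} \in \N$, followed by a square root and a handful of routine estimates. First I would verify that the architecture hypotheses of \cref{theo:main} are satisfied with this $A$. In the easy regime $\fm \leq 6^d$ the indicator $\indicator{(6^d, \infty)}(\fm)$ vanishes and the constraints collapse to $\bfl_0 = d$, $\bfl_\bfL = 1$, and $\bfL \geq 1$, all of which are given. In the regime $\fm > 6^d$ I would verify the three non-trivial constraints: (a) $\bfL \geq 2 + \ceil{\log_2(\fm/(2d))}$, which follows from $\fm \leq 2^{\bfL - 1}$ and $d \geq 1$; (b) $\bfl_1 \geq \fm$, which is immediate from the definition of $\fm$; and (c) $\bfl_i \geq 3 \ceil{\fm/(2^i d)}$ for all $i \in \{2, \ldots, \bfL - 1\}$.

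The main obstacle will be verifying condition (c). Since $\bfl_i \geq \fm$ by definition, it suffices to show $3 \ceil{\fm/(2^i d)} \leq \fm$ in the regime $\fm > 6^d$. For $d \geq 2$ we have $\fm \geq 37$, and the naive bound $3\lceil x \rceil \leq 3x + 3$ applied with $x = \fm/(2^i d) \leq \fm/8$ gives $3 \ceil{\fm/(2^i d)} \leq 3\fm/8 + 3 \leq \fm$ since $\fm \geq 4.8$. For $d = 1$ and $i \geq 3$ the same estimate works because $x \leq \fm/8$ and $\fm \geq 7$. The only borderline case is $d = 1$, $i = 2$, which I would handle by writing $\fm = 4q + r$ with $r \in \{0, 1, 2, 3\}$ and checking directly that $3 \ceil{\fm/4} = 3(q + \indicator{\{r > 0\}}) \leq 4q + r = \fm$ holds whenever $\fm \geq 6$ (and hence, in particular, whenever $\fm > 6^1 = 6$).

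Once the hypotheses have been verified, I would apply \cref{theo:main} with $A = \fm$ and with $p$ replaced by $p/2$ to obtain an upper bound on $(\E[I^{p/2}])^{2/p}$, where $I := \int_{[a,b]^d} |\scrN^{\Theta_\bfk, \bfl}_{u,v}(x) - \cE(x)|^2 \, \P_{X_1}(\dx x)$. Taking square roots of both sides and using the subadditivity $\sqrt{a + b + c} \leq \sqrt{a} + \sqrt{b} + \sqrt{c}$ (valid for $a, b, c \geq 0$) would yield the first (tight) bound stated in \cref{cor:main}. To derive the second (looser) bound, I would invoke the elementary consequences $L \leq c$, $b - a \leq 2c$, and $v - u \leq c$ of the hypothesis $c \geq \max\{L, |a|, |b|, 2|u|, 2|v|\}$ (the last using $c \geq 2|u|$ and $c \geq 2|v|$), together with $\max\{p/2, 1\} \leq \max\{p, 1\}$ and \cref{lem:est:exp} to control $\ln(4 M \beta c)$ by $\tfrac{3 \beta}{2} \ln(e M)$. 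A short computation then estimates each of the three summands by its counterpart in the stated loose bound, completing the proof.
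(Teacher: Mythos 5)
Your proposal is correct and follows essentially the same route as the paper's proof: set $A = \min\{2^{\bfL-1}, \bfl_1, \ldots, \bfl_{\bfL-1}\}$, verify the architecture hypotheses of \cref{theo:main} (in particular $\bfl_i \geq 3\lceil A/(2^i d)\rceil$ in the regime $A > 6^d$, via $\bfl_i \geq A$), apply \cref{theo:main} with $p$ replaced by $p/2$, take square roots by subadditivity, and loosen via $L \leq c$, $b-a \leq 2c$, $v-u \leq c$, \cref{lem:est:exp}, and $\sqrt{x}\leq x$ for $x \geq 1$. The only cosmetic deviation is that your verification of $3\lceil A/(2^id)\rceil \leq A$ splits cases by $d$ and $i$ whereas the paper splits by the magnitude of $A$, but both are correct and equally routine.
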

\begin{proof} [Proof of \cref{cor:main}]
Throughout this proof let $A \in \N$ be given by
\begin{equation} \label{cor:eq:defA}
    A =  \min\{2^{{\bfL} -1} , \bfl_1, \ldots, \bfl_{\bfL-1} \}.
\end{equation}
Note that the assumption that $\bfl_\bfL=1$ shows that
\begin{equation} \label{cor:eq:bfl1}
    \bfl_1 =  \bfl_\bfL \indicator{\{1 \}}(\bfL) +  \bfl_1 \indicator{[2, \infty)}(\bfL) \geq A \indicator{\{1 \}}(\bfL) +  A \indicator{[2, \infty)}(\bfL) = A \geq A \indicator{(6^d,  \infty)}(A).
\end{equation}
Moreover, observe that
\begin{equation} \label{cor:eq:bfl2}
\begin{split}
        \bfL 
        &\geq \ceil{\log _2(A)}+1 \geq \ceil{\log_2(A)- \log_2(d)-1}+2 = \ceil{\log_2 \left( \nicefrac{A}{2d} \right)}+2 \\
        &\geq 1 + (\ceil{ \log_2 \left(\nicefrac{ A  }{ ( 2d ) } \right)} + 1)  \indicator{  ( 6^d, \infty ) }( A ) 
\end{split}
\end{equation}
\cfload.
Furthermore, note that the facts that $A \in \N$ and $d \geq 1$ imply for all $i \in \{2,3, \ldots, \bfL-1 \}$ that
\begin{equation} \label{cor:eq:bfl3}
    \begin{split}
        \bfl_i &\geq A  \geq   3\tfrac{A+3}{4} \indicator{[9, \infty)}(A) +  6 \indicator{\{7,8\}}(A)   \geq 3 \ceil{\tfrac{A}{4}}  \indicator{[9, \infty)}(A) + 3 \ceil{\tfrac{A}{4}} \indicator{\{7,8\}}(A) \\
        &\geq   3 \ceil{\tfrac{A}{4}} \indicator{[7, \infty)}(A) \geq  3 \ceil{\tfrac{A}{2^i d}} \indicator{(6^d, \infty)}(A).
    \end{split}
\end{equation}
Combining \eqref{cor:eq:bfl1}--\eqref{cor:eq:bfl3}, the assumptions that $\bfl_0=d$, $\bfl_\bfL=1$, and \cref{theo:main} (applied with $p \with \nicefrac{p}{2}$ in the notation of \cref{theo:main}) demonstrates that
\begin{equation}
    \begin{split}
             & \left( \E \! \left[ \left(   \int_{[a,b]^d} \abs{ \scrN^{\Theta_\bfk, \bfl}_{u,v}(x)-\cE (x) }^2 \, \P_{X_1}(\dx x)  \right)^{ \! \nicefrac{p}{2}} \right] \right)^{\! \nicefrac{2}{p}} \\
          &\leq \frac{9d^2 L^2(b-a)^2}{A^{\nicefrac{2}{d}}}+
        \frac{4(v-u) \bfL ( \norm{\bfl}_\infty+1)^\bfL c^{\bfL+1} \max \{ \nicefrac{p}{2}, 1 \}}
            {K^{[\bfL^{-1}(\norm{\bfl}_\infty+1)^{-2}]}} \\
        &+ \frac{10 (  \max \{v - u, 1 \} )^2 \bfL ( \norm{\bfl}_\infty +1)^{\nicefrac{3}{2}} \max \{ \nicefrac{p}{2}, \ln (4M \beta c)\}}{\sqrt{M}}.
    \end{split}
\end{equation}
This, \eqref{cor:eq:defA}, the assumptions that $\beta \geq c \geq 1$, $M \geq 1$, $c \geq |u|+|v| \geq v-u$, and \cref{lem:est:exp} imply that
\begin{equation}
 \begin{split}
        & \left( \E \! \left[ \left(   \int_{[a,b]^d} \abs{ \scrN^{\Theta_\bfk, \bfl}_{u,v}(x)-\cE (x) }^2 \, \P_{X_1}(\dx x)  \right)^{\! \nicefrac{p}{2}} \right] \right)^{\! \nicefrac{1}{p}} \\
        &\leq \frac{3d L(b-a)}{\left( \min\{2^{{\bfL} - 1}, \bfl_1, \ldots, \bfl_{\bfL-1} \} \right)^{\nicefrac{1}{d}}}+
        \frac{2 \left[(v-u) \bfL ( \norm{\bfl}_\infty+1)^\bfL c^{\bfL+1} \max \{ \nicefrac{p}{2}, 1 \}\right]^{\nicefrac{1}{2}}}
            {K^{[(2\bfL)^{-1}(\norm{\bfl}_\infty+1)^{-2}]}} \\
        &+ \frac{ \max \{ v-u, 1 \} \left[10\bfL ( \norm{\bfl}_\infty +1)^{\nicefrac{3}{2}} \max \{ \nicefrac{p}{2} , \ln (4 M \beta c)\}\right]^{\nicefrac{1}{2}}}{M^{\nicefrac{1}{4}}} \\
        &\leq \frac{6 d c^2}{\left( \min\{2^{{\bfL} - 1}, \bfl_1, \ldots, \bfl_{\bfL-1} \} \right)^{\nicefrac{1}{d}}}
        +  \frac{2 \left[ \bfL ( \norm{\bfl}_\infty+1)^\bfL c^{\bfL+2} \max \{p, 1 \}\right]^{\nicefrac{1}{2}}}
            {K^{[(2\bfL)^{-1}(\norm{\bfl}_\infty+1)^{-2}]}} \\
         &+ \frac{4c \beta^{\nicefrac{1}{2}} \bfL^{\nicefrac{1}{2}} ( \norm{\bfl}_\infty +1)^{\nicefrac{3}{4}} \max \{p, \ln (e M)\}}{M^{\nicefrac{1}{4}}}.
    \end{split}
    \end{equation}
    This completes the proof of \cref{cor:main}.
\end{proof}

\cfclear
\begin{cor} \label{cor:simple}
Let $d, \bfd, \bfL, M,K, N \in \N$, $c \in [2,\infty)$, $\bfl = (\bfl_0, \bfl_1, \ldots, \bfl_\bfL) \in \N^{\bfL+1}$, $\fN \subseteq \{0,1,\ldots, N\}$ satisfy $0 \in \fN$, 
$ \bfl_0 = d $, 
$ \bfl_\bfL = 1 $,
and
$ \bfd \geq \sum_{i=1}^{\bfL} \bfl_i( \bfl_{ i - 1 } + 1 ) $,
 let $(\Omega, \cF, \P)$ be a probability space, let $(X_j, Y_j) \colon \Omega \to [0,1]^d \times [0,1]$, $j \in \{1,2, \ldots, M\}$, be i.i.d.\ random variables, let $\cE \colon [0,1]^d \to [0,1]$ satisfy $\P$-a.s.\ that $\cE(X_1) = \E[Y_1|X_1]$, assume for all $x,y \in [0,1]^d$ that $|\cE (x)- \cE (y)| \leq c \norm{x-y}_1$, let $\Theta_{k, n} \colon \Omega \to \R^\bfd$, $k, n \in \N_0$, be random variables, assume that $\Theta_{k, 0}$, $k \in \{1,2, \ldots, K\}$, are i.i.d., assume that $\Theta_{1, 0}$ is continuous uniformly distributed on $[-c,c]^\bfd$, let $\bfk \colon \Omega \to (\N_0)^2$ be a random variable, and let $\cR \colon \R^\bfd \times \Omega \to [0, \infty)$  satisfy for all $\theta \in \R^\bfd$, $\omega \in \Omega$ that \cfadd{def:p-norm}
\begin{equation}
    \cR(\theta, \omega) = \frac{1}{M} \left[ \sum_{j=1}^M \bigl| \scrN^{\theta, \bfl}_{0,1}(X_j(\omega))-Y_j(\omega) \bigr| ^2 \right],
\end{equation}
\begin{equation}
    \bfk( \omega) \in \arg \min\nolimits_{(k, n) \in \{1,2, \ldots, K\} \times \fN,\, \| \Theta_{k,n} (\omega) \|_\infty \leq c} \cR(\Theta_{k, n}(\omega), \omega)
\end{equation}
\cfload. Then
\begin{equation} \label{eq:cor:simple}
\begin{split}
          &\E \! \left[    \int_{[0,1]^d} \bigl| \scrN^{\Theta_\bfk, \bfl}_{0 , 1}(x)-\cE (x) \bigr| \, \P_{X_1}(\dx x) \right] \\
        & \leq \frac{3 d c}{\left( \min\{2^{{\bfL} - 1}, \bfl_1, \ldots, \bfl_{\bfL-1} \} \right)^{\nicefrac{1}{d}}} 
        +  \frac{ \bfL ( \norm{\bfl}_\infty+1)^\bfL c^{\bfL+1} }
            {K^{[(2\bfL)^{-1}(\norm{\bfl}_\infty+1)^{-2}]}} 
         + \frac{4c^{\nicefrac{1}{2}} \bfL ( \norm{\bfl}_\infty +1) \ln (e M)} {(2M)^{\nicefrac{1}{4}}}.
\end{split}
\end{equation}
\end{cor}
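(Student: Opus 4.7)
The plan is to deduce \cref{cor:simple} from \cref{cor:main} applied with the parameter choices $a = 0$, $b = 1$, $u = 0$, $v = 1$, $L = c$, $\beta = c$, and $p = 1$. First, I would observe that since $\P_{X_1}$ is a probability measure on $[0,1]^d$, Cauchy--Schwarz (equivalently, Jensen's inequality applied to $t \mapsto t^2$) gives pointwise in $\omega$ the bound
\begin{equation}
\int_{[0,1]^d} \bigl|\scrN^{\Theta_{\bfk(\omega)}, \bfl}_{0,1}(x) - \cE(x)\bigr| \, \P_{X_1}(\dx x) \leq \left(\int_{[0,1]^d} \bigl|\scrN^{\Theta_{\bfk(\omega)}, \bfl}_{0,1}(x) - \cE(x)\bigr|^2 \, \P_{X_1}(\dx x)\right)^{\!\! 1/2}.
\end{equation}
Taking expectations on both sides reduces the left-hand side of \eqref{eq:cor:simple} to exactly the expression on the left-hand side of \cref{cor:main} evaluated at $p = 1$ (so that $p/2 = 1/2$ appears inside the expectation and the outer power $1/p$ is unity).

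Next, I would verify that the hypotheses of \cref{cor:main} are satisfied under these parameter choices. The condition $c \geq \max\{1, 2|u|, 2|v|, |a|, |b|, L\}$ reduces to $c \geq 2$, which is part of the assumption of \cref{cor:simple}; the condition $\beta \geq c$ is trivially met; and the remaining conditions on $\bfl$, on the distribution of $\Theta_{1,0}$, on the i.i.d.\ structure of $(X_j, Y_j)$, and on the definitions of $\cR$ and $\bfk$ translate directly. Substituting into \cref{cor:main} then yields the bound as a sum of three terms, the first of which already equals $3dc / (\min\{2^{\bfL-1}, \bfl_1, \ldots, \bfl_{\bfL-1}\})^{1/d}$ as required.

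The remaining work is to simplify the square roots appearing in the optimization and generalization terms. For the optimization term $2\bigl[\bfL(\|\bfl\|_\infty+1)^\bfL c^{\bfL+1}\bigr]^{1/2}/K^{\ldots}$, I would use the elementary fact that $2\sqrt{X} \leq X$ whenever $X \geq 4$; here $c \geq 2$, $\bfL \geq 1$, and $\|\bfl\|_\infty \geq 1$ immediately give $\bfL(\|\bfl\|_\infty+1)^\bfL c^{\bfL+1} \geq 1 \cdot 2 \cdot 4 = 8$, so the bound collapses to the required form. For the generalization term, I would invoke \cref{lem:est:exp} (with $\beta = c$) to replace $\ln(4Mc^2)$ by $(3c/2)\ln(eM)$, then use $\ln(eM) \geq 1$ to replace $[\ln(eM)]^{1/2}$ by $\ln(eM)$, and finally distribute the residual square root over the prefactor $10\bfL(\|\bfl\|_\infty+1)^{3/2}$, absorbing the constants using $\bfL \geq 1$ and $\|\bfl\|_\infty \geq 1$.

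The main obstacle is bookkeeping of the numerical constants in the generalization term: the inequality $\sqrt{15}\cdot 2^{1/4}\, \bfL^{1/2}(\|\bfl\|_\infty+1)^{3/4} \leq 4\, \bfL (\|\bfl\|_\infty+1)$ needed at the end is fairly tight and relies on both $\bfL \geq 1$ and $\|\bfl\|_\infty \geq 1$ together with $(\|\bfl\|_\infty+1)^{1/4} \geq 2^{1/4}$. Beyond that, no substantive new argument is required; the conceptual content is already packaged into \cref{cor:main} and \cref{lem:est:exp}.
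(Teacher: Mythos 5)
Your proposal is correct and follows exactly the approach the paper takes: reduce the $L^1$ error to the $L^2$ error via Jensen's inequality, then apply \cref{cor:main} with $a \with 0$, $b \with 1$, $u \with 0$, $v \with 1$, $L \with c$, $\beta \with c$, $p \with 1$. You are right to work from the first (tighter) inequality in the conclusion of \cref{cor:main} rather than the second, since the second already multiplies the approximation term by an extra factor of $2c$ that cannot be absorbed; and your verification of the two auxiliary elementary inequalities ($2\sqrt{X} \leq X$ for $X \geq 4$ and $\sqrt{15}\,2^{\nicefrac{1}{4}}\leq 4\,\bfL^{\nicefrac{1}{2}}(\norm{\bfl}_\infty+1)^{\nicefrac{1}{4}}$) is accurate.
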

\begin{proof} [Proof of \cref{cor:simple}]
Observe that Jensen's inequality implies that
\begin{equation}
     \E \! \left[ \int_{[0,1]^d} \bigl| \scrN^{\Theta_\bfk, \bfl}_{0, 1}(x)-\cE (x) \bigr| \, \P_{X_1}(\dx x) \right] \leq  \E \! \left[\left( \int_{[0,1]^d} \bigl| \scrN^{\Theta_\bfk, \bfl}_{0, 1}(x)-\cE (x) \bigr| ^2 \, \P_{X_1}(\dx x) \right)^{\! \nicefrac{1}{2}} \right].
\end{equation}
Combining this, the facts that $\| \bfl \|_\infty \geq 1$, $c \geq 2$, $\bfL \geq 1$, and \cref{cor:main} (applied with $a \with 0$, $b \with 1$, $u \with 0$, $v \with 1$, $\beta \with c$, $L \with c$, $p \with 1$ in the notation of \cref{cor:main}) establishes \eqref{eq:cor:simple}. The proof of \cref{cor:simple} is thus complete.
\end{proof}

\subsection{Stochastic gradient descent for multidimensional target functions} \label{subsec:multisgd}

\cfclear
\begin{cor} \label{cor:sgd}
Let $d, \bfd, \bfL, M,K,N \in \N$, $p \in (0, \infty)$, $ L \in [0, \infty)$, $a, u \in \R$, $v \in (u, \infty)$, $b \in (a, \infty)$, $c \in [ \max\{1, 2|u|, 2|v|, |a|, |b|, L \}, \infty)$, $\beta \in [c, \infty)$, $(\gamma_n) _{n \in \N} \subseteq  \R$,
$(J_n)_{n \in \N} \subseteq \N$,
$\bfl = (\bfl_0, \bfl_1, \ldots, \bfl_\bfL) \in \N^{\bfL+1}$,
 $\fN \subseteq \{0,1,\ldots, N\}$ satisfy $0 \in \fN$, 
$ \bfl_0 = d $,  $ \bfl_\bfL = 1 $,
and $ \bfd \geq \sum_{i=1}^{\bfL} \bfl_i( \bfl_{ i - 1 } + 1 ) $,
 let $(\Omega, \cF, \P)$ be a probability space, let $(X_j^{k,n}, Y_j^{k,n}) \colon \Omega \to [a,b]^d \times [u,v]$, $j \in \N$, $k,n \in \N _0$, be random variables, assume that $(X_j^{0,0}, Y_j^{0,0})$, $j \in \{1,2, \ldots, M\}$, are i.i.d., 
 let $\cE \colon [a,b]^d \to [u,v]$ satisfy $\P$-a.s.\ that $\cE(X_1^{0,0}) = \E[Y_1^{0,0}|X_1^{0,0}]$, assume for all $x,y \in [a,b]^d$ that $| \cE (x)- \cE (y)| \leq L \norm{x-y}_1$,
 let $\Theta_{k,n} \colon \Omega \to \R^\bfd$, $k,n \in \N_0$, and $\bfk \colon \Omega \to (\N_0)^2$ be random variables, assume that $\Theta_{k,0}$, $k \in \{1,2, \ldots, K\}$, are i.i.d., assume that $\Theta_{1,0}$ is continuous uniformly distributed on $[-c,c]^\bfd$, 
 let $\cR^{k,n}_J \colon \R^\bfd \times \Omega \to [0, \infty)$, $J \in \N$, $k,n \in \N_0$, satisfy for all $J \in \N$, $k,n \in \N_0$, $\theta \in \R^\bfd$, $\omega \in \Omega$ that \cfadd{def:p-norm}
 \begin{equation}
    \cR^{k,n}_{J}(\theta, \omega) = \frac{1}{J} \left[ \sum_{j=1}^{J}  \bigl| \scrN^{\theta, \bfl}_{u,v}(X_j^{k,n} (\omega))-Y_j^{k,n} (\omega) \bigr| ^2 \right],
\end{equation}
let $\cG ^{k,n} \colon \R^\bfd \times \Omega \to \R^\bfd$, $k,n\in \N$, satisfy for all $ k,n \in \N$, $\omega \in \Omega$, 
    $\theta \in \{ \vartheta \in \R^\bfd \colon (\cR^{k,n}_{J_n}(\cdot, \omega) \colon  \R^\bfd \to [0, \infty) \text{ is differentiable at } \vartheta )\}$
that $\cG^{k,n}(\theta, \omega) = (\nabla_\theta \cR^{k,n}_{J_n})(\theta, \omega)$, assume for all $ k,n \in \N$ that $\Theta_{k,n} = \Theta_{k, n-1} - \gamma_n \cG^{k,n}(\Theta_{k, n-1})$, and assume for all $\omega \in \Omega$ that
\begin{equation}
    \bfk( \omega) \in \arg \min\nolimits_{(k,n) \in \{1,2, \ldots, K\} \times \fN,\, \| \Theta_{k,n} (\omega) \|_\infty \leq \beta} \cR^{0,0}_M(\Theta_{k,n}(\omega), \omega)
\end{equation}
\cfload.
Then
\begin{equation}
 \begin{split}
        & \left( \E \! \left[ \left(   \int_{[a,b]^d} \abs{ \scrN^{\Theta_\bfk, \bfl}_{u,v}(x)-\cE (x) }^2 \, \P_{X_1^{0,0}}(\dx x)  \right)^{ \! \nicefrac{p}{2}} \right] \right)^{\! \nicefrac{1}{p}} 
        \leq \frac{6 d c^2}{\left( \min\{2^{{\bfL} - 1}, \bfl_1, \ldots, \bfl_{\bfL-1} \} \right)^{\nicefrac{1}{d}}} \\
       & +  \frac{2 \left[ \bfL ( \norm{\bfl}_\infty+1)^\bfL c^{\bfL+2} \max \{p, 1 \}\right]^{\nicefrac{1}{2}}}
            {K^{[(2\bfL)^{-1}(\norm{\bfl}_\infty+1)^{-2}]}} 
         + \frac{4c \beta^{\nicefrac{1}{2}} \bfL^{\nicefrac{1}{2}} ( \norm{\bfl}_\infty +1)^{\nicefrac{3}{4}} \max \{p, \ln (e M)\}}{M^{\nicefrac{1}{4}}}.
    \end{split}
    \end{equation}

\end{cor}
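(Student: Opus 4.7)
The plan is to recognize that \cref{cor:sgd} is simply the specialization of \cref{cor:main} to the case where the random parameter vectors $\Theta_{k,n}$ are explicitly produced by stochastic gradient descent iterations built from mini-batch losses $\cR^{k,n}_{J_n}$, and where the selection criterion uses the empirical risk $\cR^{0,0}_M$ based on the ``test'' samples $(X_j^{0,0}, Y_j^{0,0})$, $j \in \{1,2,\ldots,M\}$. The key observation is that \cref{cor:main} makes no structural assumption on how the $\Theta_{k,n}$ are computed: it only requires that $\Theta_{k,0}$, $k \in \{1,2,\ldots,K\}$, be i.i.d.\ with $\Theta_{1,0}$ continuous uniformly distributed on $[-c,c]^\bfd$, and that $\bfk$ be a measurable selector that minimizes the particular empirical risk appearing in the hypothesis.

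First, I would set $\cR = \cR^{0,0}_M$ and $(X_j, Y_j) = (X_j^{0,0}, Y_j^{0,0})$ for $j \in \{1, 2, \ldots, M\}$ in the notation of \cref{cor:main}. By hypothesis, $(X_j^{0,0}, Y_j^{0,0})_{j \in \{1,\ldots,M\}}$ are i.i.d.\ with values in $[a,b]^d \times [u,v]$, the regression function $\cE$ satisfies $\cE(X_1^{0,0}) = \E[Y_1^{0,0} \mid X_1^{0,0}]$ $\P$-a.s.\ and is $L$-Lipschitz with respect to $\norm{\cdot}_1$, and the function $\cR^{0,0}_M$ has precisely the form of the empirical risk in \cref{cor:main}. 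Moreover, the random variables $\Theta_{k,n}$ defined via the SGD recursion $\Theta_{k,n} = \Theta_{k,n-1} - \gamma_n \cG^{k,n}(\Theta_{k,n-1})$ are measurable (as compositions of measurable functions), and by assumption the initializations $\Theta_{k,0}$, $k \in \{1,\ldots,K\}$, are i.i.d.\ with $\Theta_{1,0}$ uniform on $[-c,c]^\bfd$, which is exactly the hypothesis on the initializations required by \cref{cor:main}. Finally, the selector $\bfk$ satisfies the arg\,min condition with respect to $\cR = \cR^{0,0}_M$.

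Hence all hypotheses of \cref{cor:main} are satisfied with these substitutions, and an application of \cref{cor:main} yields the claimed bound
\begin{equation}
\begin{split}
& \left( \E\!\left[ \left( \int_{[a,b]^d} \bigl| \scrN^{\Theta_\bfk, \bfl}_{u,v}(x) - \cE(x) \bigr|^2 \, \P_{X_1^{0,0}}(\dx x) \right)^{\!\nicefrac{p}{2}} \right] \right)^{\!\nicefrac{1}{p}}
\leq \frac{6 d c^2}{(\min\{2^{\bfL-1}, \bfl_1, \ldots, \bfl_{\bfL-1}\})^{\nicefrac{1}{d}}} \\
& \qquad + \frac{2\bigl[ \bfL (\norm{\bfl}_\infty+1)^\bfL c^{\bfL+2} \max\{p,1\}\bigr]^{\nicefrac{1}{2}}}{K^{[(2\bfL)^{-1}(\norm{\bfl}_\infty+1)^{-2}]}}
+ \frac{4 c \beta^{\nicefrac{1}{2}} \bfL^{\nicefrac{1}{2}} (\norm{\bfl}_\infty+1)^{\nicefrac{3}{4}} \max\{p, \ln(eM)\}}{M^{\nicefrac{1}{4}}}.
\end{split}
\end{equation}
The main (essentially sole) subtlety is the bookkeeping of the substitutions, since in the SGD setting we have two indexing families of samples (the ones used inside the gradients, $(X_j^{k,n}, Y_j^{k,n})$ for $(k,n) \neq (0,0)$, and the ones used in the selection step, $(X_j^{0,0}, Y_j^{0,0})$). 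No probabilistic independence between these two families is required by \cref{cor:main}, which is why the reduction is direct. This completes the proof of \cref{cor:sgd}.
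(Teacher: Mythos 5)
Your proof is correct and follows the paper's argument exactly: the paper likewise obtains \cref{cor:sgd} by applying \cref{cor:main} with $\cR \with \cR_M^{0,0}$ and $((X_j,Y_j))_{j \in \{1,\ldots,M\}} \with ((X_j^{0,0},Y_j^{0,0}))_{j \in \{1,\ldots,M\}}$, and your added commentary about \cref{cor:main} imposing no structural assumptions on how the $\Theta_{k,n}$ arise is precisely the (unstated) observation that makes the reduction work.
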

\begin{proof} [Proof of \cref{cor:sgd}]
This is a direct consequence \cref{cor:main} (applied with $\cR \with \cR_M^{0,0}$, $((X_j, Y_j))_{j \in \{1, \ldots, M \} } \allowbreak \with ((X_j^{0,0}, Y_j^{0,0}))_{j \in \{1, \ldots, M \} }$ in the notation of \cref{cor:main}). The proof of \cref{cor:sgd} is thus complete.
\end{proof}

\cfclear
\begin{cor} \label{cor:sgdsimple}
Let $d, \bfd, \bfL, M, K, N \in \N$, $c \in [2,\infty)$, $\bfl = (\bfl_0, \bfl_1, \ldots, \bfl_\bfL) \in \N^{\bfL+1}$, $(\gamma_n)_{n \in \N} \subseteq \R$, $(J_n)_{n \in \N} \subseteq \N$, $\fN \subseteq \{0,1,\ldots, N\}$ satisfy $0 \in \fN$, 
$ \bfl_0 = d $, 
$ \bfl_\bfL = 1 $,
and
$ \bfd \geq \sum_{i=1}^{\bfL} \bfl_i( \bfl_{ i - 1 } + 1 ) $,
 let $(\Omega, \cF, \P)$ be a probability space, let $(X_j^{k,n}, Y^{k,n}_j) \colon \Omega \to [0,1]^d \times [0,1]$, $j \in \N$, $k,n \in \N_0$, be random variables, assume that $(X_j^{0,0}, Y_j^{0,0})$, $j \in \{1,2, \ldots, M\}$, are i.i.d., let $\cE \colon [0,1]^d \to [0,1]$ satisfy $\P$-a.s.\ that $\cE(X_1^{0,0}) = \E[Y_1^{0,0}|X_1^{0,0}]$, assume for all $x,y \in [0,1]^d$ that $|\cE (x)- \cE (y)| \leq c \norm{x-y}_1$, let $\Theta_{k, n} \colon \Omega \to \R^\bfd$, $k, n \in \N_0$, and $\bfk \colon \Omega \to (\N _0)^2$ be random variables, assume that $\Theta_{k, 0}$, $k \in \{1,2, \ldots, K\}$, are i.i.d., assume that $\Theta_{1, 0}$ is continuous uniformly distributed on $[-c,c]^\bfd$,  let $\cR^{k,n}_J \colon \R^\bfd \times \Omega \to [0, \infty)$, $J \in \N$, $k,n \in \N_0$, satisfy for all $J \in \N$, $k,n \in \N_0$, $\theta \in \R^\bfd$, $\omega \in \Omega$ that \cfadd{def:p-norm}
 \begin{equation}
    \cR^{k,n}_{J}(\theta, \omega) = \frac{1}{J} \left[ \sum_{j=1}^{J}  \bigl| \scrN^{\theta, \bfl}_{0, 1}(X_j^{k,n} (\omega))-Y_j^{k,n} (\omega) \bigr| ^2 \right],
\end{equation}
let $\cG ^{k,n} \colon \R^\bfd \times \Omega \to \R^\bfd$, $k,n\in \N$, satisfy for all $ k,n \in \N$, $\omega \in \Omega$,
    $\theta \in \{ \vartheta \in \R^\bfd \colon (\cR^{k,n}_{J_n}(\cdot, \omega) \colon  \R^\bfd  \to \allowbreak [0, \infty) \text{ is differentiable at } \vartheta )\}$
that $\cG^{k,n}(\theta, \omega) = (\nabla_\theta \cR^{k,n}_{J_n})(\theta, \omega)$, assume for all $ k,n \in \N$ that $\Theta_{k,n} = \Theta_{k, n-1} - \gamma_n \cG^{k,n}(\Theta_{k, n-1})$, and assume for all $\omega \in \Omega$ that
\begin{equation}
    \bfk( \omega) \in \arg \min\nolimits_{(k,n) \in \{1,2, \ldots, K\} \times \fN,\, \| \Theta_{k,n} (\omega) \|_\infty \leq c} \cR^{0,0}_M(\Theta_{k,n}(\omega), \omega)
\end{equation}
\cfload.
Then
\begin{equation} \label{eq:cor:sgdsimple}
    \begin{split}
          &\E \! \left[    \int_{[0,1]^d} \bigl| \scrN^{\Theta_\bfk, \bfl}_{0 , 1}(x)-\cE (x) \bigr| \, \P_{X_1}(\dx x) \right] \\ &
          \leq \frac{3 d c}{\left( \min\{2^{{\bfL} - 1}, \bfl_1, \ldots, \bfl_{\bfL-1} \} \right)^{\nicefrac{1}{d}}} 
        +  \frac{ \bfL ( \norm{\bfl}_\infty+1)^\bfL c^{\bfL+1} }
            {K^{[(2\bfL)^{-1}(\norm{\bfl}_\infty+1)^{-2}]}} 
         + \frac{4c^{\nicefrac{1}{2}} \bfL ( \norm{\bfl}_\infty +1) \ln (e M)} {(2M)^{\nicefrac{1}{4}}}.
   \end{split}
\end{equation}
\end{cor}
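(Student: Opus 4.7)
The plan is to obtain \cref{cor:sgdsimple} as an immediate consequence of \cref{cor:simple}. The key observation is that \cref{cor:simple} imposes no conditions on how the iterates $\Theta_{k,n}$ for $n \geq 1$ are generated; it only requires that the initializations $\Theta_{k,0}$, $k \in \{1, \ldots, K\}$, be i.i.d.\ with $\Theta_{1,0}$ uniformly distributed on $[-c,c]^\bfd$, that the empirical risk $\cR$ have the prescribed Monte Carlo form, and that the random index $\bfk$ realize the argmin of $\cR$ over the admissible pairs $(k,n)$ with $\|\Theta_{k,n}\|_\infty \leq c$. The stochastic gradient descent recursion in \cref{cor:sgdsimple} merely specifies one concrete rule for constructing the sequence $\Theta_{k,n}$ from the auxiliary samples $(X_j^{k,n}, Y_j^{k,n})$ for $(k,n) \neq (0,0)$, and this construction is irrelevant to the estimate.

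Concretely, I would apply \cref{cor:simple} with the substitutions $\cR \with \cR^{0,0}_M$ and $(X_j, Y_j) \with (X_j^{0,0}, Y_j^{0,0})$ for $j \in \{1, \ldots, M\}$, keeping $\Theta_{k,n}$ and $\bfk$ as given. Under these identifications each hypothesis of \cref{cor:simple} translates verbatim: the i.i.d.\ property of $(X_j^{0,0}, Y_j^{0,0})_{j \in \{1, \ldots, M\}}$ is assumed, the conditional expectation identity $\cE(X_1^{0,0}) = \E[Y_1^{0,0} \mid X_1^{0,0}]$ has the required form, the Lipschitz bound on $\cE$ with constant $c$ is identical, the distributional assumptions on $\Theta_{k,0}$ are the same, the formula defining $\cR^{0,0}_M$ agrees with the empirical risk appearing in \cref{cor:simple}, and the definition of $\bfk$ as an argmin coincides with the one in \cref{cor:simple}. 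The conclusion \eqref{eq:cor:simple} thus yields \eqref{eq:cor:sgdsimple} directly.

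There is no substantive obstacle here: all the technical work (the decomposition into approximation, generalization, and optimization errors; the application of \cref{prop:approximation_error}, \cref{cor:generalization_error}, and the optimization estimate from \cite{JentzenWelti2020arxiv}) is already encapsulated in \cref{cor:simple}. The gradient maps $\cG^{k,n}$ and the additional data $(X_j^{k,n}, Y_j^{k,n})_{(k,n) \neq (0,0)}$ enter only through the definition of $\Theta_{k,n}$, which \cref{cor:simple} treats as arbitrary $\R^\bfd$-valued random variables subject to the stated initialization and argmin conditions. The proof is therefore a one-line reference.
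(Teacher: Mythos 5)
Your proposal is correct and is precisely the proof given in the paper: \cref{cor:sgdsimple} follows from \cref{cor:simple} applied with $\cR \with \cR^{0,0}_M$ and $((X_j,Y_j))_{j\in\{1,\ldots,M\}} \with ((X_j^{0,0},Y_j^{0,0}))_{j\in\{1,\ldots,M\}}$, the SGD recursion being irrelevant since \cref{cor:simple} places no constraints on the $\Theta_{k,n}$ for $n\geq 1$. Your observation that all the hypotheses translate verbatim is exactly the point.
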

\begin{proof}[Proof of \cref{cor:sgdsimple}]
Observe that \cref{cor:simple} (applied with $((X_j, Y_j))_{j \in \{1, \ldots, M \} } \with \allowbreak ((X_j^{0,0}, \allowbreak Y_j^{0,0}))_{j \in \{1, \ldots, M \} }$, $\cR \with \cR_M^{0,0}$ in the notation of \cref{cor:simple}) establishes \eqref{eq:cor:sgdsimple}. The proof of \cref{cor:sgdsimple} is thus complete.
\end{proof}

\subsection*{Acknowledgements}
This work has been funded by the Deutsche Forschungsgemeinschaft (DFG, German Research Foundation) under Germany’s Excellence Strategy EXC 2044-390685587, Mathematics Münster: Dynamics-Geometry-Structure. Helpful suggestions by Pierfrancesco Beneventano, Philippe von Wurstemberger, and Joshua Lee Padgett are gratefully acknowledged.


\begin{thebibliography}{10}

\bibitem{BachMoulines2013}
{\sc Bach, F., and Moulines, E.}
\newblock Non-strongly-convex smooth stochastic approximation with convergence
  rate {$O(1/n)$}.
\newblock In {\em Advances in Neural Information Processing Systems 26},
  C.~J.~C. Burges, L.~Bottou, M.~Welling, Z.~Ghahramani, and K.~Q. Weinberger,
  Eds. Curran Associates, Inc., 2013, pp.~773--781.

\bibitem{Barron1993}
{\sc Barron, A.~R.}
\newblock Universal approximation bounds for superpositions of a sigmoidal
  function.
\newblock {\em IEEE Trans. Inform. Theory 39}, 3 (1993), 930--945.

\bibitem{BartlettBousquetMendelson2005}
{\sc Bartlett, P.~L., Bousquet, O., and Mendelson, S.}
\newblock Local {R}ademacher complexities.
\newblock {\em Ann. Statist. 33}, 4 (2005), 1497--1537.

\bibitem{BeckJentzenKuckuck2019arXiv}
{\sc {Beck}, C., {Jentzen}, A., and {Kuckuck}, B.}
\newblock Full error analysis for the training of deep neural networks.
\newblock {\em ArXiv:1910.00121\/} (2019), 53 pages.

\bibitem{BercuFort2013}
{\sc Bercu, B., and Fort, J.-C.}
\newblock {\em Generic Stochastic Gradient Methods}.
\newblock American Cancer Society, 2013, pp.~1--8.

\bibitem{BernerGrohsJentzen2018arXiv}
{\sc Berner, J., Grohs, P., and Jentzen, A.}
\newblock Analysis of the generalization error: empirical risk minimization
  over deep artificial neural networks overcomes the curse of dimensionality in
  the numerical approximation of {B}lack-{S}choles partial differential
  equations.
\newblock {\em SIAM J. Math. Data Sci. 2}, 3 (2020), 631--657.

\bibitem{CuckerSmale2002}
{\sc Cucker, F., and Smale, S.}
\newblock On the mathematical foundations of learning.
\newblock {\em Bull. Amer. Math. Soc. (N.S.) 39}, 1 (2002), 1--49.

\bibitem{Cybenko1989}
{\sc Cybenko, G.}
\newblock Approximation by superpositions of a sigmoidal function.
\newblock {\em Math. Control Signals Systems 2}, 4 (1989), 303--314.

\bibitem{DuZhaiPoczosSingh2018arXiv}
{\sc Du, S.~S., Zhai, X., Pocz\'{o}s, B., and Singh, A.}
\newblock Gradient descent provably optimizes over-parameterized neural
  networks.
\newblock {\em ArXiv:1810.02054\/} (2018), 19 pages.

\bibitem{EMaWu2019}
{\sc {E}, W., {Ma}, C., and {Wu}, L.}
\newblock A priori estimates of the population risk for two-layer neural
  networks.
\newblock {\em Comm. Math. Sci. 17}, 5 (2019), 1407--1425.

\bibitem{EMaWu2020online}
{\sc E, W., Ma, C., and Wu, L.}
\newblock A comparative analysis of optimization and generalization properties
  of two-layer neural network and random feature models under gradient descent
  dynamics.
\newblock {\em Sci.\ China Math.\/} (2020).

\bibitem{Funahashi1989}
{\sc Funahashi, K.-I.}
\newblock On the approximate realization of continuous mappings by neural
  networks.
\newblock {\em Neural Networks 2}, 3 (1989), 183 -- 192.

\bibitem{GrohsHornungJentzenVonWurstemberger2018arXiv}
{\sc Grohs, P., Hornung, F., Jentzen, A., and {von Wurstemberger}, P.}
\newblock A proof that artificial neural networks overcome the curse of
  dimensionality in the numerical approximation of {B}lack-{S}choles partial
  differential equations.
\newblock {\em Accepted in Mem. Amer. Math. Soc. ArXiv:1809.02362\/} (2018),
  124 pages.

\bibitem{GrohsHornungJentzenZimmermann2019arXiv}
{\sc Grohs, P., Hornung, F., Jentzen, A., and Zimmermann, P.}
\newblock Space-time error estimates for deep neural network approximations for
  differential equations.
\newblock {\em ArXiv:1908.03833\/} (2019), 86 pages.

\bibitem{GrohsJentzenSalimova2019arXiv}
{\sc Grohs, P., Jentzen, A., and Salimova, D.}
\newblock Deep neural network approximations for {M}onte {C}arlo algorithms.
\newblock {\em Revision requested from SN Partial Differ. Equ. Appl.
  ArXiv:1908.10828\/} (2019), 45 pages.

\bibitem{GyorfiKohlerKrzyzakWalk2002}
{\sc Gy\"{o}rfi, L., Kohler, M., Krzy\.{z}ak, A., and Walk, H.}
\newblock {\em A distribution-free theory of nonparametric regression}.
\newblock Springer Series in Statistics. Springer-Verlag, New York, 2002.

\bibitem{HartmanKeelerKowalski1990}
{\sc Hartman, E.~J., Keeler, J.~D., and Kowalski, J.~M.}
\newblock Layered neural networks with gaussian hidden units as universal
  approximations.
\newblock {\em Neural Comput. 2}, 2 (1990), 210--215.

\bibitem{Hornik1991}
{\sc Hornik, K.}
\newblock Approximation capabilities of multilayer feedforward networks.
\newblock {\em Neural Networks 4}, 2 (1991), 251 -- 257.

\bibitem{HornikStinchcombeWhite1989}
{\sc Hornik, K., Stinchcombe, M., and White, H.}
\newblock Multilayer feedforward networks are universal approximators.
\newblock {\em Neural Networks 2}, 5 (1989), 359 -- 366.

\bibitem{JentzenWelti2020arxiv}
{\sc Jentzen, A., and Welti, T.}
\newblock Overall error analysis for the training of deep neural networks via
  stochastic gradient descent with random initialisation.
\newblock {\em ArXiv:2003.01291\/} (2020), 51 pages.

\bibitem{LuShenYangZhang2020}
{\sc Lu, J., Shen, Z., Yang, H., and Zhang, S.}
\newblock Deep network approximation for smooth functions.
\newblock {\em ArXiv:2001.03040\/} (2020), 44 pages.

\bibitem{Massart2007}
{\sc Massart, P.}
\newblock {\em Concentration inequalities and model selection}, vol.~1896 of
  {\em Lecture Notes in Mathematics}.
\newblock Springer, Berlin, 2007.
\newblock Lectures from the 33rd Summer School on Probability Theory held in
  Saint-Flour, July 6--23, 2003.

\bibitem{Rio2009}
{\sc Rio, E.}
\newblock Moment inequalities for sums of dependent random variables under
  projective conditions.
\newblock {\em Journal of Theoretical Probability 22\/} (2009), 146--163.

\bibitem{ShenYangZhang2020}
{\sc Shen, Z., Yang, H., and Zhang, S.}
\newblock Deep network approximation characterized by number of neurons.
\newblock {\em ArXiv:1906.05497\/} (2020), 42 pages.

\bibitem{VanDeGeer2000}
{\sc {van de Geer}, S.~A.}
\newblock {\em Applications of empirical process theory}, vol.~6 of {\em
  Cambridge Series in Statistical and Probabilistic Mathematics}.
\newblock Cambridge University Press, Cambridge, 2000.

\bibitem{ZouCaoZhouGu2019}
{\sc Zou, D., Cao, Y., Zhou, D., and Gu, Q.}
\newblock Gradient descent optimizes over-parameterized deep {R}e{LU} networks.
\newblock {\em Mach.\ Learn.\/} (2019).

\end{thebibliography}
\end{document}